\pdfoutput=1 
\documentclass{article}

\usepackage[margin=3cm]{geometry}
\usepackage{indentfirst}

\usepackage{mathtools} 
\usepackage{amsthm,amssymb}
\usepackage{bbold} 

\numberwithin{equation}{section}
\mathtoolsset{showonlyrefs}

\usepackage{graphicx}
\usepackage{caption} 
\usepackage{tabularx}
\usepackage{booktabs}
\usepackage{xcolor}
\usepackage{tikz}
\usetikzlibrary{positioning}

\definecolor{phCap}{HTML}{8C8C8C}
\definecolor{phEarly}{HTML}{E8702A}
\definecolor{phQuad}{HTML}{1FA37A}
\definecolor{phNonq}{HTML}{F2B800}
\definecolor{phLate}{HTML}{8A4FD1}
\newcommand{\phbox}[1]{\textcolor{#1}{\rule{1.2ex}{1.2ex}}}  
\definecolor{wtsOrange}{HTML}{F8A800}
\definecolor{wtsPurple}{HTML}{5800A8}
\definecolor{wtsPink}{HTML}{E83870}

\usepackage{natbib}
\usepackage{hyperref}
\definecolor{tabblue}{HTML}{175987} 
\definecolor{tabpurple}{HTML}{9467BD}
\hypersetup{colorlinks=true, linkcolor=tabblue, citecolor=tabblue, urlcolor=tabpurple}

\theoremstyle{definition}
\newtheorem{defn}{Definition}
\newtheorem{theo}{Theorem}
\newtheorem{prop}{Proposition}
\newtheorem{lemma}{Lemma}
\newtheorem{obs}{Remark}
\newtheorem{assum}{Assumption}
\newtheorem{ex}{Example}
\newtheorem{claim}{Claim}
\newtheorem*{proofclaim}{Proof of the Claim}
\newcommand{\claimqed}{\hfill \ensuremath{\blacksquare}}

\newcommand{\E}{\mathbb{E}}
\newcommand{\R}{\mathbb{R}}
\DeclareMathOperator{\tr}{tr}
\DeclareMathOperator*{\argmin}{arg\,min}
\DeclareMathOperator{\diag}{diag}

\newcommand{\chg}[1]{#1}

\title{Quadratic Weak-to-Strong Generalization in Random Feature Networks via Random Matrix Theory}
\author{%
  Deborah Oliveira\\
  {\normalsize McGill University}\\
  {\normalsize Mila -- Quebec AI Institute}\\
  {\small\texttt{deborah.deoliveira@mail.mcgill.ca}}
  \and
  Elliot Paquette\\
  {\normalsize McGill University}\\
  {\normalsize Mila -- Quebec AI Institute}\\
  {\small\texttt{elliot.paquette@mcgill.ca}}
}
\date{}

\begin{document}

\maketitle

\begin{abstract}
Weak-to-strong generalization is the phenomenon where a strong student model trained with labels produced by a weak teacher model is able to generalize better than the teacher. In this paper, we study this phenomenon in two-layer random feature networks where the model strength is determined by its width. Using tools from random matrix theory, we derive deterministic equivalents for the population errors of an optimally trained teacher and a student trained with gradient flow. For ReLU activation and a pure spherical harmonic target, we obtain sharp asymptotics under a Gaussian universality assumption, showing a quadratic improvement: the student error scales as the square of the teacher error. These results attain the general lower bound of \citet{medvedev2025weak}. We also analyze how the student behaves under more general stopping times and targets supported on multiple harmonic degrees, characterizing the regimes in which weak-to-strong generalization occurs and identifying the transition between quadratic, non-quadratic, and no improvement.
\end{abstract}

\section{Introduction}\label{sec:intro}

Can a student trained only on a teacher's predictions surpass the teacher? \chg{A small model trained on the true labels still makes errors because of its limited capacity, and a larger student that sees only these flawed labels might be expected to reproduce them. \textit{Weak-to-strong generalization} (WTS), introduced in \citet{openai2024weaktostrong}, is the phenomenon where it does not: the strong student generalizes better than the weak teacher that supervises it.}

\chg{We study this phenomenon theoretically in a random feature neural network, in a regression problem with squared loss, where the strength of a model is its capacity, that is, its number of neurons. We choose this model for three reasons. Its strength has a single unambiguous meaning, the width. Its population loss is governed by the spectrum of the feature kernel, so the teacher's error and the student's training dynamics can be written explicitly, and, for data on the sphere, the expansion of the target in spherical harmonics gives a natural notion of its complexity, its harmonic degree. Finally, random feature models, introduced in \citet{rahimi2007random} as a way of reducing computation in kernel methods, are conceptually simple yet exhibit phenomena also present in deep networks, and are a standard surrogate for them \citep{mei2022generalization,bombari2024word,dohmatob2022nonrobustness,chizat2019lazy}. Our aim is to show that WTS emerges already in this model, and to quantify the contribution of each of its components.}

We closely follow the setup pioneered in \citet{medvedev2025weak}, where random feature networks also represent the student and teacher. Our two-step training procedure is the same as theirs: we first optimally train the teacher over the population error, so that its only source of error is its capacity, and then train the student\chg{, initialized at zero,} on the teacher's labels using early-stopped gradient flow. The goal is to stop the training before the student imitates too many of the teacher's mistakes, which is an implicit regularization effect \citep{li2020gradientdescent,yao2007early,raskutti2014early,ali2019continuous,heckel2021early}. In our specific case, it can be understood as a spectral cutoff: the student imitates the teacher's components \chg{up to} the target degree, while the remaining components barely move from their initialization.

\chg{We differ from \citet{medvedev2025weak} both in method and in scope. In method, we use random matrix theory, specifically the theory of Dyson equations and deterministic equivalents \citep{erdos2019matrix}. A deterministic equivalent removes the randomness of the features from the population errors and gives explicit formulas for both student and teacher errors: a single-resolvent formula for the teacher and a two-resolvent formula for the student, the latter being a deterministic equivalent for a two-stage training pipeline. This simplified mechanism reproduces the errors accurately and makes their analysis straightforward. In scope, the formulas hold at every stopping time and for general targets, which lets us analyze the optimal improvement and the full effect of the stopping time.}

\chg{We leave open the comparison between the deterministic equivalents and the random feature regression itself. Results of this kind exist in the single-stage setting \citep{defilippis2024dimensionfree,cheng2024dimension,misiakiewicz2024non}; extending them to the two-stage pipeline is a matter of mathematical technicalities that we do not expect to reveal new information about the learning problem, and our numerical comparisons show that the deterministic equivalents match the gradient-flow simulations closely.}

\paragraph{Our contributions.}
\begin{itemize}
\item \chg{\textbf{Deterministic equivalents and a general lower bound.} We derive deterministic equivalents for the population errors of the optimally trained teacher and of the early-stopped student (Section~\ref{subsection:ReLU}, Appendix~\ref{app:detequivalent}). The student error splits into a squared bias and a variance term. The decomposition explains both the improvement over the teacher and its loss under prolonged training, and implies that the student error is at least the square of the teacher error, for any activation and any target (Theorem~\ref{theo:quadratic-floor}).}
\item \chg{\textbf{Quadratic improvement.} For ReLU activation on the sphere and a pure spherical-harmonic target, we show that an appropriate stopping time attains this quadratic improvement (Theorem~\ref{theo:weaktostrongrelu}, Section~\ref{sec:quadratic}), attaining the lower bound of Theorem~\ref{theo:quadratic-floor}. The ReLU statements are conditional on a Gaussian universality assumption (Assumption~\ref{ass:de}).}
\item \chg{\textbf{Pure harmonic targets and general stopping times.} We characterize the phase diagram of the student error in terms of the teacher's capacity exponent $\alpha$ ($m=d^\alpha$ neurons) and the stopping time exponent $\gamma$ $(T=\Theta(d^{\gamma+1}))$, identifying which of the two error terms controls the error in each region and the transitions between quadratic, nonquadratic, and no improvement (Theorem~\ref{theo:phases}, Section~\ref{subsec:phase-diagram}).}
\item \chg{\textbf{Mixed targets and general stopping times.} For targets with components on several harmonic degrees, we give a condition for WTS in terms of unlearned target energy, and a coefficient-independent guarantee of polynomial improvement when the teacher can learn every mode. Two examples show that the improvement can nevertheless fail to be quadratic, because different modes require incompatible stopping times (Section~\ref{sec:mixed-main}).}
\end{itemize}

\paragraph{Related work.}
Since the first empirical investigation of \citet{openai2024weaktostrong}, many works have considered a theoretical analysis using simple settings. Some focus on how data structure influences the phenomenon. \citet{hunter2025theoretical} study the student's correction of teacher mistakes and its performance on regions not covered by teacher labels. \citet{shin2025weaktostrong} express improvement through the frequency of points containing both easy and hard patterns, while \citet{wu2025provable} study a transition from random guessing to perfect generalization in classification.

Other works quantify the improvement through the \textit{misfit error}, which measures the disagreement between the student and teacher \citep{charikar2024quantifying,yao2025understanding,mulgund2025relating}. From \chg{the point of view of representations}, \citet{xue2025representations} identify directions poorly captured by the teacher but well represented by the student. \citet{dong2025discrepancies} analyze how the student inherits teacher variance along shared directions and reduces it along others. \chg{In these works, differences between the two representation spaces prevent the student from fully inheriting the teacher's noise, which is the source of the improvement.}

With respect to regularization, \citet{moniri2026on} show that a student in linear ridge regression can surpass the teacher through better regularization, especially when it is aligned with the target. This is related to \citet{medvedev2025weak} and our work, where regularization is imposed implicitly by early-stopped gradient flow. Beyond linear settings, \citet{oh2025from} use convolutional networks and also find that early stopping improves student performance in a data-abundant regime.

\chg{Closest to our setting are analyses of weak-to-strong generalization in random feature and high-dimensional linear regression, where teacher and student differ in width or in regularization and the labels are produced by a fitted teacher. \citet{ildiz2025highdimensional} analyze ridge regression trained on the predictions of a surrogate model in the proportional regime, characterizing when the student improves on the teacher and deriving scaling laws, and \citet{wu2026improved} show that weak-to-strong training improves the scaling law of random feature ridge regression. In both works, as in ours, deterministic equivalents play a central role. Appendix~\ref{app:extended-related} gives an extended discussion.}


\paragraph{Notation.}
\chg{Throughout, $d$ is the dimension of the data and all asymptotic statements are as $d\to\infty$.} We write $a\asymp b$ for bounds in both directions by positive constants independent of $d$, $a\sim b$ for $a/b\to1$, and $\widetilde\Theta$ for order up to polylogarithmic factors. 

\paragraph{Acknowledgments.}
We would like to thank Nati Srebro for explaining the problem setup and suggesting the topic. The second author would like to acknowledge funding support for the project from a Discovery Grant from the Natural Science and Engineering Research Council (NSERC), a Google Canada Research Grant, a CIFAR AI Catalyst Grant, a Fonds de recherche du Quebec Nature et technologies (FRQNT)-NSERC NOVA Grant, and US AFOSR grant.

\section{Problem setup}\label{sec:setup} 

For both the weak teacher and the strong student, data vectors $x\in\mathbb R^d$ come from a distribution $\mathcal D$\chg{, and labels are in $\mathbb R$}. The predictor is a random feature model, which can be seen as a two-layer neural network with a fixed bottom random layer:
\begin{equation}
f_{u,w}(x)=\frac1{\sqrt m}\sum_{i=1}^m w_i\sigma(\langle u_i,x\rangle),\qquad u_i\stackrel{\rm iid}{\sim}\mathcal U.
\end{equation}
\chg{Here $m$ is the number of neurons, $\sigma:\mathbb R\to\mathbb R$ is the activation function, the feature weights $u_i\in\mathbb R^d$ are fixed, and only the output weights $w_i\in\mathbb R$ are trained. The $1/\sqrt m$ normalization makes the kernel of the network, $\frac1m\sum_{i=1}^m\sigma(\langle u_i,x\rangle)\sigma(\langle u_i,x'\rangle)$, an average over the features, so that it converges to the population kernel $K$ below as $m\to\infty$. 
We will take both $\mathcal D$ and $\mathcal U$ uniform on the sphere $\mathbb S^{d-1}$, although the machinery of the theory would work for many other choices.}
\chg{Apart from their training procedures, the only difference between teacher and student is their width, which is our notion of model strength: the teacher has $M_{\rm TE}$ neurons, and the student is taken infinitely wide, $M_{\rm ST}\to\infty$.  We expect that all results here would hold for a finite student as well, for student sizes which are sufficiently large to learn $f^{\star}$, but leave a quantitative analysis to future work.}

Both student and teacher are evaluated against the ground-truth target $f^\star$ using population squared loss, which we write using the norm $\|f\|_{\mathcal D}^2=\mathbb E_{x\sim\mathcal D}f(x)^2$, as
\begin{equation}\label{eq:teacherpoperror}
\mathcal L_{\rm TE}=\|f_{\rm teacher}-f^\star\|_{\mathcal D}^2,
\qquad \mathcal L_{\rm ST}(t)=\|f_{\rm student}^{(t)}-f^\star\|_{\mathcal D}^2.
\end{equation}
\chg{Weak-to-strong generalization means that the student error is smaller than the teacher error.  However, in this paper we use WTS to refer to the stronger statement that $\mathcal L_{\rm ST}(t)/\mathcal L_{\rm TE}$ decays as a power of $d$.}

\paragraph{Teacher.}
\chg{The teacher is optimally trained on the true labels: its output weights minimize the population loss $\mathcal L_{\rm TE}$ over $w\in\mathbb R^{M_{\rm TE}}$. Its only source of error is therefore its finite width. The optimal weights and the resulting error are computed in Appendix~\ref{app:detequivalent}, Lemma~\ref{lemma:optimalweigth}.}

\paragraph{Student.}
The student receives \chg{the teacher's predictions as labels}, starts from \chg{$w_{\rm s}=0$}, and is trained by the gradient flow,
\begin{equation}\label{eq:gradflowODE}
\frac{dw_{s,j}}{dt}=-\frac12\frac{\partial}{\partial w_{s,j}}
\mathbb E_{x\sim\mathcal D}(f_{\rm student}^{(t)}(x)-f_{\rm teacher}(x))^2\chg{,\qquad j=1,\dots,M_{\rm ST},}
\end{equation}
\chg{stopped at a time $t$ chosen below. As $M_{\rm ST}\to\infty$ this converges to the population spectral flow}
\begin{equation}\label{studenttrain}
f_{\rm student}^{(t)}(x)=\sum_i(1-e^{-\psi_it})\langle f_{\rm teacher},e_i\rangle_{\mathcal D}e_i(x),
\end{equation}
where $(\psi_i,e_i)$ are the eigenpairs in $L^2(\mathcal D)$ of the kernel $K(x,x')=\mathbb E_{u\sim\mathcal U}\sigma(\langle u,x\rangle)\sigma(\langle u,x'\rangle)$\chg{, the average of the student network kernel over the features (see Appendix~A.1 of \citealp{medvedev2025weak}).}

We shall order the eigenvalues to be decreasing $\psi_1\ge\psi_2\ge\cdots$.
Each feature $\sigma(\langle u_i,\cdot\rangle)$ has coordinates $\psi_j^{1/2}e_j(u_i)$ in this basis.
We let $\Psi$ be the covariance of the feature coordinates $\psi_j^{1/2}e_j(u_i)$ over $u_i\sim\mathcal U$, so that $\Psi = \diag(\psi_1,\psi_2,\dots)$. The target is $f^\star=\sum_j\beta_j\psi_j^{1/2}\phi_j$, so that $\Psi^{1/2}\beta$ is its coefficient vector in the orthonormal basis and $\|f^\star\|_{\mathcal D}^2=\langle\beta,\Psi\beta\rangle$, which we normalize to one. 

\section{Deterministic equivalents}\label{subsection:ReLU} 

In the random features setting we have two sources of randomness: the data distribution and the random features distribution. \chg{A deterministic equivalent allows us to find a system of fixed-point equations, around whose solution the student and teacher test errors concentrate, over both sources of randomness.}

\paragraph{\chg{Teacher.}}
\chg{The deterministic equivalent of the teacher error is governed by a single scalar $\nu>0$, defined as the unique solution of}
\begin{equation}\label{eq:main-fixed-point}
\operatorname{tr}\bigl[\Psi(\Psi+\nu I)^{-1}\bigr]=m.
\end{equation}
\chg{Since each summand $\psi_j/(\psi_j+\nu)$ is close to one for $\psi_j\gg\nu$ and close to zero for $\psi_j\ll\nu$, the equation says that the number of eigenvalues of $\Psi$ above $\nu$ is about $m$: $\nu$ is the eigenvalue scale at the edge of the teacher's capacity. It plays the role of an effective ridge parameter, since the teacher error has the form of a ridge regression error on the full feature space with ridge $\nu$ (Appendix~\ref{subapp:teacherdeteq}, Theorem~\ref{theo:selfconsistenteacher}):}
\begin{equation}\label{eq:main-teacher}
\overline{\mathcal L}_{\rm TE}
=\nu\langle\Psi^{1/2}\beta,(\Psi+\nu I)^{-1}\Psi^{1/2}\beta\rangle.
\end{equation}
\chg{The factor $\nu/(\psi_j+\nu)$ multiplying the energy of the target along $\phi_j$ is small on the directions the teacher can represent ($\psi_j\gg\nu$) and close to one on those it cannot.}

\paragraph{\chg{Student.}}
\chg{Substituting the optimal teacher into the spectral flow \eqref{studenttrain} shows that the student error depends on the teacher's random features through two projectors instead of one (Appendix~\ref{subapp:studentdeteq}). Its deterministic equivalent therefore requires a double-resolvent calculation, Theorem~\ref{theo:doubleresolventapprox}. Regrouping the resulting terms (Appendix~\ref{app:mechanism}) gives the two-term expression that we take as the definition of the deterministic equivalent of the student error: a squared bias $\overline B(t)$ and a variance $\overline V(t)$,}
\begin{equation}\label{eq:main-two-term}
\begin{aligned}
\overline{\mathcal L}_{\rm ST}(t)
={}&\underbrace{\left\|(\nu I+\Psi e^{-\Psi t})(\Psi+\nu I)^{-1}\Psi^{1/2}\beta\right\|^2}_{\overline B(t)}\\
&+\underbrace{\nu\langle\Psi^{1/2}\beta,\Psi(\Psi+\nu I)^{-2}\Psi^{1/2}\beta\rangle
\frac{\operatorname{tr}[\Psi(\Psi+\nu I)^{-2}(I-e^{-\Psi t})^2]}
{\operatorname{tr}[\Psi(\Psi+\nu I)^{-2}]}}_{\overline V(t)}.
\end{aligned}
\end{equation}
At $t=0$ this equals one\chg{, the norm of the target,} and at $t=\infty$ it equals $\overline{\mathcal L}_{\rm TE}$\chg{: without early stopping the student simply reproduces the teacher.} Appendix~\ref{app:mechanism} verifies \eqref{eq:main-two-term} algebraically.

The deterministic equivalent is obtained from the matrix Dyson equation \citep{erdos2019matrix}.  The derivation of the deterministic equivalent is for Gaussian feature coordinates, using Stein's lemma and removing lower order terms, which would vanish by concentration.  Hence, the deterministic equivalent for the ReLU model simultaneously reflects two approximations: first a concentration property and second a universality property.  See \ref{app:dyson-background} for a general introduction and \eqref{eq:cumulantexpansion} for a discussion of the universality.

We shall formulate all our theorems for the deterministic equivalent, and we expect that these transfer exactly to the random model as we have strong empirical evidence (see Figure \ref{fig:illustrative-time-relative}) and theoretical support (see discussion at the end of Appendix \ref{app:dyson-background}).  Precisely, we anticipate the following holds:
\begin{assum}[Gaussian universality and deterministic equivalence]\label{ass:de}
\chg{Let $\overline{\mathcal L}_{\rm TE}$ and $\overline{\mathcal L}_{\rm ST}(t)$ be the deterministic equivalents defined in \eqref{eq:main-teacher} and \eqref{eq:main-two-term} below, and let $(T_d)$ be any of the stopping-time sequences appearing in our statements. The random population losses of the ReLU model on the sphere satisfy, in probability as $d\to\infty$,}
\[
\mathcal L_{\rm TE}/\overline{\mathcal L}_{\rm TE}\to1,
\qquad \mathcal L_{\rm ST}(T_d)/\overline{\mathcal L}_{\rm ST}(T_d)\to1.
\]
For statements uniform over stopping times or target families, the corresponding relative approximation is assumed uniform.
\end{assum}



\subsection{The bias--variance mechanism}\label{subsec:bias-variance}

Averaging over the teacher features $\mathcal{U}$ gives the exact identity
\begin{equation}\label{eq:exact-bias-variance}
\mathbb E_{\mathcal{U}}\mathcal L_{\rm ST}(t)
=\|\mathbb E_{\mathcal{U}}f_{\rm student}^{(t)}-f^\star\|_{\mathcal D}^2
+\mathbb E_{\mathcal{U}}\|f_{\rm student}^{(t)}-\mathbb E_{\mathcal{U}}f_{\rm student}^{(t)}\|_{\mathcal D}^2.
\end{equation}
\chg{The deterministic equivalent of the first term (the squared mean residual) is $\overline B$, and that of the second term (the fluctuation of the student around its mean) is $\overline V$} (see Appendix~\ref{app:mechanism} for details\chg{; see also Appendices~\ref{app:reluproofs} and~\ref{app:generalsettings}, where $\overline V$ is term 4)(2) and the rest sum to $\overline B$}). 

The bias can be seen to decrease in time, while the variance increases.  Therefore, provided $\overline{B}(\infty) < \overline{V}(\infty)$ and $\overline{B}(0) > \overline{V}(0)$, there exists an optimal finite stopping time at which they are equal.  As this can be substantially smaller than the sum of these losses at time $\infty$, which is equal to $\overline{\mathcal{L}}_{\rm TE}$, this shows how the student trained with early stopped gradient flow can outperform the teacher.

Further, we can give a general lower bound, using this representation, which will be sharp in some instances:
\begin{theo}[General quadratic lower bound]\label{theo:quadratic-floor}
For any activation, any feature distribution, any data distribution, and any unit target for which~\eqref{eq:main-fixed-point}--\eqref{eq:main-two-term} are defined, every stopping time $t$ satisfies
\begin{equation}\label{eq:quadratic-floor}
\overline{\mathcal L}_{\rm ST}(t)\ge
\nu^2\langle\Psi^{1/2}\beta,(\Psi+\nu I)^{-2}\Psi^{1/2}\beta\rangle
\ge\overline{\mathcal L}_{\rm TE}^{\,2}.
\end{equation}
\end{theo}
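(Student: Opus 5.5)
The plan is to work coordinatewise in the eigenbasis of $\Psi$. Write $a_j=\psi_j^{1/2}\beta_j$ for the coordinates of $\Psi^{1/2}\beta$, so that $\sum_j a_j^2=1$. Define the time-independent quantity
\[
Q:=\nu^2\langle\Psi^{1/2}\beta,(\Psi+\nu I)^{-2}\Psi^{1/2}\beta\rangle=\sum_j \frac{\nu^2 a_j^2}{(\psi_j+\nu)^2}.
\]
The first inequality in \eqref{eq:quadratic-floor} will come from the bias term alone. The variance term $\overline V(t)$ in \eqref{eq:main-two-term} is a product of nonnegative quantities, since $\nu>0$, $\psi_j\ge0$ and the trace ratio has nonnegative summands, so $\overline V(t)\ge0$. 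It therefore suffices to show $\overline B(t)\ge Q$ for every $t\ge0$.

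\textbf{Bias bound.} Since all operators are diagonal,
\[
\overline B(t)=\sum_j \frac{(\nu+\psi_j e^{-\psi_j t})^2}{(\psi_j+\nu)^2}\,a_j^2 .
\]
Because $\psi_j e^{-\psi_j t}\ge 0$ and $\nu>0$, we have $(\nu+\psi_je^{-\psi_jt})^2\ge\nu^2$ termwise. Summing gives $\overline B(t)\ge Q$, and hence $\overline{\mathcal L}_{\rm ST}(t)\ge Q$ for all $t$.

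\textbf{Jensen/Cauchy--Schwarz step.} For the second inequality, write $\overline{\mathcal L}_{\rm TE}$ from \eqref{eq:main-teacher} as
\[
\overline{\mathcal L}_{\rm TE}=\sum_j a_j^2\,\frac{\nu}{\psi_j+\nu}.
\]
This is the expectation of the function $g_j=\nu/(\psi_j+\nu)$ under the probability weights $a_j^2$, which sum to one because the target has unit norm. In the same notation, $Q=\sum_j a_j^2 g_j^2$. Cauchy--Schwarz (equivalently, Jensen for $x\mapsto x^2$) then gives
\[
\Bigl(\sum_j a_j^2 g_j\Bigr)^2\le\Bigl(\sum_j a_j^2\Bigr)\Bigl(\sum_j a_j^2 g_j^2\Bigr)=Q,
\]
that is, $\overline{\mathcal L}_{\rm TE}^{\,2}\le Q$. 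Combining the two steps proves the theorem.

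There is no real obstacle. The one point to check is that the normalization $\|f^\star\|_{\mathcal D}^2=\langle\beta,\Psi\beta\rangle=1$ is exactly what makes the weights $a_j^2$ a probability vector. In infinite dimensions one should also check that all the series converge absolutely; they do, since each factor lies in $[0,1]$ or is bounded by $2$ and $\sum_j a_j^2<\infty$. It is worth noting that equality in the first step requires the bias to vanish on the learned directions, i.e.\ $\psi_j e^{-\psi_j t}\to0$ where $a_j\ne0$, together with a negligible variance. Equality in the second step requires $g_j$ to be constant on the support of the target, which is the situation for a pure harmonic target, where the bound becomes sharp.
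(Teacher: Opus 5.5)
Your proposal is correct and follows the paper's own argument: drop the nonnegative variance $\overline V(t)$ and the term $\Psi e^{-\Psi t}$ in the bias to obtain the first inequality, then apply Cauchy--Schwarz (Jensen) with the probability weights $a_j^2$, which sum to one by $\|\Psi^{1/2}\beta\|=1$, to obtain the second. Your added remarks on convergence and on when equality holds in each step are consistent with the paper.
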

The first inequality drops the nonnegative variance and the exponential in the bias. The second is Cauchy--Schwarz, using $\|\Psi^{1/2}\beta\|=1$. Thus quadratic improvement is the best possible order for the deterministic equivalent, for arbitrary targets. \chg{This is the deterministic-equivalent counterpart of the lower bound of \citet{medvedev2025weak}, who prove the analogous result for the random student error.}

\section{Optimal quadratic improvement}\label{sec:quadratic}

\paragraph{\chg{The ReLU model on the sphere.}}\label{subsec:relu-model}
We take $\sigma(r)=r_+$ and recall we use the uniform probability measure on $\mathbb S^{d-1}$ for both the data and the fixed weight layer, as in \citet{medvedev2025weak}. The population kernel then expands in an orthonormal spherical-harmonic basis $\phi_{k,l}$, $l=1,\dots,N_k$, where $N_k$ is the dimension of the space of degree-$k$ harmonics and $\sigma_k^2$ is the eigenvalue of $K$ on that space, so that $\sigma_k$ is the degree-$k$ coefficient of ReLU in the spherical-harmonic expansion; it vanishes for odd $k\ge3$. The kernel is
\begin{equation}\label{eq:relukernel}
K(x,x')=\sum_{k\in\mathcal K}\sigma_k^2\sum_{l=1}^{N_k}
\phi_{k,l}(x)\phi_{k,l}(x'),\qquad
\mathcal K=\{0,1\}\cup2\mathbb N.
\end{equation}
 In the notation \chg{of Section~\ref{subsection:ReLU}}, $\Psi$ has blocks $\sigma_k^2 I_{N_k}$; consequently traces become sums over degrees with multiplicities $N_k$\chg{, and, by the spectral flow \eqref{studenttrain}, a target of degree $k$ is learned by the student on the time scale $1/\sigma_k^2$}. For fixed $k$, $N_k\asymp d^k$ and $\sigma_k^2\asymp d^{-k-1}$. Appendix~\ref{subapp:asympNksigmak} gives their expressions and asymptotics.

Our first result is as follows.
\begin{samepage}
\begin{theo}[Quadratic improvement: pure harmonic target]\label{theo:weaktostrongrelu}
Let $f^\star$ be a unit spherical harmonic of fixed degree $k_0\in\mathcal K$, $k_0\ge1$, and let $m=d^\alpha$ with fixed $\alpha>0$. \chg{Then, with}
\begin{equation}\label{eq:quadratic-stop}
T=\frac{\log m+\log \overline{\mathcal L}_{\rm TE}^{-1}}{\sigma_{k_0}^2}\chg{,\qquad
\overline{\mathcal L}_{\rm TE}\asymp d^{-(\alpha-k_0)_+},
\qquad
\overline{\mathcal L}_{\rm ST}(T)\asymp \overline{\mathcal L}_{\rm TE}^2.}
\end{equation}
\end{theo}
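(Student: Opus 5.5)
The plan is to evaluate the deterministic equivalents \eqref{eq:main-fixed-point}--\eqref{eq:main-two-term} degree by degree, using that $\Psi$ has blocks $\sigma_k^2 I_{N_k}$. The target is a degree-$k_0$ unit harmonic, so $\Psi^{1/2}\beta$ is supported on block $k_0$ with unit norm. Every target-dependent factor then collapses to a scalar: with $s=\sigma_{k_0}^2$,
\[
\overline{\mathcal L}_{\rm TE}=\tfrac{\nu}{s+\nu},\qquad
\overline B(t)=\Bigl(\tfrac{\nu+se^{-st}}{s+\nu}\Bigr)^2,\qquad
\overline V(t)=\tfrac{\nu s}{(s+\nu)^2}\,R(t),
\]
where
\[
R(t)=\frac{\sum_k N_k\sigma_k^2(\sigma_k^2+\nu)^{-2}(1-e^{-\sigma_k^2t})^2}{\sum_k N_k\sigma_k^2(\sigma_k^2+\nu)^{-2}}\le 1 .
\]

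\textbf{Step 1: locate $\nu$.} From $\sum_k N_k\sigma_k^2/(\sigma_k^2+\nu)=m=d^\alpha$ and the asymptotics $N_k\asymp d^k$, $\sigma_k^2\asymp d^{-k-1}$ of Appendix~\ref{subapp:asympNksigmak}, I would show that $\nu\asymp d^{-\alpha-1}$. The degrees $k\le\alpha$ contribute about $\sum_{k\le\alpha}N_k\asymp d^{\lfloor\alpha\rfloor}$. The tail $k>\alpha$ contributes about $\nu^{-1}\sum_{k>\alpha}N_k\sigma_k^2$. Since $\sum_k N_k\sigma_k^2=K(x,x)=O(1)$, the tail is $\lesssim\nu^{-1}d^{-\lceil\alpha\rceil-1}$, or more carefully the sum of the first few tail terms. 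Balancing the two contributions gives $\nu$ of order $d^{-\alpha-1}$, up to constants.

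This step is delicate when $\alpha$ is an integer, or when a single degree $k$ has $N_k\approx m$. In that case I would treat the boundary block explicitly, and the bound $\nu\asymp d^{-\alpha-1}$ should persist. I may also need uniform control of $N_k\sigma_k^2$ for large $k$; I would use $\sum_k N_k\sigma_k^2<\infty$ and the decay $\sigma_k^2\asymp d^{-k-1}$ for $k\le$ some fixed $K$, together with a crude bound on the remaining tail.

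The teacher formula then gives $\overline{\mathcal L}_{\rm TE}=\nu/(s+\nu)\asymp\min(1,d^{k_0-\alpha})=d^{-(\alpha-k_0)_+}$. This is the second claim of \eqref{eq:quadratic-stop}.

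\textbf{Step 2: student bounds.} The lower bound $\overline{\mathcal L}_{\rm ST}(T)\gtrsim\overline{\mathcal L}_{\rm TE}^2$ is exactly Theorem~\ref{theo:quadratic-floor}. For the upper bound, set $\varepsilon=\overline{\mathcal L}_{\rm TE}$. If $\alpha\le k_0$, then $\varepsilon\asymp1$ and there is nothing to prove, since $\overline{\mathcal L}_{\rm ST}\le 2(\overline B+\overline V)=O(1)$. So assume $\alpha>k_0$, in which case $\nu\ll s$. At $T$ we have $e^{-sT}=\varepsilon/m$, so $\overline B(T)\le 2\varepsilon^2+2e^{-2sT}\lesssim\varepsilon^2$. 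For the variance, $\nu s/(s+\nu)^2\asymp\nu/s\asymp\varepsilon$, so it remains to show $R(T)\lesssim\varepsilon$.

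\textbf{Step 3: the ratio $R(T)$.} This is the main obstacle. The denominator is at least of order the contribution of degrees whose eigenvalue is near $\nu$. I would bound it below by the tail, $\nu^{-2}\sum_{k>\alpha}N_k\sigma_k^2$, together with the term $N_{k^*}/\sigma_{k^*}^2$ for the largest $k^*\le\alpha$; this gives a denominator $\gtrsim d^{\alpha}\nu^{-1}$, using $m\asymp$ the effective count. For the numerator I would use $(1-e^{-\sigma_k^2T})^2\le\min(1,(\sigma_k^2T)^2)$, where $T\asymp s^{-1}\log d$. The degrees $k\le k_0$ give at most $\sum_{k\le k_0}N_k/\sigma_k^2\asymp d^{2k_0+1}$. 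The degrees $k>k_0$ give at most $T^2\sum_{k>k_0}N_k\sigma_k^6/(\sigma_k^2+\nu)^2\lesssim T^2\bigl(\sum_{k_0<k\le\alpha}N_k\sigma_k^2+\nu^{-2}\sum_{k>\alpha}N_k\sigma_k^6\bigr)$, which is dominated by $T^2 N_{k_0+1}\sigma_{k_0+1}^2\asymp d^{2k_0+1}\log^2 d$. Dividing by $d^{2\alpha+1}$ gives $R(T)\lesssim d^{2(k_0-\alpha)}\log^2 d\ll\varepsilon$.

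One has to check that the logarithmic factor does not break the $\asymp$. I expect it is absorbed because the bound is actually $\varepsilon^2\log^2 d\le\varepsilon$ for $\alpha>k_0$. Combining the three steps yields $\overline{\mathcal L}_{\rm ST}(T)\asymp\varepsilon^2$.
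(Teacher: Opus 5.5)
Your Steps 2 and 3 are correct and are essentially the paper's argument, in the streamlined form used in the proof of Theorem~\ref{theo:general-activation}, which the paper notes also covers ReLU. The bias, variance-prefactor and $R(T)$ bounds all go through as you wrote them. Your low-degree bound $\sum_{k\le k_0}N_k/\sigma_k^2\asymp d^{2k_0+1}$ is sharper than the paper's $d^{k_0}/\nu$. You are also right that the logarithm is harmless: you get $R(T)\lesssim\varepsilon^2\log^2 d$, and $\varepsilon\log^2 d\to0$ when $\alpha>k_0$.

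The flaw is in Step 1: the estimate you write down would not give the order you claim.
\begin{itemize}
\item In the paper's normalization, every fixed degree carries $N_k\sigma_k^2\asymp d^{-1}$. You conflated this with $\sigma_k^2\asymp d^{-k-1}$. The tail $\sum_{k>\alpha}N_k\sigma_k^2$ is therefore of order $d^{-1}$, not $d^{-\lceil\alpha\rceil-1}$. Balancing your stated tail against $m=d^\alpha$ would give $\nu\asymp d^{-\alpha-\lceil\alpha\rceil-1}$.
\item The bound $\operatorname{tr}\Psi=K(x,x)=O(1)$ is too weak. With it, $m\le\operatorname{tr}\Psi/\nu$ only yields $\nu\lesssim d^{-\alpha}$. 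You need the exact value $K(x,x)=\mathbb E(\langle u,x\rangle)_+^2=1/(2d)$. The order $O(1)$ is correct only for the rescaled features $\sigma(\sqrt d\,\cdot)$ of Appendix~\ref{app:general-activation}, where correspondingly $\nu\asymp d^{-\alpha}$; you must use one normalization consistently.
\item Your Step 3 silently uses the correct orders, namely $\nu^{-2}\sum_{k>\alpha}N_k\sigma_k^2\gtrsim d^{\alpha}\nu^{-1}$ and $N_k\sigma_k^2\asymp d^{-1}$ just above $k_0$. So Steps 1 and 3 are inconsistent as written.
\end{itemize}

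The repair is Lemma~\ref{lem:nu-relu}, and it needs no boundary-block analysis at integer $\alpha$.
\begin{itemize}
\item Upper bound: $m\le\operatorname{tr}\Psi/\nu=1/(2d\nu)$ gives $\nu\lesssim d^{-\alpha-1}$.
\item Lower bound: keep a single fixed degree $k_\alpha\in\mathcal K$ with $k_\alpha>\alpha$. This gives $\sigma_{k_\alpha}^2+\nu\ge N_{k_\alpha}\sigma_{k_\alpha}^2/m\asymp d^{-\alpha-1}$. Since $\sigma_{k_\alpha}^2\asymp d^{-k_\alpha-1}$ is of smaller order, $\nu\gtrsim d^{-\alpha-1}$.
\end{itemize}

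The same fact $\operatorname{tr}\Psi\asymp d^{-1}$ removes your worry about uniform control of large degrees. The whole $k>k_0$ part of the numerator is at most $T^2\operatorname{tr}\Psi\lesssim d^{2k_0+1}\log^2 d$. This also corrects your claim that this part is dominated by degree $k_0+1$. For ReLU with even $k_0$ one has $\sigma_{k_0+1}=0$, and every degree of $\mathcal K$ strictly between $k_0$ and $\alpha$ contributes at this same order.
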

\end{samepage}

The proof is in Appendix~\ref{app:reluproofs}. Together with Theorem~\ref{theo:quadratic-floor}, this establishes optimal quadratic improvement\chg{ in the ReLU setting once $\alpha > k_0$}. 

\subsection{\chg{The role of the activation}}\label{subsec:general-activation-main}
\chg{We briefly investigate how the choice of activation affects Theorem~\ref{theo:weaktostrongrelu}. The short answer is that only the set $\mathcal K$ of degrees present in the kernel changes. Let $\sigma:\mathbb R\to\mathbb R$ be measurable with $|\sigma(r)|\le Ce^{C|r|}$, and consider the features $\sigma(\sqrt d\,\langle u,x\rangle)$, so that the preactivation is of order one (for ReLU this is a constant multiple of the features above). With $z\sim\mathcal N(0,1)$ and $\mathrm{He}_k$ the probabilists' Hermite polynomials, set
\begin{equation}\label{eq:hermite-coeff-main}
c_k=\frac{\mathbb E[\sigma(z)\mathrm{He}_k(z)]}{\sqrt{k!}},\qquad
\mathcal K_\sigma=\{k\ge0:\ c_k\neq0\}.
\end{equation}
For ReLU, $\mathcal K_\sigma=\mathcal K=\{0,1\}\cup2\mathbb N$; for $\tanh$, $\mathcal K_\sigma$ is the odd integers; for the logistic sigmoid it is $\{0\}$ together with the odd integers, and a generic bias makes it all of $\mathbb N$ (Appendix~\ref{app:general-activation}, Examples~\ref{ex:general-relu}--\ref{ex:general-sigmoid}).}

\begin{samepage}
\begin{theo}[\chg{Quadratic improvement: general activations}]\label{theo:general-activation-main}
\chg{Let $\sigma$ be as above, let $f^\star$ be a unit spherical harmonic of fixed degree $k_0\in\mathcal K_\sigma$, $k_0\ge1$, and let $m=d^\alpha$ with fixed $\alpha>0$. Assume $c_{k_\star}\ne0$ for some $k_\star>\alpha$ (which holds for every $\alpha$ if $\sigma$ is not a polynomial). Then, with the stopping time $T$ of \eqref{eq:quadratic-stop},
\[
\overline{\mathcal L}_{\rm TE}\asymp d^{-(\alpha-k_0)_+},
\qquad
\overline{\mathcal L}_{\rm ST}(T)\asymp\overline{\mathcal L}_{\rm TE}^{\,2}.
\]}
\end{theo}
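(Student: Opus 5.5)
The plan is to reduce Theorem~\ref{theo:general-activation-main} to the ReLU proof of Theorem~\ref{theo:weaktostrongrelu}. The formulas \eqref{eq:main-fixed-point}--\eqref{eq:main-two-term} use only the spectrum $\Psi$ and the coordinates of $f^\star$. We therefore first establish the spectral input for a general $\sigma$. By the Funk--Hecke formula, the kernel of the features $\sigma(\sqrt d\langle u,x\rangle)$ is diagonal in spherical harmonics, with eigenvalue $\lambda_k$ of multiplicity $N_k\asymp d^k$ on degree $k$. For fixed $k$, Gegenbauer-to-Hermite asymptotics give $\lambda_k= c_k^2 N_k^{-1}(1+o(1))\asymp d^{-k}$ when $c_k\ne0$, and $\lambda_k=o(d^{-k})$ when $c_k=0$. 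The growth condition $|\sigma(r)|\le Ce^{C|r|}$ ensures $\sum_k N_k\lambda_k=\E\sigma(z)^2+o(1)$ is bounded, which controls the infinite tail. This is the analogue of Appendix~\ref{subapp:asympNksigmak}; the only change from ReLU is that $\mathcal K$ is replaced by $\mathcal K_\sigma$ and the overall factor $d^{-1}$ is dropped, which is harmless because $\nu$ and $t$ rescale together.

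Next we locate $\nu$ from \eqref{eq:main-fixed-point} with $m=d^\alpha$. Let $k_-$ be the largest $k\in\mathcal K_\sigma$ with $k\le\alpha$, and let $k_+$ be the smallest $k\in\mathcal K_\sigma$ with $k>\alpha$, which exists by the assumption on $c_{k_\star}$.
\begin{itemize}
\item If $\alpha$ is not an element of $\mathcal K_\sigma$, the degrees below $\alpha$ contribute only $O(d^{k_-})=o(m)$ to the trace, so they cannot saturate it. The trace must therefore be carried by $k_+$ and the tail. We will show $\nu\asymp N_{k_+}\lambda_{k_+}/m\asymp d^{-\alpha}$ by bounding the contribution of degrees above $k_+$ by $\nu^{-1}\sum_{k>k_+}N_k\lambda_k$. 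The obstacle here is that the vanishing degrees $k\notin\mathcal K_\sigma$ still have small but nonzero $\lambda_k$ at finite $d$; we will show their total weight is $o(1)$ uniformly, using the $o(d^{-k})$ bound together with the tail bound.
\item If $\alpha=k_-\in\mathcal K_\sigma$, the degree-$\alpha$ block carries $d^\alpha$ of the trace up to constants, and the same argument gives $\nu\asymp d^{-\alpha}$ up to constants.
\end{itemize}
In both cases $\nu\asymp d^{-\alpha}$, with $\nu\ll\lambda_k$ for every $k<\alpha$ in $\mathcal K_\sigma$ and $\nu\gg\lambda_k$ for every $k>\alpha$. Substituting into \eqref{eq:main-teacher} gives $\overline{\mathcal L}_{\rm TE}=\nu/(\lambda_{k_0}+\nu)\asymp d^{-(\alpha-k_0)_+}$.

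For the student at $T=(\log m+\log\overline{\mathcal L}_{\rm TE}^{-1})/\lambda_{k_0}$, we bound the two terms of \eqref{eq:main-two-term} separately.

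\textbf{Bias.} The bias is $\bigl((\nu+\lambda_{k_0}e^{-\lambda_{k_0}T})/(\lambda_{k_0}+\nu)\bigr)^2$. The exponential factor is $e^{-\lambda_{k_0}T}=(m\overline{\mathcal L}_{\rm TE}^{-1})^{-1}$, which is at most of order $\overline{\mathcal L}_{\rm TE}$. Hence the bias is $\asymp\overline{\mathcal L}_{\rm TE}^2$, and it is bounded below by the same quantity via Theorem~\ref{theo:quadratic-floor}.

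\textbf{Variance.} The prefactor is $\nu\lambda_{k_0}/(\lambda_{k_0}+\nu)^2\asymp\overline{\mathcal L}_{\rm TE}$ when $\alpha>k_0$. The main step is to show that the trace ratio is $O(\overline{\mathcal L}_{\rm TE})$, that is, $O(d^{-(\alpha-k_0)})$. The denominator satisfies $\tr[\Psi(\Psi+\nu)^{-2}]\gtrsim\nu^{-1}m$. In the numerator, the degrees $k\le k_0$ contribute at most $\sum_{k\le k_0}N_k/\lambda_k$, which after division is $O(d^{k_0}\nu/m)=O(d^{k_0-\alpha})$. For degrees $k>k_0$ with $\lambda_k\ll\lambda_{k_0}$ we use $(1-e^{-\lambda_kT})^2\le(\lambda_kT)^2$, so their contribution is at most $T^2\sum_k N_k\lambda_k^3/(\lambda_k+\nu)^2$. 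Since $T\asymp d^{k_0}\log d$, the borderline degrees between $k_0$ and $\alpha$ with $\lambda_k\gg\nu$ give $T^2N_k\lambda_k\cdot\nu/m$ relative to the denominator. I expect this \emph{logarithmic factor} in $T$ to be the delicate point, precisely as in the ReLU proof. It is absorbed because $\lambda_k\le d^{-k_0-1}$ for the next nonzero degree, so the contribution gains a factor $d^{-1}$ over $(\log d)^2$. The tail $k>\alpha$ is bounded by $T^2\nu^{-1}\sum_k N_k\lambda_k^3/\nu\cdot\nu/m=o(d^{k_0-\alpha})$. Combining the two terms gives $\overline{\mathcal L}_{\rm ST}(T)\asymp\overline{\mathcal L}_{\rm TE}^2$. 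The case $\alpha\le k_0$ is trivial, since both sides are then of order one.
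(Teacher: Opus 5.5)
Your argument is correct and is essentially the paper's proof. It has the same ingredients: $\nu\asymp d^{-\alpha}$; the identity $e^{-\lambda_{k_0}T}=\overline{\mathcal L}_{\rm TE}/m$, giving $\overline B(T)\le 4\overline{\mathcal L}_{\rm TE}^{2}$; the bound $\overline V(T)\le\overline{\mathcal L}_{\rm TE}\,R(T)$, with the numerator of the trace ratio $R$ split at $k_0$ and the denominator $\gtrsim\nu^{-2}$; and Theorem~\ref{theo:quadratic-floor} for the lower bound. The degrees above $k_0$ contribute $\lesssim\nu^2T^2\operatorname{tr}\Psi\asymp d^{-2(\alpha-k_0)}(\log d)^2$ to $R$. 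That, and not a factor $d^{-1}$ from the next nonzero degree, is why the logarithm is harmless.

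The paper obtains $\nu\asymp d^{-\alpha}$ in two lines, from $\nu\le\operatorname{tr}\Psi/m$ and from the single degree $k_\star$. Your case split on $k_\pm$ and the concern about vanishing degrees are therefore unnecessary.

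One bound needs fixing. For $k\le k_0$, use $\lambda_k/(\lambda_k+\nu)^2\le 1/(4\nu)$ rather than $1/\lambda_k$, because degrees outside $\mathcal K_\sigma$ can have tiny nonzero $\lambda_k$. This gives exactly the $O(d^{k_0-\alpha})$ you state, whereas your written intermediate $O(d^{k_0}\nu/m)$ does not.
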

\end{samepage}

\chg{The proof is in Appendix~\ref{app:general-activation}. The kernel again has the form \eqref{eq:relukernel}, with $\sigma_k$ the degree-$k$ spherical-harmonic coefficient of $\sigma(\sqrt d\,\cdot)$, and $N_k\sigma_k^2\to c_k^2$ for each fixed $k$ (Lemma~\ref{lem:general-coefficients}), so $\sigma_k^2\asymp d^{-k}$ for $k\in\mathcal K_\sigma$, as for ReLU. The proof of Theorem~\ref{theo:weaktostrongrelu} uses the $\sigma_k$ only through these orders at finitely many degrees and the bound $\operatorname{tr}\Psi\lesssim1$, so it goes through unchanged.}


\begin{figure}[htbp]
\centering
\includegraphics[width=\linewidth]{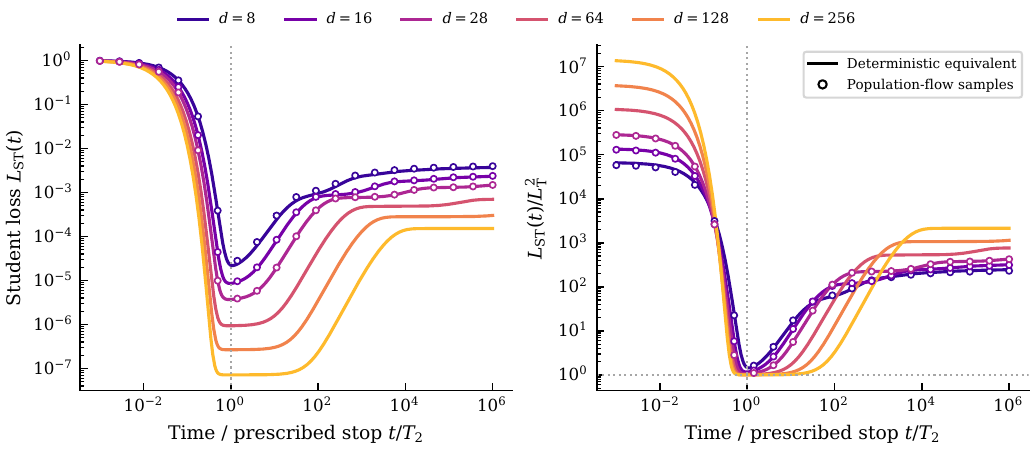}
\caption{\textbf{Quadratic improvement and the stopping window.} Unit degree-two target, $m=d^3$, with time divided by \chg{$T_2=4\log(d)/\sigma_2^2$, the stopping time $T$ of Theorem~\ref{theo:weaktostrongrelu} for $k_0=2$ and $m=d^3$}. Left: student loss; right: loss divided by squared teacher loss. Lines are \chg{the deterministic equivalent}; open markers \chg{(population-flow samples)} are population spherical-teacher observations at $d=8,16,28$ (32, 32, and 8 draws); see Appendix \ref{app:numerical-scheme} for details. See also uncertainty and numerical approximation quantification in Appendix~\ref{subapp:illustrative-time-curves}. The dotted line marks $T_2$.}
\label{fig:illustrative-time-relative}
\end{figure}


\section{Stopping-time sensitivity and the phase diagram}\label{subsec:phase-diagram}

To examine the effect of the stopping time in the WTS phenomenon, \chg{we vary the harmonic degree at which the student is stopped. For} a $\gamma\in\mathcal K$ with $\gamma > 0$, set
\begin{equation}\label{eq:phase-stop}
T_\gamma=\frac{\log d}{\sigma_\gamma^2}
\begin{cases}\alpha,&\alpha\le k_0,\\2\alpha-k_0,&\alpha>k_0.\end{cases}
\end{equation}
\chg{This is the stopping time \eqref{eq:quadratic-stop} of Theorem~\ref{theo:weaktostrongrelu} with the target degree $k_0$ replaced by $\gamma$, the logarithm evaluated using $m=d^\alpha$,  and the teacher error substituted by its leading asymptotic; $T_{k_0}$ recovers $T$.} Thus $T_\gamma\asymp d^{\gamma+1}\log d$\chg{, the time at which the student has learned the teacher's components up to degree $\gamma$ and not beyond}.

\begin{theo}[Stopping-time phases]\label{theo:phases}
In the pure-target setting of Theorem~\ref{theo:weaktostrongrelu}, \chg{for every stopping degree $\gamma\in\mathcal K$, the leading order of $\overline{\mathcal L}_{\rm ST}(T_\gamma)$ are as given in Figure~\ref{fig:phase-mechanism}.}
\end{theo}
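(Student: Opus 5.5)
The plan is to evaluate the two terms of \eqref{eq:main-two-term} separately at $t=T_\gamma$, exploiting the block structure of $\Psi$ (blocks $\sigma_k^2 I_{N_k}$, with $N_k\asymp d^k$ and $\sigma_k^2\asymp d^{-k-1}$). The target is a pure degree-$k_0$ harmonic, so $\Psi^{1/2}\beta$ lives in the degree-$k_0$ block.

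\textbf{The effective ridge.} First I would locate $\nu$ from \eqref{eq:main-fixed-point}. For $m=d^\alpha$, the degrees $k$ with $N_k\ll m$ are saturated, and the fixed point is set at the first degree $k_\alpha$ whose cumulative multiplicity reaches $m$. The expected outcome is $\nu\asymp d^{-\alpha-1}$ up to constants; when $\alpha$ is an integer in $\mathcal K$ there are boundary cases where $\nu$ sits inside the $\sigma_\alpha^2$ scale. This step is essentially the one already used for Theorem~\ref{theo:weaktostrongrelu}, and it gives $\overline{\mathcal L}_{\rm TE}\asymp \nu/(\sigma_{k_0}^2+\nu)\asymp d^{-(\alpha-k_0)_+}$.

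\textbf{The bias.} Since $\beta$ is supported on one block, the bias is explicit:
\[
\overline B(T_\gamma)=\Bigl(\frac{\nu+\sigma_{k_0}^2e^{-\sigma_{k_0}^2T_\gamma}}{\sigma_{k_0}^2+\nu}\Bigr)^2 .
\]
Three cases follow from how $\sigma_{k_0}^2T_\gamma$ compares with $\log d$.
\begin{itemize}
\item If $\gamma<k_0$, then $\sigma_{k_0}^2T_\gamma\to0$ polynomially, so $\overline B\approx(1+\nu/\sigma_{k_0}^2)^{-2}(1+\nu/\sigma_{k_0}^2)^2\asymp1$. There is no improvement.
\item If $\gamma>k_0$, the exponential is super-polynomially small, so $\overline B\asymp\overline{\mathcal L}_{\rm TE}^2$.
\item If $\gamma=k_0$, the exponential equals $d^{-(2\alpha-k_0)}$ or $d^{-\alpha}$. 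This is at most of the order $\nu/\sigma_{k_0}^2$, so $\overline B\asymp\overline{\mathcal L}_{\rm TE}^2$ again.
\end{itemize}

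\textbf{The variance.} Its prefactor is $\nu\sigma_{k_0}^2/(\sigma_{k_0}^2+\nu)^2\asymp\overline{\mathcal L}_{\rm TE}\min(1,\nu/\sigma_{k_0}^2)$, which is $\asymp\overline{\mathcal L}_{\rm TE}^2$ when $\alpha>k_0$ and $\asymp1$ when $\alpha\le k_0$. The trace ratio is
\[
R(t)=\frac{\sum_k N_k\sigma_k^2(\sigma_k^2+\nu)^{-2}(1-e^{-\sigma_k^2t})^2}{\sum_kN_k\sigma_k^2(\sigma_k^2+\nu)^{-2}} .
\]
I would estimate it degree by degree. Each summand of the denominator is $\asymp N_k/\sigma_k^2\asymp d^{2k+1}$ for $\sigma_k^2\gg\nu$, and $\asymp N_k\sigma_k^2/\nu^2$ for $\sigma_k^2\ll\nu$. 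Because $\sum_k N_k\sigma_k^2=\operatorname{tr}\Psi\lesssim1$, the tail is controlled, and the denominator is dominated by degrees near $k_\alpha$, with total order $\nu^{-1}\cdot m$-ish $\asymp d^{2\alpha+1}$ up to constants. At $T_\gamma$ the numerator keeps only degrees $k\le\gamma$ with full weight, plus polynomially small tails $(\sigma_k^2T_\gamma)^2$ for $k>\gamma$. This gives $R(T_\gamma)\asymp d^{-2(\alpha-\gamma)_+}$ up to log factors from the tail degree $\gamma+2$. Here I must be careful with the tail: for $k$ slightly above $\gamma$ the contribution is $N_k\sigma_k^2(\sigma_k^2T_\gamma)^2/\nu^2$, which may compete with the degree-$\gamma$ term and bring $\log^2 d$ factors. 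I expect this is the source of the $\widetilde\Theta$ entries in Figure~\ref{fig:phase-mechanism}. Then $\overline V(T_\gamma)\asymp\text{prefactor}\times R(T_\gamma)$.

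\textbf{Combining the terms.} Summing bias and variance and comparing exponents in $(\alpha,\gamma)$ yields the regions of Figure~\ref{fig:phase-mechanism}:
\begin{itemize}
\item $\gamma<k_0$: the bias dominates, giving no improvement.
\item $\gamma=k_0$: the quadratic region, consistent with Theorem~\ref{theo:weaktostrongrelu}.
\item $k_0<\gamma<\alpha$: the variance grows as $\overline{\mathcal L}_{\rm TE}^2d^{-2(\alpha-\gamma)}$. Since that extra factor is small, the total stays quadratic until the variance overtakes the bias.
\item $\gamma\ge\alpha$: the variance approaches $\overline{\mathcal L}_{\rm TE}$, so the improvement is nonquadratic or lost. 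This matches $\overline{\mathcal L}_{\rm ST}(\infty)=\overline{\mathcal L}_{\rm TE}$.
\end{itemize}

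\textbf{Main obstacle.} The hardest step is the sharp two-sided estimate of the trace ratio $R(T_\gamma)$, uniformly across the boundary cases. These are integer $\alpha\in\mathcal K$, where $\nu$ is comparable to $\sigma_\alpha^2$, and the gaps from odd degrees being absent in $\mathcal K$. I would handle them by first proving a lemma giving $\nu$ to constant order in each case, then bounding each degree's summand explicitly, and finally summing the infinite tail with $\operatorname{tr}\Psi\lesssim1$.
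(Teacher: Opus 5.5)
Your bias analysis is correct. So is the trace-ratio estimate $R(T_\gamma)\asymp d^{-2(\alpha-\gamma)_+}$ up to $\log^2 d$, which matches the paper's estimates of the two traces in term 4)(2). The proof breaks at the variance prefactor. You claim $\nu\sigma_{k_0}^2/(\sigma_{k_0}^2+\nu)^2\asymp\overline{\mathcal L}_{\rm TE}\min(1,\nu/\sigma_{k_0}^2)$, but the ratio is inverted:
\[
\frac{\nu\sigma_{k_0}^2}{(\sigma_{k_0}^2+\nu)^2}=\overline{\mathcal L}_{\rm TE}\,\frac{\sigma_{k_0}^2}{\sigma_{k_0}^2+\nu}=\overline{\mathcal L}_{\rm TE}\bigl(1-\overline{\mathcal L}_{\rm TE}\bigr).
\]
For $\alpha>k_0$ this is of teacher-error order $d^{k_0-\alpha}$, not $\overline{\mathcal L}_{\rm TE}^2$.

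This error is not cosmetic. With your prefactor, $\overline V(T_\gamma)\asymp\overline{\mathcal L}_{\rm TE}^2R(T_\gamma)\le\overline{\mathcal L}_{\rm TE}^2$ can never overtake the bias. Your computation therefore predicts $\overline{\mathcal L}_{\rm ST}(T_\gamma)\asymp\overline{\mathcal L}_{\rm TE}^2$ for every $\gamma\ge k_0$, so neither the nonquadratic nor the late-stop phase would exist. Your combining step covers this by asserting that for $\gamma\ge\alpha$ ``the variance approaches $\overline{\mathcal L}_{\rm TE}$,'' citing $\overline{\mathcal L}_{\rm ST}(\infty)=\overline{\mathcal L}_{\rm TE}$. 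That contradicts your own bound. The region boundaries also come out wrong. The threshold $\gamma=(\alpha+k_0)/2$ and the exponent $\rho$ never appear, and you place the nonquadratic behaviour at $\gamma\ge\alpha$. Figure~\ref{fig:phase-mechanism} instead has it on $\frac{\alpha+k_0}{2}<\gamma<\alpha$, with $\widetilde\Theta(\overline{\mathcal L}_{\rm TE})$ for $\gamma\ge\alpha$.

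With the correct prefactor your framework closes and reproduces the paper's term 4)(2). For $\alpha>k_0$, up to logarithms, $\overline V(T_\gamma)\asymp d^{k_0-\alpha}d^{-2(\alpha-\gamma)_+}$. That is $d^{2\gamma-3\alpha+k_0}$ for $\gamma<\alpha$, and $d^{k_0-\alpha}\asymp\overline{\mathcal L}_{\rm TE}$ for $\gamma\ge\alpha$. Compare this with $\overline B\asymp d^{2k_0-2\alpha}$ for $\gamma\ge k_0$. The variance dominates exactly when $2\gamma>\alpha+k_0$. This gives the quadratic/nonquadratic boundary and $\overline{\mathcal L}_{\rm ST}\asymp d^{2\gamma-3\alpha+k_0}=\overline{\mathcal L}_{\rm TE}^{\rho}$ with $\rho=(3\alpha-k_0-2\gamma)/(\alpha-k_0)$. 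For $\alpha\le k_0$ the corrected prefactor is $\asymp\min(1,d^{\alpha-k_0})$, which is harmless since the bias is $\asymp1$ there.

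Apart from this, working directly with $\overline B$ and $\overline V$ is the same computation as the paper's four-term bookkeeping, where terms 1), 2)+3) and 4)(1) sum to $\overline B$. Your version is just organized more economically.
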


\begin{figure}[htbp]
\centering
\begin{minipage}[c]{0.46\linewidth}
\centering
\begin{tikzpicture}[x=0.36cm,y=0.36cm,font=\footnotesize]
  \def\kz{2}   
  \def\xm{11}  
  \def\ym{11}  
  \fill[phCap!70]   (0,0) rectangle (\kz,\ym);                          
  \fill[phEarly!80] (\kz,0) rectangle (\xm,\kz);                        
  \fill[phQuad!80]  (\kz,\kz) -- (\xm,\kz) -- (\xm,{(\xm+\kz)/2}) -- cycle;   
  \fill[phNonq!80]  (\kz,\kz) -- (\xm,{(\xm+\kz)/2}) -- (\xm,\xm) -- cycle;   
  \fill[phLate!70]  (\kz,\kz) -- (\xm,\xm) -- (\xm,\ym) -- (\kz,\ym) -- cycle; 
  \draw[black!60,dashed] (\kz,\kz) -- (\xm,{(\xm+\kz)/2});
  \draw[black!60] (\kz,\kz) -- (\xm,\xm);
  \draw[black!60] (\kz,0) -- (\kz,\ym);
  \draw[black!60] (\kz,\kz) -- (\xm,\kz);
  \draw[->] (0,0) -- (\xm+0.8,0) node[right] {$\alpha$};
  \draw[->] (0,0) -- (0,\ym+0.8) node[above] {$\gamma$};
  \node[anchor=north] at (\xm/2,-0.9) {teacher capacity $\rightarrow$};
  \node[anchor=south,rotate=90] at (-1.0,\ym/2) {training time $\rightarrow$};
  \draw (\kz,0.15) -- (\kz,-0.15) node[below] {$k_0$};
  \draw (0.15,\kz) -- (-0.15,\kz) node[left] {$k_0$};
  \node[rotate=90] at (\kz/2,\ym/2) {capacity limit};
  \node at ({(\kz+\xm)/2},\kz/2) {early stop};
  \node at (9,3.4) {quadratic};
  \node at (9,6.9) {nonquadratic};
  \node at (5,8.8) {late stop};
  \node[anchor=west,black,inner sep=1pt] at (\xm+0.1,{(\xm+\kz)/2}) {$\gamma=\frac{\alpha+k_0}{2}$};
  \node[anchor=west,black,inner sep=1pt] at (\xm+0.1,\xm) {$\gamma=\alpha$};
  \fill[white] (3,2) circle (0.22); \draw (3,2) circle (0.22) node[above right=1pt] {Fig.~\ref{fig:illustrative-time-relative}};
\end{tikzpicture}
\end{minipage}\hfill
\begin{minipage}[c]{0.52\linewidth}
\footnotesize
\renewcommand{\arraystretch}{1.8}
\setlength{\tabcolsep}{4pt}
\begin{tabular}{@{}l l l l@{}}
\toprule
Phase & Region & $\overline{\mathcal L}_{\rm ST}(T_\gamma)$ & $\overline B / \overline V$  \\
\midrule
\textcolor{phCap}{\rule{1.2ex}{1.2ex}}\ capacity limit & $\alpha\le k_0$ & $\asymp1$ & $\overline B$ \\
\textcolor{phEarly}{\rule{1.2ex}{1.2ex}}\ early stop & $\gamma<k_0$ & $\widetilde\Theta(1)$ & $\overline B$ \\
\textcolor{phQuad}{\rule{1.2ex}{1.2ex}}\ quadratic & $k_0\le\gamma\le\frac{\alpha+k_0}2$ & $\widetilde\Theta(\overline{\mathcal L}_{\rm TE}^{\,2})$ & $\overline B$ \\
\textcolor{phNonq}{\rule{1.2ex}{1.2ex}}\ nonquadratic & $\frac{\alpha+k_0}2<\gamma<\alpha$ & $\widetilde\Theta(\overline{\mathcal L}_{\rm TE}^{\,\rho})$ & $\overline V$ \\
\textcolor{phLate}{\rule{1.2ex}{1.2ex}}\ late stop & $\gamma\ge\alpha$ & $\widetilde\Theta(\overline{\mathcal L}_{\rm TE})$ & $\overline V$ \\
\bottomrule
\end{tabular}
\end{minipage}
\caption{\chg{\textbf{Stopping-time phases (Theorem~\ref{theo:phases}).} Left: the five regions in the plane of teacher capacity exponent $\alpha$ and stopping degree $\gamma$ (drawn for $k_0=2$; admissible degrees form the discrete set $\mathcal K$). The marked point is the setting of Figure~\ref{fig:illustrative-time-relative}. Right: the phases, with the last four for $\alpha>k_0$, where $\overline{\mathcal L}_{\rm TE}\asymp d^{-(\alpha-k_0)}$, and $\rho=(3\alpha-k_0-2\gamma)/(\alpha-k_0)\in(1,2)$. The $\overline{B}/\overline{V}$ indicates which error terms dominates the bias/variance decomposition. Details are in Appendix~\ref{subapp:generaltime}.}}
\label{fig:phase-mechanism}
\end{figure}

\chg{The five phases of Figure~\ref{fig:phase-mechanism} follow from comparing $\overline B(T_\gamma)$ and $\overline V(T_\gamma)$ (Appendix~\ref{subapp:generaltime}, cf.\ the summary \eqref{eq:student-error-general-stopping}). \phbox{phCap}~If $\alpha\le k_0$ the teacher cannot represent the target, its residual dominates the bias, and both errors are of order one whatever the stop.} \phbox{phEarly}~Before the target scale \chg{($\gamma<k_0$)}, the bias is dominated by the unlearned target and leaves order-one student error even if the teacher error vanishes. \phbox{phQuad}~After the target scale and before the quadratic boundary, $\overline B$ reaches its squared-teacher floor \chg{$\overline{\mathcal L}_{\rm TE}^{\,2}$} while $\overline V$ is smaller\chg{; on the boundary itself the two have the same polynomial order and a factor $(\log d)^2$ can appear}. \phbox{phNonq}~Further training leads the student to learn the variance of the teacher, leading to intermediate improvement exponents $\rho\in(1,2)$. \phbox{phLate}~At late harmonic stops \chg{($\gamma\ge\alpha$)} the variance has teacher-error order\chg{: the student has learned the degrees near the teacher's capacity edge, where the teacher's residual lives, and copies it}. 

%

\section{Mixed targets: common guarantees and incompatible times}\label{sec:mixed-main}

\chg{Since different spectral components of the target are learned on different timescales, for a target with several modes it may be impossible to choose} a single stopping time at which all of them are learned optimally. However, if the teacher has sufficient capacity to learn all spectral components of the target, weak-to-strong generalization is possible for suitable stopping times. 

Consider a general harmonic target with maximum degree $k_{0}$\chg{, and write $f^\star_k$ for its degree-$k$ component, so that $\|f^\star\|_{\mathcal D}^2=\sum_{k\le k_0}\|f^\star_k\|_{\mathcal D}^2$}. Figure \ref{fig:generalregimes} illustrates how $k_{0}$, the teacher capacity exponent $\alpha$ and the stopping-time exponent $\gamma$ give rise to different regimes of the WTS phenomenon. \chg{The regimes are established} in full detail in Appendix~\ref{subapp:gentimetarget}.

\begin{figure}[htbp]
    \centering
\begin{tikzpicture}[x=1cm,y=1cm,font=\footnotesize,
  seg/.style={line width=2.2pt,line cap=round},
  dot/.style={circle,fill=#1,draw=#1,line width=1pt,inner sep=0pt,minimum size=4.5pt}]  
  \def\xa{0}   
  \def\xb{10}  
  \def\ya{1.9}
  \node[anchor=east] at (\xa-0.3,\ya) {\textbf{(a)}\ \ $\alpha\le k_0$};
  \draw[seg,wtsOrange,<-] (\xa,\ya) -- (6.5,\ya);
  \draw[seg,wtsPink,->]  (6.5,\ya) -- (\xb,\ya) node[right,black] {$\gamma$};
  \node[dot=wtsPink] at (6.5,\ya) {};
  \node[above=4pt] at (3.25,\ya) {\textbf{Conditional WTS}};
  \node[above=4pt] at (8.25,\ya) {\textbf{No WTS}};
  \node[below=4pt] at (3.25,\ya) {$\gamma<\alpha$};
  \node[below=4pt] at (6.5,\ya)  {$\alpha$};
  \node[below=4pt] at (8.25,\ya) {$\gamma\ge\alpha$};
  \def\yb{0.4}
  \node[anchor=east] at (\xa-0.3,\yb) {\textbf{(b)}\ \ $\alpha>k_0$};
  \draw[seg,wtsOrange,<-] (\xa,\yb) -- (3.2,\yb);
  \draw[seg,wtsPurple]    (3.2,\yb) -- (6.5,\yb);
  \draw[seg,wtsPink,->]   (6.5,\yb) -- (\xb,\yb) node[right,black] {$\gamma$};
  \node[dot=wtsPurple] at (3.2,\yb) {};
  \node[dot=wtsPink]   at (6.5,\yb) {};
  \node[above=4pt] at (1.6,\yb)  {\textbf{Conditional WTS}};
  \node[above=4pt] at (4.85,\yb) {\textbf{WTS}};
  \node[above=4pt] at (8.25,\yb) {\textbf{No WTS}};
  \node[below=4pt] at (1.6,\yb)  {$\gamma<k_0$};
  \node[below=4pt] at (3.2,\yb)  {$k_0$};
  \node[below=4pt] at (4.85,\yb) {$k_0\le\gamma<\alpha$};
  \node[below=4pt] at (6.5,\yb)  {$\alpha$};
  \node[below=4pt] at (8.25,\yb) {$\gamma\ge\alpha$};
  \def\yl{-0.8}
  \draw[seg,wtsPurple] (0.3,\yl) -- (0.8,\yl);
  \node[anchor=west] at (0.9,\yl) {$\overline{\mathcal{L}}_{\rm ST}\lesssim d^{k_0-\alpha}\overline{\mathcal{L}}_{\rm TE}=o(\overline{\mathcal{L}}_{\rm TE})$};
  \draw[seg,wtsPink] (6.3,\yl) -- (6.8,\yl);
  \node[anchor=west] at (6.9,\yl) {$\overline{\mathcal{L}}_{\rm ST}\asymp\overline{\mathcal{L}}_{\rm TE}$};
\end{tikzpicture} 
    \caption{Weak-to-strong generalization regimes as a function of the stopping time exponent $\gamma$, for (a) $\alpha\leq k_{0}$ and (b) $\alpha>k_{0}$. \chg{\phbox{wtsPurple}~guaranteed WTS, for every target; \phbox{wtsOrange}~conditional WTS, governed by \eqref{eq:wts-condition}; \phbox{wtsPink}~no WTS.} The \phbox{wtsPurple}~interval includes $\gamma=k_{0}$ and excludes $\gamma=\alpha$.}
    \label{fig:generalregimes}
\end{figure}

\paragraph{\chg{Guaranteed weak-to-strong generalization.}}
\chg{\phbox{wtsPurple}~If the teacher has enough capacity to learn every mode of the target, $\alpha>k_0$, and the student is stopped after the target degree but before the teacher's capacity edge, $k_0\le\gamma<\alpha$, then}
\begin{equation}\label{eq:mixed-guarantee}
\chg{\overline{\mathcal{L}}_{\rm ST}(T_\gamma)\lesssim d^{k_0-\alpha}\,\overline{\mathcal{L}}_{\rm TE}=o(\overline{\mathcal{L}}_{\rm TE})}
\end{equation}
\chg{(Appendix~\ref{subapp:gentimetarget}, Regime 4). This is weak-to-strong generalization for every mixed target, whatever its coefficients. The bound is sharp: a pure harmonic of degree $k_0$ attains it, since by Theorem~\ref{theo:phases} its student error at these stops is $\overline{\mathcal{L}}_{\rm TE}^2\asymp d^{k_0-\alpha}\overline{\mathcal{L}}_{\rm TE}$, so no better uniform guarantee is possible. The conceptual point is that although the modes of a mixed target are learned on different timescales, the conflict between them does not threaten polynomial improvement: as long as the teacher can learn the target, any stop in the window $[k_0,\alpha)$ delivers the full effect. The window ends at $\alpha$ for the same reason as in Theorem~\ref{theo:phases}. The teacher's residual is concentrated on the degrees near its capacity edge, where $\sigma_k^2\asymp\nu$, and once $\gamma\ge\alpha$ the student has learned those degrees and copies the residual, so $\overline{\mathcal{L}}_{\rm ST}(T_\gamma)\asymp\overline{\mathcal{L}}_{\rm TE}$ for every target (the \phbox{wtsPink}~intervals of Figure~\ref{fig:generalregimes}).}

\paragraph{\chg{Conditional weak-to-strong generalization.}}
\chg{In the \phbox{wtsOrange}~intervals of Figure~\ref{fig:generalregimes}, the stop $\gamma<\alpha$ comes before some degree carrying target energy. Define the held-out energy, the target energy above the stopping degree,}
\begin{equation}\label{eq:wts-target-tail}
    \chg{\mathcal{E}_{>\gamma}}:=\chg{\sum_{k>\gamma}^{k_{0}}\|f^\star_k\|_{\mathcal D}^2\asymp}\sum_{k>\gamma}^{k_{0}}d^{-k-1}\beta_{k}^{2}\chg{,}
\end{equation}
\chg{where $\beta_k^2$ is the squared norm of the degree-$k$ coefficient vector of $\beta$ (Appendix~\ref{subapp:gentimetarget}). This is the part of the target the student cannot have learned by time $T_\gamma$, and it is a floor for its error. Everything else is small: the bias on degrees up to $\gamma$ and the variance inherited from the teacher are both $o(\overline{\mathcal{L}}_{\rm TE})$ because $\gamma<\alpha$. Hence $\overline{\mathcal{L}}_{\rm ST}(T_\gamma)\asymp\mathcal E_{>\gamma}+o(\overline{\mathcal{L}}_{\rm TE})$, and the rule is a simple trade-off: WTS occurs if and only if the energy held out by the stop is negligible against the teacher's own loss}
\begin{equation}\label{eq:wts-condition}
\chg{\mathcal{E}_{>\gamma}=o(\overline{\mathcal{L}}_{\mathrm{TE}}).}
\end{equation}
\chg{The condition is the same in both cases of Figure~\ref{fig:generalregimes}; what changes between them is the form of the teacher error,}
\begin{equation}\label{eq:mixed-teacher-error}
\chg{\overline{\mathcal{L}}_{\rm TE}\asymp\sum_{k\le\alpha}d^{-\alpha-1}\beta_k^2+\sum_{\alpha<k\le k_0}d^{-k-1}\beta_k^2\ \ (\alpha\le k_0),\qquad
\overline{\mathcal{L}}_{\rm TE}\asymp d^{-\alpha-1}\sum_{k\le k_0}\beta_k^2\ \ (\alpha>k_0).}
\end{equation}
\chg{When $\alpha\le k_0$ the teacher learns the degrees up to $\alpha$ and leaves the rest essentially unlearned; when $\alpha>k_0$ it has the same error $d^{-\alpha-1}$ on every degree. In each case \eqref{eq:wts-condition} compares the held-out energy with the teacher error \eqref{eq:mixed-teacher-error} (Appendix~\ref{subapp:gentimetarget}, Remark~\ref{rem:one-condition}).}

\chg{Whether the condition holds depends on the coefficients, and  a two-mode target can illustrate the possible behaviors. Let}
\begin{equation}\label{eq:two-mode-target}
\chg{f_d^\star=\sqrt{1-w_d}\,Y_{2,d}+\sqrt{w_d}\,Y_{4,d},}
\end{equation}
\chg{where $Y_{2,d}$ and $Y_{4,d}$ are orthonormal spherical harmonics of degrees two and four, so $k_0=4$, and $w_d\to0$ is the energy of the small degree-four mode.}

\paragraph{\chg{A vanishing teacher error is not enough.}}
\chg{Take $m=d^3$, so $\alpha=3<k_0=4$ (case (a)), with $w_d=c/d$ for a constant $c>0$. The teacher error tends to zero, but only because $w_d\to0$: the teacher learns the dominant degree-two mode, while the degree-four direction lies above its capacity edge, so almost all of the small degree-four energy $w_d$ stays unlearned. The degree-four mode is then a floor for every student:}
\begin{equation}\label{eq:no-poly}
\chg{\overline{\mathcal L}_{\rm TE}\asymp d^{-1},\qquad
\inf_{t\ge0}\overline{\mathcal L}_{\rm ST}(t)\asymp d^{-1}.}
\end{equation}
Early stopping removes the degree-two error, but the unlearned degree-four energy $w_d\asymp d^{-1}$ remains at every stopping time, so the best student improves on the teacher by at most a constant factor, and no polynomial improvement is possible. In contrast, the guarantee \eqref{eq:mixed-guarantee} needs the teacher to learn every mode, not just to have small error. This is the role of the condition $\alpha>k_0$ in \eqref{eq:mixed-guarantee}.  \chg{Proposition~\ref{prop:mixed-capacity} in Appendix~\ref{app:mixed-examples} proves the bounds, and Figure~\ref{fig:capacity-obstruction} there illustrates their convergence.}

\paragraph{\chg{Incompatible stopping times can rule out quadratic improvement.}}
Take $m=d^5$, \chg{so $\alpha=5>k_0=4$ (case (b)), with} $w_d=Bd^{-5}$ \chg{for a constant $B>0$}. \chg{Now the teacher learns both modes, and each mode on its own would admit quadratic improvement. In the mixture, the time needed to learn the small degree-four mode is also the time at which the student is copying the teacher's degree-four error on the dominant mode. The infimum over $t$ is attained, up to constants, at any time of order $1/\sigma_4^2$, and}
\begin{equation}\label{eq:mixed-main-optimum}
\chg{\overline{\mathcal L}_{\rm TE}\asymp d^{-3},\qquad
\inf_{t\ge0}\overline{\mathcal L}_{\rm ST}(t)
\asymp d^{-5}\asymp \overline{\mathcal L}_{\rm TE}^{5/3}.}
\end{equation}
\chg{The improvement is polynomial, as \eqref{eq:mixed-guarantee} guarantees, but the lower bound holds at every stopping time and rules out quadratic improvement (Proposition~\ref{prop:mixed-timing} in Appendix~\ref{app:mixed-examples}).}

\begin{figure}[htbp]
\centering
\includegraphics[width=\linewidth]{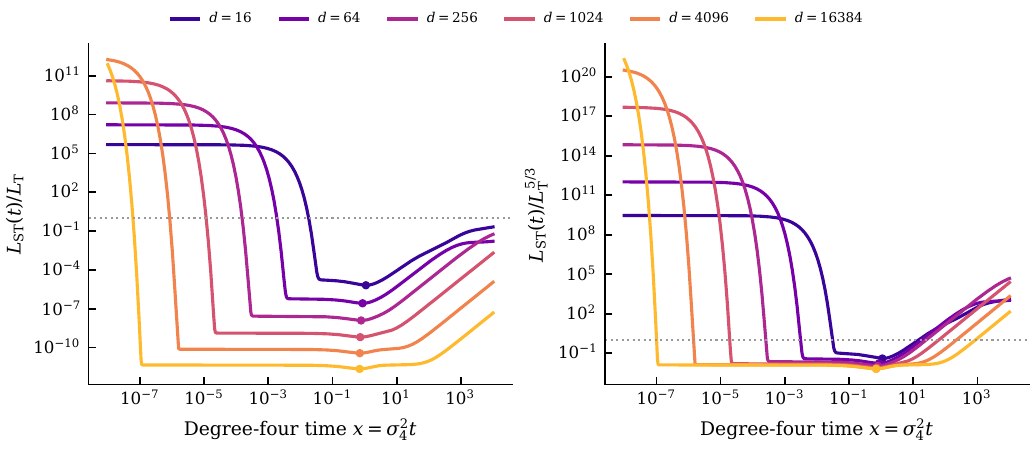}
\caption{\textbf{A timing conflict that prevents quadratic improvement.} \chg{Deterministic-equivalent} curves for the degree-two/degree-four mixture in~\eqref{eq:mixed-main-optimum}\chg{, with $B=C^2/24$ and $C=(12\pi-37)/24$ (Remark~\ref{rem:mixed-constants})}. Left: student-to-teacher loss ratio; right: student loss divided by $\overline{\mathcal L}_{\rm TE}^{5/3}$. Dots are numerically attained minima. The \chg{right-hand curves converge to $C^{1/3}/48$, the leading constant of the normalized optimum in \eqref{eq:mixed-main-optimum} (Remark~\ref{rem:mixed-constants})}.}
\label{fig:mixed-target-time-curves}
\end{figure}

\section{Conclusions and further discussion}\label{sec:conclusions}

Using deterministic equivalents, we have computed the leading-order population errors of a random feature teacher and an early-stopped student, and shown that a pure harmonic target admits optimal quadratic improvement. The mechanism is early stopping: the student learns each spectral component of the target on its own time scale and inherits the teacher's feature variance only on the slower time scale of the teacher's capacity edge, and weak-to-strong generalization is the window between the two. 

One of the key takeaways is that this window is target dependent: different targets require different stopping times for optimal improvement, and for a target with several components, incompatible stopping times can rule out quadratic improvement even when every component would admit it on its own. Nonetheless, the effect is robust provided the teacher has the capacity to learn every component of the target: a fixed gap below capacity guarantees polynomial improvement for a range of stopping times. When the target has a component beyond the teacher's capacity, no stopping time gives more than a constant-factor gain. This illustrates a key limitation of weak-to-strong generalization: the student can suppress the teacher's variance, but it cannot recover what the teacher could not represent.  

We mention some key limitations of our analysis.  The conclusions remain conditional on Gaussian universality and deterministic-equivalent accuracy; controlling that approximation is a mathematical direction for further work.  We have taken the student infinitely wide. A finite student of width $M_{\rm ST}$ adds a second source of feature variance, and one expects a threshold effect: once $M_{\rm ST}$ is large enough relative to the teacher.  Finite student widths admit deterministic equivalent analaysis as well.

\paragraph{AI Use Statement.}
This paper was prepared with the assistance of LLMs. LLMs were used to generate experiment code and figures, restructure and revise the document, generate small proofs and other mathematical artifacts (specifically, the examples and the last three appendices), and verify sections of the paper. The majority of the text was hand-written and all main text was hand-revised.

\bibliographystyle{plainnat}
\bibliography{references}

\clearpage
\appendix
\section*{Supplementary material}

The appendices give the extended related work, the deterministic-equivalent calculations, and the proofs for the ReLU setting. The general bias--variance decomposition follows the resolvent calculations. ReLU spectral and fixed-point estimates precede the error asymptotics; the stopping-time analysis, the two mixed-target examples, and the extension to other activations follow. Numerical details for the main figures are collected at the end.

\begin{center}
\begin{tabular}{@{}p{.08\linewidth}p{.77\linewidth}r@{}}
\toprule
 & Contents & Page\\
\midrule
\ref{app:extended-related} & Extended related work & \pageref{app:extended-related}\\ & Comparison with \citet{medvedev2025weak} & \pageref{subapp:medvedev-comparison}\\[3pt]
\ref{app:detequivalent} & Deterministic equivalents for teacher and student errors & \pageref{app:detequivalent}\\
 & Dyson equation and the Gaussian approximation & \pageref{app:dyson-background}\\
 & Single-resolvent teacher calculation & \pageref{subapp:teacherdeteq}\\
 & Double-resolvent student calculation & \pageref{subapp:studentdeteq}\\
 & Bias--variance decomposition & \pageref{app:mechanism}\\[3pt]
\ref{app:reluproofs} & Proofs for ReLU activation & \pageref{app:reluproofs}\\
 & Harmonic spectrum & \pageref{subapp:asympNksigmak}\\
 & Teacher error scaling & \pageref{app:teacherscaling}\\
  & Student error scaling: quadratic improvement & \pageref{app:studentscaling}\\
\ref{app:generalsettings} & General stopping times and targets & \pageref{app:generalsettings}\\
\ref{app:mixed-examples} & Two mixed-target examples & \pageref{app:mixed-examples}\\
 & A component above capacity: no polynomial improvement & \pageref{subapp:mixed-capacity}\\
 & Incompatible stopping times: no quadratic improvement & \pageref{subapp:mixed-timing}\\
 & Leading constants used in the figures & \pageref{rem:mixed-constants}\\[3pt]
\ref{app:general-activation} & Other activations & \pageref{app:general-activation}\\
 & Harmonic and Hermite coefficients & \pageref{lem:general-coefficients}\\
 & Quadratic improvement for general activations & \pageref{theo:general-activation}\\
 & Examples: ReLU, tanh, logistic sigmoid & \pageref{ex:general-relu}\\[3pt]
\ref{app:numerical-scheme} & Numerical details for the main figures & \pageref{app:numerical-scheme}\\
\bottomrule
\end{tabular}
\end{center}

\section{Extended related work}\label{app:extended-related}

Since \citet{openai2024weaktostrong} first empirical investigation of the WTS phenomenon, many works attempted to do a theoretical analysis of it using simple settings. We discuss below a selection of works representing the range of different approaches used to understand the phenomenon.

\textbf{Data-dependent mechanisms.} First, some works also focus on understanding how data structure influences the WTS phenomenon. \citet{hunter2025theoretical} demonstrate that the gain in performance comes not only from the student's correction of the teacher's mistakes but also from the student's good performance on a portion not covered by the teacher labels. This occurs when one assumes that the strong model's predictions are robust in neighborhoods of pseudo-labeled points (teacher-labeled points) and that bad points, that is, points with incorrect pseudo-labels, have many good neighbors. They also provide empirical evidence for these assumptions in finite samples. \citet{shin2025weaktostrong} define a quantity called \textit{overlap density}, which represents how frequently a dataset contains points with both hard and easy patterns. The weak model learns only the easy patterns, but the strong model is able to learn the hard patterns from the overlap points and thereby generalize to points with only hard patterns. Building on this intuition, they theoretically prove that the WTS phenomenon can be expressed as a function of the overlap density. In the classification setting, \citet{wu2025provable} define weak and strong models based on how well their features represent the data features. Considering a teacher whose labels are asymptotically equivalent to random guessing, they show that, under certain conditions, the student's performance undergoes an asymptotic transition from random guessing to perfect generalization.

\textbf{Misfit error.} \citet{charikar2024quantifying} consider a more general notion of weak and strong models, in which the difference between them is determined by how well their representations capture the underlying data. This can include differences in expressivity, as in our setting, as well as other factors such as the amount of pretraining. They study a regression setting with the squared loss and quantify the WTS gain in terms of the \textit{misfit error}. This quantity is defined as the error of the strong student model on the labels generated by the weak teacher model; equivalently, it measures the squared disagreement between the student and the teacher. Under their assumptions, they show that the error of the strong model evaluated on the ground-truth labels is bounded above by the error of the weak model on the ground-truth labels minus the misfit error. In our notation, this corresponds to $\mathcal{L}_{\mathrm{ST}}\leq \mathcal{L}_{\mathrm{TE}}-\operatorname{misfit}$
Using representation and projection techniques, they show that, globally, the component of the teacher predictor that is not reproduced by the student contributes to the teacher's error relative to the ground-truth labels. Consequently, the larger the misfit, the larger the improvement. \citet{yao2025understanding} establish a corresponding upper bound while also studying lower bounds. Unlike \citet{charikar2024quantifying}, they use a KL-divergence-based loss instead of the squared loss. Other work that studies the \textit{misfit error} includes \citet{mulgund2025relating}, who extend the misfit-based inequality to loss functions given by Bregman divergences. This extension also includes the classification setting, since cross-entropy can be expressed in terms of the KL divergence, which is a Bregman divergence.

\textbf{Representation geometry.} Other works study WTS from a geometric and representation-based perspective. \citet{xue2025representations} quantify the WTS improvement in terms of the overlap between the student principal-component subspace and the orthogonal complement of the teacher principal-component subspace. The principal-component subspace of each model contains the dominant directions of variance in its internal representation. Therefore, this overlap identifies directions that are not captured by the teacher but are well represented by the student, allowing the student to correct some of the teacher's errors. From a related geometric perspective, \citet{dong2025discrepancies} analyze ridgeless regression with intrinsically low-dimensional feature spaces. They show that the student inherits the teacher's variance along directions shared by their representations. In contrast, along directions represented by the teacher but not by the student, this variance is reduced by a factor given by the ratio between the dimension of the student feature space and the number of weakly labeled samples. Thus, differences between the two representation spaces prevent the student from fully inheriting the teacher noise and then improving generalization.

\textbf{Regularization and early-stopping.} With respect to how regularization procedures influence the phenomenon, \citet{moniri2026on} show that, in the setting of linear ridge regression, the student can surpass the teacher by having better regularization. This is similar to the findings of \citet{medvedev2025weak} and our work, but we use implicit regularization imposed by early-stopped gradient flow instead of explicit regularization. They also show that student regularization that is better aligned with the target function improves performance. They also go beyond linear models and frozen representations by considering a pretrained student in a nonlinear setting. The student learns easy, task-specific features from the teacher while retaining hard features acquired during pretraining that the teacher cannot capture, allowing it to outperform the teacher.

\textbf{Beyond linear.} Going beyond linear settings, \citet{oh2025from} use CNNs for both the teacher and the student and adopt yet another characterization of model strength. They characterize it through the activation function: a linear activation for the weak teacher and ReLU for the strong student. Even in this setting, which differs from ours, they find that, in a data-abundant regime, early stopping the student training leads to better performance.

\textbf{Scaling laws.} Finally, regarding the important problem of scaling laws, \chg{\citet{ildiz2025highdimensional} study ridge regression trained on the predictions of a surrogate model in the proportional high-dimensional regime. Using deterministic equivalents for the two-stage procedure, they characterize when the student improves on the surrogate and derive scaling laws in the sample size and model size.} \citet{wu2026improved} show that WTS can improve the scaling law of the test error in a random feature ridge regression setting. Closely related to our work, they also derive a deterministic equivalent for the student test error.

\subsection{\chg{Comparison with the ReLU bounds of \citet{medvedev2025weak}}}\label{subapp:medvedev-comparison}

\chg{Our model and two-step procedure are those of \citet{medvedev2025weak}, so their ReLU bounds can be evaluated in the setting of Theorems~\ref{theo:weaktostrongrelu} and~\ref{theo:phases} and compared term by term with the deterministic equivalent. \citet[Theorem~6.7]{medvedev2025weak} bounds the student error for a two-layer random-feature teacher with any activation, and \citet[Corollary~6.8]{medvedev2025weak} specializes it to ReLU on the sphere, where the eigenvalues come with the multiplicities $N_k$. In our notation (teacher width $m$, harmonic coefficients $\sigma_k$, admissible degrees $\mathcal K$, and $s^+$ the smallest degree in $\mathcal K$ larger than $s$), the corollary reads as follows. Let $f^\star$ be a unit target of degree $k_0$, let the teacher be the population-optimal one, and let $T>0$. Then, for every $s\in\mathcal K$ with $s\ge k_0$, with probability at least $1-2/m-N_{k_0}e^{-cm/N_{k_0}}$ over the teacher features,}
\begin{equation}\label{eq:medvedev-corollary}
\begin{aligned}
\mathcal L_{\rm ST}(T)\le\;
&\frac{4\bigl(\sum_{j\le s}\sigma_j^2N_j\bigr)\bigl(\sum_{j>s}\sigma_j^2N_j\bigr)\log^2 m}{\sigma_s^4\,m}\,\mathcal L_{\rm TE}
&&\Bigr\}\ \text{(A) alignment}\\
&+\sigma_{s^+}^4T^2\,\mathcal L_{\rm TE}
&&\Bigr\}\ \text{(B) learned high degrees}\\
&+\frac{e^{-\sigma_{k_0}^2T}}{2-e^{-\sigma_{k_0}^2T}}\,\|f^\star\|_{\mathcal D}^2
&&\Bigr\}\ \text{(C) unlearned target.}
\end{aligned}
\end{equation}
\chg{Since the bound holds for every admissible $s$ and the probability does not depend on $s$, it also holds with the infimum over $s\in\mathcal K$, $s\ge k_0$, of the right-hand side, which is the form stated in their corollary.}

\chg{\textbf{Evaluation.} Take $m=d^\alpha$ with $\alpha>k_0$, so that $\overline{\mathcal L}_{\rm TE}\asymp d^{-(\alpha-k_0)}$, and stop at the harmonic time $T_\gamma\asymp d^{\gamma+1}\log d$ of Section~\ref{subsec:phase-diagram} with $\gamma\ge k_0$; Theorem~\ref{theo:weaktostrongrelu} is the case $\gamma=k_0$. Using $N_j\asymp d^j$, $\sigma_j^2\asymp d^{-j-1}$ and $\sum_j\sigma_j^2N_j=K(x,x)=1/(2d)$, where every fixed degree carries a $d$-independent share (Appendix~\ref{subapp:asympNksigmak}), the three terms of \eqref{eq:medvedev-corollary} become}

\begin{table}[htbp]
\centering
\footnotesize
\renewcommand{\arraystretch}{1.4}
\setlength{\tabcolsep}{5pt}
\begin{tabular}{@{}l l l@{}}
\toprule
term & at a free $s$ & at the best $s$, namely $s=\gamma$ \\
\midrule
(A) alignment & $d^{\,2s-\alpha}\log^2d\cdot\mathcal L_{\rm TE}$ & $d^{\,2\gamma-\alpha}\log^2d\cdot\mathcal L_{\rm TE}$ \\
(B) learned high degrees & $d^{-2(s^+-\gamma)}\log^2d\cdot\mathcal L_{\rm TE}$ & $d^{-2(\gamma^+-\gamma)}\log^2d\cdot\mathcal L_{\rm TE}$ \\
(C) unlearned target & $\le\mathcal L_{\rm TE}/m=d^{-\alpha}\,\mathcal L_{\rm TE}$ & $\le\mathcal L_{\rm TE}/m=d^{-\alpha}\,\mathcal L_{\rm TE}$ \\
\bottomrule
\end{tabular}
\caption{\chg{The three terms of \eqref{eq:medvedev-corollary} for a unit target of degree $k_0<\alpha$, stopped at $T_\gamma$ with $\gamma\ge k_0$, at a free $s\in\mathcal K$ with $s\ge\gamma$ and at the $s$ that minimizes their sum. Term~(A) does not depend on the stopping time, and term~(C) does not depend on $s$.}}
\label{tab:medvedev-terms}
\end{table}

\chg{\textbf{Choice of $s$.} Term~(A) increases with $s$ and term~(B) decreases with $s$, so the sum is smallest where the two cross. Because $\mathcal K$ is discrete the crossing cannot be attained exactly, and the smallest admissible choice $s=\gamma$ is optimal for every $\alpha\le2\gamma+8$; beyond that a larger $s$ is better. Writing $\mathcal L_{\rm ST}(T_\gamma)\lesssim d^{-r}\log^2d\cdot\mathcal L_{\rm TE}$, the third column gives $r=\min\bigl(\alpha-2\gamma,\,2(\gamma^+-\gamma)\bigr)$, so term~(A) is the binding one exactly when $\alpha\le2\gamma+2(\gamma^+-\gamma)$, which is $\alpha\le2\gamma+4$ for even $\gamma$.}

\textbf{Key differences between \citet{medvedev2025weak} and this work:}
\begin{itemize}
\item \chg{\textbf{Improvement threshold.} At $\gamma=k_0$ the bound \eqref{eq:medvedev-corollary} improves on the teacher only when $\alpha>2k_0$, whereas the deterministic equivalent improves for every $\alpha>k_0$. The same threshold appears as the hypothesis $m\ge\Theta(d^{2k})$ of \citet[Theorem~3.2]{medvedev2025weak}.}
\item \chg{\textbf{Improvement at the quadratic stop.} At $\gamma=k_0$, in the range where term~(A) is rate-limiting, the bound \eqref{eq:medvedev-corollary} gives $\mathcal L_{\rm ST}/\mathcal L_{\rm TE}\lesssim d^{-(\alpha-2k_0)}\log^2d$, whereas Theorem~\ref{theo:weaktostrongrelu} gives $\overline{\mathcal L}_{\rm ST}/\overline{\mathcal L}_{\rm TE}\asymp d^{-(\alpha-k_0)}$.}
\item \chg{\textbf{Best exponent.} Writing $\mathcal L_{\rm ST}\lesssim\mathcal L_{\rm TE}^{\,e}$ with $e=1+r/(\alpha-k_0)$, at $\gamma=k_0$ this is $e=2-k_0/(\alpha-k_0)$ while term~(A) is rate-limiting, at most $5/3$ over all $k_0$ and $\alpha$, and tending to $3/2$ as $\alpha\to\infty$. That limit is the exponent $1.49$ of \citet[Theorem~3.2]{medvedev2025weak}, obtained there for fixed $d$ and $m\to\infty$. Theorem~\ref{theo:weaktostrongrelu} gives $e=2$.}
\item \chg{\textbf{Later stops.} Here $r\le\alpha-2\gamma$, so the bound \eqref{eq:medvedev-corollary} decays monotonically in $\gamma$ and is vacuous once $\gamma\ge\alpha/2$, with no phase structure. Theorem~\ref{theo:phases} gives improvement for every $\gamma<\alpha$, with exponent $\rho\in(1,2]$ and the five phases of Figure~\ref{fig:phase-mechanism}.}
\end{itemize}

\chg{\textbf{Scope.} The comparison is one-sided in an important respect. \citet[Theorem~6.7]{medvedev2025weak} is nonasymptotic, holds with high probability for the actual random-feature teacher without a Gaussian universality assumption, covers any bounded activation, and applies at fixed $d$ with $m\to\infty$, a regime about which our $d\to\infty$ scaling with $m=d^\alpha$ says nothing. Our results describe the deterministic equivalent and transfer to the spherical model under Assumption~\ref{ass:de}. Within the common regime of a pure harmonic target and polynomial teacher width, the deterministic equivalent gives the sharp exponent, the sharp threshold, and the phase diagram; the general bound gives an exponent strictly below the quadratic ceiling that their own lower bound establishes.}

\section{Deterministic equivalents for the teacher and student errors}\label{app:detequivalent}

In this appendix we calculate the deterministic equivalents of the teacher and student population errors. The calculations are general\chg{ and do not depend on the activation}. The specific values of $\mathcal{U},\mathcal{D}, \beta$ and $\Psi$ will be substituted in \chg{Appendix \ref{app:reluproofs}}. Also, to ease notation, we will use $m$ here to denote the number of neurons of the teacher\chg{, $m=M_{\rm TE}$; the student is the infinite-width spectral flow \eqref{studenttrain}}.

\subsection{Background on the Dyson equation and Gaussian approximation}\label{app:dyson-background}

In the random features setting we have two sources of randomness: the data distribution and the random features distribution. It turns out that when calculating the population error, the latter remains as a resolvent of a random covariance matrix. The concept of deterministic equivalents of random matrices can be useful in this case. The idea is to find a deterministic matrix around which this random resolvent concentrates. This deterministic matrix will have the same spectral information as its random counterpart when the dimension is very large and the reduced randomness makes the computations more tractable.

One might consider the expectation matrix as a natural choice of deterministic equivalent but, as argued in \cite{couillet2022random}, this choice is not desirable since it involves an integration over a probability space which can be very challenging numerically. Instead, a more tractable deterministic equivalent can be derived through the associated matrix Dyson equation \citep{erdos2019matrix}. For a Gaussian random matrix $A \in \R^{d \times d}$ the matrix Dyson equation for its resolvent $R(z,A)$ can be derived by applying the Stein's lemma together with the tautological equation 
\begin{equation}\label{eq:resolventtautology}
    R(z;A)(A-zI) = I
\end{equation}
where $I \in \R^{d\times d}$ is the identity matrix, and some concentration properties of the Gaussian distribution. This leads to a self-consistent equation for a scalar quantity, and by some technical stability results, one can show that the Dyson equation has a unique solution which is given in terms of this scalar quantity. 

We illustrate this theory with a related example.  Put $n=d$ and let $A=P+SGS$, where $P=\E[A]$, and $G$ is a standard GOE matrix, i.e, a matrix with independent upper-triangular entries given by
\begin{equation}
    G_{ij} \sim
    \begin{cases}
        \mathcal{N}\left(0,\frac{1}{n}\right), & i<j,\\
        \mathcal{N}\left(0,\frac{2}{n}\right), & i=j,
    \end{cases}
\end{equation}
and $S$ is a symmetric matrix that encodes the variance profile. 

Stein's lemma in matrix format gives
\begin{equation}
    \E[Zf(Z)] = \Sigma\E[\nabla f(Z)]
\end{equation}
where $f$ is a nice test function and $Z$ is a gaussian matrix. 
Define $Z=A-\E[A]$, we have $Z \sim \mathcal{N}(0, \Sigma)$ where $\Sigma\in \mathbb{R}^{n^{2}\times n^{2}}$ has entries
\begin{equation}
    \Sigma_{ij,k\ell}=
    \frac{1}{n}\left[\left(S^{2}\right)_{ik}\left(S^{2}\right)_{j\ell}+\left(S^{2}\right)_{i\ell}\left(S^{2}\right)_{jk}
    \right].
\end{equation}
Applying the lemma for $f(Z)=R(z,A)$ gives, to leading order,
\begin{equation}
    \E[R(z;A)(A-P)] \approx -\E[R(z;A)\mathcal{S}(R(z;A))]
\end{equation}
where
\begin{equation}
    \mathcal{S}(R) \coloneqq \frac1n S^{2}\tr{(RS^{2})}.
\end{equation}
The other GOE contraction contributes $-n^{-1}\E[RS^2RS^2]$, which is of smaller order for bounded $S$ and a spectral parameter bounded away from the real axis. Concentration then motivates factorizing the leading expected product. Writing $\overline R=\E[R(z;A)]$, we obtain
\begin{equation}\label{eq:protodyson}
    \E[R(z;A)(A-P)] \approx -\overline R\mathcal{S}(\overline R).
\end{equation}

Combining this with the expectation of \eqref{eq:resolventtautology} motivates the \textbf{Dyson equation}:
\begin{equation}\label{eq:dysonequation}
    M(P-\mathcal{S}(M)-zI)=I.
\end{equation}
where $M\in\mathbb C^{d\times d}$ is the deterministic equivalent of $R(z;A)$ for a nonreal spectral parameter $z$. This is a background derivation; extending its error control toward the spectrum requires the relevant stability estimates.

We then show that the resolvent $R(z;A)$ is a solution of a perturbed Dyson equation and by stability and uniqueness of the solution, the resolvent is close to the solution of the original one, that is, we ended up showing that $R(z;A)$ is close to $M$ in a suitable sense (see \cite{paquette2024random} for more details). 

Defining the scalar quantity $\kappa(z)\coloneqq n^{-1}\tr{(MS^{2})}$, \eqref{eq:dysonequation} becomes
\begin{equation}\label{eq:detequivalenteq}
    M = (\E[A]-S^{2}\kappa(z)-zI)^{-1}
\end{equation}
and we obtain the scalar self consistent equation
\begin{equation}
    \kappa(z) = \frac1n\tr{(S^{2}(\E[A]-S^{2}\kappa(z)-zI)^{-1})}.
\end{equation}
Thus, to find $M$ we only need to solve this self-consistent equation for $\kappa(z)$ and substitute in \eqref{eq:detequivalenteq}.

In our case, we will be looking for the deterministic equivalent of a feature-feature covariance matrix. Let $X_{1}, X_{2}, ..., X_{n}\in \R^{d}$ be the rows of $X\in \R^{n \times d}$. The corresponding feature-feature covariance matrix is given by
\begin{equation}
    A = \frac{1}{n}X^{T}X \in \R^{d \times d}.
\end{equation}

In this situation, if $X$ has Gaussian entries, the matrix $A$ will depend quadratically on a Gaussian distribution so applying Stein's lemma does not directly yield the linear Dyson form above. It turns out that it is possible to use a linearization trick from free probability (see subsection $10.3$ of \cite{mingo2017free}) to solve this issue and also obtain a Dyson equation for the resolvent of these type of matrices. Again, see \cite{paquette2024random} for details on how to derive the Dyson equation and deterministic equivalent in this case. We use these calculations in the single- and double-resolvent analyses below.



An issue arises for our ReLU random features setting. In this case, both the data and the random features are sampled uniformly from the sphere. The Dyson equation framework, however, is derived using Gaussian integration by parts, or equivalently Stein's lemma. We therefore invoke the \textbf{Gaussian Universality Ansatz} to connect the spherical ReLU model to a Gaussian model with the same relevant covariance structure.  We explain heuristically why one should expect it, and then we discuss related literature where versions of this have been proven rigorously.

\textbf{Cumulant expansion heuristic.} 
Stein's lemma can be generalized to general random variables, in the form of a \emph{cumulant expansion}.  For a random variable $X$ and a smooth enough function $f$
\begin{equation}\label{eq:cumulantexpansion}
    \E[X f(X)]=\kappa_{1}(X)\E[f(X)]+\kappa_{2}(X)\E[f'(X)]+\frac{\kappa_{3}(X)}{2}\E[f''(X)]+\cdots,
\end{equation}
where $\kappa_{i} (X)$ is the $i$-th cumulant of $X$.
Note that since all cumulants for $k>2$ of the Gaussian distribution are zero, \eqref{eq:cumulantexpansion} gives Stein lemma when $X$ is Gaussian. With that, what we are in fact supposing is that the higher cumulants of the random vector $\sigma(U^\top x)$ (for $U$ a matrix of uniform spherical vectors and data vectors $x$) do not influence resolvent statistics; indeed, entries of this vector are affected by large collections of almost independent random variables which lead to central limit theorem effects.


\textbf{Related literature on the Gaussian Universality Ansatz.}
A central work in this area is \citet{hu2023universality}, who prove that, under suitable assumptions in the proportional regime, nonlinear random feature models and Gaussian models with matching covariance have asymptotically equivalent training and generalization errors. Related results were obtained by \citet{montanari2022universality}, who establish covariance universality for the training error and the minimum test error among near empirical risk minimizers. The Gaussian equivalence principle was also studied by \citet{goldt2022gaussian}, who established rigorous conditions for single layer generative models and applied the Gaussian description to pre learned features and kernel methods. The work of \citet{schroder2023deterministic} is particularly close to ours, since it combines Gaussian universality with deterministic equivalents for deep random feature models. They prove universality of the ridge regression test error when the learner and target share the same intermediate layers, and conjecture extensions to more general architectures and convex losses.

\textbf{Related literature on deterministic equivalent approximations.}
Deterministic equivalents can provide quantitatively accurate approximations to population errors. In particular, \citet{defilippis2024dimensionfree} establish a dimension free nonasymptotic multiplicative approximation for random feature ridge regression under concentration and spectral assumptions. Their bounds give vanishing relative errors under suitable scaling of the sample size, number of features and regularization, even when the population error itself vanishes. Moreover, their infinite data limit agrees with our teacher deterministic equivalent. Related nonasymptotic bounds for ridge regression and kernel ridge regression are given by \citet{cheng2024dimension} and \citet{misiakiewicz2024non}, respectively. These results support expecting accurate deterministic approximations in our setting, although they do not directly control our approximation errors. Our analysis considers a two stage learning procedure, where teacher generated labels introduce additional dependencies and cross terms. A closely related result is given by \citet{wu2026improved}, who derive a deterministic equivalent for the student excess error in a teacher student random feature ridge regression model, together with an additive nonasymptotic bound. Their result establishes relative accuracy when the approximation remainder is negligible compared with the student error.


\subsection{Optimally trained teacher}

Since we assume that the teacher is optimally trained, let us first calculate the optimal weight vector. This is a standard calculation, but we repeat it here for completeness. 

\begin{lemma}\label{lemma:optimalweigth}
    Given the setting of Section \ref{sec:setup}, for the squared population loss of the teacher
    \begin{equation}
        \mathcal{L}_{\mathrm{TE}} = \E_{x \sim \mathcal{D}}\left[(f_{\mathrm{teacher}}(x)-f^{*}(x))^2\right],
    \end{equation}
    we have that the optimal weight is
    \begin{equation}
        w_{*} = (U_{m}\Psi U_{m}^{T})^{-1}U_{m}\Psi\beta
    \end{equation}
    where $U_{m}\coloneqq U/\sqrt{m}$, and $U\in \R^{m \times d}$ has the fixed weight vectors $u_{i}\in \R^{d}$ as its rows. With that, we obtain
    \begin{equation}
        \mathcal{L}_{\mathrm{TE}} = \langle U_{m}^{T}(U_{m}\Psi U_{m}^{T})^{-1}U_{m}\Psi\beta - \beta, \Psi[U_{m}^{T}(U_{m}\Psi U_{m}^{T})^{-1}U_{m}\Psi\beta -\beta]\rangle.
    \end{equation}   
\end{lemma}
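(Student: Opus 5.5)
The plan is to write the teacher as an element of the span of its $m$ features and reduce the population loss to a finite-dimensional least-squares problem in $w$. In the orthonormal basis $(e_j)$ the feature $\sigma(\langle u_i,\cdot\rangle)$ has coordinates $\psi_j^{1/2}e_j(u_i)$. Therefore the teacher $f_{\rm teacher}(x)=m^{-1/2}\sum_i w_i\sigma(\langle u_i,x\rangle)$ has coefficient vector $\Psi^{1/2}U_m^T w$. Here we interpret the rows of $U$ as the coordinate vectors of the features in the (rescaled) basis, i.e.\ $U_{ij}=e_j(u_i)$, and $U_m=U/\sqrt m$. The target has coefficient vector $\Psi^{1/2}\beta$. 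By Parseval in $L^2(\mathcal D)$,
\begin{equation}
\mathcal L_{\rm TE}(w)=\|\Psi^{1/2}(U_m^Tw-\beta)\|^2=\langle U_m^Tw-\beta,\Psi(U_m^Tw-\beta)\rangle .
\end{equation}

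Next I expand this as the quadratic $w^TU_m\Psi U_m^Tw-2w^TU_m\Psi\beta+\langle\beta,\Psi\beta\rangle$. It is convex in $w$ because $U_m\Psi U_m^T\succeq0$. Setting the gradient to zero gives the normal equations $U_m\Psi U_m^Tw=U_m\Psi\beta$. Assuming $U_m\Psi U_m^T$ is invertible, the unique minimizer is $w_*=(U_m\Psi U_m^T)^{-1}U_m\Psi\beta$.

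Invertibility is the only non-formal point, and I expect it to be the main obstacle. The matrix $U_m\Psi U_m^T$ is the Gram matrix $\frac1m\bigl(\langle\sigma(\langle u_i,\cdot\rangle),\sigma(\langle u_k,\cdot\rangle)\rangle_{\mathcal D}\bigr)_{ik}$. So it is invertible iff the $m$ features are linearly independent in $L^2(\mathcal D)$. For ReLU with $u_i$ i.i.d.\ uniform on the sphere, this holds almost surely: the kernel has infinitely many nonzero eigenvalues, and distinct ReLU ridge functions are linearly independent for distinct directions. If one prefers not to argue this, I would state the lemma with the Moore--Penrose pseudoinverse. The minimal loss is unchanged, since the projection onto the feature span is unique.

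Substituting $w_*$ into the expression for $\mathcal L_{\rm TE}(w)$ gives the stated formula directly. I would also note the interpretation: $\Psi^{1/2}U_m^T(U_m\Psi U_m^T)^{-1}U_m\Psi^{1/2}$ is the orthogonal projector onto the span of the feature coefficient vectors. This projector form is what the single-resolvent calculation for Theorem~\ref{theo:selfconsistenteacher} will use.
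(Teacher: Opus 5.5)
Your proposal is correct and is essentially the paper's proof. Both reduce $\mathcal L_{\rm TE}$ to the quadratic form $\langle U_m^Tw-\beta,\Psi(U_m^Tw-\beta)\rangle$, solve the normal equations, and substitute back; the paper gets the form by writing $f_{\rm teacher}(x)=m^{-1/2}w^TUx$ and $f^\star(x)=\beta^Tx$ with $x$ as feature coordinates of covariance $\Psi$, while you use Parseval in the eigenbasis with $U_{ij}=e_j(u_i)$.

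Your invertibility discussion goes beyond the paper, which simply assumes it. ReLU ridge functions are independent only for pairwise distinct \emph{and non-antipodal} directions: since $(\langle u,x\rangle)_+-(\langle -u,x\rangle)_+=\langle u,x\rangle$, three coplanar antipodal pairs are already dependent. That condition still holds almost surely for i.i.d.\ uniform $u_i$, so your conclusion stands.
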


\begin{proof}
Just plugging in the definitions, we get 
\begin{equation}\label{eq:teacherpopulationerror}
\begin{split}
    \mathcal{L}_{\mathrm{TE}} &= \E_{x \sim \mathcal{D}}\left[(f_{\mathrm{teacher}}(x)-f^{*}(x))^2\right]\\
    &= \E_{x \sim \mathcal{D}}\left[\left(\frac{1}{\sqrt{m}}w^{T}Ux-\beta^{ T}x\right)^2\right]\\
    &= \E_{x \sim \mathcal{D}}\left[\left(\sum_{l=1}^{d}\left(\frac{1}{\sqrt{m}}\sum_{k=1}^{m}w_{k}U_{kl}x_{l}-\beta_{l}x_{l}\right)\right)^{2}\right]\\
    &=\sum_{l=1}^{d}\psi_{l}\left(\frac{1}{\sqrt{m}}\sum_{k=1}^{m}w_{k}U_{kl}-\beta_{l}\right)^2
\end{split}
\end{equation}
where $w \in \R^{m}$ is the vector of trainable weights.

Thus, the optimal weight $w_{*} \in \R^{m}$ is given by
\begin{equation}
    w_{*} \coloneqq \argmin_{w\in \R^{m}} \left\langle \frac{1}{\sqrt{m}}U^Tw-\beta, \Psi\left(\frac{1}{\sqrt{m}}U^{T}w-\beta\right)\right\rangle.
\end{equation}
and it solves
\begin{equation}\label{eq:optimalweigth}
    \nabla_{w}\left\langle \frac{1}{\sqrt{m}}U^Tw-\beta, \Psi\left(\frac{1}{\sqrt{m}}U^{T}w-\beta\right)\right\rangle = 0.
\end{equation}
Using that
\begin{equation}
\begin{split}
    \nabla_{w}\left\langle \frac{1}{\sqrt{m}}U^Tw-\beta, \Psi\left(\frac{1}{\sqrt{m}}U^{T}w-\beta\right)\right\rangle &= \left(\frac{1}{\sqrt{m}}U^{T}w-\beta\right)^{T}(\Psi+\Psi^{T})\nabla_{w}\left(\frac{1}{\sqrt{m}}U^{T}w-\beta\right)\\
    &= 2\frac{1}{\sqrt{m}}U\Psi\left(\frac{1}{\sqrt{m}}U^{T}w-\beta\right), 
\end{split}
\end{equation}
we obtain from  \eqref{eq:optimalweigth} that 
\begin{equation}
     2\frac{1}{\sqrt{m}}U\Psi\left(\frac{1}{\sqrt{m}}U^{T}w_{*}-\beta\right)  = 0,
\end{equation}
such that
\begin{equation}\label{eq:optimalteachweight}
    w_{*} = (U_{m}\Psi U_{m}^{T})^{-1}U_{m}\Psi\beta.
\end{equation}

Plugging this expression into the teacher population error \eqref{eq:teacherpopulationerror} we get
\begin{equation}\label{eq:teacherror}
    \mathcal{L}_{\mathrm{TE}} = \langle U_{m}^{T}(U_{m}\Psi U_{m}^{T})^{-1}U_{m}\Psi\beta - \beta, \Psi[U_{m}^{T}(U_{m}\Psi U_{m}^{T})^{-1}U_{m}\Psi\beta -\beta]\rangle
\end{equation}
as claimed.
\end{proof}

\subsection{Single-resolvent teacher error deterministic equivalent}\label{subapp:teacherdeteq}

The self-consistent equation used below belongs to the classical sample covariance theory of \citet{silverstein1995empirical}; deterministic equivalents for bilinear forms of the resolvent are developed by \citet{hachem2013bilinear}. Following the exposition in the lecture notes of \citet{paquette2024random}, we introduce the deterministic resolvent and calculate its associated teacher error. The connection to the ReLU model uses the Gaussian Universality Ansatz. Theorem \ref{theo:selfconsistenteacher} below evaluates the deterministic quantity; approximation of the random teacher error is a separate question.

Recall that, in the coordinates used in this appendix, $\Psi=\diag(\psi_1,\ldots,\psi_d)$ is the population covariance matrix and $\beta$ is the target coefficient vector, so that $f^{*}(x)=\beta^{T}x$ and $\|f^{*}\|_{L^2(\mathcal{D})}^{2}=\langle\beta,\Psi\beta\rangle$. In particular, $\Psi^{1/2}\beta$ is the target coefficient vector in the corresponding orthonormal basis. Here $m=M_{\mathrm{TE}}$ denotes the teacher width. We use $\Psi+\nu$ as shorthand for $\Psi+\nu I$.

\begin{defn}[Self-consistent parameter]\label{def:teacher-nu}
For $0<m<\operatorname{rank}(\Psi)$, define $\nu>0$ to be the unique solution of
\begin{equation}\label{eq:nuequation}
    m = \tr\bigl(\Psi(\Psi+\nu I)^{-1}\bigr).
\end{equation}
The solution exists and is unique because the right-hand side is continuous and strictly decreasing in $\nu>0$, from $\operatorname{rank}(\Psi)$ to $0$.
\end{defn}

\begin{defn}[Deterministic resolvent]\label{def:deterministic-resolvent}
For the sample covariance matrix
\[
    H=\Psi^{1/2}U_m^{T}U_m\Psi^{1/2},
    \qquad R(z)=(H-zI)^{-1},
\]
define its deterministic resolvent by
\begin{equation}\label{eq:deterministic-resolvent}
    \overline{R}(z)=(-\Psi\kappa(z)-zI)^{-1},
\end{equation}
where $\kappa$ is the analytic solution on the upper half-plane with $\operatorname{Im}\kappa(z)>0$ of
\begin{equation}\label{eq:selfconsistentm}
    -\kappa(z)=\left(1+\frac{1}{m}\tr\bigl(\Psi\overline{R}(z)\bigr)\right)^{-1}.
\end{equation}
We extend this solution to the lower half-plane by complex conjugation. This is the deterministic resolvent associated with the deformed Marchenko--Pastur law (cf.\ Theorem~17 in the notes).
\end{defn}

Let us first rewrite the teacher population error in terms of the resolvent. In fact, by \eqref{eq:teacherror} we have
\[
\begin{split}
    \mathcal{L}_{\mathrm{TE}}
    &= -\langle (U_m\Psi U_m^T)^{-1}U_m\Psi\beta,U_m\Psi\beta\rangle
       +\langle\beta,\Psi\beta\rangle\\
    &= -\left\langle\Psi^{1/2}\beta,
       \left(\sum_{\lambda_j\neq0}v_jv_j^T\right)\Psi^{1/2}\beta\right\rangle
       +\langle\beta,\Psi\beta\rangle\\
    &= \left\langle\Psi^{1/2}\beta,
       \left(\sum_{\lambda_j=0}v_jv_j^T\right)\Psi^{1/2}\beta\right\rangle,
\end{split}
\]
where $\lambda_j$ and $v_j$ are the eigenvalues and an orthonormal basis of eigenvectors of $H$. Here we used that
$\Psi^{1/2}U_m^T(U_m\Psi U_m^T)^{-1}U_m\Psi^{1/2}$ is the orthogonal projector onto the range of $H$. Thus, writing $P_0\coloneqq\sum_{\lambda_j=0}v_jv_j^T$, we obtain
$\mathcal{L}_{\mathrm{TE}}=\langle\Psi^{1/2}\beta,P_0\Psi^{1/2}\beta\rangle$.

Then, if $\Gamma\subset\mathbb C$ is a small positively oriented contour around zero enclosing no nonzero eigenvalues of $H$, we can write
\[
    \mathcal{L}_{\mathrm{TE}}
    =-\frac{1}{2\pi i}\oint_\Gamma
    \langle\Psi^{1/2}\beta,R(z)\Psi^{1/2}\beta\rangle\,dz.
\]
Replacing the resolvent in this expression by its deterministic counterpart, we define the deterministic-equivalent teacher error as
\begin{equation}\label{eq:approxLTE}
    \overline{\mathcal{L}}_{\mathrm{TE}}
    \coloneqq-\frac{1}{2\pi i}\oint_\Gamma
    \langle\Psi^{1/2}\beta,\overline{R}(z)\Psi^{1/2}\beta\rangle\,dz,
\end{equation}
where we now take $\Gamma$ sufficiently small to enclose only the isolated pole at zero of $\overline{R}$. The following theorem gives an explicit expression for this deterministic quantity.

\begin{theo}[Deterministic-equivalent teacher error]
\label{theo:selfconsistenteacher}
With $\nu$ as in Definition~\ref{def:teacher-nu}, the deterministic-equivalent teacher error is
\begin{equation}\label{eq:LTEdetequiv}
    \overline{\mathcal{L}}_{\mathrm{TE}}
    =\nu\langle\Psi^{1/2}\beta,(\Psi+\nu I)^{-1}\Psi^{1/2}\beta\rangle.
\end{equation}
\end{theo}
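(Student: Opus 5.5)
The plan is to evaluate the contour integral \eqref{eq:approxLTE} directly as a residue at $z=0$. This needs the behaviour of $\kappa(z)$ near zero. Since $\overline R(z)=(-\Psi\kappa(z)-zI)^{-1}$ is diagonal, we have
\[
\langle\Psi^{1/2}\beta,\overline R(z)\Psi^{1/2}\beta\rangle=\sum_j\frac{\psi_j\beta_j^2}{-\psi_j\kappa(z)-z}.
\]
The integral therefore reduces to computing, for each $j$, the residue at $0$ of $-1/(\psi_j\kappa(z)+z)$. The first step is to show that, in the regime $m<\operatorname{rank}\Psi$, the solution $\kappa$ is analytic near $0$ and has the expansion $\kappa(z)=z/\nu+O(z^2)$. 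The isolated pole of $\overline R$ at zero then comes from $\psi_j\kappa(z)+z\approx z(\psi_j/\nu+1)$.

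To obtain this expansion I will rewrite the self-consistent equation \eqref{eq:selfconsistentm} in terms of $\theta(z):=-z/\kappa(z)$. With this substitution,
\[
\tfrac1m\tr(\Psi\overline R)=\tfrac1m\tr\bigl(\Psi(\Psi\kappa+z)^{-1}\bigr)\cdot(-1)=\tfrac{1}{m\kappa}\tr\bigl(\Psi(\Psi-\theta)^{-1}\bigr)\cdot(-1).
\]
Equation \eqref{eq:selfconsistentm} is $-1/\kappa=1+\tfrac1m\tr(\Psi\overline R)$. Multiplying through by $\kappa$ gives
\[
-1=\kappa-\tfrac1m\tr\bigl(\Psi(\Psi-\theta)^{-1}\bigr).
\]
As $z\to0$ with $\kappa\to0$, this forces $\tfrac1m\tr(\Psi(\Psi-\theta)^{-1})\to1$. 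Writing $\theta=-\nu$, the limit is exactly \eqref{eq:nuequation}.

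The implicit function theorem, applied to $F(\kappa,\theta)=\kappa+1-\tfrac1m\tr(\Psi(\Psi-\theta)^{-1})$ at $(0,-\nu)$, does the rest. It applies because $\partial_\theta F=-\tfrac1m\tr(\Psi(\Psi+\nu)^{-2})\neq0$. Together with $\theta=-z/\kappa$, it yields an analytic branch with $\kappa(z)=z/\nu+O(z^2)$.

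The main obstacle is to justify that this analytic branch coincides with the one specified in Definition~\ref{def:deterministic-resolvent}, namely $\operatorname{Im}\kappa>0$ on the upper half-plane, extended by conjugation. For $z=i\eta$ with small $\eta>0$, the branch gives $\kappa\approx i\eta/\nu$, whose imaginary part is positive. Uniqueness of the Stieltjes-type solution then identifies the two branches near $0$, so $z=0$ is a genuine removable point of $\kappa$ rather than a branch point. Here I will invoke the standard uniqueness of the deformed Marchenko--Pastur solution, cf.\ the notes.

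With the expansion in hand, the final step is the residue computation:
\[
-\frac1{2\pi i}\oint_\Gamma\frac{\psi_j\beta_j^2}{-(\psi_j\kappa(z)+z)}dz=\psi_j\beta_j^2\operatorname{Res}_{0}\frac{1}{z(\psi_j/\nu+1)+O(z^2)}=\frac{\nu\psi_j\beta_j^2}{\psi_j+\nu}.
\]
This is a simple pole, so only the linear term of $\kappa$ matters. Summing over $j$ gives $\nu\langle\Psi^{1/2}\beta,(\Psi+\nu I)^{-1}\Psi^{1/2}\beta\rangle$, which is \eqref{eq:LTEdetequiv}.

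If $\Psi$ has infinitely many nonzero eigenvalues, the same argument works provided the traces converge; it suffices that $\tr\Psi<\infty$, which holds in the ReLU case. The interchange of the sum over $j$ with the contour integral is then justified by uniform convergence on $\Gamma$.
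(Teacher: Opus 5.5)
Your proposal is correct and follows the paper's proof: an implicit-function-theorem expansion $\kappa(z)=z/\nu+O(z^2)$ whose leading-order condition is \eqref{eq:nuequation}, the same branch identification via $\operatorname{Im}\kappa>0$ at $z=i\eta$, and then the simple-pole residue at $z=0$. The differences are cosmetic. Your variable $\theta=-z/\kappa$ is $-1/c$ for the paper's $c=\kappa/z$, and you additionally need to invert $z=-\kappa\theta(\kappa)$, which is harmless because its derivative at $0$ is $\nu\neq0$. You also take the residue coordinatewise, where the paper uses the matrix form $\overline R(z)=-z^{-1}(I+\nu^{-1}\Psi)^{-1}+O(1)$.
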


\begin{proof}
In fact, by \eqref{eq:approxLTE} and Definition~\ref{def:deterministic-resolvent}, we have
\[
    \overline{\mathcal{L}}_{\mathrm{TE}}
    =-\frac{1}{2\pi i}\oint_\Gamma
    \langle\Psi^{1/2}\beta,(-\Psi\kappa(z)-zI)^{-1}\Psi^{1/2}\beta\rangle\,dz.
\]
We claim that $\kappa$ is holomorphic near $0$, with $\kappa(0)=0$, and that
$\kappa(z)=cz+O(z^2)$ with $c=\nu^{-1}$. To see this, let us write $\kappa(z)=zc(z)$. Substituting this into \eqref{eq:selfconsistentm} gives, for $z\neq0$,
\[
\begin{split}
    -zc(z)
    &=\left(1-\frac{1}{mz}\tr\bigl(\Psi(I+c(z)\Psi)^{-1}\bigr)\right)^{-1},\\
    -mzc(z)+c(z)\tr\bigl(\Psi(I+c(z)\Psi)^{-1}\bigr)
    &=m.
\end{split}
\]
Thus, the equation becomes
\[
    m=\tr\bigl(c(z)\Psi(I+c(z)\Psi)^{-1}\bigr)-mzc(z).
\]
At $z=0$, this gives
\[
    m=c(0)\tr\bigl(\Psi(I+c(0)\Psi)^{-1}\bigr),
\]
such that identifying $c(0)=\nu^{-1}$ gives exactly \eqref{eq:nuequation}, as we wanted. To justify the expansion near zero, observe that
\[
    \left.\frac{d}{dc}\tr\bigl(c\Psi(I+c\Psi)^{-1}\bigr)\right|_{c=\nu^{-1}}
    =\tr\bigl(\Psi(I+\nu^{-1}\Psi)^{-2}\bigr)>0.
\]
The analytic implicit function theorem therefore gives a local analytic solution
$c(z)=\nu^{-1}+O(z)$. Since $\kappa(z)=z/\nu+O(z^2)$ has positive imaginary part just above zero, it agrees with the branch in Definition~\ref{def:deterministic-resolvent}. We conclude that $\kappa(z)=cz+O(z^2)$ with $c=\nu^{-1}$, such that
\[
    \overline{R}(z)
    =-\frac{1}{z}(I+\nu^{-1}\Psi)^{-1}+O(1).
\]
Finally, substituting this expansion into \eqref{eq:approxLTE} and integrating in $z$ gives
\[
    \overline{\mathcal{L}}_{\mathrm{TE}}
    =\langle\Psi^{1/2}\beta,(I+\nu^{-1}\Psi)^{-1}\Psi^{1/2}\beta\rangle
    =\nu\langle\Psi^{1/2}\beta,(\Psi+\nu I)^{-1}\Psi^{1/2}\beta\rangle.
\]
as claimed.
\end{proof}

The next lemma gives the deterministic equivalent of the projection operator $P_0$ which also appears during the calculation of the student deterministic equivalent.

\begin{lemma}[Single-resolvent deterministic equivalent]\label{theo:singleresolventapprox}
    Define $\overline{P}_0\coloneqq-\operatorname*{Res}_{z=0}\overline{R}(z)$. Then
    \[
        \overline{P}_0=\nu(\Psi+\nu I)^{-1}=(I+\nu^{-1}\Psi)^{-1}.
    \]
\end{lemma}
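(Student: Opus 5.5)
The plan is to reuse the Laurent expansion of $\overline R$ at $z=0$ already obtained in the proof of Theorem~\ref{theo:selfconsistenteacher}, and to read off its residue as a matrix rather than pairing it with $\Psi^{1/2}\beta$. By Definition~\ref{def:deterministic-resolvent}, $\overline R(z)=(-\Psi\kappa(z)-zI)^{-1}$. Since $\Psi$ is diagonal, $\overline R(z)$ is diagonal with entries $-1/(\psi_j\kappa(z)+z)$. So it suffices to compute the residue of each scalar function $z\mapsto -1/(\psi_j\kappa(z)+z)$ at $z=0$.

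First, I would recall the claim established in the proof of Theorem~\ref{theo:selfconsistenteacher}. There $\kappa(z)=zc(z)$ with $c$ holomorphic near $0$ and $c(0)=\nu^{-1}$, by the analytic implicit function theorem applied to $m=\tr\bigl(c\Psi(I+c\Psi)^{-1}\bigr)-mzc$. The nondegeneracy $\tr\bigl(\Psi(I+\nu^{-1}\Psi)^{-2}\bigr)>0$ was verified there, and $\nu$ is the solution of \eqref{eq:nuequation}.

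Next, substitute this into each entry:
\[
-\frac{1}{\psi_j\kappa(z)+z}=-\frac{1}{z}\cdot\frac{1}{1+\psi_jc(z)}.
\]
Since $c(0)=\nu^{-1}>0$ and $\psi_j\ge0$, the factor $1/(1+\psi_jc(z))$ is holomorphic near $0$. Each entry therefore has at most a simple pole at $0$, with residue $-1/(1+\psi_j\nu^{-1})$. It follows that $\operatorname*{Res}_{z=0}\overline R(z)=-(I+\nu^{-1}\Psi)^{-1}$.

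Negating gives $\overline P_0=(I+\nu^{-1}\Psi)^{-1}=\nu(\Psi+\nu I)^{-1}$. As a consistency check, pairing this with $\Psi^{1/2}\beta$ recovers \eqref{eq:LTEdetequiv}.

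The only delicate point is uniformity when $\Psi$ is infinite-dimensional. There I would note that $c$ is a single scalar function and that $1+\psi_jc(z)$ stays bounded away from zero uniformly in $j$ on a small disc, because $\psi_j\le\psi_1<\infty$ and $c$ is close to $\nu^{-1}>0$ there. Hence the expansion $\overline R(z)=-z^{-1}(I+\nu^{-1}\Psi)^{-1}+O(1)$ holds in operator norm, and the residue is well defined as an operator. I expect this step to be a routine remark rather than a real obstacle.
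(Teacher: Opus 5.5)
Your argument is correct and is essentially the paper's proof, which reads the residue off the expansion $\overline{R}(z)=-z^{-1}(I+\nu^{-1}\Psi)^{-1}+O(1)$ derived in the proof of Theorem~\ref{theo:selfconsistenteacher}. Your entrywise computation via $\kappa(z)=zc(z)$ with $c(0)=\nu^{-1}$, and your operator-norm uniformity remark, simply spell out details that the paper leaves implicit.
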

\begin{proof}
    This can be seen directly from the expansion obtained in the proof of Theorem~\ref{theo:selfconsistenteacher}. In fact,
    \[
        \overline{R}(z)=-\frac{1}{z}(I+\nu^{-1}\Psi)^{-1}+O(1),
    \]
    such that
    \[
        \overline{P}_0=(I+\nu^{-1}\Psi)^{-1}=\nu(\Psi+\nu I)^{-1},
    \]
    as we wanted.
\end{proof}

\subsection{Double-resolvent student error deterministic equivalent}\label{subapp:studentdeteq}

Since student training follows teacher training in a two-step procedure, the student population error has a more complicated expression that reflects this dependence. For this reason, deriving its deterministic equivalent is more intricate and requires a double-resolvent trick, as we will see below.

\begin{theo}
    Under our setting, if we define $P_{0}\coloneqq \sum_{\lambda_{j}=0}v_{j}v_{j}^{T}\in \R^{d \times d}$ to be the projection operator onto the zero eigenspace of $\Psi^{\frac{1}{2}}U_{m}^{T}(U_{m}\Psi U_{m}^{T})^{-1}U_{m}\Psi^{\frac{1}{2}}$ we can write
    \begin{equation}
        \mathcal{L}_{\mathrm{ST}}(t) = \left\langle \sqrt{\Psi}\beta, [e^{-2\Psi t}-(I-e^{-\Psi t})^{2}P_{0}+P_{0}(I-e^{-2\Psi t})+P_{0}(I-e^{-\Psi t})^{2}P_{0}]\sqrt{\Psi} \beta\right\rangle.
    \end{equation}
\end{theo}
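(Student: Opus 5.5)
We work in the whitened coordinates of Appendix~\ref{app:detequivalent} and write every function through its coefficient vector in the orthonormal eigenbasis, so that $\|g\|_{\mathcal D}^2=\|a\|^2$ when $g=\sum_j a_j e_j$. Set $b\coloneqq\Psi^{1/2}\beta$, the target's coefficient vector, and $Q\coloneqq I-P_0$. By the computation preceding Theorem~\ref{theo:selfconsistenteacher}, $Q=\Psi^{1/2}U_m^T(U_m\Psi U_m^T)^{-1}U_m\Psi^{1/2}$ is the orthogonal projector onto the range of $H$.

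\textbf{Step 1 (teacher coefficients).} By Lemma~\ref{lemma:optimalweigth}, the optimal teacher has coefficient vector
\[
\Psi^{1/2}U_m^Tw_*=\Psi^{1/2}U_m^T(U_m\Psi U_m^T)^{-1}U_m\Psi\beta=Q\,b .
\]
This identity is the one already used to obtain $\mathcal L_{\rm TE}=\langle b,P_0b\rangle$. Hence $\langle f_{\rm teacher},e_i\rangle_{\mathcal D}=(Qb)_i$.

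\textbf{Step 2 (student coefficients).} In the spectral flow \eqref{studenttrain}, each coefficient is multiplied by $1-e^{-\psi_it}$. In matrix form, the student's coefficient vector is $(I-e^{-\Psi t})Qb$. Write $E\coloneqq e^{-\Psi t}$, which is diagonal, so it commutes with $\Psi$ but not with $P_0$. Then
\[
\mathcal L_{\rm ST}(t)=\|(I-E)Qb-b\|^2=\|(I-E)(I-P_0)b-b\|^2=\|-Eb-(I-E)P_0b\|^2 .
\]

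\textbf{Step 3 (expand the square).} Expanding and using that $E$ and $P_0$ are symmetric gives
\[
\mathcal L_{\rm ST}(t)=\langle b,E^2b\rangle+2\langle b,E(I-E)P_0b\rangle+\langle b,P_0(I-E)^2P_0b\rangle .
\]
It remains to match the cross term with the two middle terms of the claimed operator. Those terms are $-(I-E)^2P_0+P_0(I-E^2)$. Inside a quadratic form, $\langle b,AP_0b\rangle=\langle b,P_0Ab\rangle$ for symmetric $A$. So their contribution equals
\[
\langle b,[(I-E^2)-(I-E)^2]P_0b\rangle=\langle b,(2E-2E^2)P_0b\rangle=2\langle b,E(I-E)P_0b\rangle ,
\]
which is exactly the cross term. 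This gives the stated formula.

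The only delicate point is Step 1. We need $U_m\Psi U_m^T$ to be invertible, which holds almost surely when $m<\operatorname{rank}\Psi$; otherwise we replace the inverse by a pseudo-inverse, and $Q$ is still the range projector. We also need the convention that $\sqrt\Psi\beta$ is the coefficient vector, so that no stray factors of $\Psi^{1/2}$ appear. Beyond that, the proof is routine algebra. The noncommutativity of $E$ and $P_0$ is why the statement keeps the asymmetric products instead of a symmetrized form.
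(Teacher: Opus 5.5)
Your proof is correct and follows essentially the same route as the paper: both identify the teacher's coefficient vector as $(I-P_0)\Psi^{1/2}\beta$ via Lemma~\ref{lemma:optimalweigth}, apply the filter $I-e^{-\Psi t}$, and expand the squared residual. The only difference is cosmetic. The paper expands $(I-P_0)(I-e^{-\Psi t})^2(I-P_0)-2(I-P_0)(I-e^{-\Psi t})+I$ as an exact operator identity, whereas you first write the residual as $-e^{-\Psi t}\Psi^{1/2}\beta-(I-e^{-\Psi t})P_0\Psi^{1/2}\beta$ and match the cross term only inside the quadratic form, which is all the statement needs.
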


\begin{proof}

From \eqref{studenttrain} we have that the student neural network  for a time $t>0$ of gradient flow is given by
\begin{equation}
    f_{\mathrm{student}}(t)(\cdot) = \sum_{k= 1}^{d}(1-e^{-\lambda_{k} t})\langle f_{{\mathrm{teacher}}}(\cdot),e_{k}\rangle_{\mathcal{D}}e_{k},
\end{equation}
where $\lambda_{k}$ and $e_{k}$ are the eigenvalues and eigenvectors of the population random-feature kernel induced by the activation function. Thus, we can write the student population error at time $t$ as
\begin{equation}
    \begin{split}
        \mathcal{L}_{\mathrm{ST}}(t) &= \E_{x \sim \mathcal{D}}[(f_{{\mathrm{student}}}(t)-f^{*})^{2}]\\
        &=\E_{x \sim \mathcal{D}}\left[\left(\sum_{k= 1}^{d}(1-e^{-\lambda_{k}t})\langle f_{\mathrm{teacher}},e_{k}\rangle e_{k} - f^{*}\right)^{2}\right].
    \end{split}
\end{equation}
Now, writing $f^{*}=\sum_{k= 1}^{d}\langle f^{*},e_{k}\rangle_{\mathcal{D}}e_{k}$ and using the orthonormality of $\{e_{k}\}_{k=1}^
{d}$, we obtain
\begin{equation}
    \begin{split}
        \mathcal{L}_{\mathrm{ST}}(t) &= \E_{x \sim \mathcal{D}}\left[\left(\sum_{k= 1}^{d}\langle (1-e^{-\lambda_{k}t})f_{\mathrm{teacher}} - f^{*}, e_{k} \rangle_{\mathcal{D}} e_{k}\right)^{2}\right]\\
        &= \E_{x \sim \mathcal{D}}\left[\sum_{k= 1}^{d}\langle(1-e^{-\lambda_{k}t})f_{\mathrm{teacher}}-f^{*}, e_{k}\rangle_{D}^{2}\right]\\
        &=\sum_{k \geq 1}(\E_{x \sim \mathcal{D}}[(1-e^{-\lambda_k t})f_{\mathrm{teacher}}-f^{*}]e_{k})^{2}.
    \end{split}
\end{equation}

Replacing
\begin{equation}
    f_{\mathrm{teacher}}(x) = w_{*}U_{m}x \quad\text{ and }\quad f^{*}(x) = \beta^{T}x
\end{equation}
we get
\begin{equation}\label{eq:intermidiateLST}
    \begin{split}
        \mathcal{L}_{\mathrm{ST}}(t) &= \sum_{k=1}^{d}\left(\E_{x \sim \mathcal{D}}\left[\left((1-e^{-\psi_k t})w_{*}U_{m}x-\beta^{T}x\right)\frac{x_{k}}{\sqrt{\psi_{k}}}\right]\right)^{2}\\
        &= \sum_{k=1}^{d}\left(\E_{x \sim \mathcal{D}}\left[\left((1-e^{-\psi_k t})\sum_{l=1}^{d}(w_{*}U_{m})_{l}x_{l}-\sum_{l=1}^{d}\beta_{l}^{T}x_{l}\right)\frac{x_{k}}{\sqrt{\psi_{k}}}\right]\right)^{2}\\
        &=\sum_{k=1}^{d}\left((1-e^{-\psi_{k}t})(w_{*}U_{m})_{k}\sqrt{\psi_{k}}-\beta_{k}\sqrt{\psi_{k}}\right)^{2}\\
        &= \sum_{k=1}^{d}\psi_{k}((1-e^{-\psi_k}t)(w_{*}U_{m})_{k}-\beta_{k})^{2}.
    \end{split}
\end{equation}
Substituting the optimal weight
\begin{equation}
     w_{*} = (U_{m}\Psi U_{m}^{T})^{-1}U_{m}\Psi\beta
\end{equation}
in \eqref{eq:intermidiateLST} gives
\begin{equation}
    \mathcal{L}_{\mathrm{ST}}(t) = \langle (I-e^{-\Psi t})[U_{m}^{T}(U_{m}\Psi U_{m}^{T})^{-1}U_{m}\Psi\beta]-\beta, \Psi\{(I-e^{-\Psi t})[U_{m}^{T}(U_{m}\Psi U_{m}^{T})^{-1}U_{m}\Psi\beta]-\beta\}\rangle.
\end{equation}
Now, we can rewrite this error as
\begin{equation}
    \begin{split}
        \mathcal{L}_{\mathrm{ST}}(t) = &\langle (I-e^{-\Psi t})\sqrt{\Psi}U_{m}^{T}(U_{m}\Psi U_{m}^{T})^{-1}U_{m}\Psi\beta\\
        &(I-e^{-\Psi t})\sqrt{\Psi}U_{m}^{T}(U_{m}\Psi U_{m}^{T})^{-1}U_{m}\Psi\beta)\rangle \\
        -&2\langle (I-e^{-\Psi t})\sqrt{\Psi}U_{m}^{T}(U_{m}\Psi U_{m}^{T})^{-1}U_{m}\Psi\beta,\sqrt{\Psi}\beta \rangle \\
        +&\langle \sqrt{\Psi}\beta,\sqrt{\Psi}\beta \rangle,
    \end{split}
\end{equation}
such that 
\begin{equation}
\begin{split}
    \mathcal{L}_{\mathrm{ST}}(t) &= \left\langle (I-e^{-\Psi t})\left(\sum_{\tilde{\lambda}_{j}\neq 0}\tilde{v}_{j}\tilde{v}_{j}^{T}\right)\sqrt{\Psi}\beta, (I-e^{-\Psi t})\left(\sum_{\tilde{\lambda}_{j}\neq 0}\tilde{v}_{j}\tilde{v}_{j}^ {T}\right)\sqrt{\Psi}\beta\right\rangle\\
    &-2\left\langle (I-e^{-\Psi t})\left(\sum_{\tilde{\lambda}_{j}\neq 0}\tilde{v}_{j}\tilde{v}_{j}^{T}\right)\sqrt{\Psi}\beta, \sqrt{\Psi}\beta\right\rangle + \langle \sqrt{\Psi}\beta, \sqrt{\Psi}\beta\rangle\\
    &= \left\langle (I-e^{-\Psi t})\left(\sum_{\tilde{\lambda}_{j}\neq 0} \tilde{v}_{j}\tilde{v}_{j}^{T}\right)\sqrt{\Psi}\beta, \left[(I-e^{-\Psi t})\left(\sum_{\tilde{\lambda}_{j}\neq 0}\tilde{v}_{j}\tilde{v}_{j}^{T}\right)-2I\right]\sqrt{\Psi}\beta\right\rangle + \langle \sqrt{\Psi}\beta,\sqrt{\Psi}\beta\rangle\\
    &= \left\langle \sqrt{\Psi}\beta, \left[\left(\sum_{\tilde{\lambda}_{j}\neq 0}\tilde{v}_{j}\tilde{v}_{j}^{T}\right)(I-e^{-\Psi t})^{2}\left(\sum_{\tilde{\lambda}_{j}\neq 0}\tilde{v}_{j}\tilde{v}_{j}^{T}\right)-2\left(\sum_{\tilde{\lambda}_{j}\neq 0}\tilde{v}_{j}\tilde{v}_{j}^{T}\right)(I-e^{-\Psi t})+I\right]\sqrt{\Psi}\beta\right\rangle.
\end{split}
\end{equation}
Then, using $P_{0}\coloneqq \sum_{\lambda_{j}=0}v_{j}v_{j}^{T}$, the term in square brackets can be rewritten as follows
\begin{equation}
    \begin{split}
        &(I-P_{0})(I-e^{-\Psi t})^{2}(I-P_{0}) - 2(I-P_{0})(I-e^{-\Psi t}) + I\\
        &= (I-e^{-\Psi t})^{2}(I-P_{0}) - P_{0}(I-e^{-\Psi t})^{2}(I-P_{0}) - 2(I-e^{-\Psi t}) + 2P_{0}(I-e^{-\Psi t}) + I\\
        &= (I-e^{-\Psi t})^{2} - (I-e^{-\Psi t})^{2}P_{0} - P_{0}(I-e^{-\Psi t})^{2} + P_{0}(I-e^{-\Psi t})^{2}P_{0} - 2P_{0}(I-e^{-\Psi t}) + I\\
        &= I - 2e^{-\Psi t} + e^{-2\Psi t} - (I-e^{-\Psi t})^{2}P_{0} - P_{0}(I-2e^{-\Psi t}+e^{-2\Psi t})+ P_{0}(I-e^{-\Psi t})^{2}P_{0}\\
        &- 2(I-e^{-\Psi t}) + 2P_{0}(I-e^{-\Psi t}) + I.
    \end{split}
\end{equation}

Thus, we finally obtain
\begin{equation}\label{eq:studentdeteq}
    \mathcal{L}_{\mathrm{ST}}(t) = \left\langle \sqrt{\Psi}\beta, [e^{-2\Psi t}-(I-e^{-\Psi t})^{2}P_{0}+P_{0}(I-e^{-2\Psi t})+P_{0}(I-e^{-\Psi t})^{2}P_{0}]\sqrt{\Psi} \beta\right\rangle
\end{equation}
as we wanted.
\end{proof}

For the terms in \eqref{eq:studentdeteq} that depend on a single $P_0$, we use the deterministic representative $\overline{P}_0$ provided by Lemma \ref{theo:singleresolventapprox}. For the last term, we will need a double-resolvent calculation. The idea is to insert the matrix $S\coloneqq(I-e^{-\Psi t})^2$ by differentiating a perturbed resolvent. We first define the corresponding deterministic quantities, and then calculate them.

An alternative point of view is provided by the eigenlearning framework, where second moments of a learning transfer operator are computed by spectral differentiation \citep[Appendix~I.9]{simon2023eigenlearning}; related resolvent differentiation identities are used for random feature regression in \citet[Appendix~E.3]{simon2024more}.

\begin{defn}[Perturbed deterministic resolvent]\label{def:perturbed-resolvent}
For $S=(I-e^{-\Psi t})^2$, let
\[
    \overline{R}_\epsilon(z)
    =(\epsilon S-zI-\Psi\kappa_\epsilon(z))^{-1},
    \qquad
    -\kappa_\epsilon(z)
    =\left(1+\frac{1}{m}\tr\bigl(\Psi\overline{R}_\epsilon(z)\bigr)\right)^{-1}.
\]
For $z$ off the deterministic spectrum, we take the branch analytic in $\epsilon$ near zero that continues Definition~\ref{def:deterministic-resolvent}, so that $\overline{R}_0=\overline{R}$ and $\kappa_0=\kappa$. Define
\begin{equation}\label{eq:double-resolvent-definition}
    \overline{Q}_S(z)
    \coloneqq-\left.\frac{\partial\overline{R}_\epsilon(z)}{\partial\epsilon}\right|_{\epsilon=0},
    \qquad
    \overline{\mathcal{P}}(S)
    \coloneqq\frac{1}{2\pi i}\oint_\Gamma z\,\overline{Q}_S(z)\,dz,
\end{equation}
where $\Gamma$ is a sufficiently small positively oriented contour around zero. These are the deterministic counterparts of $R(z)SR(z)$ and $P_0SP_0$, respectively.
\end{defn}

The following theorem evaluates $\overline{\mathcal{P}}(S)$.

\begin{theo}[Double-resolvent deterministic equivalent]\label{theo:doubleresolventapprox}
For $S=(I-e^{-\Psi t})^2$, we have
\begin{equation}
    \overline{\mathcal{P}}(S) = \frac{(I-e^{-\Psi t})^{2}}{(I+\Psi\nu^{-1})^{2}} + \frac{\frac{\Psi}{(I+\Psi\nu^{-1})^{2}}\tr{\left(\frac{\nu^{-2}\Psi(I-e^{-\Psi t})^{2}}{(I+\Psi\nu^{-1})^{2}}\right)}}{m-\tr{\left(\frac{\Psi^{2}\nu^{-2}}{(I+\Psi\nu^{-1})^{2}}\right)}}
\end{equation}
where $\nu$ is the solution of \eqref{eq:nuequation}.
\end{theo}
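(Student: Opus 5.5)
The plan is to differentiate the perturbed Dyson equation of Definition~\ref{def:perturbed-resolvent} in $\epsilon$ at $\epsilon=0$, expand the result near $z=0$ using the expansion of $\kappa$ from the proof of Theorem~\ref{theo:selfconsistenteacher}, and read off $\overline{\mathcal P}(S)$ as the residue of $z\,\overline{Q}_S(z)$. Write $\overline R=\overline R_0$, $\kappa=\kappa_0$ and $\dot\kappa=\partial_\epsilon\kappa_\epsilon|_{\epsilon=0}$. Differentiating $\overline R_\epsilon=(\epsilon S-zI-\Psi\kappa_\epsilon)^{-1}$ gives
\[
\overline Q_S=-\partial_\epsilon\overline R_\epsilon\big|_{0}=\overline R\,(S-\dot\kappa\Psi)\,\overline R .
\]
Differentiating the scalar equation $-1/\kappa_\epsilon=1+\frac1m\tr(\Psi\overline R_\epsilon)$ gives
\[
\dot\kappa/\kappa^2=-\tfrac1m\tr(\Psi\overline R(S-\dot\kappa\Psi)\overline R).
\]
Solving this linear equation yields
\[
\dot\kappa=-\frac{\kappa^2\tr(\Psi\overline R S\overline R)}{m-\kappa^2\tr(\Psi^2\overline R^2)}.
\]
The branch choice in Definition~\ref{def:perturbed-resolvent} makes this differentiation legitimate, and the analytic implicit function theorem applies as in Theorem~\ref{theo:selfconsistenteacher}.

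Next, insert the local behaviour near $z=0$. From the earlier proof, $\kappa(z)=z/\nu+O(z^2)$ and $\overline R(z)=-z^{-1}(I+\nu^{-1}\Psi)^{-1}+O(1)$. Hence $\kappa^2\overline R^2\to\nu^{-2}(I+\nu^{-1}\Psi)^{-2}$, which is regular at $0$. Substituting, $\dot\kappa(z)$ has a finite limit at $z=0$:
\[
\dot\kappa(0)=-\frac{\tr\bigl(\nu^{-2}\Psi S(I+\Psi\nu^{-1})^{-2}\bigr)}{m-\tr\bigl(\nu^{-2}\Psi^2(I+\Psi\nu^{-1})^{-2}\bigr)}.
\]
It follows that $\overline Q_S(z)=z^{-2}(I+\nu^{-1}\Psi)^{-1}(S-\dot\kappa(0)\Psi)(I+\nu^{-1}\Psi)^{-1}+O(z^{-1})$. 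Since all matrices involved are diagonal in the same basis as $\Psi$ and commute, the residue of $z\overline Q_S$ at $0$ is the $z^{-2}$ coefficient:
\[
\overline{\mathcal P}(S)=\frac{S}{(I+\Psi\nu^{-1})^2}-\dot\kappa(0)\frac{\Psi}{(I+\Psi\nu^{-1})^2}.
\]
Substituting $\dot\kappa(0)$ gives exactly the stated formula.

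The main obstacle is justifying that only the leading Laurent coefficients matter. Two points need care. First, the denominator $m-\tr(\nu^{-2}\Psi^2(I+\Psi\nu^{-1})^{-2})$ must be nonzero. This follows from \eqref{eq:nuequation}: $m=\tr(\nu^{-1}\Psi(I+\nu^{-1}\Psi)^{-1})$, and the difference equals $\tr(\nu^{-1}\Psi(I+\nu^{-1}\Psi)^{-2})>0$. So $\dot\kappa$ is holomorphic at $0$. Second, the $O(1)$ correction of $\overline R$ multiplies a $z^{-1}$ term, so $\overline Q_S$ has the form $z^{-2}A+z^{-1}B+\dots$. Then $z\overline Q_S=z^{-1}A+B+\dots$, and its residue is $A$ alone, as used above. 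The first-order terms of $\kappa$ and $\dot\kappa$ never enter. I would verify this bookkeeping carefully, since the order of the pole is where an error would most likely slip in.
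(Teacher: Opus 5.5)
Your proposal is correct and follows essentially the same route as the paper: differentiate the perturbed Dyson equation at $\epsilon=0$, solve the resulting linear equation for $\partial_\epsilon\kappa_\epsilon|_{\epsilon=0}$, and extract the $z^{-1}$ coefficient of $z\,\overline{Q}_S(z)$ using $\kappa(z)=z/\nu+O(z^2)$. As in the paper, the positivity of $m-\tr[\nu^{-2}\Psi^2(I+\Psi\nu^{-1})^{-2}]$, which follows from \eqref{eq:nuequation}, makes $\partial_\epsilon\kappa_\epsilon|_{\epsilon=0}$ regular at $0$; the paper computes coordinatewise in the common eigenbasis of $\Psi$ and $S$, and it also motivates the definition through the exact identity $\frac{1}{2\pi i}\oint_\Gamma zR_0SR_0\,dz=P_0SP_0$ and a linearization argument, neither of which is needed for the deterministic computation itself.
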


\begin{proof}
        For $\epsilon$ near zero, let us define the following resolvent
    \begin{equation}
        R_{\epsilon}(z)\coloneqq\left(\frac{1}{m}\Psi^{\frac{1}{2}}U^{T}U\Psi^{\frac{1}{2}}+\epsilon S-zI\right)^{-1}
    \end{equation}
    where $S \coloneqq (I-e^{-\Psi t})^{2}$.
    
    Observe that 
    \begin{equation}\label{eq:derivativein0}
        \partial_{\epsilon}R_{\epsilon}(z)|_{\epsilon=0} = -R_{0}(z)SR_{0}(z),
    \end{equation}
    on the other hand, we also have that
    \begin{equation}
        R_{0}(z) = \sum_{j}\frac{v_{j}v_{j}^{T}}{\lambda_{j}-z}
    \end{equation}
    where $\lambda_{j}$ are the eigenvalues of $H=\frac{1}{m}\Psi^{\frac{1}{2}}U^{T}U\Psi^{\frac{1}{2}}$ and $v_{j}$ are the associated eigenvectors. Thus
    \begin{equation}
    \begin{split}
        R_{0}(z)SR_{0}(z) &= \left(\sum_{j}\frac{v_{j}v_{j}^{T}}{\lambda_{j}-z}\right)S\left(\sum_{k}\frac{v_{k}v_{k}^{T}}{\lambda_{k}-z}\right)\\
        &= \sum_{j,k}(v_{j}v_{j}^{T}Sv_{k}v_{k}^{T})\frac{1}{z^{2}}\left(\frac{1}{1-(\lambda_{j}+\lambda_{k})\frac{1}{z}+\frac{\lambda_{j}\lambda_{k}}{z^{2}}}\right).
    \end{split}
    \end{equation}
    Defining $f(z)=zR_{0}(z)SR_{0}(z)$ and $\Gamma$ a small positively oriented contour around 0 enclosing no nonzero eigenvalues of $H$, consider the integral
    \begin{equation}
        \frac{1}{2\pi i}\oint_{\Gamma}f(z)dz =\frac{1}{2\pi i} \oint_{\Gamma}\sum_{j,k}\frac{v_{j}v_{j}^{T}Sv_{k}v_{k}^{T}}{z}\left(\frac{1}{1-(\lambda_{j}+\lambda_{k})\frac{1}{z}+\frac{\lambda_{j}\lambda_{k}}{z^{2}}}\right)dz  .
    \end{equation}
    If at least one of $\lambda_j$ and $\lambda_k$ is nonzero, the corresponding summand of $f(z)$ is holomorphic inside $\Gamma$ and by Cauchy's theorem its integral is $0$. Thus, only the terms with $\lambda_j=\lambda_k=0$ remain, and we have
    \begin{equation}
        \frac{1}{2\pi i}\oint \sum_{\substack{\lambda_{j}=0 \\ \lambda_{k}=0}}\frac{v_{j}v_{j}^{T}Sv_{k}v_{k}^{T}}{z}dz = P_{0}SP_{0}
    \end{equation}
    since
    \begin{equation}
        \frac{1}{2\pi i}\oint_{\Gamma}\frac{1}{z}dz=1.
    \end{equation}
    Then, we get
    \begin{equation}\label{eq:P0SP0}
        \frac{1}{2\pi i}\oint_{\Gamma}zR_{0}(z)SR_{0}(z)dz = P_{0}SP_{0}.
    \end{equation}

    We now use a linearization procedure to recover the Dyson equation defining $\overline{R}_\epsilon(z)$. We define the larger matrix
    \begin{equation}
        L \coloneqq \begin{pmatrix}
            \epsilon S - zI_{d} & \frac{1}{\sqrt{m}}\Psi^{\frac{1}{2}}U^{T}\\
            \frac{1}{\sqrt{m}}U\Psi^{\frac{1}{2}} & -I_{m}
        \end{pmatrix} \in \mathbb{C}^{(d+m)\times(d+m)}.
    \end{equation}
    Using the Schur complement formula, we obtain
    \begin{equation}
    (L^{-1})_{11} = \left(\frac{1}{m}\Psi^{\frac{1}{2}}U^{T}U\Psi^{\frac{1}{2}}+\epsilon S -zI\right)^{-1}.
    \end{equation}
    Now, by the Dyson equation
    \begin{equation}
    \begin{split}
        \begin{pmatrix}
            \mathcal{M}_{11} & 0\\
            0 & \mathcal{M}_{22}
        \end{pmatrix}\left(\begin{pmatrix}
            \epsilon S - zI & 0\\
            0 & -I_{m}
        \end{pmatrix}-\begin{pmatrix}
            \frac{1}{m}\Psi \tr{(\mathcal{M}_{22})} & 0\\
            0 & \frac{1}{m}\tr{(\mathcal{M}_{11}\Psi)}I_{m}
        \end{pmatrix}\right) =\begin{pmatrix}
            I & 0\\
            0 & I_{m}
        \end{pmatrix},
    \end{split}
    \end{equation}
    we get
    \begin{equation}\label{eq:mathcaldef}
    \begin{split}
        \mathcal{M}_{11} &= \left(\epsilon S - zI -\frac{1}{m}\Psi\tr{(\mathcal{M}_{22})}\right)^{-1}\\
        \mathcal{M}_{22} &= \left(-1-\frac{1}{m}\tr{(\mathcal{M}_{11}\Psi)}\right)^{-1}I_m
    \end{split}
    \end{equation}
    and $\mathcal{M}_{11}=\overline{R}_\epsilon(z)$ is exactly the matrix in Definition~\ref{def:perturbed-resolvent}.
    In our case, $S=(I-e^{-\Psi t})^{2}$ and $\Psi$ share the same orthonormal basis of eigenvectors $\{w_{j}\}$. Let $\lambda_{j}^{S}$ and $\lambda_{j}^{\Psi}$ be the associated eigenvalues, then
    \begin{equation}
    \begin{split}
        \langle w_{j}, \mathcal{M}_{11}w_{j}\rangle = \left(\epsilon\lambda_{j}^{S}-z-\frac{\lambda_j^\Psi}{m}\tr(\mathcal{M}_{22})\right)^{-1}\\
        \mathcal{M}_{22} = \left(-1-\frac{1}{m}\tr{(\mathcal{M}_{11}\Psi)}\right)^{-1}I_m.
    \end{split}
    \end{equation}
    To ease notation in the calculation below, write $\kappa(z)=\kappa_\epsilon(z)$, suppressing its dependence on $\epsilon$. Then
    \begin{equation}
        \kappa(z) = \frac{1}{m}\tr{(\mathcal{M}_{22})},
    \end{equation}
    then 
    \begin{equation}\label{eq:mathcal11}
        \langle w_{j}, \mathcal{M}_{11} w_{j}\rangle = \frac{1}{\epsilon \lambda_{j}^{S}-z-\lambda_{j}^{\Psi}\kappa(z)}.
    \end{equation}
    Note that 
    \begin{equation}
        \tr{(\mathcal{M}_{22})} = m\left(-1-\frac{1}{m}\tr{(\mathcal{M}_{11}\Psi)}\right)^{-1}
    \end{equation}
    such that by \eqref{eq:mathcaldef} we have
    \begin{equation}
        \kappa(z)= \frac{1}{-1-\frac{1}{m}\sum_{j=1}^{d}\langle w_{j},\mathcal{M}_{11}w_{j}\rangle \lambda_{j}^{\Psi}}.
    \end{equation}
    Also, from \eqref{eq:mathcal11} we get
    \begin{equation}
        \mathcal{M}_{11} =\sum_{j=1}^{d}w_{j}w_{j}^{T}\left(\frac{1}{\epsilon \lambda_{j}^{S}-z-\lambda_{j}^{\Psi}\kappa(z)}\right),
    \end{equation}
    such that we have the following scalar self-consistent equation
    \begin{equation}
        \kappa(z) = \frac{1}{-1-\frac{1}{m}\sum_{j=1}^{d}\frac{\lambda_{j}^{\Psi}}{\epsilon\lambda_{j}^{S}-z-\lambda_{j}^{\Psi}\kappa(z)}}.
    \end{equation}
    The exact identities \eqref{eq:derivativein0} and \eqref{eq:P0SP0} motivate the definitions of $\overline{Q}_S$ and $\overline{\mathcal{P}}(S)$. Since $\mathcal{M}_{11}=\overline{R}_\epsilon$, Definition~\ref{def:perturbed-resolvent} gives
    \begin{equation}
        \overline{\mathcal{P}}(S) = -\frac{1}{2\pi i} \oint_{\Gamma}z \left.\frac{\partial \mathcal{M}_{11}}{\partial \epsilon}\right|_{\epsilon=0}\;dz.
    \end{equation}
    Calculating, we get
    \begin{equation}
        \frac{\partial\mathcal{M}_{11}}{\partial\epsilon} =-\sum_{j=1}^{d}\frac{w_{j}w_{j}^{T}}{(\epsilon\lambda_{j}^{S}-z-\lambda_{j}^{\Psi}\kappa(z))^{2}}\left(\lambda_{j}^{S}-\lambda_{j}^{\Psi}\frac{\partial \kappa(z)}{\partial \epsilon}\right).
    \end{equation}
    Also,
    \begin{equation}
    \begin{split}
        \frac{\partial \kappa(z)}{\partial \epsilon} = \frac{-1}{\left(-1-\frac{1}{m}\sum_{j=1}^{d}\frac{\lambda_{j}^{\Psi}}{\epsilon\lambda_{j}^{S}-z-\lambda_{j}^{\Psi}\kappa(z)}\right)^{2}}\left(\frac{1}{m}\sum_{j=1}^{d}\frac{\lambda_{j}^{\Psi}(\lambda_{j}^{S}-\lambda_{j}^{\Psi}\frac{\partial \kappa(z)}{\partial \epsilon})}{(\epsilon\lambda_{j}^{S}-z-\lambda_{j}^{\Psi}\kappa(z))^{2}}\right),
    \end{split}
    \end{equation}
    such that 
    \begin{equation}
        \left.\frac{\partial \kappa(z)}{\partial \epsilon}\right|_{\epsilon=0} =\left. \frac{-\frac{1}{m}\tr{\left(\frac{\kappa(z)^2\Psi S}{(z+\Psi\kappa(z))^{2}}\right)}}{1-\frac{1}{m}\tr{\left(\frac{\Psi^{2}\kappa(z)^{2}}{(z+\Psi\kappa(z))^{2}}\right)}}\right|_{\epsilon=0}.
    \end{equation}
    Then    \begin{equation}\label{eq:finaldoubleresolvent}
            \overline{\mathcal{P}}(S)
            = \frac{1}{2\pi i}\oint_{\Gamma}\sum_{j=1}^{d}\frac{zw_{j}w_{j}^{T}}{(z+\lambda_{j}^{\Psi}\kappa(z))^{2}}\left[\lambda_{j}^{S}+\frac{\frac{\lambda_{j}^{\Psi}}{m}\tr{\left(\frac{\kappa(z)^{2}\Psi (I-e^{-\Psi t})^{2}}{(z+\Psi\kappa(z))^{2}}\right)}}{1-\frac{1}{m}\tr{\left(\frac{\Psi^{2}\kappa(z)^{2}}{(z+\Psi\kappa(z))^{2}}\right)}}\right]_{\epsilon=0} \;dz.
    \end{equation}
    At $\epsilon=0$, the proof of Theorem \ref{theo:selfconsistenteacher} gives $\kappa(z)=cz+O(z^2)$ with $c=\nu^{-1}$. Observe also that the denominator at zero is strictly positive:
    \[
        m-\tr\left(\frac{c^2\Psi^2}{(I+c\Psi)^2}\right)
        =\tr\left(\frac{c\Psi}{(I+c\Psi)^2}\right)>0,
    \]
    where we used \eqref{eq:nuequation}. Thus, keeping the coefficient of $z^{-1}$ in the integrand, we can rewrite \eqref{eq:finaldoubleresolvent} as follows 
    \begin{equation}\label{eq:double-resolvent-two-term}
        \begin{split}
            \overline{\mathcal{P}}(S)&= \frac{1}{2\pi i}\oint_{\Gamma}\sum_{j=1}^{d}\frac{w_{j}w_{j}^{T}}{z(1+\Psi_{j}c)^{2}} \left[(1-e^{-\Psi_{j}t})^{2}+\frac{\frac{\Psi_{j}}{m}\tr{\left(\frac{c^2\Psi(I-e^{-\Psi t})^{2}}{(I+\Psi c)^{2}}\right)}}{1-\frac{1}{m}\tr{\left(\frac{\Psi^{2}c^{2}}{(I+\Psi c)^{2}}\right)}}\right]dz\\
            &= \frac{(I-e^{-\Psi t})^{2}}{(I+\Psi c)^{2}} + \frac{\frac{\Psi}{(I+\Psi c)^{2}}\tr{\left(\frac{c^{2}\Psi(I-e^{-\Psi t})^{2}}{(I+\Psi c)^{2}}\right)}}{m-\tr{\left(\frac{\Psi^{2}c^{2}}{(I+\Psi c)^{2}}\right)}}
        \end{split}
    \end{equation}
as claimed.
\end{proof}

The two terms in Theorem~\ref{theo:doubleresolventapprox} come from differentiating the explicit perturbation $\epsilon S$ and the self-consistent parameter $\kappa_\epsilon$, respectively. We can now combine this calculation with the single-resolvent formula to define the deterministic-equivalent student error.

\begin{defn}[Deterministic-equivalent student error]\label{def:student-error-de}
Let $S_t=(I-e^{-\Psi t})^2$. Replacing the single-projector terms and the double-projector term in \eqref{eq:studentdeteq} by their deterministic counterparts, define
\begin{equation}\label{eq:student-error-de}
    \overline{\mathcal{L}}_{\mathrm{ST}}(t)\coloneqq\big\langle\Psi^{1/2}\beta,\big[e^{-2\Psi t}-S_t\overline{P}_0+\overline{P}_0(I-e^{-2\Psi t})+\overline{\mathcal{P}}(S_t)\big]\Psi^{1/2}\beta\big\rangle,
\end{equation}
where $\overline{P}_0=\nu(\Psi+\nu I)^{-1}$ and $\overline{\mathcal{P}}(S_t)$ is evaluated in Theorem~\ref{theo:doubleresolventapprox}.
\end{defn}

\subsection{Bias--variance decomposition}\label{app:mechanism}

We first verify \eqref{eq:main-two-term} directly from Definition~\ref{def:student-error-de}, then give its moment interpretation.

\paragraph{Direct algebraic verification.}
The fixed-point equation gives
\[
\begin{aligned}
m-\operatorname{tr}[\Psi^2(\Psi+\nu I)^{-2}]
&=\operatorname{tr}[\Psi(\Psi+\nu I)^{-1}-\Psi^2(\Psi+\nu I)^{-2}]\\
&=\nu\operatorname{tr}[\Psi(\Psi+\nu I)^{-2}]>0.
\end{aligned}
\]
Using $(I+\Psi\nu^{-1})^{-2}=\nu^2(\Psi+\nu I)^{-2}$, Theorem~\ref{theo:doubleresolventapprox} \chg{evaluated at $S=S_t=(I-e^{-\Psi t})^2$} therefore yields
\[
\begin{aligned}
\overline{\mathcal P}(\chg{S_t})
={}&\nu^2(\Psi+\nu I)^{-2}(I-e^{-\Psi t})^2\\
&+\nu\Psi(\Psi+\nu I)^{-2}
\frac{\operatorname{tr}[\Psi(\Psi+\nu I)^{-2}(I-e^{-\Psi t})^2]}
{\operatorname{tr}[\Psi(\Psi+\nu I)^{-2}]}.
\end{aligned}
\]
The second term gives exactly $\overline V(t)$ after taking the quadratic form against $\Psi^{1/2}\beta$. The remaining terms in \eqref{eq:student-error-de} combine using
\[
I-e^{-2\Psi t}-(I-e^{-\Psi t})^2=2e^{-\Psi t}(I-e^{-\Psi t}).
\]
Since all the matrices below are functions of $\Psi$ and hence commute, their sum is
\[
\begin{aligned}
&e^{-2\Psi t}+2\nu e^{-\Psi t}(I-e^{-\Psi t})(\Psi+\nu I)^{-1}
+\nu^2(I-e^{-\Psi t})^2(\Psi+\nu I)^{-2}\\
&\quad=\bigl[e^{-\Psi t}+\nu(I-e^{-\Psi t})(\Psi+\nu I)^{-1}\bigr]^2\\
&\quad=\bigl[(e^{-\Psi t}(\Psi+\nu I)+\nu(I-e^{-\Psi t}))(\Psi+\nu I)^{-1}\bigr]^2\\
&\quad=(\nu I+\Psi e^{-\Psi t})^2(\Psi+\nu I)^{-2}.
\end{aligned}
\]
Its quadratic form against $\Psi^{1/2}\beta$ is
\[
\left\|(\nu I+\Psi e^{-\Psi t})(\Psi+\nu I)^{-1}\Psi^{1/2}\beta\right\|^2
=\overline B(t),
\]
which proves \eqref{eq:main-two-term}.

\paragraph{Moment interpretation.}
Here the bias and variance refer to the student prediction as an estimator of $f^*$, with respect to the randomness of the teacher features $\mathcal{U}$. The target and the time $t$ are fixed, while different draws of $\mathcal{U}$ produce different teacher labels and hence different student predictions. Recall that $P_0$ is the orthogonal projector onto the nullspace of $H$, and that the student residual is $[e^{-\Psi t}+(I-e^{-\Psi t})P_0]\Psi^{1/2}\beta$. Its squared Euclidean norm equals the squared $L^2(\mathcal{D})$ norm of $f^*-f_{\mathrm{student}}(t)$. Averaging over the teacher features gives the exact identity
\begin{equation}
\begin{split}
&\mathbb E_{\mathcal{U}}\|[e^{-\Psi t}+(I-e^{-\Psi t})P_0]\Psi^{1/2}\beta\|^2\\
&\quad=\|[e^{-\Psi t}+(I-e^{-\Psi t})\mathbb E_{\mathcal{U}}[P_0]]\Psi^{1/2}\beta\|^2\\
&\qquad+\mathbb E_{\mathcal{U}}\|(I-e^{-\Psi t})(P_0-\mathbb E_{\mathcal{U}}[P_0])\Psi^{1/2}\beta\|^2.
\end{split}
\end{equation}
The first term is the squared norm of the average residual. Since $f^*$ is deterministic,
\begin{equation}
\begin{split}
&\|[e^{-\Psi t}+(I-e^{-\Psi t})\mathbb E_{\mathcal{U}}[P_0]]\Psi^{1/2}\beta\|^2\\
&\quad=\|f^*-\mathbb E_{\mathcal{U}}[f_{\mathrm{student}}(t)]\|_{\mathcal{D}}^2\\
&\quad=\mathbb E_{x\sim\mathcal{D}}\left[\left(\mathbb E_{\mathcal{U}}[f_{\mathrm{student}}(t)(x)]-f^*(x)\right)^2\right].
\end{split}
\end{equation}
For each fixed $x$, the difference $\mathbb E_{\mathcal{U}}[f_{\mathrm{student}}(t)(x)]-f^*(x)$ is the bias of the student prediction as an estimator of $f^*(x)$. Thus, the first term is the squared bias averaged over the input distribution.

The second term is the expected squared norm of the residual minus its mean. Centering the residual cancels the target:
\begin{equation}
\begin{split}
&(f^*-f_{\mathrm{student}}(t))-\mathbb E_{\mathcal{U}}[f^*-f_{\mathrm{student}}(t)]\\
&\quad=\mathbb E_{\mathcal{U}}[f_{\mathrm{student}}(t)]-f_{\mathrm{student}}(t).
\end{split}
\end{equation}
Consequently,
\begin{equation}
\begin{split}
&\mathbb E_{\mathcal{U}}\|(I-e^{-\Psi t})(P_0-\mathbb E_{\mathcal{U}}[P_0])\Psi^{1/2}\beta\|^2\\
&\quad=\mathbb E_{\mathcal{U}}\|f_{\mathrm{student}}(t)-\mathbb E_{\mathcal{U}}[f_{\mathrm{student}}(t)]\|_{\mathcal{D}}^2\\
&\quad=\mathbb E_{x\sim\mathcal{D}}\left[\operatorname{Var}_{\mathcal{U}}(f_{\mathrm{student}}(t)(x))\right].
\end{split}
\end{equation}
Thus, the second term is the variance of the student prediction across draws of the teacher features, averaged over the input distribution.

To identify these moments with $\overline B(t)$ and $\overline V(t)$, the deterministic equivalents must approximate the relevant first and second moments at the claimed asymptotic scale. Under this additional control, replacing $\mathbb E_{\mathcal{U}}[P_0]$ by $\nu(\Psi+\nu I)^{-1}$ in the squared mean residual gives $\overline B(t)$ by the algebra above.

For the variance, we first write the exact identity
\begin{equation}
\begin{split}
&\mathbb E_{\mathcal{U}}\|(I-e^{-\Psi t})(P_0-\mathbb E_{\mathcal{U}}[P_0])\Psi^{1/2}\beta\|^2\\
&\quad=\mathbb E_{\mathcal{U}}\langle\Psi^{1/2}\beta,P_0(I-e^{-\Psi t})^2P_0\Psi^{1/2}\beta\rangle\\
&\qquad-\|(I-e^{-\Psi t})\mathbb E_{\mathcal{U}}[P_0]\Psi^{1/2}\beta\|^2.
\end{split}
\end{equation}
The first term on the right-hand side is approximated by the quadratic form of $\overline{\mathcal P}(S_t)$, against $\Psi^{1/2}\beta$. Replacing $\mathbb E_{\mathcal{U}}[P_0]$ by $\nu(\Psi+\nu I)^{-1}$ in the subtracted squared mean gives precisely the quadratic form of the first term of $\overline{\mathcal P}(S_t)$. These contributions therefore cancel, leaving the quadratic form of its second term, which is $\overline V(t)$. As above, this identification requires the relevant moment approximations; convergence in probability of the loss alone does not establish them.

\section{Proofs for ReLU activation}\label{app:reluproofs} 

\paragraph{Scope.} The asymptotic loss calculations in this appendix concern the deterministic equivalents. Transfer to the random spherical losses uses Assumption~\ref{ass:de}. \chg{Lemma~\ref{lem:nu-relu} gives the order of $\nu$, which Appendix~\ref{app:mixed-examples} reuses for the two mixed-target examples.}

\chg{The kernel \eqref{eq:relukernel} contains the degree-zero mode, a single direction with $N_0=1$ and coefficient $\sigma_0=\mu_d=\E\,(\langle u,x\rangle)_+=\Gamma(d/2)/(2\sqrt\pi\,\Gamma((d+1)/2))\asymp d^{-1/2}$, the population mean of a feature. It is included in the sums and traces below, which run over $k\in\mathcal{K}=\{0,1\}\cup2\mathbb{N}$.}

In this appendix we calculate the deterministic equivalents of the teacher and student model in the setting of Theorem \ref{theo:weaktostrongrelu}. From \eqref{eq:relukernel}, we see that the important quantities to calculate are the ReLU coefficients in spherical harmonics $\sigma_{k}$ and the dimension of the space of spherical harmonics of degree $k$, namely $N_{k}$. Thus, we start calculating their the asymptotics.

\subsection{Asymptotics of $N_{k}$ and $\sigma_{k}$}\label{subapp:asympNksigmak}

The explicit value of $N_{k}$ can be found, for example, in Appendix $D.1$ of \citet{bach2017breaking}. For a $d$-dimensional sphere $\mathbb{S}^{d-1}$, we have
\begin{equation}\label{eq:definitionNk}
    N_{k} = \frac{2k+d-2}{k} \binom{k+d-3}{d-2}.
\end{equation}
for $k\geq 1$ integer. For $k=0$, the basis element is the constant function such that $N_{0}=1$.

Expanding \eqref{eq:definitionNk}, we obtain
\begin{equation}
    \begin{split}
        N_{k} &= \frac{2k+d-2}{k} \frac{(k+d-3)(k+d-4)...(d-2)!}{(k-1)!(d-2)!}\\
        &= \frac{2k+d-2}{k!}(d-2+(k-1))...(d-1)
    \end{split}
\end{equation}
that is
\begin{equation}\label{eq:Nkfinal}
    N_{k} = \frac{(2k+d-2)(d-2+(k-1))(d-2+(k-2))...(d-1)}{k!}.
\end{equation}

The expression for $\sigma_{k}$ can be found in Appendix $A.1$ of \citet{petrini2022learning}. For $k>1$ we get
\begin{equation}
    \sigma_{k} = \frac{-\Gamma\left(\frac{d}{2}\right)}{\sqrt{\pi}\Gamma\left(\frac{d}{2}\right)}\frac{\Gamma\left(\frac{d-1}{2}\right)}{\Gamma\left(k+\frac{d-1}{2}\right)}\left(-\frac{1}{2}\right)^{k} \left. \frac{d^{k-2}}{dt^{k-2}}(1-t^{2})^{k+\frac{d-3}{2}} \right|_{t=0}^{t=1}.
\end{equation}
For $k=0$ we have another expression which is
\begin{equation}\label{eq:sigma0}
\begin{split}
\sigma_0&= \frac{\Gamma(d/2)}{\sqrt{\pi}\,\Gamma((d-1)/2)}
   \int_0^1 t(1-t^2)^{\frac{d-3}{2}}\,dt \\&= \frac{\Gamma(d/2)}{2\sqrt{\pi}\,\Gamma((d-1)/2)}
   \int_0^1 u^{\frac{d-3}{2}}\,du \\&= \frac{\Gamma(d/2)}{2\sqrt{\pi}\,\Gamma((d-1)/2)}
   \left[\frac{2}{d-1}u^{\frac{d-1}{2}}\right]_0^1 \\&= \frac{\Gamma(d/2)}
   {(d-1)\sqrt{\pi}\,\Gamma((d-1)/2)} \\&= \frac{\Gamma(d/2)}
   {2\sqrt{\pi}\,\Gamma((d+1)/2)}\\
   &\asymp \frac{1}{\sqrt{d}}
\end{split}
\end{equation}

For $k=1$ we have a similar expression
\begin{equation}
    \sigma_{1} = \frac{\Gamma\left(\frac{d}{2}\right)}{\sqrt{\pi}\Gamma(\frac{d-1}{2})} \int_{0}^{1} t^{2}(1-t^{2})^{\frac{d-3}{2}}dt.
\end{equation}
Using the change of variables $u=t^2$ such that $du=2tdt$ and we get for the integral
\begin{equation}
    \begin{split}
        \int_{0}^{1}\frac{u(1-u)^{\frac{d-3}{2}}}{2\sqrt{u}}du &=\frac{1}{2}\int_{0}^{1}u^{\frac{1}{2}}(1-u)^{\frac{d-3}{2}}du\\
        &= \frac{1}{2}B\left(\frac{3}{2},\frac{d-1}{2}\right)
        = \frac{1}{2}\frac{\Gamma\left(\frac{3}{2}\right)\Gamma\left(\frac{d-1}{2}\right)}{\Gamma\left(\frac{d}{2}+1\right)}\\
        &= \frac{\sqrt{\pi}}{4}\frac{\Gamma\left(\frac{d-1}{2}\right)}{\frac{d}{2}\Gamma\left(\frac{d}{2}\right)}
        =\frac{\sqrt{\pi}}{2d}\frac{\Gamma\left(\frac{d-1}{2}\right)}{\Gamma\left(\frac{d}{2}\right)}
    \end{split}
\end{equation}
where we have written the integral in terms of a Beta function. Then, we obtain a simple expression 
\begin{equation}\label{eq:sigma1}
    \sigma_{1} = \frac{1}{2d}.
\end{equation}

 Substituting $k=1$ in \eqref{eq:definitionNk} we conclude that
 \begin{equation}\label{eq:N1sigma1squared}
     N_1\sigma_{1}^{2} = \frac{d}{4d^2} = \frac{1}{4d}.
 \end{equation}

For the derivative appearing in the expression of $\sigma_{k}$ for $k>1$ we claim the following.

\begin{lemma}
    We have
    \begin{equation}
       \left. \frac{d^{k-2}}{dt^{k-2}}(1-t^{2})^{k+\frac{d-3}{2}} \right|_{t=0}^{t=1} = \begin{cases}
           (-1)^{\frac{k}{2}}\binom{k+\frac{d-3}{2}}{\frac{k-2}{2}}(k-2)!,\; \text{$k$ even}\\
           0,\; \text{$k$ odd}
       \end{cases}
    \end{equation}
\end{lemma}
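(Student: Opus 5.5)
The plan is to set $n=k+\tfrac{d-3}{2}$ and $g(t)=(1-t^2)^n$, and to evaluate the $(k-2)$-th derivative of $g$ separately at $t=1$ and at $t=0$. The bracket $\big|_{t=0}^{t=1}$ is the value at $1$ minus the value at $0$.

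At $t=1$, write $g(t)=(1-t)^n(1+t)^n$ and expand the $(k-2)$-th derivative by the Leibniz rule. Each term carries a factor $(1-t)^{n-j}$ with $j\le k-2$. Since $n-(k-2)=\tfrac{d+1}{2}>0$, every such factor vanishes at $t=1$. So the upper endpoint contributes zero. This holds even when $n$ is a half-integer, because only positivity of the exponent is needed and $d\ge 2$.

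At $t=0$, expand $g$ by the generalized binomial series
\[
(1-t^2)^n=\sum_{j\ge0}\binom{n}{j}(-1)^j t^{2j},
\]
which converges for $|t|<1$. The $(k-2)$-th derivative at $0$ equals $(k-2)!$ times the coefficient of $t^{k-2}$. Only even powers of $t$ occur in the series.

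\textbf{Case $k$ odd.} Then $k-2$ is odd, the coefficient of $t^{k-2}$ is zero, and the bracket is $0-0=0$.

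\textbf{Case $k$ even.} Take $j=\tfrac{k-2}{2}$. The derivative at $0$ is
\[
(k-2)!\,(-1)^{\frac{k-2}{2}}\binom{n}{\frac{k-2}{2}}.
\]
The bracket is minus this value, which gives
\[
(-1)^{\frac{k}{2}}\binom{k+\frac{d-3}{2}}{\frac{k-2}{2}}(k-2)!,
\]
as claimed. Here the binomial coefficient with non-integer upper argument is the generalized one, $\binom{n}{j}=n(n-1)\cdots(n-j+1)/j!$.

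I expect the only delicate point to be the vanishing at $t=1$ for non-integer $n$, and the Leibniz-rule argument above handles it. The case $k=2$ is also consistent: it gives the zeroth derivative, $g(1)-g(0)=0-1=-1$, which matches $(-1)^1\binom{n}{0}0!=-1$.
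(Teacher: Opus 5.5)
Your proof is correct and follows the paper's argument. Both evaluate the lower endpoint from the generalized binomial series, where only the $t^{k-2}$ coefficient can contribute, so the value vanishes unless $k$ is even. Both show the upper endpoint vanishes because every term of the $(k-2)$-th derivative keeps a positive power of a factor that is zero at $t=1$: the paper phrases this through factors $(1-t^2)^{\frac{d-3}{2}+b}$ with $b\ge 2$, whereas you apply the Leibniz rule to $(1-t)^n(1+t)^n$.
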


\begin{proof}
    Using the generalized binomial theorem and denoting $a \coloneqq k+\frac{d-3}{2}$ we have that for $t=0$
    \begin{equation}
        (1-t^{2})^{a} = \sum_{l=0}^{\infty} \binom{a}{l} (-1)^{l}t^{2l}
    \end{equation}.
    
    If $2l<k-2$, the $(k-2)-$th derivative is $0$ because it is the derivative of a constant. If $2l>k-2$, the $(k-2)-$th derivative is also $0$ since we will have $t^{s}$ for $s>0$ evaluated at $t=0$. We conclude
    \begin{equation}
    \begin{split}
        -\frac{d^{k-2}}{dt^{k-2}}(1-t^{2})^{a} &= -\binom{a}{\frac{k-2}{2}}(-1)^{\frac{k-2}{2}}\left[2\left(\frac{k-2}{2}\right)\right]!\\
        &= \binom{a}{\frac{k}{2}}(-1)^{\frac{k-2}{2}}\left(k-2\right)!
    \end{split}
    \end{equation}

    Since we only have the term $2l=k-2$ we necessarily have that $k$ is even. Moreover, for $t-1$, we observe that all terms in $\frac{d^{k-2}}{dt^{k-2}}$ will contain a factor of $(1-t^{2})^{\frac{d-3}{2}+b}$ where $b\geq 2$ such that in $t=1$ this will be evaluated to be zero. This concludes the proof.
\end{proof}

\begin{prop}
    For $d\gg 1$ and $k\geq 2$ we have the asymptotics
\begin{equation}
    N_k \sigma_k^2
    \asymp
    \begin{cases}
        \displaystyle
        \frac{\sqrt{d}\,\;(2k+d)}
        {2^k k! (d+k)^{5/2}},\;
        & \text{if } k=2,\; \\[6pt]
        \displaystyle
        \frac{\sqrt{d}\,\;(2k+d)}
        {2^k k! (d+k)^{5/2}}
        \left[
            (k-2)(k-3)\cdots\left(\frac{k}{2}\right)
        \right]^2,\;
        & \text{if } k>2.
    \end{cases}
\end{equation}

\end{prop}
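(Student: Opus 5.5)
The plan is to substitute the closed forms already obtained into $N_k\sigma_k^2$ and then run Stirling asymptotics on the Gamma-function ratios. First, the preceding lemma shows that for even $k>1$ the boundary term equals $(-1)^{k/2}\binom{a}{(k-2)/2}(k-2)!$ with $a=k+\frac{d-3}{2}$. We insert this into the formula for $\sigma_k$; the prefactor $\Gamma(d/2)/(\sqrt\pi\,\Gamma(d/2))$ should be read as $\Gamma(d/2)/(\sqrt\pi\,\Gamma((d-1)/2))$, as in the $k=0,1$ cases. Squaring the absolute value gives
\[
\sigma_k^2=\frac{\Gamma(d/2)^2}{\pi\,\Gamma((d-1)/2)^2}\,\frac{\Gamma((d-1)/2)^2}{\Gamma(k+\frac{d-1}{2})^2}\,4^{-k}\binom{a}{\frac{k-2}{2}}^2\bigl((k-2)!\bigr)^2 .
\]
The middle Gamma factors cancel, leaving $\Gamma(d/2)^2/\Gamma(k+\frac{d-1}{2})^2$ times combinatorial factors.

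The second step is to simplify the combinatorial part. We write the generalized binomial coefficient as
\[
\binom{a}{j}=\frac{a(a-1)\cdots(a-j+1)}{j!},\qquad j=\frac{k-2}{2}.
\]
We then combine it with $(k-2)!/j!$. This quotient equals $(k-2)(k-3)\cdots(k/2)$, which is exactly the bracket in the statement; for $k=2$ it is the empty product $1$. This explains the case split. The remaining falling factorial $a(a-1)\cdots(a-j+1)$ is $\asymp d^{j}$ for fixed $k$.

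The third step is to apply Stirling, or equivalently $\Gamma(x+s)/\Gamma(x)\sim x^{s}$, to the Gamma ratio:
\[
\frac{\Gamma(d/2)}{\Gamma(k+\frac{d-1}{2})}\asymp (d/2)^{-(k-1/2)} .
\]
Its square is $\asymp d^{-2k+1}$ up to $2^{2k}$ factors. We multiply by $N_k$ from \eqref{eq:Nkfinal}, namely
\[
N_k=(2k+d-2)(d+k-3)\cdots(d-1)/k!\asymp (2k+d)\,d^{k-1}/k! .
\]
Collecting powers of $d$ gives the order $d^{-k-1}\cdot d^{k}\cdot d^{-k}$, i.e.\ $N_k\sigma_k^2\asymp d^{-3/2}\cdots$, which should match $\sqrt d\,(2k+d)/(d+k)^{5/2}$ after careful bookkeeping of the factor $d^{2j}$ from the falling factorial. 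The $2^{-k}$ and $k!$ in the denominator come from the surviving powers of $2$ and from $N_k$.

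The main obstacle is the exponent bookkeeping. The falling factorial $a^{\underline{j}}$ contributes $d^{k-2}$ after squaring, and this must be reconciled with the claimed $d$-dependence $\sqrt d\,(2k+d)(d+k)^{-5/2}\asymp d^{-1}$. The target is also constrained by $\sum_k N_k\sigma_k^2=1/(2d)$ with each fixed degree carrying a $d$-independent share. So we must check that the powers of $d$ from $\Gamma(d/2)^2/\Gamma(k+\frac{d-1}{2})^2\cdot a^{\underline{j}\,2}\cdot N_k$ net to $d^{-1}$. If they do not at first, we will recheck the factor $(-1/2)^k$ and the derivative order against the $k=1$ computation, where $N_1\sigma_1^2=1/(4d)$ serves as a sanity check. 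Since the statement is only up to $\asymp$ for fixed $k$, writing $(d+k)^{5/2}$ instead of $d^{5/2}$ is cosmetic and follows from the same Stirling step.
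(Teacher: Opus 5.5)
Your starting point matches the paper's. You substitute the boundary-term lemma into the Gamma formula for $\sigma_k$, correctly reading the prefactor as $\Gamma(d/2)/(\sqrt\pi\,\Gamma(\frac{d-1}{2}))$. You cancel $\Gamma(\frac{d-1}{2})$ and identify the bracket as $(k-2)!/j!$ with $j=\frac{k-2}{2}$. But the proposal stops at the one step that carries the content. Your tally (``$d^{-k-1}\cdot d^{k}\cdot d^{-k}$, i.e.\ $d^{-3/2}$'') is wrong, and you leave open whether the exponents net to $d^{-1}$. With your own ingredients they do. The squared Gamma ratio is $\asymp(d/2)^{-(2k-1)}$ and $(a^{\underline{j}})^2\asymp(d/2)^{k-2}$, so $\sigma_k^2\asymp d^{-k-1}$, and $N_k\asymp d^k/k!$ then gives $N_k\sigma_k^2\asymp d^{-1}$. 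Keeping constants, $d\,N_k\sigma_k^2\to\frac{2}{\pi}\,[(k-2)!/j!]^2/(2^k k!)$. This matches Lemma~\ref{lem:general-coefficients}(ii), since the rescaled ReLU coefficients there are $\sqrt d\,\sigma_k$.

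The more serious problem is that your route only proves the fixed-$k$ version. For fixed $k$ the proposition says nothing beyond $N_k\sigma_k^2\asymp d^{-1}$, because the factors $(2k+d)$, $(d+k)^{-5/2}$, $2^{-k}/k!$ and the bracket are all absorbed into the constants. They carry content only if the constants are independent of $k$. That is what the paper's proof delivers, and how the result is used afterwards: the sum over all $k\ge4$ in Lemma~\ref{lem:nu-relu}, and the ranges $k=cd$ and $k>d$ in the proof of Theorem~\ref{theo:weaktostrongrelu}.

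Your three approximations are $a^{\underline{j}}\asymp(d/2)^j$, $\Gamma(x+s)/\Gamma(x)\sim x^s$ with $s=k-\frac12$, and $N_k\asymp(2k+d)d^{k-1}/k!$. Each treats shifts of size $O(k)$ against $d$ as negligible, which costs a factor $e^{\Theta(k^2/d)}$, so each is valid only for $k=o(\sqrt d)$. They cannot produce the $(d+k)^{-5/2}$ behaviour: for $k\gg d$ the true size is $\asymp\sqrt d\,k^{-3/2}[(k-2)!/j!]^2/(2^kk!)$, not $d^{-1}[(k-2)!/j!]^2/(2^kk!)$.

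The missing step is to combine the factors exactly before taking any asymptotics. Write $\binom{a}{j}=\Gamma(a+1)/(j!\,\Gamma(a-j+1))$ with $a+1=k+\frac{d-1}{2}$ and $a-j+1=\frac{d+k+1}{2}$. The numerator of the binomial then cancels the $\Gamma(k+\frac{d-1}{2})$ in $\sigma_k$; the paper does this via $\prod_{i=0}^{k-1}(\frac{d-1}{2}+i)=(k+\frac{d-3}{2})!/(\frac{d-3}{2})!$. Together with $N_k=(2k+d-2)\,\Gamma(d+k-2)/(k!\,\Gamma(d-1))$, this gives the exact identity
\[
N_k\sigma_k^2=\frac{(2k+d-2)\,\Gamma(d+k-2)}{\Gamma\bigl(\frac{d+k+1}{2}\bigr)^{2}}\cdot\frac{\Gamma(d/2)^{2}}{\pi\,\Gamma(d-1)}\cdot\frac{1}{4^k\,k!}\Bigl[\frac{(k-2)!}{j!}\Bigr]^{2}.
\]
Here the first ratio depends only on $d+k$ and the second only on $d$. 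Stirling gives them as $\asymp 2^{d+k}(d+k)^{-5/2}$ and $\asymp\sqrt d\,2^{-d}$ with absolute constants. Multiplying yields the stated formula uniformly in even $k\ge2$.
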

\begin{proof}
Since we are always considering the setting where $d\gg 1$ we can use the asymptotics of the Gamma function to get the following simplication
\begin{equation}
    \frac{\Gamma\left(\frac{d}{2}\right)}{\sqrt{\pi}\Gamma\left(\frac{d-1}{2}\right)} = \frac{\Gamma(\frac{d-1}{2}+\frac{1}{2})}{\sqrt{\pi}\Gamma\left(\frac{d-1}{2}\right)} \asymp \frac{\Gamma\left(\frac{d-1}{2}\right)\left(\frac{d-1}{2}\right)^{\frac{1}{2}}}{\sqrt{\pi}\Gamma\left(\frac{d-1}{2}\right)} \asymp \sqrt{d}
\end{equation}
and
\begin{equation}
    \frac{\Gamma\left(\frac{d-1}{2}\right)}{\Gamma\left(\frac{d-1}{2}+k\right)}=\frac{1}{\left(\frac{d-1}{2}+(k-1)\right)...\left(\frac{d-1}{2}\right)}.
\end{equation}

Putting everything together we obtain the following expression for $\sigma_{k}$ when $k>1$ in the regime where $d$ is very large
\begin{equation}\label{eq:sigmakasymp}
    \sigma_{k} \asymp \frac{(-1)^{\frac{k}{2}+1}}{2^{k}} \frac{\sqrt{d}}{\left(\frac{d-1}{2}+(k-1)\right)...\left(\frac{d-1}{2}\right)}\binom{k+\frac{d-3}{2}}{\frac{k-2}{2}}(k-2)!
\end{equation}
for $k>1$ even and $\sigma_{k}=0$ for $k>1$ odd.

Using \eqref{eq:Nkfinal} and \eqref{eq:sigmakasymp} we conclude that for $k>1$ even
\begin{equation}
\begin{split}
    N_{k}\sigma_{k}^{2} &\asymp \frac{(2k+d-2)(d-2+(k-1))(d-2+(k-2))...(d-1)}{k!}\\
    &\cdot\frac{d}{4^k}\frac{1}{\left(\frac{d-1}{2}+(k-1)\right)^2\left(\frac{d-1}{2}+(k-2)\right)^2...\left(\frac{d-1}{2}\right)^2}\left(\frac{(k+\frac{d-3}{2})!}{\left(\frac{k+d-1}{2}\right)!\left(\frac{k-2}{2}\right)!}\right)^2((k-2)!)^2
\end{split}
\end{equation}

Observing that
\begin{equation}
    \left(\frac{d-1}{2}+(k-1)\right)\left(\frac{d-1}{2}+ (k-2)\right)...\left(\frac{d-1}{2}\right) = \frac{\left(k+\frac{d-3}{2}\right)!}{\left(\frac{d-3}{2}\right)!}
\end{equation}
we get
\begin{equation}\label{eq:Nsigma}
    \begin{split}
        N_{k}\sigma_{k}^{2} &\asymp \frac{(2k+d-2)(d-2+ (k-1))(d-2+(k-2))...(d-1)}{k!}\\
        &\cdot\frac{d}{4^k}\left(\frac{\left(\frac{d-3}{2}\right)!\left(k-2\right)!}{\left(\frac{k+d-1}{2}\right)!\left(\frac{k-2}{2}\right)!}\right)^{2}.
    \end{split}
\end{equation}

Notice that
\begin{equation}
    (d-2+(k-1))(d-2+(k-2))...(d-1) = \frac{(d-3+k)!}{(d-2)!}
\end{equation}
and that
\begin{equation}
    \begin{split}
        \frac{(d-3+k)!}{\left(\frac{k+d-1}{2}\right)!} &= (d+k-3)(d+k-4)...\left(\frac{k+d-1}{2}+1\right)\\
        &= (d+k-3)(d+k-4)...\left(\frac{d+k+1}{2}\right)
    \end{split}
\end{equation}
also
\begin{equation}
    \frac{(k-2)!}{\left(\frac{k-2}{2}\right)!} = \begin{cases}
        (k-2)(k-3)...\left(\frac{k}{2}\right),\; \text{if $k>2$}\\
        1,\; \text{if $k=2$}. 
    \end{cases}
\end{equation}
such that substituting in the expression for $N_{k}\sigma_{k}^{2}$ and separating the terms depending only on $k$, only on $d$ or on both, we get for $k> 2$ even
\begin{equation}
    \begin{split}
        N_{k}\sigma_{k}^{2} &\asymp \frac{(2k+d-2)(d+k-3)(d+k-4)...(\frac{d+k+1}{2})}{\left(\frac{k+d-1}{2}\right)!}\\
        &\cdot \frac{d\left[\left(\frac{d-3}{2}\right)!\right]^{2}}{(d-2)!} \frac{[(k-2)(k-3)...(\frac{k}{2})]^2}{4^k k!}
    \end{split}
\end{equation}
 and for $k=2$
\begin{equation}
    \begin{split}
        N_{k}\sigma_{k}^{2} &\asymp \frac{(2k+d-2)(d+k-3)(d+k-4)...(\frac{d+k+1}{2})}{\left(\frac{k+d-1}{2}\right)!}\frac{d\left[\left(\frac{d-3}{2}\right)!\right]^{2}}{(d-2)!4^k k!}.
    \end{split}
\end{equation}

Now, since $d$ is very large we will use Stirling approximation on the terms that contains only $d$ or that contains both as follows
\begin{equation}
\begin{split}
    \frac{d\left[\left(\frac{d-3}{2}\right)\right]^{2}}{(d-2)!} &\asymp \frac{d\left[\left(\frac{d-3}{2}\right)^{\frac{1}{2}}\left(\frac{d-3}{2e}\right)^{\frac{d-3}{2}}\right]^{2}}{(d-2)^{\frac{1}{2}}(\frac{d-2}{e})^{d-2}}\\
    &\asymp \frac{d \cdot d^{1+d-3}}{d^{\frac{1}{2}+d-2}}\frac{e^{d-2}}{(2e)^{d-3}} \asymp \frac{d^{\frac{1}{2}}}{2^{d}}
\end{split}
\end{equation}
and
\begin{equation}
    \begin{split}
        \frac{(2k+d-2)(d+k-3)(d+k-4)...\left(\frac{d+k+1}{2}\right)}{\left(\frac{k+d-1}{2}\right)!}
        &\asymp \frac{(2k+d-2)(d+k-3)!}{[\left(\frac{d+k-1}{2}\right)]^{2}} \\
        &\asymp \frac{(2k+d)(d+k-3)^{\frac{1}{2}}\left(\frac{d+k-3}{e}\right)^{d+k-3}}{\left[\left(\frac{d+k-1}{2}\right)^{\frac{1}{2}}\left(\frac{d+k-1}{2e}\right)^{\frac{d+k-1}{2}}\right]^{2}}\\
        &\asymp (2k+d)(d+k)^{-\frac{5}{2}}2^{d+k}.
    \end{split}
\end{equation}

Putting it all together we get 
\begin{equation}\label{eq:Nksigmak2expression}
    N_k \sigma_k^2
    \asymp
    \begin{cases}
        \displaystyle
        \frac{\sqrt{d}\,\;(2k+d)}
        {2^k k! (d+k)^{5/2}},\;
        & \text{if } k=2,\; \\[6pt]
        \displaystyle
        \frac{\sqrt{d}\,\;(2k+d)}
        {2^k k! (d+k)^{5/2}}
        \left[
            (k-2)(k-3)\cdots\left(\frac{k}{2}\right)
        \right]^2,\;
        & \text{if } k>2.
    \end{cases}
\end{equation}
as we wanted.
\end{proof}

It will actually be useful later to have the asymptotic expressions for $N_{k}$ and $\sigma_{k}^{2}$ separately. For $N_{k}$, using Stirling approximation again, we get
\begin{equation}\label{eq:Nkasymptotics}
    \begin{split}
        N_{k} &= \frac{(2k+d-2)(d+k-3)!}{k!(d-2)!}\\
        &\asymp \frac{(2k+d)(d+k)^{\frac{1}{2}}\left(\frac{d+k}{e}\right)^{d+k-3}}{k!d^{\frac{1}{2}}\left(\frac{d}{e}\right)^{d-2}}\\
        &\asymp \frac{(2k+d)d^{\frac{3}{2}-d}(d+k)^{d+k-\frac{5}{2}}}{k!e^{k}}
    \end{split}
\end{equation}
Now, for $\sigma_{k}^{2}$, from the second line of \eqref{eq:Nsigma} and using Stirling again, we have for $k>2$
\begin{equation}\label{eq:sigmakasymptotics}
    \begin{split}
        \sigma_{k}^{2} &= \frac{d\left[\left(\frac{d-3}{2}\right)!\right]^{2}}{\left[\left(\frac{d+k-1}{2}\right)!\right]^{2}}\frac{[(k-2)(k-3)...(k/2)]^{2}}{4^{k}}\\
        &\asymp \frac{d\left[d^{\frac{1}{2}}\left(\frac{d}{2e}\right)^{\frac{d-3}{2}}\right]^{2}}{\left[(d+k)^{\frac{1}{2}}\left(\frac{d+k}{2e}\right)^{\frac{d+k-1}{2}}\right]^{2}}\frac{[(k-2)(k-3)...(k/2)]^{2}}{4^{k}} \\
        &\asymp \frac{d^{d-1}}{(d+k)^{d+k}}\left(\frac{e}{2}\right)^{k}[(k-2)(k-3)...(k/2)]^{2}
    \end{split}
\end{equation}
and for $k=2$ the following
\begin{equation}\label{eq:sigma2}
    \sigma_{2}^{2} \asymp \frac{d^{d-1}}{(d+k)^{d+k}}\left(\frac{e}{2}\right)^{k}.
\end{equation}

\subsection{Scaling of $\nu$}

Again, since $\nu$ is the key deterministic quantity that appears in the errors we start by calculating its asymptotic behavior.

\begin{lemma}[\textbf{Order of magnitude of $\nu$ -- ReLU activation}]\label{lem:nu-relu}
    Under our setting, we have 
    \begin{equation}
    \nu = \Theta(d^{-\alpha-1}).
\end{equation}
\end{lemma}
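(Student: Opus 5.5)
We prove $\nu=\Theta(d^{-\alpha-1})$ directly from the fixed-point equation \eqref{eq:nuequation}, written in block form using the harmonic structure \eqref{eq:relukernel}:
\[
m=d^\alpha=\sum_{k\in\mathcal K}\frac{N_k\sigma_k^2}{\sigma_k^2+\nu}.
\]
The right-hand side $F(\nu)$ is strictly decreasing in $\nu$. It therefore suffices to exhibit constants $0<c<C$ with $F(cd^{-\alpha-1})>d^\alpha$ and $F(Cd^{-\alpha-1})<d^\alpha$ for $d$ large. The inputs are the fixed-degree asymptotics of Appendix~\ref{subapp:asympNksigmak}, namely $N_k\asymp d^k$ and $\sigma_k^2\asymp d^{-k-1}$ for each fixed $k\in\mathcal K$, together with the global bound $\sum_k N_k\sigma_k^2=\tr\Psi=K(x,x)\lesssim 1/d$.

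\textbf{Lower bound on $\nu$.} We need $F(Cd^{-\alpha-1})<d^\alpha$ for a large constant $C$. Split the degrees at a threshold. For each fixed $k\le\alpha$ (finitely many), bound the summand by $N_k\lesssim d^k\le d^{\alpha}$; there are finitely many such terms, so their total is $O(d^{\lfloor\alpha\rfloor})$. This is only $o(d^\alpha)$ when $\alpha\notin\mathbb N$, so the borderline $k=\alpha$ needs care. For that term use
\[
\frac{N_k\sigma_k^2}{\sigma_k^2+\nu}\le \frac{N_k\sigma_k^2}{\nu}\asymp \frac{d^{-1}}{C d^{-\alpha-1}}=\frac{d^\alpha}{C},
\]
which is small for large $C$. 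For all degrees $k>\alpha$, bound the summands by $N_k\sigma_k^2/\nu$. Their total is at most $\tr\Psi/\nu\lesssim d^{-1}/(Cd^{-\alpha-1})=d^\alpha/C$. Note that this part needs no uniformity in $k$, only the trace bound. Choosing $C$ large then gives $F<d^\alpha$.

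\textbf{Upper bound on $\nu$.} We need $F(cd^{-\alpha-1})>d^\alpha$ for small $c$. Pick a degree $k\in\mathcal K$ with $k\ge\alpha$ (for example the smallest one); this is the step where I expect the difficulty. If $k>\alpha$, then $\sigma_k^2\asymp d^{-k-1}\ll\nu$, so the summand is about $N_k\sigma_k^2/\nu\asymp d^{-1}/(cd^{-\alpha-1})=d^\alpha/c$, which is large for small $c$. If instead $k=\alpha$ is itself an admissible degree, the summand is at least $N_k\sigma_k^2/(\sigma_k^2+c\,d^{-\alpha-1})\gtrsim d^{\alpha}/(1+c)$. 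Here the implied constant matters, so rather than rely on it, I would add the contribution of the next admissible degree $k'>\alpha$, which gives $\gtrsim d^\alpha/c$ by the computation above. In either case $F>d^\alpha$ for $c$ small.

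The key structural fact behind both bounds is that each fixed degree carries mass $N_k\sigma_k^2\asymp d^{-1}$ (with $k$-dependent constants; see \eqref{eq:Nksigmak2expression}). Only finitely many fixed degrees are ever used, so the $k$-dependence of these constants is harmless. The main obstacle is handling the infinite tail $k\to\infty$ without uniform asymptotics, and the plan avoids it by using only the global trace bound $\tr\Psi\lesssim d^{-1}$.
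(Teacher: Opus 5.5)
Your argument is correct and follows the paper's proof. One direction drops $\sigma_k^2$ from every denominator and bounds $\sum_k N_k\sigma_k^2$ by $O(1/d)$. The other retains a single fixed degree in $\mathcal K$ above $\alpha$. Your identity $\operatorname{tr}\Psi=\mathbb E\,(\langle u,x\rangle)_+^2\le 1/d$ is cleaner than the paper's term-by-term bound, and it makes your split at $k\le\alpha$ unnecessary.

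Your two headings are swapped: $F(Cd^{-\alpha-1})<d^\alpha$ gives the upper bound $\nu<Cd^{-\alpha-1}$, and $F(cd^{-\alpha-1})>d^\alpha$ gives the lower bound.
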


\begin{proof}
We use again \eqref{eq:nuequation} from Theorem \ref{theo:selfconsistenteacher}, that is,
\begin{equation}
    m = \tr{\left(\Psi(\Psi+\nu)^{-1}\right)}
\end{equation}
where now $m=M_{\mathrm{TE}}=d^{\alpha}$.

Calculating
\begin{equation}
    d^{\alpha} = \tr{\left(\frac{\Psi}{\Psi+\nu}\right)} = \sum_{k\in \mathcal{K}}\frac{N_{k}\sigma_{k}^{2}}{\sigma_{k}^{2}+\nu}.
\end{equation}
since the eigenvalues of $\Psi$ are $\sigma_{k}^{2}$ and there are $N_{k}$ of them. 

We want to find upper and lower bounds for $\nu$ to find its order of magnitude with respect to $d$ in the regime where $d$ is very large. Let us separate the terms for $k=1$ and $k=2$ from the summation to get
\begin{equation}
    d^{\alpha} = \frac{N_{0}\sigma_{0}^{2}}{\sigma_{0}^{2}+\nu} + \frac{N_{1}\sigma_{1}^{2}}{\sigma_{1}^{2}+\nu} +  \frac{N_{2}\sigma_{2}^{2}}{\sigma_{2}^{2}+\nu} + \sum_{\substack{k\in \mathcal{K}\\ k \geq 4}}\frac{N_{k}\sigma_{k}^{2}}{\sigma_{k}^{2}+\nu}
\end{equation}

For $k=0$, we get
\begin{equation}
    \frac{N_{0}\sigma_{0}^{2}}{\sigma_{0}^{2}+\nu} \asymp \frac{d^{-1}}{d^{-1}+\nu} = \frac{1}{1+d\nu}.
\end{equation}

For $k=1$, using \eqref{eq:sigma1} and \eqref{eq:N1sigma1squared} we obtain
\begin{equation}
\begin{split}
    \frac{N_{1}\sigma_{1}^{2}}{\sigma_{1}^{2}+\nu} &= \frac{\frac{1}{4d}}{\left(\frac{1}{2d}\right)^{2}+\nu}\\
    &= \frac{4d^{2}}{1+4d^{2}\nu}\frac{1}{4d}\\
    &= \frac{d}{1+4d^{2}\nu}.
\end{split}
\end{equation}

Now, for $k=2$ we use \eqref{eq:Nksigmak2expression} and \eqref{eq:sigma2} to get
\begin{equation}
    \frac{N_{2}\sigma_{2}^{2}}{\sigma_{2}^{2}+\nu} = \frac{\sqrt{d}(4+d)}{8(d+2)^{\frac{5}{2}}}\frac{1}{\frac{e^{2}}{4}d^{d-1}(d+2)^{-d-2}+\nu}
\end{equation}

Finally, for the $k\geq 4$ even
\begin{equation}\label{eq:sumkgeq4}
\begin{split}
    \sum_{\substack{k\in \mathcal{K}\\ k \geq 4}} \frac{N_{k}\sigma_{k}^{2}}{\sigma_{k}^{2}+\nu} 
    &= \frac{\sqrt{d}(2k+d)[(k-2)(k-3)...(k/2)]^{2}}{2^{k}k!(d+k)^{\frac{5}{2}}\left[\frac{d^{d-1}e^{k}}{(d+k)^{d+k}2^{k}}[(k-2)(k-3)...(k/2)]^{2}+\nu\right]}\\
    &= \frac{\sqrt{d}(2k+d)[(k-2)(k-3)...(k/2)]^{2}}{k!(d+k)^{\frac{5}{2}-d-k}d^{d-1}e^{k}[(k-2)(k-3)...(k/2)]^{2} + 2^{k}k!(d+k)^{\frac{5}{2}}\nu}\\
    &= \frac{\sqrt{d}(2k+d)(d+k)^{d+k}[(k-2)(k-3)...(k/2)]^{2}}{k!(d+k)^{\frac{5}{2}}d^{d-1}e^{k}[(k-2)(k-3)...(k/2)]^{2} + 2^{k}k!(d+k)^{d+k+\frac{5}{2}}\nu}.
\end{split}
\end{equation}

\noindent\textbf{Upper bound on $\nu$.} In this case we argue that the summation in $k$ is not large. 

Ignoring some positive terms in the denominators we can bound
\begin{equation}\label{eq:d2k0equation}
\begin{split}
    d^{\alpha} &\lesssim \frac{1}{d\nu}+ \frac{d}{4d^{2}\nu} + \frac{\sqrt{d}(4+d)}{8(d+2)^{\frac{5}{2}}\nu} + \sum_{\substack{k\in \mathcal{K}\\ k \geq 4}}\frac{\sqrt{d}(2k+d)[(k-2)(k-3)...(\frac{k}{2})]^{2}}{2^k k!(d+k)^{\frac{5}{2}}\nu}\\
    &\lesssim \frac{1}{d\nu} + \frac{1}{4d\nu} + \frac{1}{4d\nu} + \frac{\sqrt{d}}{d^{\frac{3}{2}}\nu}\sum_{\substack{k\in \mathcal{K}\\ k \geq 4}}\frac{[(k-2)(k-3)...(k/2)]^{2}}{2^{k-1}k!}.
\end{split}
\end{equation}

The following claim states that the terms in the summation are not very large and that they actually decrease in $k$.

\begin{claim}
    If $k\geq 4$ is an even number, then
    \begin{equation}
        \frac{[(k-2)(k-3)...(k/2)]^{2}}{2^{k-1}k!} \leq \frac{1}{2k(k-1)}.
    \end{equation}
\end{claim}

\begin{proofclaim}
    We have
    \begin{equation}
        \begin{split}
            \frac{[(k-2)(k-3)...(k/2)]^{2}}{2^{k-1}k!} &= \frac{[(k-2)!/(k/2 -1)!]^{2}}{k(k-1)(k-2)!2^{k-1}}\\
            &= \frac{(k-2)!}{(k/2-1)!k(k-1)2^{k-1}}.
        \end{split}
    \end{equation}
    Now, write $k=2l$ for $l\in \mathbb{N}$ and $l\geq2$ to get
    \begin{equation}
        \begin{split}
            \frac{[(k-2)(k-3)...(k/2)]^{2}}{2^{k-1}k!} &= \frac{(2l-2)!}{[(l-1)!]^{2}}\frac{1}{2l(2l-1)2^{2l-1}}\\
            &= \binom{2l-2}{l-1}\frac{1}{2l(2l-1)2^{2l-1}}.
        \end{split}
    \end{equation}

Now, we use that
\begin{equation}
    \binom{2l-2}{l-1} \leq \sum_{i=0}^{2l-2}\binom{2l-2}{i} = 2^{2l-2}
\end{equation}
since it is the number of subsets of a set of $2l-2$ elements. This gives
\begin{equation}
    \frac{[(k-2)(k-3)...(k/2)]^{2}}{2^{k-1}k!} \leq \frac{2^{2l-2}}{2l(2l-1)2^{2l-1}} = \frac{1}{2k(k-1)}
\end{equation}
as we wanted. \claimqed
\end{proofclaim}

Using the previous claim in \eqref{eq:d2k0equation} we get
\begin{equation}
    d^{\alpha} \lesssim \frac{1}{d\nu} +  \frac{1}{2d\nu} + \frac{1}{d\nu}\sum_{\substack{k\in \mathcal{K}\\ k \geq 4}}\frac{1}{2k(k-1)}
    \asymp \frac{1}{d\nu}
\end{equation}
since the sum is convergent.

We conclude that
\begin{equation}
    \nu \lesssim d^{-\alpha-1}.
\end{equation}

\noindent\textbf{Lower bound on $\nu$.} We have
\begin{equation}
    d^{\alpha} \asymp \sum_{k\in \mathcal{K}}\frac{N_{k}\sigma_{k}^{2}}{\sigma_{k}^{2}+\nu}
\end{equation}
such that ignoring all terms in the sum except the term with $k=k_{\alpha}$ being any even fixed integer with $k>\alpha$  Since all the terms in the sum are positive, we get
\begin{equation}
    d^{\alpha}\gtrsim\frac{N_{k_{\alpha}}\sigma_{k_{\alpha}}^{2}}{\sigma_{k_{\alpha}}^{2}+\nu}.
\end{equation}
Since $\alpha=O(1)$, we can choose $k_{\alpha}=O(1)$ and use from \eqref{eq:Nkasymptotics}, \eqref{eq:sigmakasymptotics}, and \eqref{eq:sigma2}  that
\begin{equation}
    N_{k_{\alpha}}\asymp d^{k_{\alpha}},
    \qquad
    \sigma_{k_{\alpha}}^{2}\asymp d^{-k_{\alpha}-1}.
\end{equation}
Thus,
\begin{equation}
    d^{\alpha}
    \gtrsim
    \frac{d^{k_{\alpha}}d^{-k_{\alpha}-1}}{d^{-k_{\alpha}-1}+\nu}
    =
    \frac{d^{-1}}{d^{-k_{\alpha}-1}+\nu}.
\end{equation}
It follows that
\begin{equation}
    d^{-k_{\alpha}-1}+\nu\gtrsim d^{-\alpha-1}.
\end{equation}
Since $k_{\alpha}>\alpha$, we have
\begin{equation}
    d^{-k_{\alpha}-1}=o(d^{-\alpha-1}),
\end{equation}
and therefore, for $d$ large enough,
\begin{equation}
    \nu \gtrsim d^{-\alpha-1}.
\end{equation}

Together with the upper bound we obtain
\begin{equation}
    \nu = \Theta(d^{-\alpha-1}).
\end{equation}
as claimed.
\end{proof}


\subsection{Teacher error scaling} \label{app:teacherscaling}

The teacher error, as well as the student, will end up depending on how the teacher capacity coefficient $\alpha$ relates with the target degree $k_{0}$. 

\begin{obs}
    Remember that $\beta_{0,l}=0$ for all $l$ such that every sum in $k\in \mathcal{K}$ that involves $\beta$ will not take $k=0$ into account.
\end{obs}

\begin{theo}\label{theorem:teachererror}
        In our setting of a pure harmonic target of degree $k_{0}\geq1$ with $\lVert f^{*}\rVert_{\mathcal{D}}=1$, if $M_{\mathrm{TE}} = d^{\alpha}$ for some $\alpha>0$, then
\begin{equation}
    \overline{\mathcal{L}}_{\mathrm{TE}} \asymp \begin{cases}
       1,\; \alpha \in (0,\;k_{0}]\\
        d^{-\alpha+k_{0}},\; \alpha > k_{0}.
    \end{cases}
\end{equation}
\end{theo}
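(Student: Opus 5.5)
The plan is to start from the closed form of Theorem~\ref{theo:selfconsistenteacher} and reduce everything to the order of $\nu$ from Lemma~\ref{lem:nu-relu}. The target is a unit harmonic of degree $k_0$, so $\Psi^{1/2}\beta$ is supported on the degree-$k_0$ block, where $\Psi=\sigma_{k_0}^2 I_{N_{k_0}}$, and has Euclidean norm one. Hence the quadratic form collapses to a scalar:
\[
\overline{\mathcal L}_{\rm TE}=\nu\langle\Psi^{1/2}\beta,(\Psi+\nu I)^{-1}\Psi^{1/2}\beta\rangle=\frac{\nu}{\sigma_{k_0}^2+\nu}=\frac{1}{1+\sigma_{k_0}^2/\nu}.
\]
We need $k_0\in\mathcal K$ so that $\sigma_{k_0}\neq0$; otherwise the target is not represented and the error is identically one.

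It therefore suffices to know the ratio $\sigma_{k_0}^2/\nu$. By Subappendix~\ref{subapp:asympNksigmak}, $\sigma_{k_0}^2\asymp d^{-k_0-1}$ for fixed $k_0$. This holds for $k_0=1$ by \eqref{eq:sigma1}, for $k_0=2$ by \eqref{eq:sigma2}, and for even $k_0>2$ by \eqref{eq:sigmakasymptotics}, because $d^{d-1}/(d+k)^{d+k}\asymp d^{-k-1}$ with $(1+k/d)^{d}\to e^k$. Lemma~\ref{lem:nu-relu} gives $\nu\asymp d^{-\alpha-1}$. Together these yield $\sigma_{k_0}^2/\nu\asymp d^{\alpha-k_0}$.

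The case split follows directly:
\begin{itemize}
\item If $\alpha<k_0$, the ratio tends to $0$, so $\overline{\mathcal L}_{\rm TE}\to1$.
\item If $\alpha=k_0$, the ratio is bounded above and below by constants, so $\overline{\mathcal L}_{\rm TE}$ lies in a fixed compact subinterval of $(0,1)$, hence is $\asymp1$.
\item If $\alpha>k_0$, the ratio diverges, so $\overline{\mathcal L}_{\rm TE}\sim\nu/\sigma_{k_0}^2\asymp d^{k_0-\alpha}$.
\end{itemize}
In every case the trivial bound $\overline{\mathcal L}_{\rm TE}\le1$ supplies the upper side of the first regime.

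The only delicate point is that the implied constants in $\nu\asymp d^{-\alpha-1}$ are uniform in $d$, which Lemma~\ref{lem:nu-relu} already provides. The boundary case $\alpha=k_0$ uses only these two-sided bounds. Everything else is routine.
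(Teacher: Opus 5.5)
Your proof is correct and follows the paper's argument. Both start from the closed form of Theorem~\ref{theo:selfconsistenteacher}, insert $\nu\asymp d^{-\alpha-1}$ from Lemma~\ref{lem:nu-relu} and $\sigma_{k_0}^2\asymp d^{-k_0-1}$, and split on the sign of $\alpha-k_0$; your reduction to the scalar $\nu/(\sigma_{k_0}^2+\nu)$ via $\|\Psi^{1/2}\beta\|=1$ is a tidier form of the paper's degree-by-degree sum that also makes the boundary case $\alpha=k_0$ and the implicit requirement $k_0\in\mathcal K$ explicit.
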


\begin{proof}We have the same expression for the teacher deterministic equivalent error as the one for the linear case
\begin{equation}
    \overline{\mathcal{L}}_{\mathrm{TE}} = \nu \langle \Psi^{\frac{1}{2}}\beta, (\Psi+\nu)^{-1}\Psi^{\frac{1}{2}}\beta\rangle. 
\end{equation}

Notice that for all $k\leq k_{0}$ where $k_{0}=O(1)$ and $d\gg 1$ we get from \eqref{eq:Nkasymptotics}, \eqref{eq:sigmakasymptotics}, and \eqref{eq:sigma2} that
\begin{equation}\label{eq:sigmaNapprox}
\begin{split}
    N_{k} &\asymp d^{k}\\
    \sigma_{k}^{2} &\asymp d^{-k-1}
\end{split}
\end{equation}
for $k=1$ or $k$ even. This is also true for $k=0$ from equation \eqref{eq:sigma0} and $N_{0}=1$.

The unit norm hypothesis for the target gives
\begin{equation}
    \sum_{k\in \mathcal{K}}\sum_{l=1}^{N_{k}} \sigma_{k}^{2}\beta_{k,l}^{2} = 1.
\end{equation}

We obtain
\begin{equation}\begin{split}
    \overline{\mathcal{L}}_{\mathrm{TE}} &= \nu \langle \Psi^{\frac{1}{2}}\beta, (\Psi+\nu)^{-1}\Psi^{\frac{1}{2}}\beta \rangle\\
    &= \nu \sum_{k\in \mathcal{K}}\frac{\sigma_{k}^{2}}{\sigma_{k}^{2}+\nu} \sum_{l=1}^{N_{k}}\beta_{k,l}^{2}\\
    &\asymp d^{-\alpha-1} \sum_{k\in \mathcal{K}}\frac{d^{-k-1}}{d^{-k-1}+d^{-\alpha-1}}\sum_{l=1}^{N_{k}}\beta_{k,l}^{2} 
\end{split}
\end{equation}

\noindent\textbf{Case 1: $0<\alpha\leq k_{0}$.}
\begin{equation}
    \overline{\mathcal{L}}_{\mathrm{TE}} \asymp d^{-\alpha-1} \left\{\sum_{\substack{k\in \mathcal{K}\\ 0 \leq k \leq \alpha}} \frac{d^{-k-1}}{d^{-k-1}}\sum_{l=1}^{N_{k}}\beta_{k,l}^{2} + \sum_{\substack{k\in \mathcal{K}\\ \alpha+1 \leq k \leq k_{0}}}\frac{d^{-k-1}}{d^{-\alpha-1}}\sum_{l=1}^{N_{k}}\beta_{k,l}^{2}\right\}
\end{equation}
using $\beta_{k,l}^{2}\neq 0$ only for $k=k_{0}$, i.e., the pure harmonic target hypothesis, we get
\begin{equation}\begin{split}
    \overline{\mathcal{L}}_{\mathrm{TE}} &\asymp d^{-\alpha -1}\frac{d^{-k_{0}-1}}{d^{-\alpha -1}}\sum_{l=1}^{N_{k_{0}}}\beta_{k_{0},l}^{2}\\
    &\asymp 1.
\end{split}
\end{equation}

\noindent\textbf{Case 2: $\alpha>k_{0}$.}
\begin{equation}\begin{split}
    \overline{\mathcal{L}}_{\mathrm{TE}} &\asymp d^{-\alpha -1} \frac{d^{-k_{0}-1}}{d^{-k_{0}-1}} \sum_{l=1}^{N_{k_{0}}}\beta_{k_{0},l}^{2}\\
    &\asymp d^{-\alpha +k_{0}}.
\end{split}
\end{equation}

In conclusion we obtained
\begin{equation}\label{eq:teacherreluerror}
    \overline{\mathcal{L}}_{\mathrm{TE}} \asymp \begin{cases}
        1,\; \text{ if $\alpha \in (0,\;k_{0}]$},\;\\
        d^{-\alpha+k_{0}},\; \text{ if $\alpha > k_{0}$}
    \end{cases}
\end{equation}
as we desired.
\end{proof}

Intuitively, we can interpret this result as follows. When the teacher capacity exponent $\alpha$ is not larger than the target degree $k_{0}$, the teacher is not able to learn the target accurately, and its error remains of order $1$. However, once the teacher is sufficiently large, i.e., $\alpha>k_{0}$, its error starts decreasing with its capacity. In fact, the key quantity is the difference $\alpha-k_{0}$, which measures the gap between the teacher capacity and the target complexity.

\subsection{Student error scaling: quadratic improvement}\label{app:studentscaling}

We finally prove the WTS result for the ReLU setting.

\begin{proof}[Proof of Theorem~\ref{theo:weaktostrongrelu}]
Under the unit norm hypothesis $\lVert f^{*}\rVert_{\mathcal{D}}=1$, we start with \eqref{eq:studentdeteq} that gives the student deterministic equivalent error at $t=T$, with $\overline{P}_0=\nu(\Psi+\nu I)^{-1}$ and $S_T=(I-e^{-\Psi T})^2$,
 \begin{equation}
    \overline{\mathcal{L}}_{\mathrm{ST}}(T) = \langle \Psi^{\frac{1}{2}}\beta, [e^{-2\Psi T}-S_T\overline{P}_0+\overline{P}_0(I-e^{-2\Psi T})+\overline{\mathcal{P}}(S_T)]\Psi^{\frac{1}{2}}\beta\rangle.
\end{equation}

We use the stopping-time from the statement of Theorem \ref{theo:weaktostrongrelu}, that is,
\begin{equation}
\begin{split}
    T &= \frac{1}{\sigma_{k_{0}}^{2}}(\log{M}_{\mathrm{TE}}+\log{\overline{\mathcal{L}}_{\mathrm{TE}}^{-1}}),
\end{split}
\end{equation}
such that, using \eqref{eq:teacherreluerror} we get
\begin{equation}
    T \asymp \begin{cases}
    d^{k_{0}+1}\log{(d^{\alpha})},\; \text{ if $\alpha \in (0,\;k_{0}]$}\\
    d^{k_{0}+1}\log{(d^{2\alpha-k_{0}})},\; \text{ if $\alpha > k_{0}$}.
    \end{cases}
\end{equation}

We treat each of the four terms of $\overline{\mathcal{L}}_{\mathrm{ST}}(T)$ above separately. Since $\beta_{0,l}=0$ for all $l$, we do not write $k=0$ in the index summations below when $\beta$ is in the sum. 

\noindent\textbf{Term 1:} 
$\langle \Psi^{\frac{1}{2}}\beta, e^{-2\Psi T}\Psi^{\frac{1}{2}}\beta\rangle$

\noindent\textbf{Case 1: $0<\alpha\leq k_{0}$.}
\begin{equation}\label{eq:student1alpha1}
\begin{split}
    1) \coloneqq \langle \Psi^{\frac{1}{2}}\beta, e^{-2\Psi T}\Psi^{\frac{1}{2}}\beta\rangle &= \sum_{\substack{k\in \mathcal{K}\\ 1 \leq k \leq k_{0}}}\frac{\sigma_{k}^{2}}{e^{2\sigma_{k}^{2}T}}\sum_{l=1}^{N_{k}}\beta_{k,l}^{2}\\ 
    &\asymp \frac{\sigma_{k_{0}}^{2}\sum_{l=1}^{N_{k_{0}}}\beta_{k_{0},l}^{2}}{\exp{(2\sigma_{k_{0}}^{2}\sigma_{k_{0}}^{-2}\log{(d^{\alpha})})}}\\
    &\asymp \frac{1}{d^{2\alpha}}\\
    &\asymp d^{-2\alpha} 
\end{split}
\end{equation}

\noindent\textbf{Case 2: $\alpha>k_{0}$.}
\begin{equation}\label{eq:student1alpha2}
    \begin{split}
    1) = \langle \Psi^{\frac{1}{2}}\beta, e^{-2\Psi T}\Psi^{\frac{1}{2}}\beta\rangle &\asymp \frac{1}{d^{4\alpha-2k_{0}}}\\   
    &\asymp d^{-4\alpha +2k_{0}}
    \end{split}
\end{equation}

\noindent\textbf{Term 2:} $\langle \Psi^{\frac{1}{2}}\beta, -(I-e^{-\Psi T})^{2}\overline{P}_0\Psi^{\frac{1}{2}}\beta\rangle$

We have 
\begin{equation}\begin{split}
    2) \coloneqq \langle \Psi^{\frac{1}{2}}\beta, -(I-e^{-\Psi T})^{2}\overline{P}_0\Psi^{\frac{1}{2}}\beta\rangle &\asymp \sum_{\substack{k\in \mathcal{K}\\ 1 \leq k \leq k_{0}}} \frac{\sigma_{k}^{2}(1-e^{-\sigma_{k}^{2}T})^{2}}{\sigma_{k}^{2}\nu^{-1}+1}\sum_{l=1}^{N_{k}}\beta_{k,l}^{2}.
\end{split}
\end{equation}

\noindent\textbf{Case 1: $0<\alpha\leq k_{0}$.}
\begin{equation}\label{eq:student2alpha1}
\begin{split}
    \langle \Psi^{\frac{1}{2}}\beta, -(I-e^{-\Psi T})^{2}\overline{P}_0\Psi^{\frac{1}{2}}\beta\rangle &\asymp -\frac{d^{-k_{0}-1}(1-d^{-\alpha})^{2}}{d^{-k_{0}-1+\alpha+1}+1}\sum_{l=1}^{N_{k_{0}}}\beta_{k_{0},l}^{2}\\
    &\asymp -(1-2d^{-\alpha}+d^{-2\alpha})
\end{split}
\end{equation}

\noindent\textbf{Case 2: $\alpha>k_{0}$.}
\begin{equation}\label{eq:student2alpha2}
\begin{split}
    \langle \Psi^{\frac{1}{2}}\beta, -(I-e^{-\Psi T})^{2}\overline{P}_0\Psi^{\frac{1}{2}}\beta\rangle &\asymp -\frac{d^{-k_{0}-1}(1-d^{-2\alpha+k_{0}})^{2}}{d^{\alpha-k_{0}}} \sum_{l=1}^{N_{k_{0}}}\beta_{k_{0},l}^{2}\\
    &\asymp -d^{-\alpha+k_{0}}(1-2d^{-2\alpha+k_{0}}+ d^{-4\alpha+2k_{0}})
\end{split}
\end{equation}

\noindent\textbf{Term 3:} $\langle \Psi^{\frac{1}{2}}\beta, \overline{P}_0(I-e^{-2\Psi T})\Psi^{\frac{1}{2}}\beta\rangle$

This term is treated similarly to the previous one
\begin{equation}
\begin{split}
    3) \coloneqq \langle \Psi^{\frac{1}{2}}\beta, \overline{P}_0(I-e^{-2\Psi T})\Psi^{\frac{1}{2}}\beta\rangle &\asymp \sum_{\substack{k\in \mathcal{K}\\ 1 \leq k \leq k_{0}}} \frac{\sigma_{k}^{2}(1-e^{-2\sigma_{k}^{2}T})}{\sigma_{k}^{2}\nu^{-1}+1}\sum_{l=1}^{N_{k}}\beta_{k,l}^{2}
\end{split}
\end{equation}

\noindent\textbf{Case 1: $0<\alpha\leq k_{0}$.}
\begin{equation}\label{eq:student3alpha1}
\begin{split}
    \langle \Psi^{\frac{1}{2}}\beta, \overline{P}_0(I-e^{-2\Psi T})\Psi^{\frac{1}{2}}\beta\rangle &\asymp d^{-k_{0}-1}(1-d^{-2\alpha})\sum_{l=1}^{N_{k_{0}}}\beta_{k_{0},l}^{2}\\
    &\asymp (1-d^{-2\alpha})
\end{split}
\end{equation}

\noindent\textbf{Case 2: $\alpha>k_{0}$.}
\begin{equation}\label{eq:student3alpha2}
\begin{split}
    \langle \Psi^{\frac{1}{2}}\beta, \overline{P}_0(I-e^{-2\Psi T})\Psi^{\frac{1}{2}}\beta\rangle &\asymp d^{-\alpha+k_{0}}(1-d^{-4\alpha+2k_{0}}).
\end{split}
\end{equation}

Putting together the contributions of terms $2$ and $3$ we get in first order the following.

\noindent\textbf{Case 1: $0<\alpha\leq k_{0}$.}
\begin{equation}\label{eq:student23alpha1}
\begin{split}
    2)+3) &\asymp (2d^{-\alpha}-2d^{-2\alpha})\\
    &\lesssim d^{-\alpha}. 
\end{split}
\end{equation}

\noindent\textbf{Case 2: $\alpha>k_{0}$.}
\begin{equation}\label{eq:student23alpha2}
\begin{split}
    2)+3)&\asymp (2d^{-2\alpha+k_{0}}-2d^{-4\alpha+2k_{0}})d^{-\alpha+k_{0}}\\
    &\lesssim d^{-3\alpha+2k_{0}}.
\end{split}
\end{equation}
\noindent\textbf{Term 4:} $\langle \Psi^{\frac{1}{2}}\beta, \overline{\mathcal{P}}(S_T)\Psi^{\frac{1}{2}}\beta\rangle$

By Theorem \ref{theo:doubleresolventapprox} we have
\begin{equation}
    \langle \Psi^{\frac{1}{2}}\beta, \overline{\mathcal{P}}(S_T)\Psi^{\frac{1}{2}}\beta\rangle = \left\langle \Psi^{\frac{1}{2}}\beta, \frac{(I-e^{-\Psi T})^{2}}{(I+\Psi\nu^{-1})^{2}}+\frac{\frac{\Psi}{(I+\Psi \nu^{-1})^{2}}\tr{\left(\frac{\nu^{-2}\Psi(I-e^{-\Psi T})^{2}}{(I+\Psi\nu^{-1})^{2}}\right)}}{d^{\alpha}-\tr{\left(\frac{\Psi^{2}\nu^{-2}}{(I+\Psi\nu^{-1})^{2}}\right)}} \Psi^{\frac{1}{2}}\beta\right\rangle
\end{equation}

Again, we treat each term separately. The first one is 
\begin{equation}
\begin{split}
4)(1) \coloneqq \left\langle \Psi^{\frac{1}{2}}\beta,
\frac{(I-e^{-\Psi T})^{2}}{(I+\Psi\nu^{-1})^{2}}
\Psi^{\frac{1}{2}}\beta\right\rangle
&=\sum_{\substack{k\in \mathcal{K}\\ 1 \leq k \leq k_{0}}}\frac{d^{-k-1}(1-e^{-\sigma_{k}^{2}T})^{2}}{(1+d^{\alpha-k})^{2}}\sum_{l=1}^{N_{k}}\beta_{k,l}^{2}.
\end{split}
\end{equation}

\noindent\textbf{Case 1: $0<\alpha\leq k_{0}$.}
\begin{equation}\label{eq:student4.1alpha1}
\begin{split}
\left\langle \Psi^{\frac{1}{2}}\beta,
\frac{(I-e^{-\Psi T})^{2}}{(I+\Psi\nu^{-1})^{2}}
\Psi^{\frac{1}{2}}\beta\right\rangle
&\asymp d^{-k_{0}-1}(1-d^{-\alpha})^{2}\sum_{l=1}^{N_{k_{0}}}\beta_{k,l}^{2}\\
&\asymp 1.
\end{split}
\end{equation}
\noindent\textbf{Case 2: $\alpha>k_{0}$.}
\begin{equation}\label{eq:student4.1alpha2}
\begin{split}
    \left\langle \Psi^{\frac{1}{2}}\beta,
\frac{(I-e^{-\Psi T})^{2}}{(I+\Psi\nu^{-1})^{2}}
\Psi^{\frac{1}{2}}\beta\right\rangle
&\asymp \frac{d^{-k_{0}-1}(1-d^{-2\alpha+k_{0}})^{2}}{d^{2\alpha-2k_{0}}}\sum_{l=1}^{N_{k_{0}}}\beta_{k,l}^{2}\\
&\asymp d^{-2\alpha +2k_{0}}
\end{split}
\end{equation}

For the second term,
\begin{equation}\label{eq:secondtermstudent}
\begin{split}
4)(2) \coloneqq \left\langle \Psi^{\frac{1}{2}}\beta,
\frac{\frac{\Psi}{(I+\Psi \nu^{-1})^{2}}\tr{\left(\frac{\nu^{-2}\Psi(I-e^{-\Psi T})^{2}}{(I+\Psi\nu^{-1})^{2}}\right)}}{d^{\alpha}-\tr{\left(\frac{\Psi^{2}\nu^{-2}}{(I+\Psi\nu^{-1})^{2}}\right)}}\beta\right\rangle &= \sum_{\substack{k\in \mathcal{K}\\ 1 \leq k \leq k_{0}}} \frac{\tr{\left(\frac{\nu^{-2}\Psi(I-e^{-\Psi T})^{2}}{(I+\Psi\nu^{-1})^{2}}\right)}}{d^{\alpha}-\tr{\left(\frac{\Psi^{2}\nu^{-2}}{(I+\Psi \nu^{-1})^{2}}\right)}}\frac{\sigma_{k}^{4}N_{k}}{(1+d^{-k-1+2k_{0}+1})^{2}},
\end{split}
\end{equation}
we first observe that we can simplify the denominator. In fact, by \eqref{eq:nuequation}, we can write
\begin{equation}
    \begin{split}
        d^{\alpha} - \tr{\left(\frac{\Psi^{2}\nu^{-2}}{(I+\Psi \nu^{-1})^{2}}\right)} &= \tr{\left(\frac{\Psi\nu^{-1}}{I+\Psi \nu^{-1}}\right)} - \tr{\left(\frac{\Psi^{2}\nu^{-2}}{(I+\Psi \nu^{-1})^{2}}\right)}\\
        &= \tr{\left(\frac{\Psi\nu^{-1}+\Psi^{2}\nu^{-2}-\Psi^{2}\nu^{-2}}{(I+\Psi\nu^{-1})^{2}}\right)}\\
        &= \tr{\left(\frac{\Psi\nu^{-1}}{(I+\Psi\nu^{-1})^{2}}\right)}
    \end{split}
\end{equation}
such that we will need a lower bound for this trace.

\noindent\textbf{Lower bound for the denominator trace.}

We want to lower bound 
\begin{equation}
    \tr{\left(\frac{\Psi\nu^{-1}}{(I+\Psi\nu^{-1})^{2}}\right)} = \sum_{k\in \mathcal{K}} \frac{\sigma_{k}^{2}d^{\alpha+1}}{(1+\sigma_{k}^{2}d^{\alpha+1})^{2}}N_{k}.
\end{equation}

\chg{Since the terms are positive, a single degree gives a lower bound. Let $k_{\alpha}$ be a fixed even degree with $k_{\alpha}>\alpha$, as in the proof of Lemma~\ref{lem:nu-relu}. Then $\sigma_{k_{\alpha}}^{2}d^{\alpha+1}\asymp d^{\alpha-k_{\alpha}}\to0$, so
\begin{equation}
    \tr{\left(\frac{\Psi\nu^{-1}}{(I+\Psi\nu^{-1})^{2}}\right)}\ \geq\ \frac{\sigma_{k_{\alpha}}^{2}d^{\alpha+1}}{(1+\sigma_{k_{\alpha}}^{2}d^{\alpha+1})^{2}}N_{k_{\alpha}}\ \asymp\ d^{-k_{\alpha}-1}d^{\alpha+1}d^{k_{\alpha}}=d^{\alpha}.
\end{equation}
Conversely, $\frac{\Psi\nu^{-1}}{(I+\Psi\nu^{-1})^{2}}\leq\frac{\Psi\nu^{-1}}{I+\Psi\nu^{-1}}$ and \eqref{eq:nuequation} give the upper bound $d^{\alpha}$. We conclude that
\begin{equation}
     \tr{\left(\frac{\Psi\nu^{-1}}{(I+\Psi\nu^{-1})^{2}}\right)} \asymp d^{\alpha}.
\end{equation}}
%

\noindent\textbf{Upper bound for the numerator trace.}

We want to calculate
\begin{equation}
    \tr{\left(\frac{\nu^{-2}\Psi(I-e^{-\Psi T})^{2}}{(I+\Psi\nu^{-1})^{2}}\right)} = \sum_{k\in \mathcal{K}}\frac{d^{2\alpha+2}(1-e^{-\sigma_{k}^{2}T})^{2}}{(1+\sigma_{k}^{2}d^{\alpha+1})^{2}}N_{k}\sigma_{k}^{2}.
\end{equation}

Again, we divide into two cases depending on the value of $\alpha$.

\noindent\textbf{Case 1: $0<\alpha\leq k_{0}$.} We use the stopping-time
\begin{equation}
    T = \frac{1}{\sigma_{k_{0}}^{2}}\log{(d^{\alpha})}
\end{equation}

let us divide the sum into four terms and calculate each one separately.

We divide the sum into four parts
\begin{equation}
    \begin{split}
        \tr{\left(\frac{\nu^{-2}\Psi(I-e^{-\Psi T})^{2}}{(I+\Psi\nu^{-1})^{2}}\right)} &= 
        \underbrace{\sum_{\substack{k\in \mathcal{K}\\ 0 \leq k \leq \alpha}}\frac{d^{2\alpha+2}(1-e^{-\sigma_{k}^{2}\sigma_{k_{0}}^{-2}\log{(d^{\alpha})}})^{2}}{(1+\sigma_{k}^{2}d^{\alpha+1})^{2}}N_{k}\sigma_{k}^{2}}_{(i)}\\
        &+\underbrace{\sum_{\substack{k\in \mathcal{K}\\ \alpha \leq k \leq k_{0}}}\frac{d^{2\alpha+2}(1-e^{-\sigma_{k}^{2}\sigma_{k_{0}}^{-2}\log{(d^{\alpha})}})^{2}}{(1+\sigma_{k}^{2}d^{\alpha+1})^{2}}N_{k}\sigma_{k}^{2}}_{(ii)}\\
        &+\underbrace{\sum_{\substack{k\in \mathcal{K}\\ k_{0}+2 \leq k \leq d}}\frac{d^{2\alpha+2}(1-e^{-\sigma_{k}^{2}\sigma_{k_{0}}^{-2}\log{(d^{\alpha})}})^{2}}{(1+\sigma_{k}^{2}d^{\alpha+1})^{2}}N_{k}\sigma_{k}^{2}}_{(iii)}\\
        &+ \underbrace{\sum_{\substack{k\in \mathcal{K}\\ k > d}}\frac{d^{2\alpha+2}(1-e^{-\sigma_{k}^{2}\sigma_{k_{0}}^{-2}\log{(d^{\alpha})}})^{2}}{(1+\sigma_{k}^{2}d^{\alpha+1})^{2}}N_{k}\sigma_{k}^{2}}_{(iv)}.
    \end{split}
\end{equation}

For the first sum, since $k\leq \alpha$ and $\alpha=O(1)$ then, in particular, $k\ll d$ and we can use $\sigma_{k}^{2} \asymp d^{-k-1}$ and $N_{k} \asymp d^{k}$ such that
\begin{equation}
\begin{split}
    (i) &\asymp \sum_{\substack{k\in \mathcal{K}\\ 0 \leq k \leq \alpha}} \frac{d^{2\alpha+2}(1-e^{-d^{-k+k_{0}\log{(d^{\alpha})}}})^{2}}{(1+d^{-k-1}d^{\alpha+1})^{2}}d^{-k-1}d^{k}\\
    &\asymp d^{2\alpha+1}\sum_{\substack{k\in \mathcal{K}\\ 0 \leq k \leq \alpha}} \frac{(1-e^{-d^{-k+k_{0}\log{(d^{\alpha})}}})^{2}}{(d^{\alpha-k})^{2}}
\end{split}
\end{equation}
where we used that $k<\alpha$ such that the denominator is dominated by the term $d^{\alpha-k}$. Moreover, since $k\leq \alpha \leq k_{0}$ then the exponential term is too small and we obtain
\begin{equation}
\begin{split}
    (i) &\asymp \sum_{\substack{k\in \mathcal{K}\\ 0 \leq k \leq \alpha}}d^{2k+1} \lesssim d^{2\alpha+1}.
\end{split}
\end{equation}

Now, for the second term, since $k \leq k_{0}$ and $k_{0}=O(1)$ we can use again that $\sigma_{k}^{2}\asymp d^{-k-1}$ and $d^{k}$. Also, the exponential is still very small. However, the dominant term in the denominator is now $1$ and we get
\begin{equation}
   (ii) \asymp d^{2\alpha+2} \sum_{\substack{k\in \mathcal{K}\\ \alpha < k \leq k_{0}}} d^{-k-1}d^{k} \asymp d^{2\alpha+1}.
\end{equation}

For the third term the exponential is non-negligible and we use a Taylor expansion for it. Moreover, since $k\leq d$ and $d$ is large we have
\begin{equation}
\begin{split}
    \sigma_{k}^{2} &\asymp \frac{d^{d-1}}{(d+k)^{d+k}}\left(\frac{e}{2}\right)^{k}[(k-2)(k-3)...(k/2)]^{2}\\
    &\gtrsim \frac{d^{d-1}}{(2d)^{d+ k}} = d^{-k-1}2^{-d-k},
\end{split}
\end{equation}
such that
\begin{equation}\label{eq:generaldenominator}
    (1+\sigma_{k}^{2}d^{\alpha+1})^{2} \gtrsim (1+d^{\alpha-k}2^{-d-k}) \asymp 1.
\end{equation}
since we are in the case where $\alpha\leq k_{0} \leq k$.

Now, for $k\leq d$ we have
\begin{equation}
\begin{split}
    N_{k}\sigma_{k}^{2} &\asymp \frac{\sqrt{d}(2k+d)[(k-2)(k-3)...(k/2)]^{2}}{2^{k}k!(d+k)^{\frac{5}{2}}}\\
    &\lesssim \frac{d^{\frac{1}{2}+1}}{d^{\frac{5}{2}}}g(k) = \frac{1}{d}g(k)
\end{split}
\end{equation}
where
\begin{equation}
    g(k) \coloneqq \frac{[(k-2)(k-3)...(k/2)]^{2}}{2^{k}k!}
\end{equation}
and one can check that this function is decreasing in $k$.

Since in the range $2k_{0}+2 \leq k \leq d$ we cannot use $\sigma_{k}\asymp d^{-k-1}$ we prove the following result which will suffice for our purpose.

\begin{claim}\label{eq:lemmasigmadecreasing}
    We have that the sequence $(\sigma_{k})_{k \text{ even}}$ is decreasing. Moreover, for evert $k\in \mathbb{N}$ even we have
    \begin{equation}
        \frac{\sigma_{k+2}^{2}}{\sigma_{k}^{2}}  \lesssim \frac{1}{d^{2}}.
    \end{equation}
\end{claim}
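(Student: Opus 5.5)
The plan is to compute the ratio $\sigma_{k+2}/\sigma_k$ \emph{exactly} from the closed form for $\sigma_k$, rather than through the Stirling asymptotics \eqref{eq:sigmakasymptotics}, which are not uniform in $k$. For even $k\ge2$, the formula for $\sigma_k$ from \citet{petrini2022learning} combined with the preceding lemma on the $(k-2)$-th derivative gives
\[
|\sigma_k| = C_d\,2^{-k}\,\frac{\Gamma(a_0+1)}{\Gamma(k+a_0+1)}\binom{k+a_0}{\tfrac{k-2}{2}}(k-2)!,\qquad a_0=\tfrac{d-3}{2},
\]
where $C_d$ collects the Gamma prefactors. The point is that $C_d$ is independent of $k$. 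I will write the generalized binomial coefficient through Gamma functions, $\binom{k+a_0}{j}=\Gamma(k+a_0+1)/\bigl(j!\,\Gamma(k+a_0+1-j)\bigr)$ with $j=\tfrac{k}{2}-1$. The factor $\Gamma(k+a_0+1)$ then cancels, and
\[
|\sigma_k| = C_d\,\Gamma(a_0+1)\,2^{-k}\,\frac{(k-2)!}{(\tfrac k2-1)!}\,\frac{1}{\Gamma(\tfrac k2+a_0+2)}.
\]
This expression is consistent with the case $k=2$, where $(k-2)!/(\tfrac k2-1)!=1$.

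Taking the quotient of consecutive even indices, every $k$-independent factor drops out. Using $(k)!/(k-2)!=k(k-1)$, $(\tfrac k2)!/(\tfrac k2-1)!=\tfrac k2$ and $\Gamma(x+1)/\Gamma(x)=x$ with $x=\tfrac k2+a_0+2$, one gets
\[
\frac{|\sigma_{k+2}|}{|\sigma_k|}=\frac14\cdot\frac{k(k-1)}{k/2}\cdot\frac{1}{\tfrac k2+\tfrac{d+1}{2}}=\frac{k-1}{k+d+1}.
\]
Squaring gives the exact identity
\[
\frac{\sigma_{k+2}^2}{\sigma_k^2}=\Bigl(\frac{k-1}{k+d+1}\Bigr)^2 .
\]
This ratio is strictly less than $1$ for every even $k\ge2$, so $(\sigma_k^2)_{k\text{ even}}$ is strictly decreasing. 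Moreover $\sigma_{k+2}^2/\sigma_k^2\le k^2/d^2$. For the step from $k=0$ to $k=2$, I will compare \eqref{eq:sigma0} with \eqref{eq:sigma2} directly: $\sigma_0^2\asymp d^{-1}$ and $\sigma_2^2\asymp d^{-3}$, so this ratio is also $\lesssim d^{-2}$.

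I expect the main obstacle to be the uniformity of the bound $\lesssim d^{-2}$. The exact ratio shows it holds with a constant $\asymp k^2$. It is therefore $\lesssim d^{-2}$ for bounded $k$, but only $\lesssim 1$ (indeed $\le 1/4$) once $k\asymp d$. So the claim as literally stated is uniform only for $k=O(1)$. I would therefore prove it in the form $\sigma_{k+2}^2/\sigma_k^2\le (k/d)^2$, together with monotonicity for all even $k$. I would then check that the use in term (iii), which ranges over $k_0+2\le k\le d$, needs only this. Specifically, (iii) should only require $\sigma_k^2\le\sigma_{k_0+2}^2\asymp d^{-k_0-3}$ from monotonicity, together with $N_k\sigma_k^2\lesssim g(k)/d$ with $g$ summable, rather than a geometric decay at rate $d^{-2}$ all the way up to $k=d$.
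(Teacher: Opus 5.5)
Your proposal is correct, and it is more careful than the paper's own argument. The paper takes the quotient of the Stirling form \eqref{eq:sigmakasymptotics} and uses $\bigl(\frac{d+k}{d+k+1}\bigr)^{d+k}\le e^{-1}$ to reach a bound $\lesssim d^{-2}$ that does not depend on $k$. You instead cancel $\Gamma(k+a_0+1)$ exactly and obtain $\sigma_{k+2}^2/\sigma_k^2=\bigl(\frac{k-1}{k+d+1}\bigr)^2$. This identity is right. At fixed $k$ it also agrees with the limit $d^2\sigma_{k+2}^2/\sigma_k^2\to(k-1)^2$, which follows from Lemma~\ref{lem:general-coefficients} and the Hermite coefficients of Example~\ref{ex:general-relu}.

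The identity shows that the uniform bound in the claim is false once $k$ is comparable to $d$. It also pinpoints the slip in the paper. The ratio of the bracket products $k(k-1)\cdots(\frac k2+1)$ and $(k-2)(k-3)\cdots\frac k2$ is $2(k-1)$, not $\frac{k}{k/2}=2$. So the factor $\bigl(\frac{k}{k/2}\bigr)^2$ in the paper's display should be $4(k-1)^2$. Restoring it, together with the base $\frac{d+k}{d+k+2}$ in place of $\frac{d+k}{d+k+1}$, gives $\asymp (k-1)^2/(d+k)^2$, in agreement with your formula.

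Your stated reason for avoiding \eqref{eq:sigmakasymptotics} is slightly off. That display is uniform in $k$ up to absolute constants, because Stirling is applied only to factorials with arguments at least $\frac{d-3}{2}$. What the exact route buys you is an identity valid for every $d$ with no constants to track, which makes the dropped factor impossible to miss. Two small points:
\begin{itemize}
\item The bound $\le\frac14$ holds only for $k\le d+3$; the ratio tends to $1$ as $k/d\to\infty$.
\item ``Decreasing'' must be read for $|\sigma_k|$, since the signs of $\sigma_k$ alternate. That is what you actually prove.
\end{itemize}

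Your repair of the downstream use is sound, and it also covers term (iv). In Case 1, let $k_0^+$ be the next degree of $\mathcal K$ after $k_0$; terms (iii) and (iv) together range over all $k\ge k_0^+$. Monotonicity gives $(1-e^{-\sigma_k^2T})^2\le(\sigma_{k_0^+}^2T)^2$ there, and $\sum_kN_k\sigma_k^2=\operatorname{tr}\Psi\le 1/d$ by Lemma~\ref{lem:mixed-nu-consequences}(iii). Hence (iii) and (iv) are at most $\nu^{-2}(\sigma_{k_0^+}^2T)^2d^{-1}$, which for even $k_0$ is $\lesssim d^{2\alpha-3}(\log d)^2$. In Case 2 the same bound applies, started at the first degree above $\alpha$. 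This needs neither the summability of $g$ nor the paper's ``superexponentially small'' estimates for $k=cd$ and $k>d$. Those estimates rely on exactly the geometric decay that your identity rules out.
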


\begin{proofclaim}
    Using \eqref{eq:sigmakasymptotics} we have
    \begin{equation}
    \begin{split}
        \frac{\sigma_{k+2}^{2}}{\sigma_{k}^{2}} &\asymp \frac{(d+k)^{d+k}}{(d+k+2)^{d+k+2}}\left(\frac{e}{2}\right)^{2}\left(\frac{k}{k/2}\right)^{2}\\
        &\asymp \frac{1}{(d+k+2)^{2}}\left(\frac{d+k}{d+k+1}\right)^{d+k}\\
    \end{split} 
    \end{equation}
Now, using that $\log{x}\leq x-1$ for $x=1 -\frac{1}{d+k+1}$ we obtain
\begin{equation}
    \left(\frac{d+k}{d+k+1}\right)^{d+k} \leq e^{-1}
\end{equation}
such that
\begin{equation}
    \begin{split}
        \frac{\sigma_{k+2}^{2}}{\sigma_{k}^{2}} &\lesssim \frac{1}{d^{2}}
    \end{split}
\end{equation}
as we wanted. \claimqed
\end{proofclaim}
With the previous claim and calculations, we get
\begin{equation}
    (ii) \lesssim d^{2\alpha+2}\sum_{\substack{k\in \mathcal{K}\\ k_{0}+2 \leq k \leq d}} (d^{-k+k_{0}}\log{(d^{\alpha})})^2\frac{1}{d}g(k).
\end{equation}

Now, we need to subdivide into the cases where $k\ll d$ and $k=cd$ with $c \in(0,1]$. 

\noindent\underline{Subcase 1.1: $k\ll d$.} The function $g(k)$ does not contribute in leading order and we also note that the leading order is $k=k_{0}+2$ such that
\begin{equation}\label{eq:numerator(ii)1}
\begin{split}
    (ii) &\lesssim d^{2\alpha+2}d^{-4}(\log{(d^{\alpha})})^{2}d^{-1}\\
    &\asymp d^{2k_{0}-3}(\log{(d^{\alpha})})^{2}.
\end{split}   
\end{equation}

\noindent\underline{Subcase 1.2: $k=cd,\ c\in(0,1]$.} We can use a Stirling approximation for $g(k)$ and get
\begin{equation}
\begin{split}
    g(k) &=\frac{\left[\frac{(k-2)!}{(\frac{k}{2}-1)}\right]^{2}}{2^k k!} \asymp \frac{\left[k^{\frac{1}{2}}\left(\frac{k}{e}\right)^{k-2}k^{-\frac{1}{2}}\left(\frac{k}{2e}\right)^{-\frac{k}{2}+1}\right]^{2}}{2^{k}k^{\frac{1}{2}}\left(\frac{k}{e}\right)^{k}}\\
    &\asymp \frac{k^{2k-4}e^{-2k+4}k^{-k+2}2^{k-2}e^{k-2}}{2^{k}k^{\frac{1}{2}}k^{k}e^{-k}}\\
    &\asymp \frac{k^{k-2}e^{-k+2}2^{k-2}}{k^{k+\frac{1}{2}}e^{-k}2^{k}}\\
    &\asymp k^{-\frac{5}{2}}
 \end{split}
\end{equation}
such that
\begin{equation}\label{eq:}
    \begin{split}
        (iii) &\lesssim \sum_{\substack{k\in \mathcal{K}\\ k_{0}+2 \leq k \leq d\\ k=cd}} d^{2\alpha+2}[d^{-k+k_{0}}\log{(d^{\alpha})}]^{2} d^{-1}k^{-\frac{5}{2}}\\
        &\asymp d^{2\alpha+1}(\log{(d^{\alpha})})^{2}\sum_{\substack{k\in \mathcal{K}\\ k_{0}+2 \leq k \leq d\\ k=cd}} d^{-2cd+2k_{0}}(cd)^{-\frac{5}{2}}\\
        &\asymp d^{2\alpha + 2k_{0}-\frac{3}{2}}(\log{(d^{\alpha})})^{2}\sum_{\substack{k\in \mathcal{K}\\ k_{0}+2 \leq k \leq d\\ k=cd}}d^{-2cd}
    \end{split}
\end{equation}
which is superexponentially small in d. 

Finally, for the fourth term we can use the fact that $\sigma_{k}^{2}$ decreases in $k$ from Lemma \ref{eq:lemmasigmadecreasing} and \eqref{eq:generaldenominator} to conclude that the denominator is again lower bounded by something of order $1$. Using Lemma \eqref{eq:lemmasigmadecreasing} again in the exponential and Taylor expansion we obtain
\begin{equation}
    (iv) \lesssim \sum_{\substack{k\in \mathcal{K}\\ k > d}} d^{2\alpha+2}[d^{-k+k_{0}}\log{(d^{\alpha})}]^{2}N_{k}\sigma_{k}^{2}.
\end{equation}
Now, for $k>d$ we have $d$ and $k$ large such that
\begin{equation}
\begin{split}
    N_{k}\sigma_{k}^{2} &\asymp \frac{\sqrt{d}(2k+d)}{(d+k)^{\frac{5}{2}}}g(k)\\
    &\lesssim d^{\frac{1}{2}-\frac{5}{2}}kk^{-\frac{5}{2}}\\
    &\lesssim d^{-2}k^{-\frac{3}{2}} 
\end{split}
\end{equation}
where we used the fact that we already know $g(k)\asymp k^{-\frac{5}{2}}$ when $k$ is large. Then, we get
\begin{equation}
    (iv) \lesssim \sum_{\substack{k\in \mathcal{K}\\ k > d}}d^{2\alpha+2}d^{-2k+2k_{0}}(\log{(d^{\alpha})})^{2}d^{-2}k^{-\frac{3}{2}}
\end{equation}
we observe that the dominant term is for $k=d+1$ and that the series converges. We obtain
\begin{equation}
\begin{split}
    (iv) &\lesssim d^{2\alpha+2}\sum_{\substack{k\in \mathcal{K}\\ k > d}}d^{2k_{0}-2k}(\log{d^{\alpha}})^{2}d^{-2}k^{-\frac{3}{2}}\\
    &\asymp d^{2\alpha+2k_{0}-2d}(\log{(d^{\alpha})})^{2}(d+1)^{-\frac{3}{2}}
\end{split}
\end{equation}
where we used that the dominant term is for $k=d+1$.

\noindent\textbf{Case 2: $\alpha>k_{0}$.} We use the stopping-time
\begin{equation}
    T = \frac{1}{\sigma_{k_{0}}^{2}}\log{(d^{2\alpha-k_{0}})}
\end{equation}

We divide the sum into four terms again
\begin{equation}
    \begin{split}
        \tr{\left(\frac{\nu^{-2}\Psi(I-e^{-\Psi T})^{2}}{(I+\Psi\nu^{-1})^{2}}\right)} &= 
        \underbrace{\sum_{\substack{k\in \mathcal{K}\\ 0 \leq k \leq k_{0}}}\frac{d^{2\alpha+2}(1-e^{-\sigma_{k}^{2}\sigma_{k_{0}}^{-2}\log{(d^{2\alpha-k_{0}})}})^{2}}{(1+\sigma_{k}^{2}d^{\alpha+1})^{2}}N_{k}\sigma_{k}^{2}}_{(i)}\\
        &+\underbrace{\sum_{\substack{k\in \mathcal{K}\\ k_{0}+2 \leq k \leq \alpha}}\frac{d^{2\alpha+2}(1-e^{-\sigma_{k}^{2}\sigma_{k_{0}}^{-2}\log{(d^{2\alpha-k_{0}})}})^{2}}{(1+\sigma_{k}^{2}d^{\alpha+1})^{2}}N_{k}\sigma_{k}^{2}}_{(ii)}\\
        &+\underbrace{\sum_{\substack{k\in \mathcal{K}\\ \alpha+2 \leq k \leq d}}\frac{d^{2\alpha+2}(1-e^{-\sigma_{k}^{2}\sigma_{k_{0}}^{-2}\log{(d^{2\alpha-k_{0}})}})^{2}}{(1+\sigma_{k}^{2}d^{\alpha+1})^{2}}N_{k}\sigma_{k}^{2}}_{(iii)}\\
        &+ \underbrace{\sum_{\substack{k\in \mathcal{K}\\ k > d}}\frac{d^{2\alpha+2}(1-e^{-\sigma_{k}^{2}\sigma_{k_{0}}^{-2}\log{(d^{2\alpha-k_{0}})}})^{2}}{(1+\sigma_{k}^{2}d^{\alpha+1})^{2}}N_{k}\sigma_{k}^{2}}_{(iv)}.
    \end{split}
\end{equation}

For the first term, since $k\leq k_{0}$ we can use $\sigma_{k}^{2}\asymp d^{-k-1}$ and $N_{k} \asymp d^{k}$. Also, using that $k\leq k_{0} < \alpha$ we get
\begin{equation}\begin{split}
    (i) &\asymp \sum_{\substack{k\in \mathcal{K}\\ 0 \leq k \leq k_{0}}}\frac{d^{2\alpha}(1-e^{-d^{-k+k_{0}}\log{(d^{2\alpha-k_{0}})}})^{2}}{(1+d^{-k-1}d^{\alpha+1})^{2}} d^{-k-1}d^{k}\\
    &\asymp \sum_{\substack{k\in \mathcal{K}\\ 0 \leq k \leq k_{0}}}\frac{d^{2\alpha+2}d^{-1}}{(d^{\alpha-k})^{2}}\\
    &\asymp d^{2k_{0}+1}.
\end{split}
\end{equation}

For the second term we need to use Taylor expansion in the exponential and we get
\begin{equation}
\begin{split}
    (ii) &\lesssim \sum_{\substack{k\in \mathcal{K}\\ k_{0}+1 \leq k \leq \alpha}} \frac{d^{2\alpha+1}(d^{-k+k_{0}}\log{(d^{2\alpha-k_{0}})})^{2}}{(d^{\alpha-k})^{2}}\\
    &\asymp d^{2\alpha+1}d^{2k_{0}-2\alpha}(\log{(d^{2\alpha-k_{0}})})^{2}\\
    &\asymp d^{2k_{0}+1}(\log{(d^{2\alpha-k_{0}})})^{2}.
\end{split}
\end{equation}

For the third term, we have the same reasoning as previous $(iii)$ but now the logarithm term changed and also the dominant term is $k=\alpha+1$. 

\noindent\underline{Subcase 2.1: $k\ll d$.} We get
\begin{equation}\begin{split}
    (iii) &\lesssim d^{2\alpha+2}(d^{k_{0}-\alpha-1}\log{(d^{2\alpha-k_{0}})})^{2}d^{-1}\\
    &\lesssim d^{2k_{0}-1}(\log{(d^{2\alpha-k_{0}})})^{2}
\end{split}
\end{equation}
\noindent\underline{Subcase 2.2: $k=cd,\ c\in(0,1]$.} We get
\begin{equation}
    (iii) \lesssim d^{2\alpha + 2k_{0}-\frac{3}{2}}(\log{(d^{2\alpha-k_{0}})})^{2}\sum_{\substack{k\in \mathcal{K}\\ k_{0}+2 \leq k \leq d\\ k=cd}}d^{-2cd}.
\end{equation}

The fourth term is also the same as before just changing the logarithm term, we get
\begin{equation}
    (iv) \lesssim d^{2\alpha+2k_{0}-2d}(\log{(d^{\alpha})})^{2}. 
\end{equation}

\noindent\textbf{Combining the trace bounds.}

\noindent\textbf{Case 1: $0<\alpha\leq k_{0}$.}
\begin{equation}\label{eq:student4.2alpha1}
\begin{split}
4)(2) = \left\langle \Psi^{\frac{1}{2}}\beta,
\frac{\frac{\Psi}{(I+\Psi \nu^{-1})^{2}}\tr{\left(\frac{\nu^{-2}\Psi(I-e^{-\Psi T})^{2}}{(I+\Psi\nu^{-1})^{2}}\right)}}{d^{\alpha}-\tr{\left(\frac{\Psi^{2}\nu^{-2}}{(I+\Psi\nu^{-1})^{2}}\right)}}\Psi^{\frac{1}{2}}\beta\right\rangle &= \sum_{\substack{k\in \mathcal{K}\\ 1 \leq k \leq k_{0}}}\frac{d^{2\alpha+1}}{d^{\alpha}}\frac{\sigma_{k}^{4}}{(1+d^{-k+1+\alpha+1})^{2}}\sum_{l=1}^{N_{k}}\beta_{k,l}^{2}\\
&\asymp d^{\alpha+1}d^{-k_{0}-1}\\
&\asymp d^{\alpha-k_{0}}.
\end{split}
\end{equation}
\noindent\textbf{Case 2: $\alpha>k_{0}$.}
\begin{equation}\label{eq:student4.2alpha2}
\begin{split}
4)(2) = \left\langle \Psi^{\frac{1}{2}}\beta,
\frac{\frac{\Psi}{(I+\Psi \nu^{-1})^{2}}\tr{\left(\frac{\nu^{-2}\Psi(I-e^{-\Psi T})^{2}}{(I+\Psi\nu^{-1})^{2}}\right)}}{d^{\alpha}-\tr{\left(\frac{\Psi^{2}\nu^{-2}}{(I+\Psi\nu^{-1})^{2}}\right)}}\Psi^{\frac{1}{2}}\beta\right\rangle &= \sum_{\substack{k\in \mathcal{K}\\ 1 \leq k \leq k_{0}}}\frac{d^{2k_{0}+1}}{d^{\alpha}}\frac{\sigma_{k}^{4}}{(1+d^{-k+1+\alpha+1})^{2}}\sum_{l=1}^{N_{k}}\beta_{k,l}^{2}\\
&\asymp \frac{d^{2k_{0}+1}}{d^{\alpha}}\frac{d^{-k_{0}-1}}{d^{2\alpha-2k_{0}}}\\
&\asymp d^{-3\alpha+3k_{0}}.
\end{split}
\end{equation}

Finally, comparing the terms \eqref{eq:student1alpha1},\eqref{eq:student23alpha1}, \eqref{eq:student4.1alpha1}, \eqref{eq:student4.2alpha1} and \eqref{eq:student1alpha2},\eqref{eq:student23alpha2}, \eqref{eq:student4.1alpha2}, \eqref{eq:student4.2alpha2} we obtain
\begin{equation}
    \overline{\mathcal{L}}_{\mathrm{ST}}(T) \asymp \begin{cases}
        1,\; \text{ if $\alpha \in (0,\;k_{0}]$},\;\\
        d^{-2\alpha+2k_{0}},\; \text{ if $\alpha > k_{0}$}.\\
    \end{cases}
\end{equation}

Comparing with \eqref{eq:teacherreluerror}, under the unit norm hypothesis, we obtain
\begin{equation}
    \overline{\mathcal{L}}_{\mathrm{ST}}(T) \asymp \overline{\mathcal{L}}_{\mathrm{TE}}^{2}
\end{equation}
as we wanted.
\end{proof}

\section{More general settings}\label{app:generalsettings}

We provide a more complete analysis of the weak-to-strong phenomenon. We consider here the case of a general stopping-time with a pure harmonic target function and the case of a general stopping-time with a general target function that can span all degrees up to $k_{0}$. In both cases, we analyze the interplay between the parameters $\alpha$, $\gamma$, and $k_{0}$, related to the teacher capacity, the stopping-time, and the target complexity, respectively. We are again assuming $\lVert f^{*}\rVert_{\mathcal{D}}=1$ and $\beta_{0,l}=0$ for all $l$. We use $\overline{P}_0=\nu(\Psi+\nu I)^{-1}$ and $S_{T_{\gamma}}=(I-e^{-\Psi T_{\gamma}})^2$ throughout this appendix.

\subsection{General stopping-time and pure harmonic target}\label{subapp:generaltime}

With the purpose of understanding how the stopping-time alone influence on the weak-to-strong phenomenon we study what happens when we choose
\begin{equation}
    T_{\gamma} = \frac{1}{\sigma_{\gamma}^{2}}\begin{cases}  \log{(d^{\alpha})},\; \alpha \in (0,\;k_{0}]\\
    \log{(d^{2\alpha-k_{0}})},\; \alpha > k_{0}
    \end{cases}
\end{equation}
for a general $\gamma\in \mathcal{K}$ with $\gamma\geq 1$ but still uses a pure harmonic target function of degree $k_{0}$. This is the same stopping-time of Section \ref{app:reluproofs} with the substitution of $\sigma_{k_{0}}^{2}$ by $\sigma_{\gamma}^{2}$ and writing $(\log{M}_{\mathrm{TE}}+\log{\overline{\mathcal{L}}_{\mathrm{TE}}^{-1}})$ explicitly using Theorem \ref{theorem:teachererror}.

In this case, we observe that the teacher error and the denominator trace inside the student error do not change since they do not depend on the stopping-time. Let us calculate each of the student error terms separately.

\noindent\textbf{Term 1:} $\langle \Psi^{\frac{1}{2}}\beta, e^{-2\Psi T_{\gamma}}\Psi^{\frac{1}{2}}\beta\rangle$

\noindent\textbf{Case 1: $0<\alpha\leq k_{0}$.}
\begin{equation}\label{eq:general1.1}
\begin{split}
    1) \coloneqq \langle \Psi^{\frac{1}{2}}\beta, e^{-2\Psi T_{\gamma}}\Psi^{\frac{1}{2}}\beta\rangle &= \sum_{\substack{k\in \mathcal{K}\\ 1 \leq k \leq k_{0}}}\frac{\sigma_{k}^{2}}{e^{2\sigma_{k}^{2}T_{\gamma}}}\sum_{l=1}^{N_{k}}\beta_{k,l}^{2}\\ 
    &\asymp \frac{\sigma_{k_{0}}^{2}\sum_{l=1}^{N_{k_{0}}}\beta_{k_{0},l}^{2}}{\exp{(2\sigma_{k_{0}}^{2}d^{\gamma+1}\log{(d^{\alpha})})}}\\
    &\asymp \frac{1}{(d^{\alpha})^{2d^{\gamma-k_{0}}}}.
\end{split}
\end{equation}

\noindent\textbf{Case 2: $\alpha>k_{0}$.}
\begin{equation}\label{eq:general1.2}
    \begin{split}
    1) = \langle \Psi^{\frac{1}{2}}\beta, e^{-2\Psi T_{\gamma}}\Psi^{\frac{1}{2}}\beta\rangle &\asymp \frac{1}{(d^{2\alpha-k_{0}})^{2d^{\gamma-k_{0}}}}.
    \end{split}
\end{equation}

\noindent\textbf{Term 2:}  $\langle \Psi^{\frac{1}{2}}\beta, -(I-e^{-\Psi T_{\gamma}})^{2}\overline{P}_0\Psi^{\frac{1}{2}}\beta\rangle$

We have 
\begin{equation}
\begin{split}
    2) \coloneqq \langle \Psi^{\frac{1}{2}}\beta, -(I-e^{-\Psi T_{\gamma}})^{2}\overline{P}_0\Psi^{\frac{1}{2}}\beta\rangle &\asymp \sum_{\substack{k\in \mathcal{K}\\ 1 \leq k \leq k_{0}}} \frac{\sigma_{k}^{2}(1-e^{-\sigma_{k}^{2}T_{\gamma}})^{2}}{\sigma_{k}^{2}\nu^{-1}+1}\sum_{l=1}^{N_{k}}\beta_{k,l}^{2}.
\end{split}
\end{equation}

\noindent\textbf{Case 1: $0<\alpha\leq k_{0}$.}
\begin{equation}\label{eq:student2alpha1-general}
\begin{split}
    \langle \Psi^{\frac{1}{2}}\beta, -(I-e^{-\Psi T_{\gamma}})^{2}\overline{P}_0\Psi^{\frac{1}{2}}\beta\rangle &\asymp -\frac{d^{-k_{0}-1}(1-(d^{-\alpha})^{d^{\gamma-k_{0}}})^{2}}{d^{-k_{0}-1+\alpha+1}+1}\sum_{l=1}^{N_{k_{0}}}\beta_{k_{0},l}^{2}\\
    &\asymp -(1-2(d^{-\alpha})^{d^{\gamma-k_{0}}}+(d^{-\alpha})^{2d^{\gamma-k_{0}}}).
\end{split}
\end{equation}

\noindent\textbf{Case 2: $\alpha>k_{0}$.}
\begin{equation}\label{eq:student2alpha2-general}
\begin{split}
    \langle \Psi^{\frac{1}{2}}\beta, -(I-e^{-\Psi T_{\gamma}})^{2}\overline{P}_0\Psi^{\frac{1}{2}}\beta\rangle &\asymp -\frac{d^{-k_{0}-1}(1-(d^{-2\alpha+k_{0}})^{d^{\gamma-k_{0}}})^{2}}{d^{\alpha-k_{0}}} \sum_{l=1}^{N_{k_{0}}}\beta_{k_{0},l}^{2}\\
    &\asymp -d^{-\alpha+k_{0}}(1-2(d^{-2\alpha+k_{0}})^{d^{\gamma-k_{0}}}+ (d^{-2\alpha+k_{0}})^{2d^{\gamma-k_{0}}}).
\end{split}
\end{equation}

\noindent\textbf{Term 3:}  $\langle \Psi^{\frac{1}{2}}\beta, \overline{P}_0(I-e^{-2\Psi T_{\gamma}})\Psi^{\frac{1}{2}}\beta\rangle$

This term is treated similarly to the previous one
\begin{equation}
\begin{split}
    3) \coloneqq \langle \Psi^{\frac{1}{2}}\beta, \overline{P}_0(I-e^{-2\Psi T_{\gamma}})\Psi^{\frac{1}{2}}\beta\rangle &\asymp \sum_{\substack{k\in \mathcal{K}\\ 1 \leq k \leq k_{0}}} \frac{\sigma_{k}^{2}(1-e^{-2\sigma_{k}^{2}T_{\gamma}})}{\sigma_{k}^{2}\nu^{-1}+1}\sum_{l=1}^{N_{k}}\beta_{k,l}^{2}.
\end{split}
\end{equation}

\noindent\textbf{Case 1: $0<\alpha\leq k_{0}$.}
\begin{equation}
\begin{split}
    \langle \Psi^{\frac{1}{2}}\beta, \overline{P}_0(I-e^{-2\Psi T_{\gamma}})\Psi^{\frac{1}{2}}\beta\rangle &\asymp d^{-k_{0}-1}(1-(d^{-\alpha})^{2d^{\gamma-k_{0}}})\sum_{l=1}^{N_{k_{0}}}\beta_{k_{0},l}^{2}\\
    &\asymp (1-(d^{-\alpha})^{2d^{\gamma-k_{0}}})).
\end{split}
\end{equation}

\noindent\textbf{Case 2: $\alpha>k_{0}$.}
\begin{equation}
\begin{split}
    \langle \Psi^{\frac{1}{2}}\beta, \overline{P}_0(I-e^{-2\Psi T_{\gamma}})\Psi^{\frac{1}{2}}\beta\rangle &\asymp d^{-\alpha+k_{0}}(1-(d^{-2\alpha+k_{0}})^{2d^{\gamma-k_{0}}}).
\end{split}
\end{equation}

Putting together the contributions of the second and third term we get in first order the following.

\noindent\textbf{Case 1: $0<\alpha\leq k_{0}$.}
\begin{equation}\label{eq:general2,3.1}
\begin{split}
    2)+3) &\asymp (2(d^{-\alpha})^{d^{\gamma-k_{0}}}-2(d^{-\alpha})^{2d^{\gamma-k_{0}}})\\
    &\lesssim (d^{-\alpha})^{d^{\gamma-k_{0}}}. 
\end{split}
\end{equation}

\noindent\textbf{Case 2: $\alpha>k_{0}$.}
\begin{equation}\label{eq:general2,3.2}
\begin{split}
    2)+3)&\asymp (2(d^{-2\alpha+k_{0}})^{d^{\gamma-k_{0}}}-2(d^{-2\alpha+k_{0}})^{2d^{\gamma-k_{0}}})d^{-\alpha+k_{0}}\\
    &\lesssim (d^{-2\alpha+k_{0}})^{d^{\gamma-k_{0}}}d^{-\gamma+k_{0}}.
\end{split}
\end{equation}
\noindent\textbf{Term 4:}  $\langle \Psi^{\frac{1}{2}}\beta, \overline{\mathcal{P}}(S_{T_{\gamma}})\Psi^{\frac{1}{2}}\beta\rangle$

By Theorem \ref{theo:doubleresolventapprox} we have
\begin{equation}
    \langle \Psi^{\frac{1}{2}}\beta, \overline{\mathcal{P}}(S_{T_{\gamma}})\Psi^{\frac{1}{2}}\beta\rangle = \left\langle \Psi^{\frac{1}{2}}\beta, \frac{(I-e^{-\Psi T_{\gamma}})^{2}}{(I+\Psi\nu^{-1})^{2}}+\frac{\frac{\Psi}{(I+\Psi \nu^{-1})^{2}}\tr{\left(\frac{\nu^{-2}\Psi(I-e^{-\Psi T_{\gamma}})^{2}}{(I+\Psi\nu^{-1})^{2}}\right)}}{d^{\alpha}-\tr{\left(\frac{\Psi^{2}\nu^{-2}}{(I+\Psi\nu^{-1})^{2}}\right)}} \Psi^{\frac{1}{2}}\beta\right\rangle.
\end{equation}

Again, we treat each term separately. The first one is 
\begin{equation}
\begin{split}
4)(1) \coloneqq \left\langle \Psi^{\frac{1}{2}}\beta,
\frac{(I-e^{-\Psi T_{\gamma}})^{2}}{(I+\Psi\nu^{-1})^{2}}
\Psi^{\frac{1}{2}}\beta\right\rangle
&=\sum_{\substack{k\in \mathcal{K}\\ 1 \leq k \leq k_{0}}}\frac{(1-e^{-\sigma_{k}^{2}T_{\gamma}})^{2}}{(1+d^{\alpha-k})^{2}}d^{-k-1}\sum_{l=1}^{N_{k}}\beta_{k,l}^{2}.
\end{split}
\end{equation}

\noindent\textbf{Case 1: $0<\alpha\leq k_{0}$.}
\begin{equation}\label{eq:general4.1.1}
\begin{split}
\left\langle \Psi^{\frac{1}{2}}\beta,
\frac{(I-e^{-\Psi T_{\gamma}})^{2}}{(I+\Psi\nu^{-1})^{2}}
\Psi^{\frac{1}{2}}\beta\right\rangle
&\asymp (1-(d^{-\alpha})^{d^{\gamma-k_{0}}})^{2}d^{-k_{0}-1}\sum_{l=1}^{N_{k_{0}}}\beta_{k,l}^{2}\\
&\asymp 1.
\end{split}
\end{equation}
\noindent\textbf{Case 2: $\alpha>k_{0}$.}
\begin{equation}\label{eq:general4.1.2}
\begin{split}
\left\langle \Psi^{\frac{1}{2}}\beta,
\frac{(I-e^{-\Psi T_{\gamma}})^{2}}{(I+\Psi\nu^{-1})^{2}}
\Psi^{\frac{1}{2}}\beta\right\rangle
&\asymp \frac{(1-(d^{-2\alpha+k_{0}})^{d^{\gamma-k_{0}}})^{2}}{d^{2\alpha-2k_{0}}}d^{-k_{0}-1}\sum_{l=1}^{N_{k_{0}}}\beta_{k,l}^{2}\\
&\asymp d^{-2\alpha +2k_{0}}
\end{split}
\end{equation}

The second term is 
\begin{equation}
    4)(2) \coloneqq \left\langle \Psi^{\frac{1}{2}}\beta, \frac{\frac{\Psi}{(I+\Psi \nu^{-1})^{2}}\tr{\left(\frac{\nu^{-2}\Psi(I-e^{-\Psi T_{\gamma}})^{2}}{(I+\Psi\nu^{-1})^{2}}\right)}}{d^{\alpha}-\tr{\left(\frac{\Psi^{2}\nu^{-2}}{(I+\Psi\nu^{-1})^{2}}\right)}} \Psi^{\frac{1}{2}}\beta\right\rangle =     \left\langle \Psi^{\frac{1}{2}}\beta, \frac{\frac{\Psi}{(I+\Psi \nu^{-1})^{2}}\tr{\left(\frac{\nu^{-2}\Psi(I-e^{-\Psi T_{\gamma}})^{2}}{(I+\Psi\nu^{-1})^{2}}\right)}}{\tr{\left(\frac{\Psi\nu^{-1}}{(I+\Psi\nu^{-1})^{2}}\right)}} \Psi^{\frac{1}{2}}\beta\right\rangle
\end{equation}
where we used that
\begin{equation}
    \begin{split}
        d^{\alpha} - \tr{\left(\frac{\Psi^{2}\nu^{-2}}{(I+\Psi \nu^{-1})^{2}}\right)} &= \tr{\left(\frac{\Psi\nu^{-1}}{I+\Psi \nu^{-1}}\right)} - \tr{\left(\frac{\Psi^{2}\nu^{-2}}{(I+\Psi \nu^{-1})^{2}}\right)}\\
        &= \tr{\left(\frac{\Psi\nu^{-1}+\Psi^{2}\nu^{-2}-\Psi^{2}\nu^{-2}}{(I+\Psi\nu^{-1})^{2}}\right)}\\
        &= \tr{\left(\frac{\Psi\nu^{-1}}{(I+\Psi\nu^{-1})^{2}}\right)}
    \end{split}
\end{equation}
and this denominator trace will not change with the new stopping-time.

\noindent\textbf{Upper bound for the numerator trace.}

We have
\begin{equation}
    \tr{\left(\frac{\nu^{-2}\Psi(I-e^{-\Psi T_{\gamma}})^{2}}{(I+\Psi\nu^{-1})^{2}}\right)} \asymp \sum_{k\in \mathcal{K}}\frac{d^{2\alpha+2}(1-e^{-\sigma_{k}^{2}T_{\gamma}})^{2}}{(1+\sigma_{k}^{2}d^{\alpha+1})^{2}}N_{k}\sigma_{k}^{2}.
\end{equation}

Now, $k_{0}$ only appears inside the logarithm such that the division $\alpha \in (0,k_{0}]$ for comparison with the teacher error will make a difference only inside the logarithm. However, differently from previous calculation, we now divide the sum with respect to $\alpha, \gamma$ and $d$ instead of $\alpha, k_{0}$ and $d$ as before. Moreover, we need to further subdivide into the case $\alpha \leq \gamma$ and $\alpha>\gamma$.

\noindent\textbf{Case 1: $0<\alpha\leq k_{0}$.}

\noindent\underline{Subcase 1.1: $\gamma\geq\alpha$.}

We subdivide the sum into $4$ terms.

\begin{equation}
\begin{split}
    (i) &\asymp \sum_{\substack{k\in \mathcal{K}\\ 0 \leq k \leq \alpha}} \frac{d^{2\alpha+2}(1-e^{-d^{-k+\gamma}\log{(d^{\alpha})}})^{2}}{(1+d^{-k-1}d^{\alpha+1})^{2}}d^{-k-1}d^{k}\\
    &\sim d^{2\alpha+1}\sum_{\substack{k\in \mathcal{K}\\ 0 \leq k \leq \alpha}}\frac{(1-e^{-d^{\gamma-k}\log{(d^{\alpha})}})^{2}}{(d^{\alpha-k})^{2}}
\end{split}
\end{equation}
since $k \leq \alpha \leq \gamma$ the exponential is negligible and we obtain
\begin{equation}
\begin{split}
    (i) &\lesssim d\sum_{\substack{k\in \mathcal{K}\\ 0 \leq k \leq \alpha}}d^{2k} \lesssim d^{2\alpha+1}.
\end{split}
\end{equation}
The second sum is
\begin{equation}
    (ii) \asymp \sum_{\substack{k\in \mathcal{K}\\ \alpha < k \leq \gamma}}\frac{d^{2\alpha+2}(1-e^{-d^{-k+\gamma}\log{(d^{\alpha}})})^{2}}{(1+d^{-k-1}d^{\alpha+1})^{2}}d^{-k-1}d^{k}
\end{equation}
such that the exponential is negligible again but the dominant term in the denominator is now $1$ and we get
\begin{equation}
\begin{split}
    (ii) &\asymp d^{2\alpha+2}\sum_{\substack{k\in \mathcal{K}\\ \alpha < k \leq \gamma}}d^{-k-1}d^{k}\\
    &\asymp d^{2\alpha+1}
\end{split}
\end{equation}

For the third sum we have
\begin{equation}
    (iii) \asymp \sum_{\substack{k\in \mathcal{K}\\ \gamma < k \leq d}}\frac{d^{2\alpha+2}(1-e^{-\sigma_{k}^{2}T_{\gamma}})^{2}}{(1+\sigma_{k}^{2}d^{\alpha+1})^{2}}N_{k}\sigma_{k}^{2}
\end{equation}
Using the same reasoning as for the specific stopping-time case of Appendix \ref{app:reluproofs} we have
\begin{equation}
    (iii) \lesssim d^{2\alpha+2}\sum_{\substack{k\in \mathcal{K}\\ \gamma < k \leq d}}((2d)^{\gamma-k}\log{(d^{\alpha})})^{2}\frac{g(k)}{d}
\end{equation}

\noindent\textit{Subcase 1.1.1: $k\ll d$.} $\lesssim d^{2\alpha-3}(\log{(d^{\alpha})})^{2}$ by same reasoning as before.

\noindent\textit{Subcase 1.1.2: $k=cd,\ c\in(0,1]$.} $\lesssim d^{2\alpha+2\gamma-\frac{3}{2}}(\log{(d^{\alpha}}))^{2}(2d)^{-2\epsilon d}$ by same reasoning as before.

For the last sum we have
\begin{equation}
\begin{split}
    (iv) &\asymp \sum_{\substack{k\in \mathcal{K}\\ k > d}}\frac{d^{2\alpha+2}(1-e^{-\sigma_{k}^{2}T_{\gamma}})^{2}}{(1+\sigma_{k}^{2}d^{\alpha+1})^{2}}N_{k}\sigma_{k}^{2}\\
    &\lesssim d^{2\alpha}2^{2\gamma-2d}d^{2\gamma-2d}(d+1)^{-\frac{3}{2}}(\log{(d^{\alpha})})^{2}
\end{split}
\end{equation}
by same previous reasoning.

\noindent\underline{Subcase 1.2: $\gamma<\alpha$.}
We again subdivide the sum into $4$ terms.

\begin{equation}
\begin{split}
    (i) &\asymp \sum_{\substack{k\in \mathcal{K}\\ 0 \leq k \leq \gamma}} \frac{d^{2\alpha+2}(1-e^{-d^{-k+\gamma}\log{(d^{\alpha})}})^{2}}{(1+d^{-k-1}d^{\alpha+1})^{2}}d^{-k-1}d^{k}\\
    &\sim d^{2\alpha+1}\sum_{\substack{k\in \mathcal{K}\\ 0 \leq k \leq \alpha}}\frac{(1-e^{-d^{\gamma-k}\log{(d^{\alpha})}})^{2}}{(d^{\alpha-k})^{2}}
\end{split}
\end{equation}
since $k \leq\gamma$ the exponential is negligible and since $k\leq \gamma \leq \alpha$ the denominator is of order $(d^{\alpha-k})^{2}$ and we get
\begin{equation}
\begin{split}
    (i) &\asymp d\sum_{\substack{k\in \mathcal{K}\\ 0 \leq k \leq \gamma}}d^{2k}\\
    &\asymp d^{2\gamma+1}.
\end{split}
\end{equation}
The second sum is
\begin{equation}
    (ii) \asymp \sum_{\substack{k\in \mathcal{K}\\ \gamma < k \leq \alpha}}\frac{d^{2\alpha+2}(1-e^{-d^{-k+\gamma}\log{(d^{\alpha}})})^{2}}{(1+d^{-k-1}d^{\alpha+1})^{2}}d^{-k-1}d^{k},
\end{equation}
but now the dominant term in the denominator is $(d^{\alpha-k})^{2}$ and the exponential is non-negligible such that we use Taylor expansion to get
\begin{equation}
    \begin{split}
        &\lesssim \sum_{\substack{k\in \mathcal{K}\\ \gamma < k \leq \alpha}}\frac{d^{2\alpha+2}(d^{\gamma-k}\log{(d^{\alpha})})^{2}}{(d^{\alpha-k})}d^{-1}\\
        &\asymp d\sum_{\substack{k\in \mathcal{K}\\ \gamma < k \leq \alpha}}d^{2k}(d^{\gamma-k}\log{(d^{\alpha})})^{2}\\
        &\asymp d^{2\gamma+1}(\log{(d^{\alpha})})^{2}
    \end{split}
\end{equation}

For the third sum we have
\begin{equation}
    (iii) \asymp \sum_{\substack{k\in \mathcal{K}\\ \alpha < k \leq d}}\frac{d^{2\alpha+2}(1-e^{-\sigma_{k}^{2}T_{\gamma}})^{2}}{(1+\sigma_{k}^{2}d^{\alpha+1})^{2}}N_{k}\sigma_{k}^{2}
\end{equation}
Using the same reasoning as for the specific stopping-time we have
\begin{equation}
    (iii) \lesssim d^{2\alpha+2}\sum_{\substack{k\in \mathcal{K}\\ \alpha < k \leq d}}((2d)^{\gamma-k}\log{(d^{\alpha})})^{2}\frac{g(k)}{d}
\end{equation}

\noindent\textit{Subcase 1.2.1: $k\ll d$.} $\lesssim d^{2\gamma-3}(\log{(d^{\alpha})})^{2}$ by same reasoning as before.

\noindent\textit{Subcase 1.2.2: $k=cd,\ c\in(0,1]$.} $\lesssim d^{2\alpha+2\gamma-\frac{3}{2}}(\log{(d^{\alpha}}))^{2}(2d)^{-2\epsilon d}$ by same reasoning as before.

For the last sum we have
\begin{equation}
\begin{split}
    (iv) &\asymp \sum_{\substack{k\in \mathcal{K}\\ k > d}}\frac{d^{2\alpha+2}(1-e^{-\sigma_{k}^{2}T_{\gamma}})^{2}}{(1+\sigma_{k}^{2}d^{\alpha+1})^{2}}N_{k}\sigma_{k}^{2}\\
    &\lesssim d^{2\alpha}2^{2\gamma-2d}d^{2\gamma-2d}(d+1)^{-\frac{3}{2}}(\log{(d^{\alpha})})^{2}
\end{split}
\end{equation}
by same previous reasoning.

\noindent\textbf{Case 2: $\alpha>k_{0}$.}

As already discussed, the only things that change are the terms inside the log, i.e., we should substitute $\log{(d^{\alpha})}$ with $\log{(d^{2\alpha-k_{0}})}$.

\noindent\textbf{Combining the trace bounds.}

Now that we have the new expression for the numerator trace, we come back to the calculation of $4)(2)$.

\noindent\textbf{Case 1: $0<\alpha\leq k_{0}$.}

\noindent\underline{Subcase 1.1: $\gamma\geq\alpha$.} 

In this case the numerator trace is of order $d^{2\alpha+1}$ and we get
\begin{equation}\label{eq:general4.2.1.1}
    \begin{split}
        4)(2)  &\asymp \sum_{\substack{k\in \mathcal{K}\\ 1 \leq k \leq k_{0}}} \frac{d^{2\alpha+1}}{d^{\alpha}}\frac{\sigma_{k}^{4}}{(1+d^{\alpha-k})^{2}}\sum_{l=1}^{N_{k}}\beta_{k,l}^{2}\\
        &\asymp d^{\alpha-k_{0}}.
    \end{split}
\end{equation}

\noindent\underline{Subcase 1.2: $\gamma<\alpha$.}

The numerator is now of order $d^{2\gamma+1}$ and we get
\begin{equation}\label{eq:general4.2.1.2}
\begin{split}
    4)(2) &\asymp \sum_{\substack{k\in \mathcal{K}\\ 1 \leq k \leq k_{0}}} \frac{d^{2\gamma+1}}{d^{\alpha}}\frac{\sigma_{k}^{4}}{(1+d^{\alpha-k})^{2}}\sum_{l=1}^{N_{k}}\beta_{k,l}^{2}\\
    &\asymp d^{2\gamma-\alpha-k_{0}}.
\end{split}  
\end{equation}

\noindent\textbf{Case 2: $\alpha>k_{0}$.}

\noindent\underline{Subcase 2.1: $\gamma\geq\alpha$.}
\begin{equation}\label{eq:general4.2.2.1}
    \begin{split}
        4)(2) &\asymp \sum_{\substack{k\in \mathcal{K}\\ 1 \leq k \leq k_{0}}}\frac{d^{2\alpha+1}}{d^{\alpha}}\frac{\sigma_{k}^{4}}{(1+d^{\alpha-k})^{2}}\sum_{l=1}^{N_{k}}\beta_{k,l}^{2}\\\
        &\asymp d^{\alpha+1}\frac{d^{-k_{0}-1}}{(d^{\alpha-k_{0}})^{2}}\\
        &\asymp d^{k_{0}-\alpha}.
    \end{split}
\end{equation}

\noindent\underline{Subcase 2.2: $\gamma<\alpha$.}

\begin{equation}\label{eq:general4.2.2.2}
    \begin{split}
        4)(2) &\asymp d^{2\gamma+1-\alpha}\frac{d^{-k_{0}-1}}{(d^{\alpha-k_{0}})^{2}}\\
        &\asymp d^{2\alpha-3\alpha+k_{0}} \begin{cases}
            \lesssim d^{3k_{0}-3\alpha},\; \gamma \leq k_{0} < \alpha\\
            \asymp d^{k_{0}-\alpha-\epsilon},\; k_{0}\leq \gamma < \alpha.
        \end{cases}
    \end{split}
\end{equation}

In conclusion, for the general stopping-time and pure harmonic target setting we have that the teacher error does not change, that is, it is again given by
\begin{equation}
    \overline{\mathcal{L}}_{\mathrm{TE}} \asymp \begin{cases}
        1,\; \text{ if $\alpha \in (0,\;k_{0}]$},\;\\
        d^{-\alpha+k_{0}},\; \text{ if $\alpha > k_{0}$},\;\\
    \end{cases}
\end{equation}
but the student error changes and is given next.

Let us now make another subdivision. We divide the final student error again into the cases where $\alpha \in (0,k_{0}]$ and $\alpha>k_{0}$, since these are the natural subdivisions for comparison with the teacher error and are also the subdivisions used in the calculation of all terms of the student error. However, instead of subdividing between $\alpha \leq \gamma$ and $\alpha > \gamma$, as we did for the calculation of $4)(2)$, we now subdivide into the three cases $\gamma < k_{0}$, $\gamma=k_{0}$, and $\gamma > k_{0}$. This is natural since the relation between $\gamma$ and $k_{0}$ determines the intuitive cutoff induced by the stopping-time relative to the target degree.

\noindent\textbf{Final student error.}

\noindent\textbf{Case 1: $0<\alpha\leq k_{0}$.}

\noindent\underline{Subcase 1.1: $\gamma<k_{0}$.}

\begin{equation}
\begin{cases}
    1)+2)+3) \lesssim 1,\;\eqref{eq:general1.1},\;\eqref{eq:general2,3.1}\\
    4)(1) \asymp 1,\;\eqref{eq:general4.1.1}\\
    4)(2) \asymp d^{\alpha-k_{0}} \lesssim 1,\; \alpha\leq \gamma,\;\eqref{eq:general4.2.1.1}\\
    \text{or }\asymp d^{2\gamma-\alpha-k_{0}},\; \alpha>\gamma,\;\eqref{eq:general4.2.1.2}
\end{cases}
\end{equation}
such that $4)(1)$ is the dominant term.

\noindent\underline{Subcase 1.2: $\gamma\geq k_{0}$.} 

In this case $\alpha \leq \gamma$ and we have

\begin{equation}
\begin{cases}
    1)+2)+3) \lesssim 1,\;\eqref{eq:general1.1},\;\eqref{eq:general2,3.1}\\
    4)(1) \asymp 1,\;\eqref{eq:general4.1.1}\\
    4)(2) \asymp d^{\alpha-k_{0}} \lesssim 1,\;\eqref{eq:general4.2.1.1}
\end{cases}
\end{equation}
and again $4)(1)$ dominates.

We conclude that for $\alpha \in (0,k_{0}]$, the term $4)(1)$ is always dominant such that $\overline{\mathcal{L}}_{\mathrm{ST}}(T_{\gamma})\asymp 1$ for $\alpha \leq k_{0}$ without any further subdivision.

\noindent\textbf{Case 2: $\alpha>k_{0}$.}

\noindent\underline{Subcase 2.1: $\gamma<k_{0}$.}

In this case $\alpha > \gamma$ and we have 

\begin{equation}
\begin{cases}
    1)(*),\;\eqref{eq:general1.2}\\
    2)+3) (*),\;\eqref{eq:general2,3.2}\\
    4)(1) \asymp d^{-2\alpha+2k_{0}},\;\eqref{eq:general4.1.2}\\
    4)(2) \lesssim d^{2\gamma-3\alpha+k_{0}},\;\eqref{eq:general4.2.2.2}
\end{cases}
\end{equation}

\underline{(*):}
\begin{equation}
\begin{split}
    1) \asymp \frac{1}{(d^{2\alpha-k_{0}})^{2d^{\alpha-k_{0}}}} &\asymp (d^{2\alpha-k_{0}})^{-2d^{\gamma-k_{0}}}\\
    &\asymp e^{-2(2\alpha-k_{0})d^{\gamma-k_{0}}\log{d}}
\end{split}
\end{equation}
and the exponential approximates $1$ when $d$ is large such that $1) \asymp 1$ greater than $4)(1)$. Similar thing happens for $2)+3)$ such that the dominant term is $1)+2)+3)$ and we don't have weak-to-strong generalization in this case.

\noindent\underline{Subcase 2.2: $\gamma=k_{0}$.}

Again, $\alpha > \gamma$ here and we get

\begin{equation}
\begin{cases}
    1) \asymp d^{-4\alpha+2k_{0}},\;\eqref{eq:general1.2}\\
    2)+3) \lesssim d^{-3\alpha+2k_{0}},\;\eqref{eq:general2,3.2}\\
    4)(1) \asymp d^{-2\alpha+2k_{0}},\;\eqref{eq:general4.1.2}\\
    4)(2) \asymp d^{-3\alpha+3k_{0}},\;\eqref{eq:general4.2.2.2} 
\end{cases}
\end{equation}
and $4)(1)$ dominates such that we have weak-to-strong generalization in this case.

\noindent\underline{Subcase 2.3: $\gamma>k_{0}$.}

\begin{equation}
\begin{cases}
    1)(**),\;\eqref{eq:general1.2}\\
    2)+3)(**),\;\eqref{eq:general2,3.2}\\
    4)(1) \asymp d^{2k_{0}-2\alpha},\;\eqref{eq:general4.1.2}\\
    4)(2) \asymp d^{k_{0}-\alpha},\; \alpha \leq \gamma,\;\eqref{eq:general4.2.2.1}\\
    \text{or} \asymp  d^{2\gamma-3\alpha+k_{0}} \asymp d^{k_{0}-\alpha-2\epsilon},\; \alpha>\gamma,\; \epsilon \coloneqq \alpha-\gamma>0,\;\eqref{eq:general4.2.2.2}
\end{cases}
\end{equation}

\underline{(**):}

\begin{equation}
    \begin{split}
        1) &\asymp e^{-2(2\alpha-k_{0})d^{\gamma-k_{0}}\log{d}}\\
        &\lesssim e^{-2(2\alpha-k_{0})d^{\gamma-k_{0}}\log{d}}
    \end{split}
\end{equation}
which decays exponentially in $d$ while $4)(1)$ decays polynomially. The same thing happens for $2)+3)$ such that the dominant term in this case is $4)(2)$ for $\alpha \leq \gamma$ and we don't have weak-to-strong generalization. For $\alpha>\gamma$, depending on $\epsilon$, the dominant term will be $4)(1)$ (quadratic weak-to-strong) or $4)(2)$ (nonquadratic weak-to-strong) and we characterize below the precise regimes.

\subsubsection{Summary of general stopping-time and pure harmonic target}\label{app:generaltime}

The deterministic equivalent of the teacher population error satisfies 
\begin{equation}
\overline{\mathcal{L}}_{\mathrm{TE}}
\asymp
\begin{cases}
1,\;
& \alpha\leq k_{0},\;\\[2mm]
d^{-\alpha+k_{0}},\;
& \alpha>k_{0}.
\end{cases}
\label{eq:teacher-error-general-stopping}
\end{equation}

The behavior of the student depends on the position of the stopping-time
exponent \(\gamma\) relative to \(k_{0}\) and \(\alpha\). More precisely, the deterministic equivalent of the student population error satisfies
\begin{equation}
\overline{\mathcal{L}}_{\mathrm{ST}}(T_{\gamma})
\asymp
\begin{cases}
1,\;
& \alpha\leq k_{0},\;\\[2mm]
1,\;
& \alpha>k_{0},\; \gamma<k_{0},\;\\[2mm]
d^{-2\alpha+2k_{0}},\;
& \alpha>k_{0},\;
  k_{0}\leq\gamma\leq
  \dfrac{\alpha+k_{0}}{2},\;\\[4mm]
d^{2\gamma-3\alpha+k_{0}},\;
& \alpha>k_{0},\;
  \dfrac{\alpha+k_{0}}{2}<\gamma<\alpha,\;\\[4mm]
d^{-\alpha+k_{0}},\;
& \alpha>k_{0},\; \gamma\geq\alpha.
\end{cases}
\label{eq:student-error-general-stopping}
\end{equation}

Where the quadratic and non-quadratic separation comes from the fact that for the subcase $\gamma>k_{0}$ we have that the $1)+2)+3)$ terms are exponentially small in $d$ and when $\alpha> \gamma$, that is, $\alpha > k_{0}$, we have

\begin{equation}
\begin{cases}
    4)(1) \asymp d^{2k_{0}-2\alpha}\\
    4)(2) \asymp  d^{2\gamma-3\alpha+k_{0}} \asymp d^{k_{0}-\alpha-2\epsilon},\; \epsilon=\alpha-\gamma>0
\end{cases}
\end{equation}
such that the following happens.

\textbf{Quadratic WTS} happens when $k_{0}-\alpha-2\epsilon \leq$ $-2\alpha+2k_{0}$ because in this case $4)(1)$ dominates. We have
\begin{equation}
    \begin{split}
        k_{0}-\alpha-2\epsilon &\leq -2\alpha+2k_{0}\\
        -3\alpha+2\gamma &\leq -2\alpha +k_{0}\\
        2\gamma &\leq \alpha+k_{0}\\
        \gamma &\leq \frac{\alpha+k_{0}}{2}.
    \end{split}
\end{equation}

\textbf{Non-quadratic WTS} happens when $-2\alpha + 2k_{0} < k_{0}-\alpha-2\epsilon < k_{0}-\alpha$ because in this case $4)(2)$ dominates, while still being asymptotically smaller than the teacher error. The last inequality always holds since $\epsilon>0$, and for the first one we have
\begin{equation}
    \begin{split}
        -2\alpha+2k_{0} &< k_{0}-\alpha-2\epsilon\\
        k_{0} &< \alpha-2\epsilon\\
        k_{0} &<\alpha-2\alpha+2\gamma\\
        \frac{k_{0}+\alpha}{2} &< \gamma.
    \end{split}
\end{equation}

\subsection{General stopping-time and general target}\label{subapp:gentimetarget}

Let us set the notation
\begin{equation}
    \beta_{k}^{2} \coloneqq \sum_{l=1}^{N_{k}}\beta_{k,l}^{2}.
\end{equation}

Then, considering again the general stopping-time
\begin{equation}
    T_{\gamma} = \frac{1}{\sigma_{\gamma}^{2}}(\log{M_{\mathrm{TE}}}+\log{\overline{\mathcal{L}}_{\mathrm{TE}}^{-1}}) 
\end{equation}
and a general target with contributions from $k\in \mathcal{K}$ with $1\leq k\leq k_{0}$ and no restriction on the coefficients beyond the unit norm hypothesis, we obtain the following.

\noindent\textbf{Teacher error.}

\noindent\textbf{Case 1: $0<\alpha\leq k_{0}$.}
\begin{equation}\begin{split}
    \overline{\mathcal{L}}_{\mathrm{TE}} &=\nu \langle \Psi^{\frac{1}{2}}\beta, (\Psi+\nu)^{-1}\Psi^{\frac{1}{2}}\beta \rangle\\
    &\asymp d^{-\alpha-1}\left\{\sum_{\substack{k\in \mathcal{K}\\ 1 \leq k \leq \alpha}} \frac{d^{-k-1}}{d^{-k-1}}\beta_{k}^{2}+\sum_{\substack{k\in \mathcal{K}\\ \alpha < k \leq k_{0}}}\frac{d^{-k-1}}{d^{-\alpha-1}}\beta_{k}^{2}\right\}\\
    &\asymp \sum_{\substack{k\in \mathcal{K}\\ 1 \leq k \leq k_{0}}}(d^{-\alpha-1}\mathbb{1}\{1\leq k \leq \alpha\}+d^{-k-1}\mathbb{1}\{\alpha <k\leq k_{0}\})\beta_{k}^{2}.
\end{split}
\end{equation}

\noindent\textbf{Case 2: $\alpha>k_{0}$.}
\begin{equation}
    \overline{\mathcal{L}}_{\mathrm{TE}}\asymp d^{-\alpha-1}\left\{\sum_{\substack{k\in \mathcal{K}\\ 1 \leq k \leq k_{0}}}\frac{d^{-k-1}}{d^{-k-1}}\beta_{k}^{2}\right\} \asymp \sum_{\substack{k\in \mathcal{K}\\ 1 \leq k \leq k_{0}}}d^{-\alpha-1}\beta_{k}^{2}.
\end{equation}

\noindent\textbf{Student error.} 

The numerator and denominator traces do not change from the case of general stopping-time and pure harmonic since they do not involve $\beta$.

\noindent\textbf{Term 1:}  $\langle \Psi^{\frac{1}{2}}\beta, e^{-2\Psi T_{\gamma}}\Psi^{\frac{1}{2}}\beta\rangle$

\noindent\textbf{Case 1: $0<\alpha\leq k_{0}$.}
\begin{equation}\label{eq:gen1.1}
\begin{split}
    1) &\coloneqq \langle \Psi^{\frac{1}{2}}\beta, e^{-2\Psi T_{\gamma}}\Psi^{\frac{1}{2}}\beta\rangle = \sum_{\substack{k\in \mathcal{K}\\ 1 \leq k \leq k_{0}}}\frac{\sigma_{k}^{2}}{e^{2\sigma_{k}^{2}T_{\gamma}}}\beta_{k}^{2}\\ 
    &\asymp \sum_{\substack{k\in \mathcal{K}\\ 1 \leq k \leq k_{0}}}\frac{d^{-k-1}}{\exp{(2\sigma_{k}^{2}d^{\gamma+1}\log{[d^{\alpha}(\sum_{\substack{k'\in \mathcal{K}\\ 1 \leq k' \leq k_{0}}}(d^{-\alpha-1}\mathbb{1}\{1\leq k'\leq \alpha\}+d^{-k'-1}\mathbb{1}\{\alpha < k'\leq k_{0}\})\beta_{k'}^{2})^{-1}]})}}\beta_{k}^{2}\\
    &\asymp \sum_{\substack{k\in \mathcal{K}\\ 1 \leq k \leq k_{0}}}\frac{d^{-k-1}}{[d^{\alpha}(\sum_{\substack{k'\in \mathcal{K}\\ 1 \leq k' \leq k_{0}}}(d^{-\alpha-1}\mathbb{1}\{1\leq k'\leq \alpha\}+d^{-k'-1}\mathbb{1}\{\alpha < k'\leq k_{0}\})\beta_{k'}^{2})^{-1}]^{2d^{\gamma-k}}}\beta_{k}^{2}\\
    &\asymp \sum_{\substack{k\in \mathcal{K}\\ 1 \leq k \leq k_{0}}}d^{-k-1-2\alpha d^{\gamma-k}}\left[\sum_{\substack{k'\in \mathcal{K}\\ 1 \leq k' \leq k_{0}}}(d^{-\alpha-1}\mathbb{1}\{1\leq k'\leq \alpha\}+d^{-k'-1}\mathbb{1}\{\alpha < k'\leq k_{0}\})\beta_{k'}^{2}\right]^{2d^{\gamma-k}} \beta_{k}^{2}
\end{split}
\end{equation}

\noindent\textbf{Case 2: $\alpha>k_{0}$.}
\begin{equation}\label{eq:gen1.2}
    \begin{split}
     1) &=\langle \Psi^{\frac{1}{2}}\beta, e^{-2\Psi T_{\gamma}}\Psi^{\frac{1}{2}}\beta\rangle \asymp \sum_{\substack{k\in \mathcal{K}\\ 1 \leq k \leq k_{0}}}\frac{d^{-k-1}}{[d^{2\alpha+1}(\sum_{\substack{k'\in \mathcal{K}\\ 1 \leq k' \leq k_{0}}}\beta_{k'}^{2})^{-1}]^{2d^{\gamma-k}}}\beta_{k}^{2}\\
     &\asymp \sum_{\substack{k\in \mathcal{K}\\ 1 \leq k \leq k_{0}}}d^{-k-1-2(2\alpha+1)d^{\gamma-k}}\left[\sum_{\substack{k'\in \mathcal{K}\\ 1 \leq k' \leq k_{0}}}\beta_{k'}^{2}\right]^{2d^{\gamma-k}}\beta_{k}^{2}
    \end{split}
\end{equation}

\noindent\textbf{Term 2:}  $\langle \Psi^{\frac{1}{2}}\beta, -(I-e^{-\Psi T_{\gamma}})^{2}\overline{P}_0\Psi^{\frac{1}{2}}\beta\rangle$

\noindent\textbf{Case 1: $0<\alpha\leq k_{0}$.}
\begin{equation}\label{eq:gen2.1}
\begin{split}
2) &\coloneqq \langle \Psi^{\frac{1}{2}}\beta, -(I-e^{-\Psi T_{\gamma}})^{2}\overline{P}_0\Psi^{\frac{1}{2}}\beta\rangle  = -\sum_{\substack{k\in \mathcal{K}\\ 1 \leq k \leq k_{0}}} \frac{\sigma_{k}^{2}(1-e^{-\sigma_{k}^{2}T_{\gamma}})^{2}}{\sigma_{k}^{2}\nu^{-1}+1}\beta_{k}^{2}\\
    &\asymp -\sum_{\substack{k\in \mathcal{K}\\ 1 \leq k \leq k_{0}}}(1-2\left[d^{-\alpha}\sum_{\substack{k'\in \mathcal{K}\\ 1 \leq k' \leq k_{0}}}(d^{-\alpha-1}\mathbb{1}\{1\leq k'\leq \alpha\}+d^{-k'-1}\mathbb{1}\{\alpha < k'\leq k_{0}\})\beta_{k'}^{2}\right]^{d^{\gamma-k}}\\
    &+\left[d^{-\alpha}\sum_{\substack{k'\in \mathcal{K}\\ 1 \leq k' \leq k_{0}}}(d^{-\alpha-1}\mathbb{1}\{1\leq k'\leq \alpha\}+d^{-k'-1}\mathbb{1}\{\alpha < k'\leq k_{0}\})\beta_{k'}^{2}\right]^{2d^{\gamma-k}})\\
    &\cdot(d^{-\alpha-1}\mathbb{1}\{1\leq k \leq \alpha\} + d^{-k-1}\mathbb{1}\{\alpha<k\leq k_{0}\})\beta_{k}^{2}.
\end{split}
\end{equation}

\noindent\textbf{Case 2: $\alpha>k_{0}$.}
\begin{equation}\label{eq:gen2.2}
\begin{split}
    2)&\asymp -\sum_{\substack{k\in \mathcal{K}\\ 1 \leq k \leq k_{0}}}\left(1-2\left[d^{-2\alpha-1}\sum_{\substack{k'\in \mathcal{K}\\ 1 \leq k' \leq k_{0}}}\beta_{k'}^{2}\right]^{d^{\gamma-k}}+\left[d^{-2\alpha-1}\sum_{\substack{k'\in \mathcal{K}\\ 1 \leq k' \leq k_{0}}}\beta_{k'}^{2}\right]^{2d^{\gamma-k}}\right)d^{-\alpha-1}\beta_{k}^{2}.
\end{split}
\end{equation}

\noindent\textbf{Term 3:} $\langle \Psi^{\frac{1}{2}}\beta, \overline{P}_0(I-e^{-2\Psi T_{\gamma}})\Psi^{\frac{1}{2}}\beta\rangle$

\noindent\textbf{Case 1: $0<\alpha\leq k_{0}$.}
\begin{equation}\label{eq:gen3.1}
\begin{split}
    3)&\asymp \sum_{\substack{k\in \mathcal{K}\\ 1 \leq k \leq k_{0}}}\left(1-\left[d^{-\alpha}\sum_{\substack{k'\in \mathcal{K}\\ 1 \leq k' \leq k_{0}}}(d^{-\alpha-1}\mathbb{1}\{1\leq k'\leq\alpha\}+d^{-k'-1}\mathbb{1}\{\alpha<k'\leq k_{0}\})\beta_{k'}^{2}\right]^{2d^{\gamma-k}}\right)\\
    &\cdot (d^{-\alpha-1}\mathbb{1}\{1\leq k \leq \alpha\}+d^{-k-1}\mathbb{1}\{\alpha<k\leq k_{0}\})\beta_{k}^{2}.
\end{split}
\end{equation}

\noindent\textbf{Case 2: $\alpha>k_{0}$.}
\begin{equation}\label{eq:gen3.2}
\begin{split}
     3)&\asymp \sum_{\substack{k\in \mathcal{K}\\ 1 \leq k \leq k_{0}}}\left(1-\left[d^{-2\alpha-1}\sum_{\substack{k'\in \mathcal{K}\\ 1 \leq k' \leq k_{0}}}\beta_{k'}^{2}\right]^{2d^{\gamma-k}}\right)d^{-\alpha-1}\beta_{k}^{2}.
\end{split}
\end{equation}

Now, summing terms $2$ and $3$ we obtain as follows. 

\noindent\textbf{Case 1: $0<\alpha\leq k_{0}$.}
\begin{equation}\label{eq:gen2,3.1}
\begin{split}
    2)+3) &\asymp \sum_{\substack{k\in \mathcal{K}\\ 1 \leq k \leq k_{0}}}(2\left[d^{-\alpha}\sum_{\substack{k'\in \mathcal{K}\\ 1 \leq k' \leq k_{0}}}(d^{-\alpha-1}\mathbb{1}\{1\leq k'\leq \alpha\}+d^{-k'-1}\mathbb{1}\{\alpha<k'\leq k_{0}\})\beta_{k'}^{2}\right]^{d^{\gamma-k}}\\
    &-2\left[d^{-\alpha}\sum_{\substack{k'\in \mathcal{K}\\ 1 \leq k' \leq k_{0}}}(d^{-\alpha-1}\mathbb{1}\{1\leq k'\leq \alpha\}+d^{-k'-1}\mathbb{1}\{\alpha < k' \leq k_{0}\})\beta_{k'}^{2}\right]^{2d^{\gamma-k}})\\
    &\cdot (d^{-\alpha-1}\mathbb{1}\{1\leq k \leq \alpha\}+d^{-k-1}\mathbb{1}\{\alpha < k \leq k_{0}\})\beta_{k}^{2}.
\end{split}
\end{equation}

\noindent\textbf{Case 2: $\alpha>k_{0}$.}
\begin{equation}\label{eq:gen2,3.2}
\begin{split}
    2)+3)&\asymp \sum_{\substack{k\in \mathcal{K}\\ 1 \leq k \leq k_{0}}}\left(2\left[d^{-2\alpha-1}\sum_{\substack{k'\in \mathcal{K}\\ 1 \leq k' \leq k_{0}}}\beta_{k'}^{2}\right]^{d^{\gamma-k}}-2\left[d^{-2\alpha-1}\sum_{\substack{k'\in \mathcal{K}\\ 1 \leq k' \leq k_{0}}}\beta_{k'}^{2}\right]^{2d^{\gamma-k}}\right)d^{-\alpha-1}\beta_{k}^{2}.
\end{split}
\end{equation}
\noindent\textbf{Term 4:} $\langle \Psi^{\frac{1}{2}}\beta, \overline{\mathcal{P}}(S_{T_{\gamma}})\Psi^{\frac{1}{2}}\beta\rangle$

For the first term,
\begin{equation}
\begin{split}
4)(1) \coloneqq \left\langle \Psi^{\frac{1}{2}}\beta,
\frac{(I-e^{-\Psi T_{\gamma}})^{2}}{(I+\Psi\nu^{-1})^{2}}
\Psi^{\frac{1}{2}}\beta\right\rangle
&=\sum_{\substack{k\in \mathcal{K}\\ 1 \leq k \leq k_{0}}}\frac{(1-e^{-\sigma_{k}^{2}T_{\gamma}})^{2}}{(1+d^{\alpha-k})^{2}}d^{-k-1}\beta_{k}^{2}.
\end{split}
\end{equation}

\noindent\textbf{Case 1: $0<\alpha\leq k_{0}$.}
\begin{equation}\label{eq:gen4.1.1}
\begin{split}
4)(1)
&\asymp \sum_{\substack{k\in \mathcal{K}\\ 1 \leq k \leq k_{0}}}\left(1-\left[d^{-\alpha}\sum_{\substack{k'\in \mathcal{K}\\ 1 \leq k' \leq k_{0}}}(d^{-\alpha-1}\mathbb{1}\{1\leq k' \leq \alpha\}+d^{-k'-1}\mathbb{1}\{\alpha < k' \leq k_{0}\})\beta_{k'}^{2}\right]^{d^{\gamma-k}}\right)^{2}\\
&\cdot(d^{-2\alpha+k-1}\mathbb{1}\{1\leq k \leq \alpha\}+d^{-k-1}\mathbb{1}\{\alpha < k \leq k_{0}\})\beta_{k}^{2}.
\end{split}
\end{equation}

\noindent\textbf{Case 2: $\alpha>k_{0}$.}
\begin{equation}\label{eq:gen4.1.2}
\begin{split}
4)(1)
&\asymp\sum_{\substack{k\in \mathcal{K}\\ 1 \leq k \leq k_{0}}}\left(1-\left[d^{-2\alpha-1}\sum_{\substack{k'\in \mathcal{K}\\ 1 \leq k' \leq k_{0}}}\beta_{k'}^{2}\right]^{d^{\gamma-k}}\right)^{2}d^{-2\alpha+k-1}\beta_{k}^{2}.
\end{split}
\end{equation}

For the second term,
\begin{equation}
    4)(2) \coloneqq \left\langle \Psi^{\frac{1}{2}}\beta,
\frac{\frac{\Psi}{(I+\Psi \nu^{-1})^{2}}\tr{\left(\frac{\nu^{-2}\Psi(I-e^{-\Psi T_{\gamma}})^{2}}{(I+\Psi\nu^{-1})^{2}}\right)}}{d^{\alpha}-\tr{\left(\frac{\Psi^{2}\nu^{-2}}{(I+\Psi\nu^{-1})^{2}}\right)}}\Psi^{\frac{1}{2}}\beta\right\rangle,
\end{equation}
we have to subdivide into cases as follows.

\noindent\textbf{Case 1: $0<\alpha\leq k_{0}$.}

\noindent\underline{Subcase 1.1: $\gamma\geq\alpha$.}

In this case the numerator trace is of order $d^{2\alpha+1}$ and we get
\begin{equation}\label{eq:gen4.2.1.1}
    \begin{split}
        4)(2)  &\asymp \sum_{\substack{k\in \mathcal{K}\\ 1 \leq k \leq k_{0}}}\frac{d^{2\alpha+1}}{d^{\alpha}}\frac{d^{-2k-2}}{(1+d^{\alpha-k})^{2}}\beta_{k}^{2}\\
        &\asymp \sum_{\substack{k\in \mathcal{K}\\ 1 \leq k \leq k_{0}}}d^{\alpha-2k-1}(d^{-2\alpha+2k}\mathbb{1}\{1\leq k \leq \alpha\}+\mathbb{1}\{\alpha < k \leq k_{0}\})\beta_{k}^{2}\\
        &\asymp \sum_{\substack{k\in \mathcal{K}\\ 1 \leq k \leq k_{0}}}(d^{-\alpha-1}\mathbb{1}\{1\leq k \leq \alpha\}+d^{\alpha-2k-1}\mathbb{1}\{\alpha < k \leq k_{0}\})\beta_{k}^{2}
    \end{split}
\end{equation}

\noindent\underline{Subcase 1.2: $\gamma<\alpha$.}

The numerator is now of order $d^{2\gamma+1}$ and we get
\begin{equation}\label{eq:gen4.2.1.2}
\begin{split}
    4)(2) &\asymp \sum_{\substack{k\in \mathcal{K}\\ 1 \leq k \leq k_{0}}}\frac{d^{2\gamma+1}}{d^{\alpha}}\frac{d^{-2k-2}}{(1+d^{\alpha-k})^{2}}\beta_{k}^{2}\\
    &\asymp \sum_{\substack{k\in \mathcal{K}\\ 1 \leq k \leq k_{0}}}d^{2\gamma-\alpha-2k-1}(d^{-2\alpha+2k}\mathbb{1}\{1\leq k \leq \alpha\} + \mathbb{1}\{\alpha<k\leq k_{0}\})\beta_{k}^{2}\\
    &\asymp \sum_{\substack{k\in \mathcal{K}\\ 1 \leq k \leq k_{0}}}(d^{2\gamma-3\alpha-1}\mathbb{1}\{1\leq k\leq \alpha\} + d^{2\gamma-\alpha-2k-1}\mathbb{1}\{\alpha<k\leq k_{0}\})\beta_{k}^{2}
\end{split}  
\end{equation}

\noindent\textbf{Case 2: $\alpha>k_{0}$.}

\noindent\underline{Subcase 2.1: $\gamma\geq\alpha$.}
\begin{equation}\label{eq:gen4.2.2.1}
    \begin{split}
        4)(2) &\asymp \sum_{\substack{k\in \mathcal{K}\\ 1 \leq k \leq k_{0}}}\frac{d^{2\alpha+1}}{d^{\alpha}}\frac{d^{-2k-2}}{(1+d^{\alpha-k})^{2}}\beta_{k}^{2}\\
        &\asymp \sum_{\substack{k\in \mathcal{K}\\ 1 \leq k \leq k_{0}}}d^{\alpha-2k-1}d^{-2\alpha+2k}\beta_{k}^{2} \\
        &\asymp d^{-\alpha-1}\sum_{\substack{k\in \mathcal{K}\\ 1 \leq k \leq k_{0}}}\beta_{k}^{2}.
    \end{split}
\end{equation}

\noindent\underline{Subcase 2.2: $\gamma<\alpha$.}

\begin{equation}\label{eq:gen4.2.2.2}
    \begin{split}
        4)(2) &\asymp \sum_{\substack{k\in \mathcal{K}\\ 1 \leq k \leq k_{0}}}\frac{d^{2\gamma+1}}{d^{\alpha}}\frac{d^{-2k-2}}{(d^{\alpha-k})^{2}}\beta_{k}^{2}\\
        &\asymp \sum_{\substack{k\in \mathcal{K}\\ 1 \leq k \leq k_{0}}}d^{2\gamma-\alpha-2k-1}d^{-2\alpha+2k}\beta_{k}^{2}\\
        &\asymp d^{2\gamma-3\alpha-1}\sum_{\substack{k\in \mathcal{K}\\ 1 \leq k \leq k_{0}}}\beta_{k}^{2}.
    \end{split}
\end{equation}

\subsubsection{Summary of general stopping-time and general target}\label{app:generaltimetarget}

\noindent\textbf{Regime 1: $0<\alpha\leq k_{0},\ \gamma<\alpha$.} In this regime, the teacher error satisfies
\begin{equation}
\overline{\mathcal{L}}_{\mathrm{TE}}\asymp\sum_{\substack{k\in \mathcal{K}\\ 1 \leq k \leq k_{0}}}\left(d^{-\alpha-1}\mathbb{1}\{1\leq k\leq\alpha\}+d^{-k-1}\mathbb{1}\{\alpha<k\leq k_{0}\}\right)\beta_{k}^{2}.
\end{equation}
For the student, from \eqref{eq:gen1.1} we have
\begin{equation}
1)\asymp\sum_{\substack{k\in \mathcal{K}\\ 1 \leq k \leq k_{0}}}d^{-k-1}\left[d^{-\alpha}\sum_{\substack{k'\in \mathcal{K}\\ 1 \leq k' \leq k_{0}}}\left(d^{-\alpha-1}\mathbb{1}\{1\leq k'\leq\alpha\}+d^{-k'-1}\mathbb{1}\{\alpha<k'\leq k_{0}\}\right)\beta_{k'}^{2}\right]^{2d^{\gamma-k}}\beta_{k}^{2}.
\end{equation}
For $k\leq\gamma$, we have $d^{\gamma-k}\geq1$ and
\begin{equation}
d^{-\alpha}\sum_{\substack{k'\in \mathcal{K}\\ 1 \leq k' \leq k_{0}}}\left(d^{-\alpha-1}\mathbb{1}\{1\leq k'\leq\alpha\}+d^{-k'-1}\mathbb{1}\{\alpha<k'\leq k_{0}\}\right)\beta_{k'}^{2}\asymp d^{-\alpha}\overline{\mathcal{L}}_{\mathrm{TE}}=o(1),
\end{equation}
such that the corresponding contributions in \eqref{eq:gen1.1} are negligible. On the other hand, for $k>\gamma$, we have $d^{\gamma-k}$ going to zero, such that
\begin{equation}
\left[d^{-\alpha}\sum_{\substack{k'\in \mathcal{K}\\ 1 \leq k' \leq k_{0}}}\left(d^{-\alpha-1}\mathbb{1}\{1\leq k'\leq\alpha\}+d^{-k'-1}\mathbb{1}\{\alpha<k'\leq k_{0}\}\right)\beta_{k'}^{2}\right]^{2d^{\gamma-k}}\to1.
\end{equation}
Then, for the first term
\begin{equation}
1)\asymp\sum_{\substack{k\in \mathcal{K}\\ \gamma < k \leq k_{0}}}d^{-k-1}\beta_{k}^{2}+o(\overline{\mathcal{L}}_{\mathrm{TE}}).
\end{equation}

For terms $2)+3)$, from \eqref{eq:gen2,3.1},
\begin{equation}
\begin{split}
2)+3)&\asymp\sum_{\substack{k\in \mathcal{K}\\ 1 \leq k \leq k_{0}}}(2\left[d^{-\alpha}\sum_{\substack{k'\in \mathcal{K}\\ 1 \leq k' \leq k_{0}}}\left(d^{-\alpha-1}\mathbb{1}\{1\leq k'\leq\alpha\}+d^{-k'-1}\mathbb{1}\{\alpha<k'\leq k_{0}\}\right)\beta_{k'}^{2}\right]^{d^{\gamma-k}}\\
&-2\left[d^{-\alpha}\sum_{\substack{k'\in \mathcal{K}\\ 1 \leq k' \leq k_{0}}}\left(d^{-\alpha-1}\mathbb{1}\{1\leq k'\leq\alpha\}+d^{-k'-1}\mathbb{1}\{\alpha<k'\leq k_{0}\}\right)\beta_{k'}^{2}\right]^{2d^{\gamma-k}})\\
&\left(d^{-\alpha-1}\mathbb{1}\{1\leq k\leq\alpha\}+d^{-k-1}\mathbb{1}\{\alpha<k\leq k_{0}\}\right)\beta_{k}^{2}.
\end{split}
\end{equation}
For the same previous reasoning, for $k\leq\gamma$, both powers converge to $0$ since $d^{\gamma-k}\geq1$. For $k>\gamma$, both powers converge to $1$ such that their difference also converges to $0$. Thus, we obtain
\begin{equation}
2)+3)=o(\overline{\mathcal{L}}_{\mathrm{TE}}).
\end{equation}

For $4)(1)$, we have from \eqref{eq:gen4.1.1} that
\begin{equation}
\begin{split}
4)(1)&\asymp\sum_{\substack{k\in \mathcal{K}\\ 1 \leq k \leq k_{0}}}\left(1-\left[d^{-\alpha}\sum_{\substack{k'\in \mathcal{K}\\ 1 \leq k' \leq k_{0}}}\left(d^{-\alpha-1}\mathbb{1}\{1\leq k'\leq\alpha\}+d^{-k'-1}\mathbb{1}\{\alpha<k'\leq k_{0}\}\right)\beta_{k'}^{2}\right]^{d^{\gamma-k}}\right)^{2}\\
&\left(d^{-2\alpha+k-1}\mathbb{1}\{1\leq k\leq\alpha\}+d^{-k-1}\mathbb{1}\{\alpha\leq k\leq k_{0}\}\right)\beta_{k}^{2}.
\end{split}
\end{equation}
For $k\leq\gamma$, the power inside the parentheses is negligible and the term inside becomes $1$. Since $k\leq\gamma<\alpha$, then
\begin{equation}
d^{-2\alpha+k-1}\leq d^{\gamma-\alpha}d^{-\alpha-1}=o(d^{-\alpha-1}),
\end{equation}
and the terms in the sum with $k\leq \gamma$ are of order $o(\overline{\mathcal{L}}_{\mathrm{TE}})$.
For $k>\gamma$, the power inside the parentheses converges to $1$, such that the whole term converges to $0$. Then, 
\begin{equation}
4)(1)=o(\overline{\mathcal{L}}_{\mathrm{TE}}).
\end{equation}

Finally, since $\gamma<\alpha$, from \eqref{eq:gen4.2.1.2} we have
\begin{equation}
4)(2)\asymp\sum_{\substack{k\in \mathcal{K}\\ 1 \leq k \leq k_{0}}}\left(d^{2\gamma-3\alpha-1}\mathbb{1}\{1\leq k\leq\alpha\}+d^{2\gamma-\alpha-2k-1}\mathbb{1}\{\alpha<k\leq k_{0}\}\right)\beta_{k}^{2}.
\end{equation}
Here we do a different case analysis. For $k\leq\alpha$, the ratio between the coefficient in $4)(2)$ and the corresponding coefficient in the teacher error is
\begin{equation}
\frac{d^{2\gamma-3\alpha-1}}{d^{-\alpha-1}}=d^{2\gamma-2\alpha}=o(1).
\end{equation}
For $k>\alpha$, the corresponding ratio is
\begin{equation}
\frac{d^{2\gamma-\alpha-2k-1}}{d^{-k-1}}=d^{2\gamma-\alpha-k}\leq d^{2\gamma-2\alpha}=o(1).
\end{equation}
Thus,
\begin{equation}
4)(2)=o(\overline{\mathcal{L}}_{\mathrm{TE}}).
\end{equation}
We conclude that for Case 1, we obtain
\begin{equation}
\overline{\mathcal{L}}_{\mathrm{ST}}(T_{\gamma})\asymp\sum_{\substack{k\in \mathcal{K}\\ \gamma < k \leq k_{0}}}d^{-k-1}\beta_{k}^{2}+o(\overline{\mathcal{L}}_{\mathrm{TE}}).
\end{equation}
Therefore, weak-to-strong generalization occurs if and only if
\begin{equation}
\sum_{\substack{k\in \mathcal{K}\\ \gamma < k \leq k_{0}}}d^{-k-1}\beta_{k}^{2}=o\left(\overline{\mathcal{L}}_{\mathrm{TE}}\right).
\end{equation}
and then it depends on the target coefficients sizes $\beta_{k}^{2}$ for $k>\gamma$. 

\noindent\textbf{Regime 2: $0<\alpha\leq k_{0},\ \gamma\geq\alpha$.} Observe that in the previous case we only used that $\gamma<\alpha$ when analyzing terms $4)(1)$ and $4)(2)$ such that the other terms remain the same. Thus, we get again
\begin{equation}
\overline{\mathcal{L}}_{\mathrm{TE}}\asymp\sum_{\substack{k\in \mathcal{K}\\ 1 \leq k \leq k_{0}}}\left(d^{-\alpha-1}\mathbb{1}\{1\leq k\leq\alpha\}+d^{-k-1}\mathbb{1}\{\alpha<k\leq k_{0}\}\right)\beta_{k}^{2},
\end{equation}
and also
\begin{equation}
1)\asymp\sum_{\substack{k\in \mathcal{K}\\ \gamma < k \leq k_{0}}}d^{-k-1}\beta_{k}^{2}+o(\overline{\mathcal{L}}_{\mathrm{TE}}),
\end{equation}
and
\begin{equation}
2)+3)=o(\overline{\mathcal{L}}_{\mathrm{TE}}).
\end{equation}

For the first part of term $4)$, from \eqref{eq:gen4.1.1},
\begin{equation}
\begin{split}
4)(1)&\asymp\sum_{\substack{k\in \mathcal{K}\\ 1 \leq k \leq k_{0}}}\left(1-\left[d^{-\alpha}\sum_{\substack{k'\in \mathcal{K}\\ 1 \leq k' \leq k_{0}}}\left(d^{-\alpha-1}\mathbb{1}\{1\leq k'\leq\alpha\}+d^{-k'-1}\mathbb{1}\{\alpha<k'\leq k_{0}\}\right)\beta_{k'}^{2}\right]^{d^{\gamma-k}}\right)^{2}\\
&\left(d^{-2\alpha+k-1}\mathbb{1}\{1\leq k\leq\alpha\}+d^{-k-1}\mathbb{1}\{\alpha<k\leq k_{0}\}\right)\beta_{k}^{2}.
\end{split}
\end{equation}
For $k\leq\gamma$, the power inside the parentheses is negligible, whereas for $k>\gamma$ it converges to $1$. Since $\gamma\geq\alpha$, this gives
\begin{equation}
4)(1)\asymp\sum_{\substack{k\in \mathcal{K}\\ 1 \leq k \leq k_{0}}}d^{-2\alpha+k-1}\mathbb{1}\{1\leq k\leq\alpha\}\beta_{k}^{2}+\sum_{\substack{k\in \mathcal{K}\\ 1 \leq k \leq k_{0}}}d^{-k-1}\mathbb{1}\{\alpha<k\leq\gamma\}\beta_{k}^{2}+o(\overline{\mathcal{L}}_{\mathrm{TE}}),
\end{equation}
that is we divided the last indicator as
\begin{equation}
    \mathbb{1}\{\alpha<k\leq k_{0}\} = \mathbb{1}\{\alpha<k\leq \gamma\} + \mathbb{1}\{\gamma <k\leq k_{0}\} 
\end{equation}
ans used that the last contribution is $o(\overline{\mathcal{L}}_{\mathrm{TE}})$.

In particular, for $\alpha<k\leq\gamma$, term $4)(1)$ has exactly the same coefficient $d^{-k-1}$ as the corresponding term in the teacher error. And for $1\leq k \leq \alpha$ they are smaller than the teacher since in this case
\begin{equation}
    d^{-2\alpha+k-1} \leq d^{-\alpha-1}.
\end{equation}

Finally, since $\gamma\geq\alpha$, from \eqref{eq:gen4.2.1.2} we have
\begin{equation}
4)(2)\asymp\sum_{\substack{k\in \mathcal{K}\\ 1 \leq k \leq k_{0}}}\left(d^{-\alpha-1}\mathbb{1}\{1\leq k\leq\alpha\}+d^{\alpha-2k-1}\mathbb{1}\{\alpha<k\leq k_{0}\}\right)\beta_{k}^{2}.
\end{equation}
For $k\leq\alpha$, term $4)(2)$ has exactly the same coefficient as the corresponding term in the teacher error,
\begin{equation}
\sum_{\substack{k\in \mathcal{K}\\ 1 \leq k \leq k_{0}}}d^{-\alpha-1}\mathbb{1}\{1\leq k\leq\alpha\}\beta_{k}^{2}.
\end{equation}
For $k>\alpha$, the second part of \eqref{eq:gen4.2.1.2} is no larger than the corresponding teacher contribution since
\begin{equation}
\frac{d^{\alpha-2k-1}}{d^{-k-1}}=d^{\alpha-k}=o(1).
\end{equation}
Therefore, the teacher-scale contribution for $k\leq\alpha$ is reproduced by $4)(2)$. For $\alpha<k\leq\gamma$, it is reproduced by $4)(1)$ as said above, while for $k>\gamma$ (if applicable) it is reproduced by $1)$. More precisely,
\begin{equation}
\overline{\mathcal{L}}_{\mathrm{ST}}(T_{\gamma})\asymp\sum_{\substack{k\in \mathcal{K}\\ 1 \leq k \leq k_{0}}}d^{-\alpha-1}\mathbb{1}\{1\leq k\leq\alpha\}\beta_{k}^{2}+\sum_{\substack{k\in \mathcal{K}\\ 1 \leq k \leq k_{0}}}d^{-k-1}\mathbb{1}\{\alpha<k\leq\gamma\}\beta_{k}^{2}+\sum_{\substack{k\in \mathcal{K}\\ 1 \leq k \leq k_{0}}}d^{-k-1}\mathbb{1}\{\gamma<k\leq k_{0}\}\beta_{k}^{2}.
\end{equation}
Since $\gamma\geq\alpha$, the last two sums together give the entire contribution with $k>\alpha$, and hence
\begin{equation}
\overline{\mathcal{L}}_{\mathrm{ST}}(T_{\gamma})\asymp\sum_{\substack{k\in \mathcal{K}\\ 1 \leq k \leq k_{0}}}\left(d^{-\alpha-1}\mathbb{1}\{1\leq k\leq\alpha\}+d^{-k-1}\mathbb{1}\{\alpha<k\leq k_{0}\}\right)\beta_{k}^{2}\asymp\overline{\mathcal{L}}_{\mathrm{TE}}.
\end{equation}
Hence, there is no WTS when $\alpha\leq k_{0}$ and $\gamma\geq\alpha$.

Now, for the cases where $\alpha>k_{0}$ we will need to do a more complete subcase analysis. Instead of only comparing $\alpha$ and $\gamma$ as before we also need to compare how $\gamma$ relates with $k_{0}$.

\noindent\textbf{Regime 3: $\alpha>k_{0},\ \gamma<k_{0}$.} In this regime, we have
\begin{equation}
\overline{\mathcal{L}}_{\mathrm{TE}}\asymp d^{-\alpha-1}\sum_{\substack{k\in \mathcal{K}\\ 1 \leq k \leq k_{0}}}\beta_{k}^{2}.
\end{equation}
Since $1\leq \gamma < k_{0}$, we now separate the target degrees into $k\leq\gamma$ and $k>\gamma$. From \eqref{eq:gen1.2},
\begin{equation}
1)\asymp \sum_{\substack{k\in \mathcal{K}\\ 1 \leq k \leq k_{0}}}d^{-k-1}\left[d^{-2\alpha-1}\sum_{\substack{k'\in \mathcal{K}\\ 1 \leq k' \leq k_{0}}}\beta_{k'}^{2}\right]^{2d^{\gamma-k}}\beta_{k}^{2}.
\end{equation}
For $k\leq\gamma$, we have $d^{\gamma-k}\geq1$ and, since in this case $d^{-2\alpha-1}\sum_{\substack{k'\in \mathcal{K}\\ 1 \leq k' \leq k_{0}}}\beta_{k'}^{2}\asymp d^{-\alpha}\overline{\mathcal{L}}_{\mathrm{TE}}=o(\overline{\mathcal{L}}_{\mathrm{TE}})$, the corresponding contributions of $1)$ are negligible. On the other hand, for $k>\gamma$, we have $d^{\gamma-k}$ going to zero. Then, \eqref{eq:gen1.2} gives
\begin{equation}
1)\asymp \sum_{\substack{k\in \mathcal{K}\\ \gamma < k \leq k_{0}}}d^{-k-1}\beta_{k}^{2}+o(\overline{\mathcal{L}}_{\mathrm{TE}}).
\end{equation}
For $2)+3)$, \eqref{eq:gen2,3.2} gives
\begin{equation}
2)+3)\asymp \sum_{\substack{k\in \mathcal{K}\\ 1 \leq k \leq k_{0}}}\left(2\left[d^{-2\alpha-1}\sum_{\substack{k'\in \mathcal{K}\\ 1 \leq k' \leq k_{0}}}\beta_{k'}^{2}\right]^{d^{\gamma-k}}-2\left[d^{-2\alpha-1}\sum_{\substack{k'\in \mathcal{K}\\ 1 \leq k' \leq k_{0}}}\beta_{k'}^{2}\right]^{2d^{\gamma-k}}\right)d^{-\alpha-1}\beta_{k}^{2}.
\end{equation}
Again, for $k\leq\gamma$, both powers are negligible and for  $k>\gamma$, both powers converge to $1$ and their leading-order contributions cancel. Thus,
\begin{equation}
2)+3)=o(\overline{\mathcal{L}}_{\mathrm{TE}}).
\end{equation}
For the first part of term $4)$, from \eqref{eq:gen4.1.2},
\begin{equation}
4)(1)\asymp \sum_{\substack{k\in \mathcal{K}\\ 1 \leq k \leq k_{0}}}\left(1-\left[d^{-2\alpha-1}\sum_{\substack{k'\in \mathcal{K}\\ 1 \leq k' \leq k_{0}}}\beta_{k'}^{2}\right]^{d^{\gamma-k}}\right)^{2}d^{-2\alpha+k-1}\beta_{k}^{2}.
\end{equation}
For $k\leq\gamma$, the power inside the parentheses is negligible, and the corresponding contribution satisfies
\begin{equation}
\sum_{\substack{k\in \mathcal{K}\\ 1 \leq k \leq \gamma}}d^{-2\alpha+k-1}\beta_{k}^{2}\leq d^{\gamma-\alpha}d^{-\alpha-1}\sum_{\substack{k\in \mathcal{K}\\ 1 \leq k \leq \gamma}}\beta_{k}^{2}=o(\overline{\mathcal{L}}_{\mathrm{TE}}),
\end{equation}
since, in particular, we have $\alpha > \gamma$ in the case considered.
For $k>\gamma$, the power converges to $1$. Thus,
\begin{equation}
4)(1)=o(\overline{\mathcal{L}}_{\mathrm{TE}}).
\end{equation}
Finally, since $\gamma<\alpha$, from \eqref{eq:gen4.2.2.2} we have
\begin{equation}
4)(2)\asymp d^{2\gamma-3\alpha-1}\sum_{\substack{k\in \mathcal{K}\\ 1 \leq k \leq k_{0}}}\beta_{k}^{2}\asymp d^{2\gamma-2\alpha}\overline{\mathcal{L}}_{\mathrm{TE}}=o(\overline{\mathcal{L}}_{\mathrm{TE}}).
\end{equation}
In conclusion, we obtain
\begin{equation}
\overline{\mathcal{L}}_{\mathrm{ST}}(T_{\gamma})\asymp \sum_{\substack{k\in \mathcal{K}\\ \gamma < k \leq k_{0}}}d^{-k-1}\beta_{k}^{2}+o(\overline{\mathcal{L}}_{\mathrm{TE}}).
\end{equation}
Consequently, as in Case 1, in this regime WTS occurs if and only if
the first term above is $o(\overline{\mathcal{L}}_{\mathrm{TE}})$, that is,
\begin{equation}
\sum_{\substack{k\in \mathcal{K}\\ \gamma < k \leq k_{0}}}d^{-k-1}\beta_{k}^{2}=o\left(d^{-\alpha-1}\sum_{\substack{k\in \mathcal{K}\\ 1 \leq k \leq k_{0}}}\beta_{k}^{2}\right).
\end{equation}

\noindent\textbf{Regime 4: $\alpha>k_{0},\ k_{0}\leq\gamma\leq \alpha$.} In this regime, we have
\begin{equation}
\overline{\mathcal{L}}_{\mathrm{TE}}\asymp d^{-\alpha-1}\sum_{\substack{k\in \mathcal{K}\\ 1 \leq k \leq k_{0}}}\beta_{k}^{2}.
\end{equation}
Moreover, since $k\leq k_{0}\leq\gamma$, we have $d^{\gamma-k}\geq1$ for every $k\in \mathcal{K}$ with $1\leq k\leq k_{0}$. Since
\begin{equation}
d^{-2\alpha-1}\sum_{\substack{k'\in \mathcal{K}\\ 1 \leq k' \leq k_{0}}}\beta_{k'}^{2}\asymp d^{-\alpha}\overline{\mathcal{L}}_{\mathrm{TE}},
\end{equation}
we have the first term of the student \eqref{eq:gen1.2} satisfying
\begin{equation}
1)\lesssim \left[d^{-2\alpha-1}\sum_{\substack{k'\in \mathcal{K}\\ 1 \leq k' \leq k_{0}}}\beta_{k'}^{2}\right]^2\sum_{\substack{k\in \mathcal{K}\\ 1 \leq k \leq k_{0}}}d^{-k-1}\beta_k^2 \lesssim d^{-\alpha-2}\overline{\mathcal{L}}_{\mathrm{TE}}\sum_{\substack{k\in \mathcal{K}\\ 1 \leq k \leq k_{0}}}\beta_{k}^{2} \lesssim d^{-1}\overline{\mathcal{L}}_{\mathrm{TE}}^{2}= o(\overline{\mathcal{L}}_{\mathrm{TE}}^{2})
\end{equation}
From \eqref{eq:gen2,3.2}, we get
\begin{equation}
2)+3)\lesssim \left[d^{-2\alpha-1}\sum_{\substack{k'\in \mathcal{K}\\ 1 \leq k' \leq k_{0}}}\beta_{k'}^{2}\right]d^{-\alpha-1}\sum_{\substack{k\in \mathcal{K}\\ 1 \leq k \leq k_{0}}}\beta_k^2\asymp d^{-\alpha}\overline{\mathcal{L}}_{\mathrm{TE}}^2=o(\overline{\mathcal{L}}_{\mathrm{TE}}^2).
\end{equation}
For the first part of term $4)$, \eqref{eq:gen4.1.2} gives
\begin{equation}
4)(1)\asymp \sum_{\substack{k\in \mathcal{K}\\ 1 \leq k \leq k_{0}}}\left(1-\left[d^{-2\alpha-1}\sum_{\substack{k'\in \mathcal{K}\\ 1 \leq k' \leq k_{0}}}\beta_{k'}^{2}\right]^{d^{\gamma-k}}\right)^{2}d^{-2\alpha+k-1}\beta_{k}^{2}.
\end{equation}
The power inside the parentheses is again negligible and we obtain
\begin{equation}
4)(1)\asymp \sum_{\substack{k\in \mathcal{K}\\ 1 \leq k \leq k_{0}}}d^{-2\alpha+k-1}\beta_{k}^{2} \lesssim d^{-2\alpha+k_{0}-1}\sum_{\substack{k\in \mathcal{K}\\ 1 \leq k \leq k_{0}}}\beta_{k}^{2} = d^{k_{0}-\alpha}\overline{\mathcal{L}}_{\mathrm{TE}} = o (\overline{\mathcal{L}}_{\mathrm{TE}})
\end{equation}
Finally, from \eqref{eq:gen4.2.2.2} we have
\begin{equation}
4)(2)\asymp d^{2\gamma-3\alpha-1}\sum_{\substack{k\in \mathcal{K}\\ 1 \leq k \leq k_{0}}}\beta_{k}^{2}\asymp d^{2\gamma-2\alpha}\overline{\mathcal{L}}_{\mathrm{TE}} = o(\overline{\mathcal{L}}_{\mathrm{TE}})
\end{equation}
Combining these estimates, we obtain that the dominant term is $4)(1)$ such that we obtain
\begin{equation}
\overline{\mathcal{L}}_{\mathrm{ST}}(T_{\gamma})\asymp d^{k_{0}-\alpha}\overline{\mathcal{L}}_{\mathrm{TE}}= o(\overline{\mathcal{L}}_{\mathrm{TE}}).
\end{equation}
Then, we have WTS for this case. The quadratic and non-quadratic distinction as done in the pure harmonic case, would require a more careful analysis and will end up depending on the target coefficients.

\noindent\textbf{Regime 5: $\alpha>k_{0},\ \gamma\geq\alpha$.} In this regime, we have again
\begin{equation}
\overline{\mathcal{L}}_{\mathrm{TE}}\asymp d^{-\alpha-1}\sum_{\substack{k\in \mathcal{K}\\ 1 \leq k \leq k_{0}}}\beta_{k}^{2}.
\end{equation}
Moreover, since $k\leq k_{0}<\alpha\leq\gamma$, we have again that $\gamma-k>0$ and $d^{\gamma-k}\geq 1$ for every $k\in \mathcal{K}$ with $1\leq k\leq k_{0}$ such that we obtain the same conclusion as in the previous case for the first term,that is,
\begin{equation}
1)=o(\overline{\mathcal{L}}_{\mathrm{TE}}).
\end{equation}
Similarly, we have the same previous conclusion for \eqref{eq:gen2,3.2}
\begin{equation}
2)+3)=o(\overline{\mathcal{L}}_{\mathrm{TE}}).
\end{equation}
And the same thing for the first part of term $4)$,
\begin{equation}
4)(1)=o(\overline{\mathcal{L}}_{\mathrm{TE}}),
\end{equation}
Finally, since $\alpha\leq\gamma$ now we have to use  \eqref{eq:gen4.2.2.1} for the second part of $4)$ and this gives
\begin{equation}
4)(2)\asymp d^{-\alpha-1}\sum_{\substack{k\in \mathcal{K}\\ 1 \leq k \leq k_{0}}}\beta_{k}^{2}\asymp\overline{\mathcal{L}}_{\mathrm{TE}}.
\end{equation}
Therefore, term $4)(2)$ is the dominant contribution to the student error, and we obtain
\begin{equation}
\overline{\mathcal{L}}_{\mathrm{ST}}(T_{\gamma})\asymp\overline{\mathcal{L}}_{\mathrm{TE}}.
\end{equation}
Then, there is no WTS in this last case.

All regimes are summarized in the table below.

\begin{center}
\small
\renewcommand{\arraystretch}{1.5}
\setlength{\tabcolsep}{5pt}
\begin{tabularx}{\textwidth}{@{}c X c@{}}
\toprule
Regime & Comparison & Conclusion \\
\midrule
$\alpha\leq k_{0},\ \gamma<\alpha$ &
$\displaystyle \text{WTS}\iff\sum_{\substack{k\in \mathcal{K}\\ \gamma < k \leq k_{0}}}d^{-k-1}\beta_{k}^{2}=o(\overline{\mathcal{L}}_{\mathrm{TE}})$ &
Coefficient-dependent \\
\midrule
$\alpha\leq k_{0},\ \gamma\geq\alpha$ &
$\displaystyle \overline{\mathcal{L}}_{\mathrm{ST}}\asymp\overline{\mathcal{L}}_{\mathrm{TE}}$ &
No WTS \\
\midrule
$\alpha>k_{0},\ \gamma<k_{0}$ &
$\displaystyle \text{WTS}\iff\sum_{\substack{k\in \mathcal{K}\\ \gamma < k \leq k_{0}}}d^{-k-1}\beta_{k}^{2}=o\left(d^{-\alpha-1}\sum_{\substack{k\in \mathcal{K}\\ 1 \leq k \leq k_{0}}}\beta_{k}^{2}\right)$ &
Coefficient-dependent \\
\midrule
$\alpha>k_{0},\ k_{0}\leq\gamma<\alpha$ &
$\displaystyle \overline{\mathcal{L}}_{\mathrm{ST}}\asymp d^{k_{0}-\alpha}\overline{\mathcal{L}}_{\overline{\mathcal{L}}}=o(\mathcal{L}_{\mathrm{TE}})$ &
WTS \\
\midrule
$\alpha>k_{0},\ \gamma\geq\alpha$ &
$\displaystyle \overline{\mathcal{L}}_{\mathrm{ST}}\asymp\overline{\mathcal{L}}_{\mathrm{TE}}$ &
No WTS \\
\bottomrule
\end{tabularx}
\captionof{table}{Weak-to-strong generalization regimes determined by the teacher width exponent $\alpha$, stopping-time parameter $\gamma$, and maximal target degree $k_0$. Here WTS means $\overline{\mathcal{L}}_{\mathrm{ST}}=o(\overline{\mathcal{L}}_{\mathrm{TE}})$ as $d\to\infty$.}
\label{tab:wts-regimes}
\end{center}

\begin{obs}[The two coefficient-dependent conditions are one condition]\label{rem:one-condition}
\chg{In the first and third rows of the table, weak-to-strong generalization occurs if and only if $\sum_{\substack{k\in \mathcal{K}\\ \gamma < k \leq k_{0}}}d^{-k-1}\beta_k^2=o(\overline{\mathcal{L}}_{\mathrm{TE}})$. In the third row this is how the condition is written, since $\overline{\mathcal{L}}_{\mathrm{TE}}\asymp d^{-\alpha-1}\sum_{\substack{k\in \mathcal{K}\\ 1 \leq k \leq k_{0}}}\beta_k^2$ there. In the first row $\overline{\mathcal{L}}_{\mathrm{TE}}\asymp\sum_{\substack{k\in \mathcal{K}\\ 1 \leq k \leq \alpha}}d^{-\alpha-1}\beta_k^2+\sum_{\substack{k\in \mathcal{K}\\ \alpha < k \leq k_{0}}}d^{-k-1}\beta_k^2$, and the condition compares the same held-out energy $\sum_{\substack{k\in \mathcal{K}\\ \gamma < k \leq k_{0}}}d^{-k-1}\beta_k^2=\sum_{\substack{k\in \mathcal{K}\\ \gamma < k \leq k_{0}}}\lVert f^{*}_k\rVert_{\mathcal{D}}^2$ with this different expression. The two rows therefore state a single condition, \eqref{eq:wts-condition} in the main text, with the teacher error taking its two forms \eqref{eq:mixed-teacher-error}.}
\end{obs}

\section{Two mixed-target examples}\label{app:mixed-examples}

This appendix proves the two statements about the two-mode target \eqref{eq:two-mode-target} made in Section~\ref{sec:mixed-main}. Both concern the deterministic equivalents \eqref{eq:main-teacher} and \eqref{eq:main-two-term}, and both are uniform over all stopping times $t\ge0$, not only over the harmonic stopping times $T_\gamma$ of Appendix~\ref{app:generalsettings}. They are stated up to constants, which is all that the conclusions of Section~\ref{sec:mixed-main} require; the leading constants, which the figures track, are recorded in Remark~\ref{rem:mixed-constants}. Transferring the conclusions to the random spherical model requires Assumption~\ref{ass:de} at the relevant error scales, uniformly in time; the figures of this appendix are evaluations of the deterministic equivalents, not simulations.

\paragraph{The errors of a mixed target are sums over its harmonic components.}
Since $\Psi$ is the scalar $\sigma_k^2$ on the degree-$k$ harmonic block, every matrix in \eqref{eq:main-teacher} and \eqref{eq:main-two-term} is a function of $\Psi$, hence scalar on each block, and the two errors are sums over the harmonic components $f^\star_k$ of the target, weighted by their energies $\|f^\star_k\|_{\mathcal D}^2$, the squared norms of the blocks of $\Psi^{1/2}\beta$:
\begin{equation}\label{eq:mixed-additivity}
\overline{\mathcal L}_{\rm TE}=\sum_k\|f^\star_k\|_{\mathcal D}^2\,\frac{\nu}{\sigma_k^2+\nu},
\qquad
\overline{\mathcal L}_{\rm ST}(t)=\sum_k\|f^\star_k\|_{\mathcal D}^2\,\overline\ell_k(t),
\end{equation}
where $\overline\ell_k(t)$ is the deterministic equivalent of the student error for a unit target of degree $k$. Reading off \eqref{eq:main-two-term} on the degree-$k$ block,
\begin{equation}\label{eq:mixed-unit-de}
\overline\ell_k(t)
=\Bigl(\frac{\nu+\sigma_k^2e^{-\sigma_k^2t}}{\sigma_k^2+\nu}\Bigr)^2
+\frac{\nu\sigma_k^2}{(\sigma_k^2+\nu)^2}\,\overline H(t),
\qquad
\overline H(t)=\frac{\operatorname{tr}[\Psi(\Psi+\nu I)^{-2}(I-e^{-\Psi t})^2]}{\operatorname{tr}[\Psi(\Psi+\nu I)^{-2}]}\in[0,1].
\end{equation}
The first term is the bias $\overline B(t)$ and the second the variance $\overline V(t)$ of the unit degree-$k$ target; $\overline H$ is the trace ratio of \eqref{eq:main-two-term}, common to all degrees, which increases from $0$ at $t=0$ to $1$ at $t=\infty$. This is the additivity used in Section~\ref{sec:mixed-main}: the held-out energy $\mathcal E_{>\gamma}$ of \eqref{eq:wts-target-tail} and the two forms of the teacher error \eqref{eq:mixed-teacher-error} are \eqref{eq:mixed-additivity} together with Lemma~\ref{lem:mixed-nu-consequences} below. Two consequences of \eqref{eq:mixed-unit-de} are used repeatedly: every term is nonnegative, so a lower bound may drop any of them, and $\overline\ell_k(t)\le1+\frac14=\frac54$ for all $t$, since the bias is at most $1$ and the variance at most $\frac14\overline H\le\frac14$.

Throughout, $m=d^\alpha$ with $\alpha>1$ fixed, so that $\nu\asymp d^{-\alpha-1}$ by Lemma~\ref{lem:nu-relu}. Appendix~\ref{app:reluproofs} uses this estimate through the pure-target calculations; here we need it in the following packaged form.

\begin{lemma}[Consequences of the order of $\nu$]\label{lem:mixed-nu-consequences}
Fix $\alpha>1$ and $m=d^\alpha$, and let $k\in\mathcal K$ be a fixed degree. With constants depending only on $\alpha$ and $k$,
\begin{equation*}
\begin{aligned}
&\text{(i)}\quad \frac{\nu}{\sigma_k^2+\nu}\asymp d^{k-\alpha}\quad(k<\alpha),
&\qquad&\text{(ii)}\quad \frac{\nu}{\sigma_k^2+\nu}\longrightarrow1\quad(k>\alpha),\\[4pt]
&\text{(iii)}\quad \operatorname{tr}\Psi\asymp d^{-1},
&\qquad&\text{(iv)}\quad \nu\operatorname{tr}[\Psi(\Psi+\nu I)^{-2}]\asymp m.
\end{aligned}
\end{equation*}
\end{lemma}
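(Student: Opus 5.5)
The plan is to derive all four items from two inputs: the order $\nu\asymp d^{-\alpha-1}$ of Lemma~\ref{lem:nu-relu}, and the fixed-degree asymptotics $\sigma_k^2\asymp d^{-k-1}$, $N_k\asymp d^k$ of Appendix~\ref{subapp:asympNksigmak}. The constants then depend only on $\alpha$ and $k$.

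For (i) and (ii), write $\nu/(\sigma_k^2+\nu)=1/(1+\sigma_k^2/\nu)$ and note that $\sigma_k^2/\nu\asymp d^{\alpha-k}$. If $k<\alpha$, this ratio diverges, so the quantity is $\asymp\nu/\sigma_k^2\asymp d^{k-\alpha}$. If $k>\alpha$, the ratio tends to zero, so the quantity tends to $1$. Only finitely many fixed-degree estimates enter, so no uniformity issue arises.

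For (iii), $\operatorname{tr}\Psi=\sum_k N_k\sigma_k^2=K(x,x)=\mathbb E_u\langle u,x\rangle_+^2=\frac{1}{2}\mathbb E_u\langle u,x\rangle^2=\frac{1}{2d}$, which uses the symmetry of the uniform measure on the sphere. This gives the identity $1/(2d)$ already quoted in Appendix~\ref{subapp:medvedev-comparison}. Alternatively, it follows from the upper bound $\sum_k N_k\sigma_k^2\lesssim d^{-1}$ obtained in the proof of Lemma~\ref{lem:nu-relu}, via the Claim bounding the high-degree terms, together with the lower bound $N_1\sigma_1^2=1/(4d)$.

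For (iv), rewrite $\nu\operatorname{tr}[\Psi(\Psi+\nu I)^{-2}]=\operatorname{tr}\bigl[\Psi\nu^{-1}(I+\Psi\nu^{-1})^{-2}\bigr]$, which is exactly the denominator trace bounded in Appendix~\ref{app:studentscaling}. For the upper bound, $\Psi\nu^{-1}(I+\Psi\nu^{-1})^{-2}\preceq\Psi\nu^{-1}(I+\Psi\nu^{-1})^{-1}$, whose trace is $m$ by \eqref{eq:nuequation}. For the lower bound, keep the single block of a fixed even degree $k_\alpha>\alpha$. There $\sigma_{k_\alpha}^2/\nu\asymp d^{\alpha-k_\alpha}\to0$, so the block contributes $\asymp N_{k_\alpha}\sigma_{k_\alpha}^2/\nu\asymp d^{k_\alpha}d^{-k_\alpha-1}d^{\alpha+1}=d^\alpha=m$.

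The only mildly delicate step is (iv), since it requires a block of eigenvalues lying below the capacity edge while carrying total mass of order $m$. The choice of $k_\alpha\in\mathcal K$, which exists because $\mathcal K$ contains every even integer, handles this exactly as in the proof of Lemma~\ref{lem:nu-relu}.
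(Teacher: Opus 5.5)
Your proof is correct. Items (i) and (ii) and the upper bound in (iv) follow the paper's argument. The differences are in (iii) and, more substantively, in the lower bound of (iv).

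For (iii), the paper only brackets $\operatorname{tr}\Psi$ between $N_1\sigma_1^2=1/(4d)$ and $\mathbb E\langle u,x\rangle^2=1/d$, using $r_+^2\le r^2$. You instead compute $\mathbb E_xK(x,x)=1/(2d)$ exactly from the symmetry of $\langle u,x\rangle$. This is the same idea, slightly sharper, and it makes your alternative through the proof of Lemma~\ref{lem:nu-relu} unnecessary. That alternative sums fixed-degree asymptotics over all $k$, so it is the weaker justification.

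For the lower bound in (iv), the paper does not keep a single block. It takes $J$ to be the largest degree of $\mathcal K$ with $J\le\alpha$. It then uses the monotonicity of $\sigma_k^2$ (Claim~\ref{eq:lemmasigmadecreasing}) to get $\nu/(\sigma_k^2+\nu)\ge1-o(1)$ uniformly over all $k>J$. Next, the fixed-point equation \eqref{eq:nuequation} bounds the trace below by $(1-o(1))(m-D_J)$, with $D_J=\sum_{k\le J}N_k$. It concludes from $D_J/m\to0$ or $D_J/m\to1/J!\le\frac12$.

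Your single-block argument with $k_\alpha>\alpha$ is the one the paper uses elsewhere: for the lower bound on $\nu$ in Lemma~\ref{lem:nu-relu}, and for the denominator trace in Appendix~\ref{app:studentscaling}. It needs the asymptotics at only one fixed degree and avoids monotonicity across degrees that grow with $d$. It also works for every $\alpha>0$, including $\alpha=1$, where the paper's count fails because $D_1=d+1>m$.

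What the paper's route buys is a structural statement that does not depend on the mass of any particular block. Once every eigenvalue past degree $J$ is known to lie below the capacity edge, those eigenvalues carry a constant fraction of the fixed-point mass $m$.
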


\begin{proof}
\emph{Items~(i) and~(ii).} By Lemma~\ref{lem:nu-relu}, $\nu\asymp d^{-\alpha-1}$, while $\sigma_k^2\asymp d^{-k-1}$ for fixed $k$. Hence
\[
\frac{\sigma_k^2}{\nu}\asymp d^{\alpha-k},
\qquad\text{and}\qquad
\frac{\nu}{\sigma_k^2+\nu}=\frac1{1+\sigma_k^2/\nu}.
\]
If $k<\alpha$ the ratio $\sigma_k^2/\nu$ tends to infinity, and the right-hand side is $\asymp\nu/\sigma_k^2\asymp d^{k-\alpha}$, which is~(i). If $k>\alpha$ the ratio tends to zero, and the right-hand side tends to one, which is~(ii).

\emph{Item~(iii).} The degree-one term alone gives the lower bound, by \eqref{eq:N1sigma1squared}:
\[
\operatorname{tr}\Psi=\sum_{k\in\mathcal K}N_k\sigma_k^2\ \ge\ N_1\sigma_1^2=\frac1{4d}.
\]
For the upper bound, the trace of the kernel operator is the expectation of its diagonal, and $(r)_+^2\le r^2$:
\[
\operatorname{tr}\Psi=\mathbb E_x\,K(x,x)=\mathbb E\,(\langle u,x\rangle)_+^2\ \le\ \mathbb E\,\langle u,x\rangle^2=\frac1d .
\]

\emph{Item~(iv), upper bound.} Since $\frac{\nu\sigma_k^2}{(\sigma_k^2+\nu)^2}\le\frac{\sigma_k^2}{\sigma_k^2+\nu}$ for every $k$, the fixed-point equation \eqref{eq:main-fixed-point} gives
\[
\nu\operatorname{tr}[\Psi(\Psi+\nu I)^{-2}]
=\sum_{k\in\mathcal K}N_k\frac{\nu\sigma_k^2}{(\sigma_k^2+\nu)^2}
\ \le\ \sum_{k\in\mathcal K}N_k\frac{\sigma_k^2}{\sigma_k^2+\nu}=m .
\]

\emph{Item~(iv), lower bound.} Let $J$ be the largest degree in $\mathcal K$ with $J\le\alpha$, and let $D_J=\sum_{k\le J}N_k\asymp d^J$ be the number of eigenvalues of $\Psi$ of degree at most $J$. The first degree in $\mathcal K$ above $J$ exceeds $\alpha$, and $\sigma_k^2$ decreases in $k$ (Claim~\ref{eq:lemmasigmadecreasing}), so by Lemma~\ref{lem:nu-relu}
\[
\sup_{k>J}\frac{\sigma_k^2}{\nu}\longrightarrow0,
\qquad\text{hence}\qquad
\inf_{k>J}\frac{\nu}{\sigma_k^2+\nu}\ \ge\ 1-o(1).
\]
We bound the trace from below by retaining the degrees above $J$ and dropping the degrees up to $J$, which is valid because every summand is nonnegative:
\[
\nu\operatorname{tr}[\Psi(\Psi+\nu I)^{-2}]
=\sum_{k\in\mathcal K}N_k\,\frac{\sigma_k^2}{\sigma_k^2+\nu}\,\frac{\nu}{\sigma_k^2+\nu}
\ \ge\ (1-o(1))\sum_{k>J}N_k\frac{\sigma_k^2}{\sigma_k^2+\nu}.
\]
By the fixed-point equation \eqref{eq:main-fixed-point}, the last sum is $m$ minus the sum over $k\le J$, and each summand there is at most $N_k$:
\[
\sum_{k>J}N_k\frac{\sigma_k^2}{\sigma_k^2+\nu}\ \ge\ m-D_J .
\]
Finally, $D_J/m\to0$ if $J<\alpha$, since $D_J\asymp d^J$, and $D_J/m\to1/J!\le\frac12$ if $J=\alpha\ge2$, by \eqref{eq:Nkasymptotics}. In both cases $m-D_J\gtrsim m$, which completes the proof.
\end{proof}

The pure-target estimates of Appendices~\ref{app:reluproofs} and~\ref{app:generalsettings} bound $\overline H$ at the harmonic stopping times $T_\gamma$. The all-time statements below need the same bound at an arbitrary multiple of a harmonic time scale, which is the following lemma. Its proof is the upper bound on term 4)(2) of Appendix~\ref{subapp:generaltime}, with the stopping time left free.

\begin{lemma}[Variance filter bound]\label{lem:mixed-filter}
Fix $\alpha>1$, $m=d^\alpha$, and a harmonic degree $j\in\mathcal K$ with $j<\alpha$. There is a constant $C_{\alpha,j}$ such that for all $x\ge0$ and all large $d$,
\begin{equation}\label{eq:mixed-filter}
\overline H(x/\sigma_j^2)\le C_{\alpha,j}\,(1+x^2)\,d^{2j-2\alpha}.
\end{equation}
\end{lemma}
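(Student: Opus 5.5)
The plan is to bound the numerator and the denominator of $\overline H$ separately. For the denominator, Lemma~\ref{lem:mixed-nu-consequences}(iv) already gives $\operatorname{tr}[\Psi(\Psi+\nu I)^{-2}]\asymp m/\nu\asymp d^{2\alpha+1}$. It therefore suffices to show
\[
\operatorname{tr}[\Psi(\Psi+\nu I)^{-2}(I-e^{-\Psi t})^2]\ \lesssim\ (1+x^2)\,d^{2j+1},\qquad t=x/\sigma_j^2 .
\]
The ratio is then $\lesssim(1+x^2)d^{2j-2\alpha}$, as claimed. This is the bound on term 4)(2) in Appendix~\ref{subapp:generaltime} with $\log(d^{2\alpha-k_0})$ replaced by the free factor $x$.

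I would split the numerator, $\sum_{k\in\mathcal K}N_k\sigma_k^2(1-e^{-\sigma_k^2t})^2/(\sigma_k^2+\nu)^2$, at the degree $j$.

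\emph{Degrees $k\le j$.} Here I use $(1-e^{-\cdot})^2\le1$ and $(\sigma_k^2+\nu)^2\ge\sigma_k^4$. The summand is then at most $N_k/\sigma_k^2\asymp d^{2k+1}$. Since there are finitely many such fixed degrees, their total is $\lesssim d^{2j+1}$.

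\emph{Degrees $k>j$.} Here I use $1-e^{-y}\le y$, which gives $(1-e^{-\sigma_k^2t})^2\le x^2\sigma_k^4/\sigma_j^4$. The summand is then bounded by
\[
\frac{x^2}{\sigma_j^4}\,N_k\sigma_k^2\,\frac{\sigma_k^4}{(\sigma_k^2+\nu)^2}\ \le\ \frac{x^2}{\sigma_j^4}\,N_k\sigma_k^2 .
\]
Summing over $k>j$ and using $\operatorname{tr}\Psi\lesssim d^{-1}$ from Lemma~\ref{lem:mixed-nu-consequences}(iii) gives $\lesssim x^2 d^{2j+2}d^{-1}=x^2d^{2j+1}$, because $\sigma_j^{-4}\asymp d^{2j+2}$. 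This crude bound already has the right order.

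The two pieces together give the numerator bound $\lesssim(1+x^2)d^{2j+1}$. The constants depend only on the fixed degrees up to $j$, on the constant in Lemma~\ref{lem:nu-relu}, and on $\alpha$ through the denominator estimate, so $C_{\alpha,j}$ is uniform in $x$ and in large $d$.

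I expect the main obstacle to be the high-degree tail. The earlier appendix handles it through delicate Stirling estimates for $k\sim d$ and $k>d$. The plan above sidesteps these by bounding the tail with the total trace $\operatorname{tr}\Psi$, and one should check that this does not lose a power of $d$. It does not: the resulting scale $x^2\sigma_j^{-4}d^{-1}$ matches the degree-$j$ contribution itself. If a sharper constant were needed, I would instead keep the dominant term at $k=j^+$ and use Claim~\ref{eq:lemmasigmadecreasing} to sum the geometric tail.
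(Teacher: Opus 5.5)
Your proposal is correct and follows the paper's proof essentially step for step. You split the numerator trace at degree $j$, bound the filter by $1$ below $j$ and by $x\sigma_k^2/\sigma_j^2$ above it, control the high-degree tail by $\operatorname{tr}\Psi\lesssim d^{-1}$ instead of Stirling estimates, and take the denominator from Lemma~\ref{lem:mixed-nu-consequences}(iv), exactly as the paper does. The only cosmetic difference is that the paper first multiplies numerator and denominator by $\nu$, so its pieces read $d^{2j-\alpha}$ over $d^{\alpha}$ rather than your $d^{2j+1}$ over $d^{2\alpha+1}$.
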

\begin{proof}
Multiply numerator and denominator of $\overline H$ by $\nu$. By Lemma~\ref{lem:mixed-nu-consequences} the denominator $\nu\operatorname{tr}[\Psi(\Psi+\nu I)^{-2}]$ is of order $m=d^\alpha$, and the numerator is
\[
\sum_{k\in\mathcal K}N_k\,\frac{\nu\sigma_k^2}{(\sigma_k^2+\nu)^2}\,(1-e^{-\sigma_k^2t})^2,\qquad t=x/\sigma_j^2 .
\]
For $k\le j<\alpha$, bound the filter by $1$ and $\frac{\nu\sigma_k^2}{(\sigma_k^2+\nu)^2}\le\frac{\nu}{\sigma_k^2+\nu}\asymp d^{k-\alpha}$ (Lemma~\ref{lem:mixed-nu-consequences}), so with $N_k\asymp d^k$ these degrees contribute at most a constant times $d^{2j-\alpha}$. For $k>j$, use $1-e^{-\sigma_k^2t}\le\sigma_k^2t=x\sigma_k^2/\sigma_j^2$ and $\frac{\nu\sigma_k^2}{(\sigma_k^2+\nu)^2}\sigma_k^4\le\nu\sigma_k^2$, so these degrees contribute at most
\[
\frac{x^2}{\sigma_j^4}\,\nu\sum_kN_k\sigma_k^2=\frac{x^2}{\sigma_j^4}\,\nu\operatorname{tr}\Psi\asymp x^2\,d^{2j+2}\,d^{-\alpha-1}\,d^{-1}=x^2d^{2j-\alpha},
\]
by $\sigma_j^2\asymp d^{-j-1}$, $\nu\asymp d^{-\alpha-1}$ and $\operatorname{tr}\Psi\asymp d^{-1}$ from Lemmas~\ref{lem:nu-relu} and~\ref{lem:mixed-nu-consequences}. Dividing by the denominator gives \eqref{eq:mixed-filter}.
\end{proof}

\subsection{A small component above the teacher's capacity: no polynomial improvement}\label{subapp:mixed-capacity}

\begin{prop}[No polynomial improvement at any stopping time]\label{prop:mixed-capacity}
Let $m=d^3$ and let $f^\star_d$ be the two-mode target \eqref{eq:two-mode-target} with $w_d=c/d$ for a constant $c>0$. Then
\begin{equation}\label{eq:mixed-capacity-orders}
\overline{\mathcal L}_{\rm TE}\asymp d^{-1},\qquad
\inf_{t\ge0}\overline{\mathcal L}_{\rm ST}(t)\asymp d^{-1},
\end{equation}
and the upper bound is attained at $T_2=4\log(d)/\sigma_2^2$. In particular, both errors vanish, but for every $\eta>0$ the best student error is not $O(d^{-\eta}\,\overline{\mathcal L}_{\rm TE})$: the student improves on the teacher by at most a constant factor, whatever the stopping time.
\end{prop}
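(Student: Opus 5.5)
The plan is to work entirely through the additivity formula \eqref{eq:mixed-additivity}. The energies are $\|f^\star_2\|^2=1-w_d$ and $\|f^\star_4\|^2=w_d=c/d$. With $\alpha=3$, Lemma~\ref{lem:mixed-nu-consequences}(i)--(ii) gives the two factors $\nu/(\sigma_2^2+\nu)\asymp d^{-1}$ and $\nu/(\sigma_4^2+\nu)\to1$. So the teacher error is $(1-w_d)\,\Theta(d^{-1})+(c/d)(1-o(1))\asymp d^{-1}$, which is the first claim.

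\textbf{Lower bound.} For every $t\ge0$, drop the degree-two contribution and the variance, keeping only the bias of the degree-four unit error in \eqref{eq:mixed-unit-de}:
\[
\overline\ell_4(t)\ \ge\ \Bigl(\frac{\nu+\sigma_4^2e^{-\sigma_4^2t}}{\sigma_4^2+\nu}\Bigr)^2\ \ge\ \Bigl(\frac{\nu}{\sigma_4^2+\nu}\Bigr)^2\ \to\ 1 .
\]
This gives $\inf_t\overline{\mathcal L}_{\rm ST}(t)\ge w_d\,(1-o(1))\gtrsim d^{-1}$ uniformly in $t$. This step is immediate.

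\textbf{Upper bound at $T_2=4\log(d)/\sigma_2^2$.} Since $\overline\ell_4\le 5/4$ always, the degree-four part contributes at most $\tfrac54 c/d$, so it suffices to show $\overline\ell_2(T_2)\lesssim d^{-1}$. Its bias is
\[
\frac{\nu+\sigma_2^2e^{-4\log d}}{\sigma_2^2+\nu}\ \le\ \frac{\nu}{\sigma_2^2+\nu}+d^{-4}\ \asymp\ d^{-1},
\]
so its square is $\asymp d^{-2}$. Its variance is $\frac{\nu\sigma_2^2}{(\sigma_2^2+\nu)^2}\overline H(T_2)$. The prefactor is $\le\nu/(\sigma_2^2+\nu)\asymp d^{-1}$. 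By Lemma~\ref{lem:mixed-filter} with $j=2<\alpha=3$ and $x=4\log d$, we have $\overline H(T_2)\lesssim(\log d)^2d^{-2}$. Hence the variance is $O(d^{-3}\log^2 d)$ and $\overline\ell_2(T_2)\lesssim d^{-2}$. Summing gives $\overline{\mathcal L}_{\rm ST}(T_2)\lesssim d^{-1}$.

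\textbf{Main obstacle.} The only nontrivial input is the variance filter bound, which is already packaged in Lemma~\ref{lem:mixed-filter}; one needs only to check that the $\log^2 d$ factor is harmless. The final claim then follows: the ratio $\inf_t\overline{\mathcal L}_{\rm ST}/\overline{\mathcal L}_{\rm TE}$ is bounded below by a positive constant, namely about $c/(c+C)$, where $C$ is the constant in the degree-two teacher contribution. So it is not $O(d^{-\eta})$ for any $\eta>0$.
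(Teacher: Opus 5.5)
Your proposal is correct and follows the paper's proof essentially step by step. The teacher order comes from the additivity formula and Lemma~\ref{lem:mixed-nu-consequences}, and the all-time floor from the degree-four bias with the exponential dropped. The upper bound at $T_2$ combines the degree-two bias of order $d^{-2}$, the variance filter bound of Lemma~\ref{lem:mixed-filter} with $j=2$, and the crude estimate $\overline\ell_4\le\frac54$.
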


\begin{proof}
Here $\alpha=3$, and Lemmas~\ref{lem:nu-relu} and~\ref{lem:mixed-nu-consequences} give $\nu\asymp d^{-4}$, $\frac{\nu}{\sigma_2^2+\nu}\asymp d^{-1}$, and, since the degree-four eigenvalue lies below the capacity edge, $\frac{\nu}{\sigma_4^2+\nu}\to1$. By \eqref{eq:mixed-additivity}, $\overline{\mathcal L}_{\rm TE}=(1-w_d)\frac{\nu}{\sigma_2^2+\nu}+w_d\frac{\nu}{\sigma_4^2+\nu}$, and both summands are of order $d^{-1}$.

\emph{Lower bound.} We bound $\overline{\mathcal L}_{\rm ST}(t)$ from below by retaining only the degree-four bias, without the exponential inside it, and dropping every other term of \eqref{eq:mixed-additivity}--\eqref{eq:mixed-unit-de}; this is valid because all of these terms are nonnegative, and it is the first inequality of Theorem~\ref{theo:quadratic-floor} on one block. For every $t\ge0$,
\begin{equation}\label{eq:mixed-capacity-floor}
\overline{\mathcal L}_{\rm ST}(t)\ \ge\ w_d\,\overline\ell_4(t)\ \ge\ w_d\Bigl(\frac{\nu}{\sigma_4^2+\nu}\Bigr)^2\ \asymp\ d^{-1}.
\end{equation}
This is the energy of the degree-four mode, which the teacher does not learn and which the student therefore never learns either.

\emph{Upper bound.} At $t=T_2$, that is $x=4\log d$ on the degree-two time scale, the degree-two bias in \eqref{eq:mixed-unit-de} is $\bigl(\frac{\nu}{\sigma_2^2+\nu}+O(d^{-4})\bigr)^2\asymp d^{-2}$, and by Lemma~\ref{lem:mixed-filter} with $j=2$ the degree-two variance is at most $\frac{\nu}{\sigma_2^2+\nu}\overline H(T_2)\lesssim d^{-1}(\log d)^2d^{-2}$. Hence $\overline\ell_2(T_2)\asymp d^{-2}$, and with $\overline\ell_4\le\frac54$,
\[
\overline{\mathcal L}_{\rm ST}(T_2)\le\overline\ell_2(T_2)+\tfrac54w_d\lesssim d^{-1}.
\]
Together with \eqref{eq:mixed-capacity-floor} this proves \eqref{eq:mixed-capacity-orders}.
\end{proof}

In the classification of Appendix~\ref{subapp:gentimetarget} this target has $k_0=4>\alpha=3$, and the stop $T_2$ falls in the first row of the table there: the held-out energy $\mathcal E_{>2}=w_d$ is of the same order as $\overline{\mathcal L}_{\rm TE}$, so the condition \eqref{eq:wts-condition} fails. The proposition adds that no other stopping time does better, because the obstruction \eqref{eq:mixed-capacity-floor} is a floor at every time.

\begin{figure}[ht]
\centering
\includegraphics[width=\linewidth]{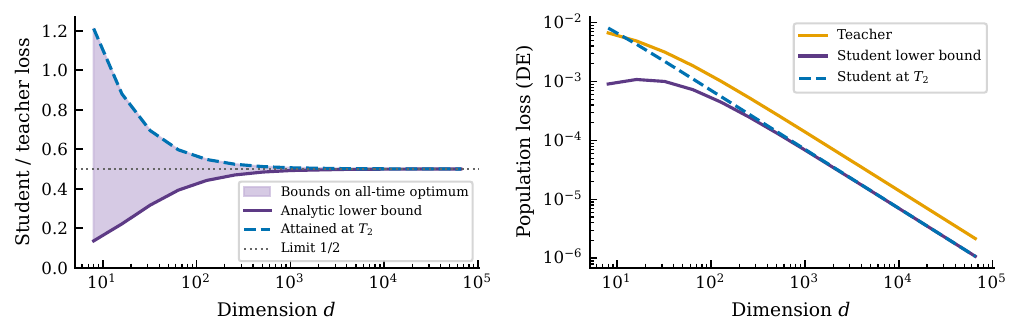}
\caption{Proposition~\ref{prop:mixed-capacity} for the deterministic equivalent. The floor $w_d\bigl(\frac{\nu}{\sigma_4^2+\nu}\bigr)^2/\overline{\mathcal L}_{\rm TE}$ of \eqref{eq:mixed-capacity-floor} and the attained ratio $\overline{\mathcal L}_{\rm ST}(T_2)/\overline{\mathcal L}_{\rm TE}$ bracket the all-time optimum. The figure uses $c=(\pi-3)/2$, for which both tend to $1/2$ (Remark~\ref{rem:mixed-constants}). The companion panel shows that the teacher error and the two student bounds all vanish at rate $d^{-1}$. These are deterministic calculations, not random-teacher observations.}
\label{fig:capacity-obstruction}
\end{figure}

\subsection{Two modes below capacity with incompatible stopping times: no quadratic improvement}\label{subapp:mixed-timing}

\begin{prop}[No quadratic improvement at any stopping time]\label{prop:mixed-timing}
Let $m=d^5$ and let $f^\star_d$ be the two-mode target \eqref{eq:two-mode-target} with $w_d=Bd^{-5}$ for a constant $B>0$. Then
\begin{equation}\label{eq:mixed-timing-orders}
\overline{\mathcal L}_{\rm TE}\asymp d^{-3},\qquad
\inf_{t\ge0}\overline{\mathcal L}_{\rm ST}(t)\asymp d^{-5},
\end{equation}
and the upper bound is attained at $t=x/\sigma_4^2$ for every fixed $x>0$, with constants depending on $x$. Consequently
\begin{equation}\label{eq:mixed-timing-ratios}
\frac{\inf_t\overline{\mathcal L}_{\rm ST}(t)}{\overline{\mathcal L}_{\rm TE}}\asymp d^{-2}\longrightarrow0,
\qquad
\frac{\inf_t\overline{\mathcal L}_{\rm ST}(t)}{\overline{\mathcal L}_{\rm TE}^{\,2}}\asymp d\longrightarrow\infty :
\end{equation}
weak-to-strong generalization occurs, but no stopping time gives quadratic improvement. The best improvement exponent is $5/3$.
\end{prop}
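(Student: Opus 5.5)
The plan is to work entirely with the block decomposition \eqref{eq:mixed-additivity}--\eqref{eq:mixed-unit-de}. This gives $\overline{\mathcal L}_{\rm ST}(t)=(1-w_d)\overline\ell_2(t)+w_d\overline\ell_4(t)$, with every summand of each $\overline\ell_k$ nonnegative. Here $\alpha=5$, so Lemma~\ref{lem:nu-relu} gives $\nu\asymp d^{-6}$. Lemma~\ref{lem:mixed-nu-consequences}(i) gives $\frac{\nu}{\sigma_2^2+\nu}\asymp d^{-3}$ and $\frac{\nu}{\sigma_4^2+\nu}\asymp d^{-1}$. The teacher error is therefore $(1-w_d)d^{-3}+Bd^{-5}d^{-1}\asymp d^{-3}$. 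Once $\inf_t\overline{\mathcal L}_{\rm ST}(t)\asymp d^{-5}$ is established, the ratios \eqref{eq:mixed-timing-ratios} and the exponent $5/3$ (from $d^{-5}=(d^{-3})^{5/3}$) follow at once.

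\emph{Upper bound at $t=x/\sigma_4^2$.} On the degree-two block, $\sigma_2^2t=x\sigma_2^2/\sigma_4^2\asymp xd^2\to\infty$. The exponential in the bias is therefore superpolynomially small, and the degree-two bias is $\asymp(\nu/(\sigma_2^2+\nu))^2\asymp d^{-6}$. The degree-two variance is at most $\frac{\nu}{\sigma_2^2+\nu}\overline H(t)\lesssim d^{-3}\overline H(x/\sigma_4^2)$. Lemma~\ref{lem:mixed-filter} applies with $j=4<\alpha=5$ and gives $\overline H\le C(1+x^2)d^{8-10}$, so this variance is $\lesssim(1+x^2)d^{-5}$. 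Finally, $w_d\overline\ell_4\le\frac54Bd^{-5}$. Summing the three contributions gives $\overline{\mathcal L}_{\rm ST}(x/\sigma_4^2)\lesssim_x d^{-5}$.

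\emph{Lower bound, uniform in $t$.} I will split according to whether $\sigma_4^2t\le1$ or $\sigma_4^2t\ge1$.

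In the first case I keep only the degree-four bias. Since $\nu\ge0$,
\[
w_d\overline\ell_4(t)\ge w_d\Bigl(\frac{\sigma_4^2e^{-\sigma_4^2t}}{\sigma_4^2+\nu}\Bigr)^2\ge w_de^{-2}(1-o(1))\asymp d^{-5}.
\]
This is the small mode not yet learned.

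In the second case I keep only the degree-two variance, which is $(1-w_d)\frac{\nu\sigma_2^2}{(\sigma_2^2+\nu)^2}\overline H(t)\asymp d^{-3}\overline H(t)$. The key step is then a matching lower bound $\overline H(t)\gtrsim d^{-2}$ whenever $\sigma_4^2t\ge1$.

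For this bound I multiply numerator and denominator of $\overline H$ by $\nu$. By Lemma~\ref{lem:mixed-nu-consequences}(iv), the denominator is $\asymp m=d^5$. In the numerator I retain only the degree-four block. Using $\nu\ll\sigma_4^2$, it contributes
\[
N_4\frac{\nu\sigma_4^2}{(\sigma_4^2+\nu)^2}(1-e^{-\sigma_4^2t})^2\gtrsim d^4\cdot\frac{\nu}{\sigma_4^2}(1-e^{-1})^2\asymp d^4\cdot d^{-1}=d^3.
\]
Hence $\overline H(t)\gtrsim d^{-2}$, and the degree-two variance is $\gtrsim d^{-5}$. The mechanism is the one described in Section~\ref{sec:mixed-main}: once the student reaches the degree-four time scale, it copies the teacher's degree-four fluctuation on the dominant mode.

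The step I expect to need the most care is this uniform lower bound on $\overline H$. One has to check that the degree-four eigenvalue is well above $\nu$, so that $\frac{\nu\sigma_4^2}{(\sigma_4^2+\nu)^2}\asymp\nu/\sigma_4^2$. It also depends on the denominator estimate of Lemma~\ref{lem:mixed-nu-consequences}(iv) in the case $J=4<\alpha$. Everything else is bookkeeping of orders already established in Appendix~\ref{app:reluproofs}.
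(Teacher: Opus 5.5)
Your proof is correct and is essentially the paper's argument. You use the same upper bound via Lemma~\ref{lem:mixed-filter} with $j=4$, and a lower bound from the same two retained terms: the degree-four bias and the degree-four summand of the degree-two variance. For the latter you need only the trivial bound $\nu\operatorname{tr}[\Psi(\Psi+\nu I)^{-2}]\le m$, not the harder half of Lemma~\ref{lem:mixed-nu-consequences}(iv).

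The one difference is that you split at $\sigma_4^2t=1$. The paper instead keeps both terms at every $t$ and minimizes $w_dy^2+A_d(1-y)^2$ over $y=e^{-\sigma_4^2t}$; this gives the same order, plus the explicit envelope $w_dA_d/(w_d+A_d)$ used for the constants in Remark~\ref{rem:mixed-constants}.
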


\begin{proof}
Here $\alpha=5$, and Lemmas~\ref{lem:nu-relu} and~\ref{lem:mixed-nu-consequences} give
\[
\nu\asymp d^{-6},\qquad
\frac{\nu}{\sigma_2^2+\nu}\asymp d^{-3},\qquad
\frac{\nu}{\sigma_4^2+\nu}\asymp d^{-1},\qquad
\nu\operatorname{tr}[\Psi(\Psi+\nu I)^{-2}]\asymp d^{5}.
\]
By \eqref{eq:mixed-additivity}, $\overline{\mathcal L}_{\rm TE}=(1-w_d)\frac{\nu}{\sigma_2^2+\nu}+w_d\frac{\nu}{\sigma_4^2+\nu}\asymp d^{-3}$, the second summand being of order $d^{-6}$.

\emph{Lower bound.} Fix $t\ge0$ and write $x=\sigma_4^2t$. We bound $\overline{\mathcal L}_{\rm ST}(t)$ from below by retaining two terms of \eqref{eq:mixed-additivity}--\eqref{eq:mixed-unit-de} and dropping all the others, which is valid because every term is nonnegative. The first retained term is the degree-four bias, in which $\frac{\nu+\sigma_4^2e^{-x}}{\sigma_4^2+\nu}\ge\frac{(\nu+\sigma_4^2)e^{-x}}{\sigma_4^2+\nu}=e^{-x}$, so
\[
w_d\,\overline\ell_4(t)\ \ge\ w_d\,e^{-2x}.
\]
The second retained term is the degree-two variance, which we bound from below by retaining only the degree-four summand of the numerator trace of $\overline H$ and dropping the other summands, again valid because they are nonnegative:
\[
(1-w_d)\,\overline\ell_2(t)\ \ge\ (1-w_d)\,\frac{\nu\sigma_2^2}{(\sigma_2^2+\nu)^2}\,
\frac{N_4\,\frac{\nu\sigma_4^2}{(\sigma_4^2+\nu)^2}}{\nu\operatorname{tr}[\Psi(\Psi+\nu I)^{-2}]}\,(1-e^{-x})^2
=:A_d\,(1-e^{-x})^2,
\]
where, by the estimates above and $N_4\asymp d^4$,
\[
A_d\asymp d^{-3}\cdot\frac{d^{4}\cdot d^{-1}}{d^{5}}=d^{-5}.
\]
Combining the two retained terms, for every $t\ge0$,
\begin{equation}\label{eq:mixed-timing-envelope}
\overline{\mathcal L}_{\rm ST}(t)\ \ge\ w_d\,e^{-2x}+A_d\,(1-e^{-x})^2
\ \ge\ \min_{0\le y\le1}\bigl[w_dy^2+A_d(1-y)^2\bigr]
=\frac{w_dA_d}{w_d+A_d}\ \asymp\ d^{-5},
\end{equation}
since $w_d$ and $A_d$ are both of order $d^{-5}$. The minimum over $y=e^{-x}$ is computed by completing the square.

\emph{Upper bound.} Let $t=x/\sigma_4^2$ with $x>0$ fixed. The degree-four component contributes at most $\frac54w_d\lesssim d^{-5}$. For the degree-two component, $\sigma_2^2t=x\sigma_2^2/\sigma_4^2\asymp xd^2$, so the exponential in its bias is negligible and the bias is $\asymp\bigl(\frac{\nu}{\sigma_2^2+\nu}\bigr)^2\asymp d^{-6}$; its variance is at most $\frac{\nu}{\sigma_2^2+\nu}\overline H(t)\lesssim d^{-3}(1+x^2)d^{-2}$ by Lemma~\ref{lem:mixed-filter} with $j=4$. Hence $\overline{\mathcal L}_{\rm ST}(x/\sigma_4^2)\lesssim d^{-5}$, which with \eqref{eq:mixed-timing-envelope} proves \eqref{eq:mixed-timing-orders}; \eqref{eq:mixed-timing-ratios} follows by division.
\end{proof}

Each pure mode separately admits quadratic improvement under its own stopping rule (Theorem~\ref{theo:phases} with $k_0=2$ or $k_0=4$ and $\alpha=5$). The mixture cannot: learning the degree-four signal requires $\sigma_4^2t$ of order one, and the same filter $1-e^{-\sigma_4^2t}$ admits the degree-four error that the teacher makes on the dominant degree-two mode \eqref{eq:mixed-timing-envelope}. The harmonic stop $T_4$ of Section~\ref{sec:mixed-main} also attains the order $d^{-5}$: this is the guarantee \eqref{eq:mixed-guarantee} for $\gamma=4$, whose upper bound $d^{k_0-\alpha}\overline{\mathcal L}_{\rm TE}\asymp d^{-4}$ is not attained by this target because its degree-four energy is small.

\begin{obs}[Leading constants used in the figures]\label{rem:mixed-constants}
The constants $c$ and $B$ play no role in Propositions~\ref{prop:mixed-capacity} and~\ref{prop:mixed-timing}, and the propositions are stated up to constants. The figures of this appendix, Figure~\ref{fig:illustrative-time-relative} and Figure~\ref{fig:mixed-target-time-curves} use the particular values $c=(\pi-3)/2$ and $B=C^2/24$ with $C=(12\pi-37)/24$, for which the two competing error terms in each example are equal at leading order, and they are normalized by the leading constants listed in Table~\ref{tab:mixed-constants}. These constants are obtained by carrying the constants through the proofs above, using the exact expressions for $N_k$ and $\sigma_k^2$ of Appendix~\ref{subapp:asympNksigmak}, a routine calculation that AI tools can perform; we record them here as claimed, for the reproducibility of the figures, and do not prove them. The numerical evaluations of Appendix~\ref{app:numerical-scheme} agree with them.
\end{obs}

\begin{table}[ht]
\centering\small
\renewcommand{\arraystretch}{1.4}
\begin{tabular}{@{}llll@{}}
\toprule
Target, width & $\overline{\mathcal L}_{\rm TE}$ & $\inf_{t\ge0}\overline{\mathcal L}_{\rm ST}(t)$ & Shown in the figures\\
\midrule
Pure $Y_{2,d}$, $m=d^3$ & $\sim c/d$ & $\overline{\mathcal L}_{\rm ST}(T_2)\sim c^2/d^2$ & Figure~\ref{fig:illustrative-time-relative}: ratio $\to1$\\
Proposition~\ref{prop:mixed-capacity}, $m=d^3$ & $\sim2c/d$ & $\sim c/d$, attained at $T_2$ & Figure~\ref{fig:capacity-obstruction}: both ratios $\to1/2$\\
Proposition~\ref{prop:mixed-timing}, $m=d^5$ & $\sim Cd^{-3}$ & $\sim\dfrac{C^2}{48}d^{-5}$, attained at $\sigma_4^2t=\log2$ & \begin{tabular}[t]{@{}l@{}}Figures~\ref{fig:mixed-target-time-curves}, \ref{fig:mixed-target-all-time-bounds}:\\ $\inf_t\overline{\mathcal L}_{\rm ST}(t)/\overline{\mathcal L}_{\rm TE}\sim C/(48d^2)$,\\ $\inf_t\overline{\mathcal L}_{\rm ST}(t)/\overline{\mathcal L}_{\rm TE}^{\,2}\sim d/48$,\\ $\inf_t\overline{\mathcal L}_{\rm ST}(t)/\overline{\mathcal L}_{\rm TE}^{5/3}\to C^{1/3}/48$\end{tabular}\\
\bottomrule
\end{tabular}
\caption{Claimed leading constants behind the figures, with $c=(\pi-3)/2$, $C=(12\pi-37)/24$ and $B=C^2/24$. The ratio in the last column is $\overline{\mathcal L}_{\rm ST}(T_2)/\overline{\mathcal L}_{\rm TE}^{\,2}$ in the first row, the two student-to-teacher ratios of Figure~\ref{fig:capacity-obstruction} in the second, and the three ratios named in the third.}
\label{tab:mixed-constants}
\end{table}

\begin{figure}[p]
\centering
\includegraphics[width=.8\textwidth]{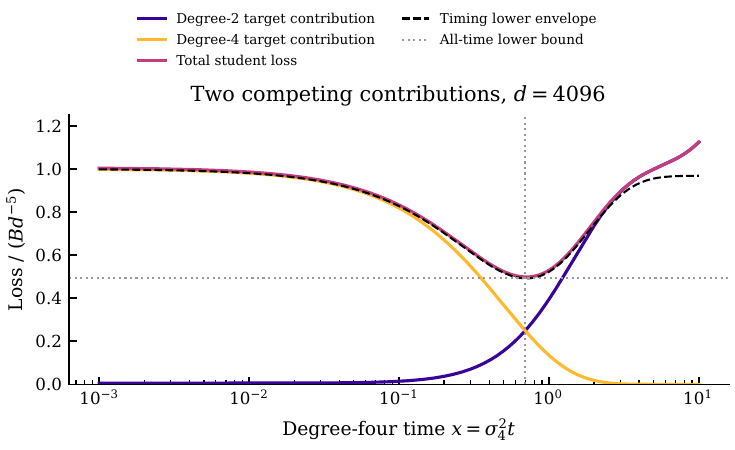}
\caption{The timing conflict of Proposition~\ref{prop:mixed-timing} at $d=4096$, with $B=C^2/24$ as in Remark~\ref{rem:mixed-constants} and loss divided by $Bd^{-5}$. The degree-four signal contribution decreases while the error inherited from the dominant degree-two mode increases. Their sum lies above the dashed envelope of \eqref{eq:mixed-timing-envelope}. Dotted lines mark its all-time lower bound and the asymptotically optimal time $\sigma_4^2t=\log2$. The plot focuses on the degree-four learning window; all curves are theoretical.}
\label{fig:mixed-target-timing-tradeoff}
\end{figure}

\begin{figure}[p]
\centering
\includegraphics[width=\textwidth]{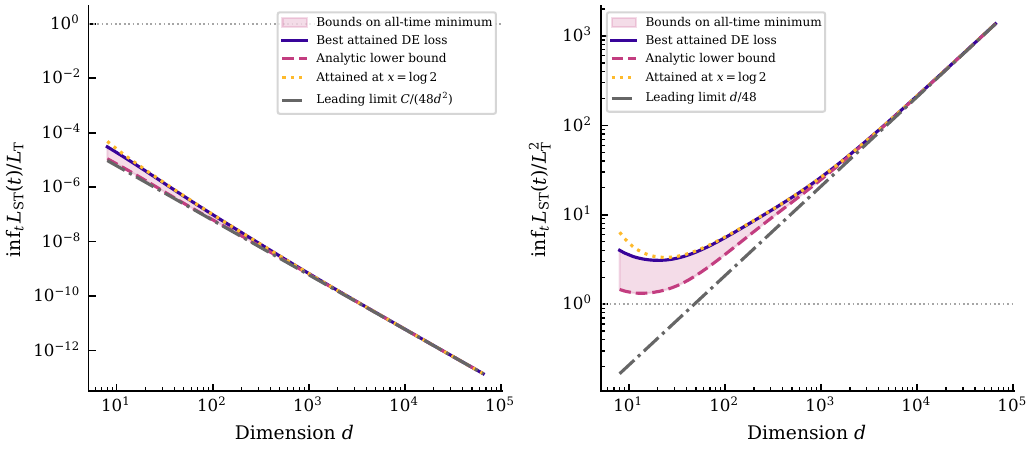}
\caption{Bounds on the all-time minimum in Proposition~\ref{prop:mixed-timing}, with $B=C^2/24$ as in Remark~\ref{rem:mixed-constants}. The shaded band lies between the analytic lower bound \eqref{eq:mixed-timing-envelope} and the best numerically attained loss. The explicit stop $\sigma_4^2t=\log2$ is another upper bound. The leading ratios $C/(48d^2)$ and $d/48$ of Remark~\ref{rem:mixed-constants} are shown separately: WTS occurs, but no stopping sequence achieves quadratic WTS. This is a conditional DE calculation.}
\label{fig:mixed-target-all-time-bounds}
\end{figure}

\section{Other activations}\label{app:general-activation}

The proofs of Appendix~\ref{app:reluproofs} compute the ReLU coefficients $\sigma_k$ exactly, but the conclusion of Theorem~\ref{theo:weaktostrongrelu} uses only a few coarse properties of the spectrum: the orders of $N_k$ and $\sigma_k^2$ at finitely many fixed degrees, a bound on the trace of $\Psi$, and the presence of the target degree in the kernel. In this appendix we show that these properties hold for a broad class of activations, and we prove the analogue of Theorem~\ref{theo:weaktostrongrelu} for them (Theorem~\ref{theo:general-activation}). The tanh and logistic sigmoid activations are worked out as examples. As elsewhere, all statements concern the deterministic equivalents \eqref{eq:main-teacher} and \eqref{eq:main-two-term}; the transfer to the random spherical model requires the analogue of Assumption~\ref{ass:de} for the activation in question.

\paragraph{Scaling.} For $u,x$ uniform on $\mathbb S^{d-1}$ the preactivation $\langle u,x\rangle$ is of order $d^{-1/2}$. ReLU is positively homogeneous, so this scale only multiplies the kernel by a constant: $(\langle u,x\rangle)_+=d^{-1/2}(\sqrt d\,\langle u,x\rangle)_+$. For a general activation the scale matters, and in this appendix we use the features $\sigma(\sqrt d\,\langle u,x\rangle)$, whose preactivations are approximately standard Gaussian. For ReLU this multiplies $\Psi$ by $d$. Under $\Psi\mapsto c\Psi$ the fixed point $\nu$ of \eqref{eq:main-fixed-point} becomes $c\nu$, the stopping time \eqref{eq:quadratic-stop} becomes $T/c$, and both $\overline{\mathcal L}_{\rm TE}$ and $\overline{\mathcal L}_{\rm ST}(T)$ are unchanged, so the rescaling has no effect on the results.

\paragraph{Harmonic coefficients.} Let $t=\langle u,x\rangle$ for independent uniform $u,x\in\mathbb S^{d-1}$, and let $P_k$ be the degree-$k$ Gegenbauer polynomial normalized by $P_k(1)=1$. These polynomials are orthogonal for the law of $t$, with $\mathbb E[P_k(t)^2]=1/N_k$, and the addition theorem states that
\[
\sum_{l=1}^{N_k}\phi_{k,l}(u)\phi_{k,l}(x)=N_kP_k(\langle u,x\rangle).
\]
Expanding $\sigma(\sqrt d\,t)$ in the $P_k$ and applying the addition theorem gives the Funk--Hecke expansion
\begin{equation}\label{eq:general-sigmak}
\sigma(\sqrt d\,\langle u,x\rangle)=\sum_{k\ge0}\sigma_k\sum_{l=1}^{N_k}\phi_{k,l}(u)\phi_{k,l}(x),
\qquad
\sigma_k=\mathbb E\bigl[\sigma(\sqrt d\,t)P_k(t)\bigr].
\end{equation}
Thus the feature $\sigma(\sqrt d\,\langle u,\cdot\rangle)$ has coordinates $\sigma_k\phi_{k,l}(u)$, and $\Psi$ has blocks $\sigma_k^2I_{N_k}$, as in \eqref{eq:relukernel}. For ReLU, $\sigma_k$ is $\sqrt d$ times the coefficient of Appendix~\ref{app:reluproofs}.

\paragraph{Hermite coefficients.} Let $z\sim\mathcal N(0,1)$ and let $\mathrm{He}_k$ be the probabilists' Hermite polynomials, so that the $\mathrm{He}_k/\sqrt{k!}$ are orthonormal for the law of $z$. Set
\begin{equation}\label{eq:hermite-coeff}
c_k=\frac{\mathbb E[\sigma(z)\mathrm{He}_k(z)]}{\sqrt{k!}},\qquad
\mathcal K_\sigma=\{k\ge0:\ c_k\neq0\}.
\end{equation}
Throughout, $\sigma$ is measurable with $|\sigma(r)|\le Ce^{C|r|}$ for some $C$. Since the Hermite polynomials are complete, $\mathcal K_\sigma$ is infinite unless $\sigma$ is a polynomial.

\begin{lemma}[Harmonic and Hermite coefficients]\label{lem:general-coefficients}
\leavevmode
\begin{enumerate}
\item[(i)] $\operatorname{tr}\Psi=\sum_{k\ge0}N_k\sigma_k^2=\mathbb E\bigl[\sigma(\sqrt d\,\langle u,x\rangle)^2\bigr]\to\mathbb E[\sigma(z)^2]$ as $d\to\infty$. In particular $\operatorname{tr}\Psi\lesssim1$.
\item[(ii)] For each fixed $k$, $N_k\sigma_k^2\to c_k^2$ as $d\to\infty$. In particular $\sigma_k^2\asymp d^{-k}$ for each fixed $k\in\mathcal K_\sigma$.
\item[(iii)] If $\sigma$ is odd, then $\sigma_k=0$ for every $d$ and every even $k$. If $\sigma$ is even, then $\sigma_k=0$ for every $d$ and every odd $k$.
\end{enumerate}
\end{lemma}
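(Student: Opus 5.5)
Part (i) starts from the Funk--Hecke expansion \eqref{eq:general-sigmak} and the orthonormality of the $\phi_{k,l}$. Setting $u=x$ in the kernel, or equivalently computing $\mathbb E_x\sigma(\sqrt d\langle u,x\rangle)^2$ via Parseval in $x$ for fixed $u$, gives $\sum_k N_k\sigma_k^2=\mathbb E[\sigma(\sqrt d\,t)^2]$. This uses $\mathbb E[P_k(t)^2]=1/N_k$ and $\sigma_k=\mathbb E[\sigma(\sqrt d t)P_k(t)]$, so that $\sigma(\sqrt d t)=\sum_k N_k\sigma_kP_k(t)$ in $L^2$. The law of $\sqrt d\,t$ has density proportional to $(1-s^2/d)^{(d-3)/2}$ on $|s|\le\sqrt d$, and this converges pointwise to the Gaussian density. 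The density is also dominated by $Ce^{-s^2/4}$ for large $d$, since $(1-s^2/d)^{(d-3)/2}\le e^{-s^2(d-3)/(2d)}$ and the normalizing constants converge. The growth bound $|\sigma(r)|\le Ce^{C|r|}$ then makes $\sigma^2e^{-s^2/4}$ integrable, so dominated convergence gives $\mathbb E[\sigma(\sqrt d t)^2]\to\mathbb E[\sigma(z)^2]$. The bound $\operatorname{tr}\Psi\lesssim1$ follows.

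For (ii), write $N_k\sigma_k^2=\bigl(\mathbb E[\sigma(\sqrt d t)\,\sqrt{N_k}P_k(t)]\bigr)^2$. The functions $\sqrt{N_k}P_k(s/\sqrt d)$ are orthonormal polynomials for the law of $s=\sqrt d t$. I will use the classical limit that, as $d\to\infty$, $\sqrt{N_k}P_k(s/\sqrt d)\to \mathrm{He}_k(s)/\sqrt{k!}$ coefficientwise. This holds because the moments of $\sqrt d\,t$ converge to Gaussian moments, and the orthonormal polynomials obtained by Gram--Schmidt on monomials depend continuously on finitely many moments; alternatively, one can use the explicit Gegenbauer formula together with $N_k\sim d^k/k!$ from \eqref{eq:Nkasymptotics}. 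Coefficientwise convergence gives a bound $|\sqrt{N_k}P_k(s/\sqrt d)|\le C_k(1+|s|^k)$ uniformly in $d$. Combining this with the same domination as in (i) yields $\mathbb E[\sigma(\sqrt d t)\sqrt{N_k}P_k(t)]\to\mathbb E[\sigma(z)\mathrm{He}_k(z)]/\sqrt{k!}=c_k$. Hence $N_k\sigma_k^2\to c_k^2$, and for $k\in\mathcal K_\sigma$ this together with $N_k\asymp d^k$ gives $\sigma_k^2\asymp d^{-k}$.

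Part (iii) is a parity argument. The law of $t$ is symmetric and $P_k$ has the parity of $k$. If $\sigma$ is odd and $k$ is even, $\sigma(\sqrt d t)P_k(t)$ is an odd function of $t$, so its expectation vanishes for every $d$; the even case is identical.

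The main obstacle is the uniform polynomial domination in (ii), i.e.\ making the Gegenbauer-to-Hermite limit quantitative enough to apply dominated convergence with an exponentially growing $\sigma$. I will handle it through moment convergence of $\sqrt d\,t$ and the continuity of Gram--Schmidt, which avoids delicate special-function estimates.
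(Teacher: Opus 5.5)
Your proposal is correct and follows essentially the same route as the paper: Parseval in the orthonormal basis $\sqrt{N_k}P_k$ plus dominated convergence under the envelope $Ce^{-s^2/4}$ of the density of $\sqrt d\,t$ for (i), continuity of Gram--Schmidt in the converging moments for (ii), and parity for (iii). Three small points are worth stating explicitly. First, the $L^2$ expansion is valid because polynomials are dense for the compactly supported law of $t$. Second, the Gram--Schmidt continuity step needs the limiting Gaussian moment matrix to be nonsingular. Third, $P_k$ and $\mathrm{He}_k$ both have positive leading coefficients, so the limit is $c_k$ rather than $-c_k$; this is harmless after squaring.
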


Item (ii) is standard \citep{ghorbani2021linearized}; we include the short proof.

\begin{proof}
The polynomials $h_k=\sqrt{N_k}\,P_k$ are orthonormal for the law of $t$, and by \eqref{eq:general-sigmak}
\begin{equation}\label{eq:general-Nksigmak}
N_k\sigma_k^2=\bigl(\mathbb E[\sigma(\sqrt d\,t)\,h_k(t)]\bigr)^2 .
\end{equation}
In words, $N_k\sigma_k^2$ is the squared $k$-th coefficient of $\sigma(\sqrt d\,\cdot)$ in the orthonormal polynomials of the law of $t$, while $c_k^2$ is the squared $k$-th coefficient of $\sigma$ in the orthonormal polynomials of the Gaussian.

\emph{(i)} The first equality holds because $\Psi$ has blocks $\sigma_k^2I_{N_k}$. The law of $t$ has compact support, so polynomials are dense in its $L^2$ space and $(h_k)_{k\ge0}$ is an orthonormal basis. The second equality is therefore Parseval's identity, by \eqref{eq:general-Nksigmak}. For the limit, $\sqrt d\,t$ has the density
\begin{equation}\label{eq:general-density}
\rho_d(z)=\kappa_d\Bigl(1-\frac{z^2}{d}\Bigr)^{\frac{d-3}{2}},\qquad |z|<\sqrt d,
\end{equation}
where $\kappa_d$ is a normalizing constant. Since $\log(1-y)\le-y$, we have $\rho_d(z)\le Ce^{-z^2/4}$ for $d\ge6$, and $\rho_d$ converges pointwise to the standard Gaussian density. By dominated convergence,
\[
\mathbb E\bigl[g(\sqrt d\,t)\bigr]\ \longrightarrow\ \mathbb E[g(z)]\qquad\text{for every $g$ with }|g(r)|\le Ce^{C|r|},
\]
and in particular for $g=\sigma^2$.

\emph{(ii)} In the variable $z=\sqrt d\,t$, the polynomial $h_k(z/\sqrt d)$ is the degree-$k$ orthonormal polynomial of $\rho_d$ with positive leading coefficient. It is obtained from $1,z,\dots,z^k$ by Gram--Schmidt, so its coefficients are continuous functions of the moments of $\rho_d$ of order at most $2k$, as long as the moment matrix is nonsingular. By the dominated convergence in (i), these moments converge to those of the Gaussian, whose moment matrix is positive definite. Hence the coefficients of $h_k(z/\sqrt d)$ converge to those of $\mathrm{He}_k(z)/\sqrt{k!}$, and a second application of dominated convergence gives
\[
\mathbb E\bigl[\sigma(\sqrt d\,t)\,h_k(t)\bigr]\ \longrightarrow\ \frac{\mathbb E[\sigma(z)\mathrm{He}_k(z)]}{\sqrt{k!}}=c_k .
\]
By \eqref{eq:general-Nksigmak}, $N_k\sigma_k^2\to c_k^2$. Since $N_k\asymp d^k$, this gives $\sigma_k^2\asymp d^{-k}$ when $c_k\ne0$.

\emph{(iii)} The law of $t$ is symmetric, so $P_k(-t)=(-1)^kP_k(t)$. If $\sigma$ is odd and $k$ is even, the function $t\mapsto\sigma(\sqrt d\,t)P_k(t)$ is odd, and $\sigma_k=0$ by \eqref{eq:general-sigmak}. The case of $\sigma$ even and $k$ odd is the same.
\end{proof}

\begin{theo}[Quadratic improvement for general activations]\label{theo:general-activation}
Let $\sigma$ be as above, let $f^\star$ be a unit spherical harmonic of fixed degree $k_0\in\mathcal K_\sigma$, $k_0\ge1$, and let $m=d^\alpha$ with fixed $\alpha>0$. Assume that $c_{k_\star}\neq0$ for some $k_\star>\alpha$; this holds for every $\alpha$ when $\sigma$ is not a polynomial. Then, with the stopping time $T$ of \eqref{eq:quadratic-stop},
\[
\overline{\mathcal L}_{\rm TE}\asymp d^{-(\alpha-k_0)_+},
\qquad
\overline{\mathcal L}_{\rm ST}(T)\asymp\overline{\mathcal L}_{\rm TE}^{\,2}.
\]
\end{theo}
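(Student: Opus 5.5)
The plan is to rerun the proof of Theorem~\ref{theo:weaktostrongrelu} with the ReLU spectral inputs replaced by Lemma~\ref{lem:general-coefficients}. We work with the rescaled features $\sigma(\sqrt d\,\langle u,\cdot\rangle)$, which leave both deterministic equivalents unchanged. Then $\Psi$ has blocks $\sigma_k^2 I_{N_k}$, with $N_k\asymp d^k$ and $\sigma_k^2\asymp d^{-k}$ for each fixed $k\in\mathcal K_\sigma$. Compared with ReLU, every eigenvalue carries one extra factor of $d$, and $\operatorname{tr}\Psi\lesssim1$ plays the role of $\operatorname{tr}\Psi\asymp d^{-1}$.

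\textbf{Step 1 (order of $\nu$).} We show $\nu\asymp d^{-\alpha}$ by copying the proof of Lemma~\ref{lem:nu-relu}.
\begin{itemize}
\item For the upper bound, $m=\sum_k N_k\sigma_k^2/(\sigma_k^2+\nu)\le\operatorname{tr}\Psi/\nu\lesssim 1/\nu$.
\item For the lower bound, keep only the degree $k_\star>\alpha$ with $c_{k_\star}\ne0$. This gives $d^\alpha\gtrsim d^{k_\star}d^{-k_\star}/(d^{-k_\star}+\nu)$, so $\nu\gtrsim d^{-\alpha}$.
\end{itemize}
This is exactly where the hypothesis on $k_\star$ is used; without it $\Psi$ could have too few large eigenvalues to fill $m$.

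\textbf{Step 2 (teacher error).} From \eqref{eq:main-teacher} applied to the pure degree-$k_0$ target,
\[
\overline{\mathcal L}_{\rm TE}=\frac{\nu}{\sigma_{k_0}^2+\nu}\asymp\frac{d^{-\alpha}}{d^{-k_0}+d^{-\alpha}}\asymp d^{-(\alpha-k_0)_+}.
\]
This uses $k_0\in\mathcal K_\sigma$.

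\textbf{Step 3 (student error).} The lower bound $\overline{\mathcal L}_{\rm ST}(T)\gtrsim\overline{\mathcal L}_{\rm TE}^2$ is Theorem~\ref{theo:quadratic-floor}. For the upper bound we use the bias--variance form \eqref{eq:main-two-term}. With $x=\sigma_{k_0}^2T=\log(m/\overline{\mathcal L}_{\rm TE})$, the bias is
\[
\overline B(T)=\Big(\tfrac{\nu+\sigma_{k_0}^2e^{-x}}{\sigma_{k_0}^2+\nu}\Big)^2\le\big(\overline{\mathcal L}_{\rm TE}+e^{-x}\big)^2\lesssim\overline{\mathcal L}_{\rm TE}^2,
\]
because $e^{-x}=\overline{\mathcal L}_{\rm TE}/m$. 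The variance is $\overline V(T)=\frac{\nu\sigma_{k_0}^2}{(\sigma_{k_0}^2+\nu)^2}\overline H(T)\le\overline{\mathcal L}_{\rm TE}\,\overline H(T)$. It therefore suffices to show $\overline H(T)\lesssim\overline{\mathcal L}_{\rm TE}$, up to the polylogarithmic factors absorbed in the stopping-time choice.

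\textbf{Step 4 (bounding $\overline H(T)$).} We redo Lemma~\ref{lem:mixed-filter} with $j=k_0$ and the rescaled orders. The denominator $\nu\operatorname{tr}[\Psi(\Psi+\nu)^{-2}]$ is $\asymp m$; the lower bound of item~(iv) of Lemma~\ref{lem:mixed-nu-consequences} carries over, retaining degrees above $\alpha$ via the fixed-point equation. The numerator splits into two ranges:
\begin{itemize}
\item For $k\le k_0$, bound the filter by $1$, which contributes $\lesssim d^{2k_0-\alpha}$ when $k_0<\alpha$.
\item For $k>k_0$, use $(1-e^{-\sigma_k^2T})^2\le x^2\sigma_k^4/\sigma_{k_0}^4$. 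This gives $\lesssim x^2\,\nu\operatorname{tr}\Psi/\sigma_{k_0}^4\lesssim x^2 d^{2k_0-\alpha}$.
\end{itemize}
Hence $\overline H(T)\lesssim x^2 d^{2k_0-2\alpha}$ and $\overline V(T)\lesssim(\log d)^2d^{-3(\alpha-k_0)}=o(\overline{\mathcal L}_{\rm TE}^2)$ when $\alpha>k_0$. When $\alpha\le k_0$, both errors are $\asymp1$: the bias is $\gtrsim1$ and $\overline{\mathcal L}_{\rm ST}\le5/4$.

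\textbf{Main obstacle.} The delicate point is Step~4, in particular controlling the contribution of the infinitely many high degrees without explicit $\sigma_k$. For ReLU this was done by tedious asymptotics. Here the plan is to use only the inequality $(1-e^{-s})^2\le s^2$ together with $\sum_kN_k\sigma_k^4\le\sigma_{\max,k>k_0}^2\operatorname{tr}\Psi$, which is summable since $\operatorname{tr}\Psi\lesssim1$. One concern is that $\sigma_k^2$ for large non-fixed $k$ might not be monotone. This is harmless, because $\sigma_k^2\le\operatorname{tr}\Psi/N_k$ already suffices: it is $\lesssim d^{-k_0-1}\ll\sigma_{k_0}^2$ for all $k>k_0$, using $N_k\gtrsim d^{k_0+1}$ uniformly for $k_0<k<d$ and beyond.
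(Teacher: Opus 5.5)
Your proposal is correct and is essentially the paper's proof. It uses the same $\nu\asymp d^{-\alpha}$ argument through the degree $k_\star$, the same bias bound via $e^{-\sigma_{k_0}^2T}=\overline{\mathcal L}_{\rm TE}/m$, and the same treatment of the variance numerator for $k>k_0$ by $(1-e^{-s})^2\le s^2$ and $\operatorname{tr}\Psi\lesssim1$; that step needs no control of individual $\sigma_k$, so the monotonicity concern in your last paragraph does not arise there.

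The paper is slightly more economical in two places, and you should adopt both.

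For $k\le k_0$ it uses only $\nu\sigma_k^2/(\sigma_k^2+\nu)^2\le\tfrac14$. This gives a contribution $\lesssim d^{k_0}$ and hence $\overline H(T)\lesssim d^{k_0-\alpha}\asymp\overline{\mathcal L}_{\rm TE}$, which is all that is needed. Your sharper bound $d^{2k_0-\alpha}$ tacitly assumes $\sigma_k^2\gtrsim d^{-k}$ at every degree $k\le k_0$. Lemma~\ref{lem:general-coefficients} gives this only for $k\in\mathcal K_\sigma$; it fails at degrees $k<k_0$ with $c_k=0$ but $\sigma_k\ne0$ at finite $d$.

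For the denominator, the paper keeps only the degree $k_\star$:
\[
\operatorname{tr}[\Psi(\Psi+\nu I)^{-2}]\ge\frac{N_{k_\star}\sigma_{k_\star}^2}{4\nu^2}\gtrsim\nu^{-2}.
\]
This avoids transporting item~(iv) of Lemma~\ref{lem:mixed-nu-consequences}, whose proof uses monotonicity of $\sigma_k^2$. That monotonicity can be replaced by your $\sigma_k^2\le\operatorname{tr}\Psi/N_k$, but the comparison must then be with $\nu$, not with $\sigma_{k_0}^2$.
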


Together with Theorem~\ref{theo:quadratic-floor}, this gives optimal quadratic improvement for every such activation once $\alpha>k_0$. The powers of $d$ are those of Theorem~\ref{theo:weaktostrongrelu} and do not depend on the activation. The only structural hypothesis is that the kernel sees the target degree, which is the hypothesis $k_0\in\mathcal K$ of Theorem~\ref{theo:weaktostrongrelu} with $\mathcal K$ replaced by $\mathcal K_\sigma$.

\begin{proof}
All constants below depend only on $\sigma$, $\alpha$, and $k_0$, and $d$ is taken large enough.

\emph{Step 1: $\nu\asymp d^{-\alpha}$.} Since $\sigma_k^2/(\sigma_k^2+\nu)\le\sigma_k^2/\nu$, the fixed point \eqref{eq:main-fixed-point} gives
\[
d^\alpha=\sum_{k\ge0}\frac{N_k\sigma_k^2}{\sigma_k^2+\nu}\le\frac{\operatorname{tr}\Psi}{\nu},
\qquad\text{so}\qquad
\nu\lesssim d^{-\alpha}
\]
by Lemma~\ref{lem:general-coefficients}(i). For the lower bound we keep only the degree $k_\star$:
\[
d^\alpha\ \ge\ \frac{N_{k_\star}\sigma_{k_\star}^2}{\sigma_{k_\star}^2+\nu}.
\]
If $\nu\le\sigma_{k_\star}^2$, the right-hand side is at least $N_{k_\star}/2\asymp d^{k_\star}$, which exceeds $d^\alpha$ because $k_\star>\alpha$. Hence $\nu>\sigma_{k_\star}^2$, and
\[
\nu\ \ge\ \frac{N_{k_\star}\sigma_{k_\star}^2}{2\,d^\alpha}\ \gtrsim\ d^{-\alpha},
\]
since $N_{k_\star}\sigma_{k_\star}^2\to c_{k_\star}^2>0$ by Lemma~\ref{lem:general-coefficients}(ii). This is the argument of Lemma~\ref{lem:nu-relu}.

\emph{Step 2: the teacher.} Write $\lambda=\sigma_{k_0}^2$ and $L=\overline{\mathcal L}_{\rm TE}$. By Lemma~\ref{lem:general-coefficients}(ii), $\lambda\asymp d^{-k_0}$. For a unit target of degree $k_0$, \eqref{eq:main-teacher} and Step~1 give
\[
L=\frac{\nu}{\lambda+\nu}\ \asymp\ \min\{1,\,d^{k_0}\nu\}\ \asymp\ d^{-(\alpha-k_0)_+}.
\]

\emph{Step 3: the student.} The lower bound $\overline{\mathcal L}_{\rm ST}(T)\ge L^2$ is Theorem~\ref{theo:quadratic-floor}. For the upper bound we bound the two terms of \eqref{eq:main-two-term}. The choice of $T$ gives $e^{-\lambda T}=L/m$, so the squared bias is
\[
\overline B(T)=\Bigl(\frac{\nu+\lambda e^{-\lambda T}}{\lambda+\nu}\Bigr)^2
=L^2\Bigl(1+\frac{1-L}{m}\Bigr)^2\le 4L^2 .
\]
In the variance, the prefactor is $\nu\lambda/(\lambda+\nu)^2=L(1-L)\le L$, so
\[
\overline V(T)\le L\,R(T),
\qquad
R(T)=\frac{\operatorname{tr}[\Psi(\Psi+\nu I)^{-2}(I-e^{-\Psi T})^2]}{\operatorname{tr}[\Psi(\Psi+\nu I)^{-2}]}\le1 .
\]
If $\alpha\le k_0$, then $L\asymp1$, and $\overline{\mathcal L}_{\rm ST}(T)\le4L^2+L\lesssim L^2$.

Let $\alpha>k_0$. It remains to show that $R(T)\lesssim L$. In the denominator we keep only the degree $k_\star$. Since $\sigma_{k_\star}^2\asymp d^{-k_\star}\le\nu$,
\[
\operatorname{tr}[\Psi(\Psi+\nu I)^{-2}]\ \ge\ \frac{N_{k_\star}\sigma_{k_\star}^2}{(\sigma_{k_\star}^2+\nu)^2}\ \ge\ \frac{N_{k_\star}\sigma_{k_\star}^2}{4\nu^2}\ \gtrsim\ \nu^{-2}.
\]
In the numerator, we bound the degrees $k\le k_0$ using $(1-e^{-xT})^2\le1$ and $x/(x+\nu)^2\le1/(4\nu)$, and the degrees $k>k_0$ using $(1-e^{-xT})^2\le x^2T^2$ and $x^2/(x+\nu)^2\le1$. This gives
\[
\operatorname{tr}[\Psi(\Psi+\nu I)^{-2}(I-e^{-\Psi T})^2]
\ \le\ \sum_{k\le k_0}\frac{N_k}{4\nu}+T^2\sum_{k>k_0}N_k\sigma_k^2
\ \lesssim\ \frac{d^{k_0}}{\nu}+d^{2k_0}(\log d)^2,
\]
where we used $\operatorname{tr}\Psi\lesssim1$ and $T=\lambda^{-1}\log(m/L)\lesssim d^{k_0}\log d$. Dividing, and using $\nu\asymp d^{-\alpha}$ and $k_0<\alpha$,
\[
R(T)\ \lesssim\ \nu^2\Bigl(\frac{d^{k_0}}{\nu}+d^{2k_0}(\log d)^2\Bigr)
\ \asymp\ d^{-(\alpha-k_0)}+d^{-2(\alpha-k_0)}(\log d)^2
\ \lesssim\ d^{-(\alpha-k_0)}\ \asymp\ L .
\]
Hence $\overline V(T)\lesssim L^2$, and $\overline{\mathcal L}_{\rm ST}(T)=\overline B(T)+\overline V(T)\lesssim L^2$.
\end{proof}

\begin{obs}[The hypotheses]\label{rem:general-activation-scope}
The hypothesis $k_0\in\mathcal K_\sigma$ cannot be dropped. If $\sigma$ has the parity opposite to that of $k_0$, then $\sigma_{k_0}=0$ for every $d$ by Lemma~\ref{lem:general-coefficients}(iii). The target is then orthogonal to every feature, so the teacher's best fit is zero; the student, which starts at zero and is trained on the teacher, stays at zero; and both errors equal one at all times. The hypothesis $c_{k_\star}\ne0$ for some $k_\star>\alpha$ fails only for a polynomial $\sigma$ of degree $p\le\alpha$. Then $\sigma_k=0$ for every $k>p$ and every $d$, so $\Psi$ has rank at most $\sum_{k\le p}N_k\asymp d^p$. For $\alpha>p$ this is smaller than $m$, the fixed point \eqref{eq:main-fixed-point} has no solution, and the teacher represents every target in the span of the features exactly.

The proof uses Lemma~\ref{lem:general-coefficients} only through $\operatorname{tr}\Psi\lesssim1$ and $N_k\sigma_k^2\asymp1$ at the two degrees $k_0$ and $k_\star$. It uses neither the exact values of the $\sigma_k$, nor monotonicity of $\sigma_k^2$ in $k$, nor any rate of decay of the $c_k$. We expect the stopping-time phases of Theorem~\ref{theo:phases} and the mixed-target examples of Appendix~\ref{app:mixed-examples} to carry over in the same way, with $\mathcal K$ replaced by $\mathcal K_\sigma$, since they use only the orders of $N_k$ and $\sigma_k^2$; we have not written this out.
\end{obs}

\begin{ex}[ReLU]\label{ex:general-relu}
Let $\sigma(r)=r_+$, and let $\varphi$ be the standard Gaussian density. Two integrations by parts, using $\mathrm{He}_k\varphi=-(\mathrm{He}_{k-1}\varphi)'$, give
\[
\mathbb E[z_+\mathrm{He}_k(z)]=\varphi(0)\,\mathrm{He}_{k-2}(0),\qquad k\ge2,
\]
which vanishes exactly when $k$ is odd. Hence $\mathcal K_\sigma=\{0,1\}\cup2\mathbb N=\mathcal K$, with $c_0^2=1/(2\pi)$ and $c_1^2=1/4$, and Theorem~\ref{theo:general-activation} recovers Theorem~\ref{theo:weaktostrongrelu}. The coefficients $\sigma_k$ with $k\ge3$ odd vanish exactly at every $d$, by Lemma~\ref{lem:general-coefficients}(iii): $r_+-r/2=|r|/2$ is even, and the linear part $r/2$ contributes only to degree one.
\end{ex}

\begin{ex}[tanh]\label{ex:general-tanh}
Let $\sigma=\tanh$. We show that $\mathcal K_\sigma$ is the set of odd integers. Since $\tanh$ is odd, $c_k=0$ for every even $k$, and $\sigma_k=0$ exactly at every even degree by Lemma~\ref{lem:general-coefficients}(iii). Let $k$ be odd. One Gaussian integration by parts gives
\[
\mathbb E[\tanh(z)\mathrm{He}_k(z)]=\mathbb E[\operatorname{sech}^2(z)\mathrm{He}_{k-1}(z)].
\]
The function $\operatorname{sech}^2$ has a positive, integrable Fourier transform,
\[
\operatorname{sech}^2(r)=\frac1{2\pi}\int_{\mathbb R}\frac{\pi\xi}{\sinh(\pi\xi/2)}\,e^{i\xi r}\,d\xi,
\]
and $\mathbb E[e^{i\xi z}\mathrm{He}_{k-1}(z)]=(i\xi)^{k-1}e^{-\xi^2/2}$. Exchanging the order of integration, and using that $k-1$ is even,
\[
\mathbb E[\tanh(z)\mathrm{He}_k(z)]
=\frac{(-1)^{(k-1)/2}}{2\pi}\int_{\mathbb R}\xi^{k-1}\,\frac{\pi\xi}{\sinh(\pi\xi/2)}\,e^{-\xi^2/2}\,d\xi .
\]
The integrand is positive, so $c_k\ne0$, with sign $(-1)^{(k-1)/2}$. Hence Theorem~\ref{theo:general-activation} applies to every odd target degree $k_0$ and every $\alpha>0$:
\[
\overline{\mathcal L}_{\rm TE}\asymp d^{-(\alpha-k_0)_+},\qquad
\overline{\mathcal L}_{\rm ST}(T)\asymp\overline{\mathcal L}_{\rm TE}^{\,2},
\]
exactly as for ReLU. For an even target degree nothing is learned, by Remark~\ref{rem:general-activation-scope}. The argument uses only that $\sigma$ is odd and that $\sigma'$ has a nonnegative Fourier transform that is not identically zero; it applies, for instance, to $\sigma=\operatorname{erf}$.
\end{ex}

\begin{ex}[Logistic sigmoid]\label{ex:general-sigmoid}
Let $s(r)=1/(1+e^{-r})=\tfrac12+\tfrac12\tanh(r/2)$. The function $s-\tfrac12$ is odd, and the constant $\tfrac12$ contributes only to degree zero. Hence $c_0=\tfrac12$ and $c_k=0$ for every even $k\ge2$, and by Lemma~\ref{lem:general-coefficients}(iii) also $\sigma_k=0$ for every even $k\ge2$ and every $d$. For odd $k$, the argument of Example~\ref{ex:general-tanh} applies with $\operatorname{sech}^2$ replaced by $s'(r)=\tfrac14\operatorname{sech}^2(r/2)$, whose Fourier transform $\pi\xi/\sinh(\pi\xi)$ is again positive. Hence $\mathcal K_s$ consists of $0$ and the odd integers, and the conclusions of Example~\ref{ex:general-tanh} hold unchanged.

A bias breaks the parity. For the activation $r\mapsto s(r+b)$, the same computation gives, for $k\ge1$,
\[
\mathbb E[s(z+b)\mathrm{He}_k(z)]=\frac1{2\pi}\int_{\mathbb R}\frac{\pi\xi}{\sinh(\pi\xi)}\,e^{i\xi b}\,(i\xi)^{k-1}e^{-\xi^2/2}\,d\xi .
\]
As a function of $b$ this extends to an entire function. For odd $k$ it is nonzero at $b=0$, and for even $k$ its derivative at $b=0$ is nonzero by the same positivity. So it is not identically zero and has countably many real zeros. Therefore, for all $b$ outside a countable set, $c_k\ne0$ for every $k$, and Theorem~\ref{theo:general-activation} applies to targets of every degree $k_0\ge1$.
\end{ex}
\clearpage
\section{Numerical details for the main figures}\label{app:numerical-scheme}

The main figures contain two kinds of curves. The lines are evaluations of the deterministic equivalents \eqref{eq:main-fixed-point}--\eqref{eq:main-two-term}. The markers in Figure~\ref{fig:illustrative-time-relative} are averages over random teachers of the population flow of Section~\ref{sec:setup}, simulated as described next.

\subsection{Simulating the population flow}\label{subapp:numerical-population}

A population-flow sample is one draw of the model of Section~\ref{sec:setup}: the teacher's feature directions $u_1,\dots,u_m$ are drawn independently and uniformly on $\mathbb S^{d-1}$ with $m=d^3$, the teacher is fitted by minimizing its population loss, and the student is the population spectral flow \eqref{studenttrain} against that teacher. Every expectation over the data is computed in closed form, so the feature directions are the only randomness.

\paragraph{Teacher.} Writing the teacher as $g(x)=\sum_{i=1}^ma_i\sigma(\langle u_i,x\rangle)$, its population least-squares weights solve $Ga=h$ with
\[
G_{ij}=\mathbb E_x\bigl[\sigma(\langle u_i,x\rangle)\sigma(\langle u_j,x\rangle)\bigr],
\qquad
h_i=\mathbb E_x\bigl[\sigma(\langle u_i,x\rangle)f^\star(x)\bigr].
\]
Both are available in closed form. For ReLU on the sphere, with $s=\langle u_i,u_j\rangle$, the kernel is $\bigl(\sqrt{1-s^2}+(\pi-\arccos s)\,s\bigr)/(2\pi d)$, and for a pure degree-$k_0$ harmonic target $h_i=\sigma_{k_0}f^\star(u_i)$. The simulations use the population-centered activation $\sigma(r)=r_+-\mu_d$, which subtracts $\mu_d^2$ from every kernel entry and removes the degree-zero mode of the kernel; the targets have no degree-zero component. The $m\times m$ system is solved in double precision; its condition number is below $10^5$ for $d\le28$, and the relative residual $\|Ga-h\|/\|h\|$ is checked to be at roundoff level for every teacher. The teacher error is $\mathcal L_{\rm TE}=1-2\langle g,f^\star\rangle_{\mathcal D}+\|g\|_{\mathcal D}^2$.

\paragraph{Student.} The population flow \eqref{studenttrain} acts on each harmonic degree $k$ by the scalar filter $1-e^{-\sigma_k^2t}$. Writing $\Pi_k$ for the projection onto degree-$k$ harmonics, the student error at every time is therefore determined by the harmonic energies of the teacher,
\begin{equation}\label{eq:numerical-harmonic-loss}
\mathcal L_{\rm ST}(t)=\sum_{k}\Bigl[\|\Pi_kf^\star\|_{\mathcal D}^2-2(1-e^{-\sigma_k^2t})\langle\Pi_kg,\Pi_kf^\star\rangle_{\mathcal D}+(1-e^{-\sigma_k^2t})^2\|\Pi_kg\|_{\mathcal D}^2\Bigr].
\end{equation}
The energies $\|\Pi_kg\|_{\mathcal D}^2=\sigma_k^2\,a^\top H_ka$ with $(H_k)_{ij}=N_kP_k(\langle u_i,u_j\rangle)$, where $P_k$ is the degree-$k$ Gegenbauer polynomial normalized by $P_k(1)=1$, follow from the addition theorem; the matrices $H_k$ are generated by the three-term recurrence of the $P_k$ on the $m\times m$ Gram matrix of the directions, which works directly with the $m\times m$ Gram matrix. One teacher fit and one pass of the recurrence give the whole curve $t\mapsto\mathcal L_{\rm ST}(t)$.

\paragraph{The harmonic cutoff.} The sum \eqref{eq:numerical-harmonic-loss} has to be cut at a finite degree $K$; we use $K=128$. The omitted part is $\sum_{k>K}(1-e^{-\sigma_k^2t})^2\|\Pi_kg\|_{\mathcal D}^2$, and since the filters decrease in $k$ it is at most $(1-e^{-\sigma_{K+2}^2t})^2$ times the teacher energy above degree $K$, which is $\|g\|_{\mathcal D}^2-\sum_{k\le K}\|\Pi_kg\|_{\mathcal D}^2$ and is known from the fit. This bound is evaluated for every teacher and every time. It is small on the displayed time window for a structural reason: the student learns degree $k$ on the time scale $1/\sigma_k^2\asymp d^{k+1}$, so at times of order $T_2\asymp d^3\log d$ the degrees above the first few are still at their zero initialization and contribute nothing to \eqref{eq:numerical-harmonic-loss}. Figure~\ref{fig:population-flow-uncertainty} (bottom row) quantifies this: cutting at $K=16$ instead of $K=128$ changes the loss by less than $10^{-6}$ relative on the whole window, and the bound on the tail above $K=128$ is below $4\times10^{-4}$ relative to the loss, attaining its maximum near $T_2$ where the loss itself is smallest.

\subsection{Averaging over teachers}\label{subapp:illustrative-time-curves}

Each marker in Figure~\ref{fig:illustrative-time-relative} is the mean of $\mathcal L_{\rm ST}(t)$ over independent teachers: 32 at $d=8$ and $d=16$, and 8 at $d=28$, where $m=21952$. Figure~\ref{fig:population-flow-uncertainty} shows what is being averaged. The top row plots every teacher's curve, their mean, a pointwise 95\% $t$ interval for the mean, and the deterministic equivalent. The teacher-to-teacher spread is invisible before the stop, where the student is learning the target, and appears after it, where the student is copying the teacher's random residual: this is the variance term $\overline V$ of \eqref{eq:main-two-term}. At the stop $T_2$ the half-width of the interval is 9\%, 3\%, and 5\% of the mean for $d=8,16,28$, and the deterministic equivalent lies below the mean by 15\%, 4\%, and 1\%, a finite-$d$ discrepancy that closes as $d$ grows. In the normalized panel of Figure~\ref{fig:illustrative-time-relative} the mean teacher error of the same teachers is used as denominator. At $d=28$ the mean student loss at $T_2$ divided by the squared mean teacher loss is $1.067$, with paired-bootstrap 95\% interval $[1.065,1.069]$ over teachers, against the deterministic-equivalent value $1.066$.

\begin{figure}[htbp]
\centering
\includegraphics[width=\linewidth]{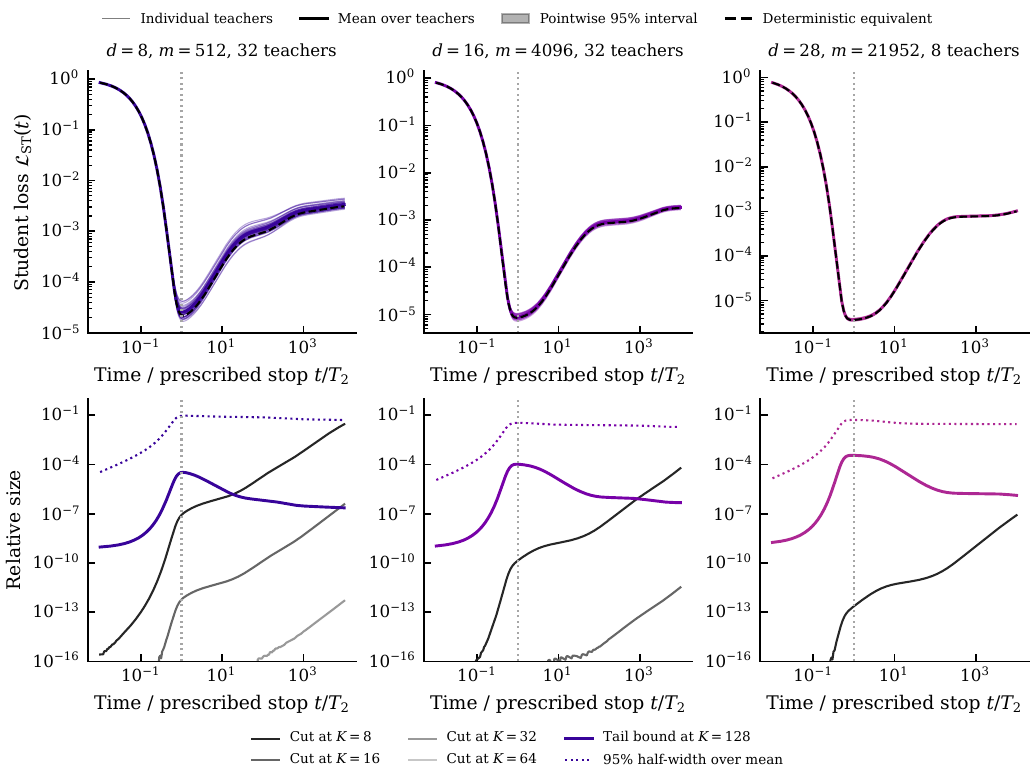}
\caption{\textbf{Averaging over teachers and the harmonic cutoff.} Unit degree-two target, $m=d^3$, the three dimensions with population-flow samples in Figure~\ref{fig:illustrative-time-relative}. Top: individual teachers, their mean, a pointwise 95\% interval for the mean, and the deterministic equivalent. Bottom, on the same time axis: the relative change of the loss when the harmonic sum \eqref{eq:numerical-harmonic-loss} is cut at $K$ instead of $K=128$ (worst teacher), the bound on the omitted tail above $K=128$, and the relative half-width of the 95\% interval. Cutting at $K\ge16$ is indistinguishable from $K=128$ on this window; the cutoff error is far below the statistical uncertainty everywhere.}
\label{fig:population-flow-uncertainty}
\end{figure}

\subsection{Deterministic-equivalent curves}\label{subapp:numerical-de}

The lines in the figures evaluate \eqref{eq:main-fixed-point}--\eqref{eq:main-two-term} with the exact $N_k$ and $\sigma_k^2$ of Appendix~\ref{subapp:asympNksigmak}, grouping every trace by harmonic degree. The scalar equation \eqref{eq:main-fixed-point} for $\nu$ is solved by bisection. Multiplicities and eigenvalues are combined in the logarithmic domain, since $N_k$ overflows and $\sigma_k^2$ underflows long before their product does, and the filters $1-e^{-x}$ are evaluated with a stable exponential-difference routine. The harmonic sums are cut at degree 2048; raising the cutoff from 1024 to 2048 changes every displayed curve by less than $0.04\%$, and the curves reproduce the exact endpoints, unit loss at $t=0$ and $\overline{\mathcal L}_{\rm TE}$ at $t=\infty$. The mixed-target figures of Section~\ref{sec:mixed-main} and Appendix~\ref{app:mixed-examples} are such evaluations; the numerically attained minima marked there are minima over a grid in $\sigma_4^2t$, and at the largest dimension shown they lie within $0.1\%$ of the analytic lower bound \eqref{eq:mixed-timing-envelope}, at a time within $0.2\%$ of $\log2$.

\subsection{The degree-zero mode}\label{subapp:numerical-zero-mode}

All figures were generated without the degree-zero mode of the kernel \eqref{eq:relukernel}. The deterministic-equivalent curves use the spectrum over positive degrees, and the population-flow samples use the centered activation $\sigma(r)=r_+-\mu_d$, whose kernel has no constant component. The main text keeps the mode, so we record its effect. For the deterministic equivalent, adding the term $N_0=1$, $\sigma_0^2=\mu_d^2$ to the spectrum changes $\nu$ and $\overline{\mathcal L}_{\rm TE}$ by a relative amount close to $1/m$, and the student curve of Figure~\ref{fig:illustrative-time-relative} by at most twice that: $0.6\%$ at $d=8$, $0.07\%$ at $d=16$, $0.01\%$ at $d=28$, and below $10^{-5}$ for $d\ge64$. For the population flow, refitting the archived teachers of Figure~\ref{fig:population-flow-uncertainty} with the uncentered kernel changes the teacher error by at most $1.9\%$, $0.2\%$, and $0.03\%$ at $d=8,16,28$ (on average $0.4\%$, $0.05\%$, and $0.01\%$), and the student curve at any time by at most $5.8\%$, $0.4\%$, and $0.06\%$. The constant component of the uncentered teacher itself carries less than $10^{-6}$ of its error at every dimension; the change comes through the fitted weights. Both effects are below the teacher-to-teacher variability shown in Figure~\ref{fig:population-flow-uncertainty}.

\end{document}